%% file: main.tex
\documentclass[10pt]{article} % For LaTeX2e
\usepackage[accepted]{tmlr}
\usepackage{amsmath}

\input{math_commands.tex}

\usepackage{hyperref}
\usepackage{url}

\usepackage{graphicx}
\usepackage{xcolor}
\usepackage{amsthm} 
\usepackage{thmtools}
\usepackage{wrapfig}
\usepackage{subcaption}
\usepackage{enumitem}
\usepackage{todonotes}
\newcommand{\update}[1]{\textcolor{black}{#1}}

\declaretheorem[name=Theorem]{theorem}
\declaretheorem[name=Definition]{definition}
\declaretheorem[name=Lemma]{lemma}

\declaretheorem[name=Proposition]{proposition}

\title{Amplified Patch-Level Differential Privacy for Free\\via Random Cropping}

\author{\name Kaan Durmaz \email k.durmaz@tum.de \\
      \addr Technical University of Munich\\
      \AND
      \name Jan Schuchardt \email jan.a.schuchardt@morganstanley.com \\
      \addr Machine Learning Research, Morgan Stanley\\Technical University of Munich
      \AND
      \name Sebastian Schmidt \email sebastian95.schmidt@tum.de\\
      \addr Technical University of Munich \\
      \AND
      \name Stephan Günnemann \email s.guennemann@tum.de\\
      \addr Technical University of Munich}

\def\month{03}  % Insert correct month for camera-ready version
\def\year{2026} % Insert correct year for camera-ready version
\def\openreview{\url{https://openreview.net/forum?id=pSWuUF8AVP}} % Insert correct link to OpenReview for camera-ready version

\begin{document}

\maketitle

\begin{abstract}
Random cropping is one of the most common data augmentation techniques in computer vision, yet the role of its inherent randomness in training differentially private machine learning models has thus far gone unexplored. We observe that when sensitive content in an image is spatially localized, such as a face or license plate, random cropping can probabilistically exclude that content from the model’s input. This introduces a third source of stochasticity in differentially private training with stochastic gradient descent, in addition to gradient noise and minibatch sampling. This additional randomness amplifies differential privacy without requiring changes to model architecture or training procedure. We formalize this effect by introducing a patch-level neighboring relation for vision data and deriving tight privacy bounds for differentially private stochastic gradient descent (DP-SGD) when combined with random cropping. Our analysis quantifies the patch inclusion probability and shows how it composes with minibatch sampling to yield a lower effective sampling rate. Empirically, we validate that patch-level amplification improves the privacy-utility trade-off across multiple segmentation architectures and datasets. Our results demonstrate that aligning privacy accounting with domain structure and additional existing sources of randomness can yield stronger guarantees at no additional cost.\footnote{Full implementation details are provided in \href{https://github.com/TUM-DAML/patch_level_dp}{\texttt{github.com/TUM-DAML/patch\_level\_dp}}.
}

\end{abstract}

\section{Introduction}
Differential Privacy (DP) \citep{Dwork2006DifferentialP, dp-book} provides a mathematically rigorous paradigm for privacy protection. It limits the influence of any individual data point on the output of a learning algorithm and offers provable safeguards against privacy attacks. For instance, it protects against membership inference \citep{DBLP:journals/corr/ShokriSS16}, where an adversary attempts to determine whether a particular data point was part of the training set, and model inversion, where an adversary tries to reconstruct sensitive features from model outputs (e.g.~faces from a facial recognition algorithm~\citep{fredrikson}). This guarantee holds regardless of the adversary’s prior knowledge. It is formalized through a neighboring relation, typically defined as two datasets differing in a single data point, which is denoted as $x \simeq x'$.

In deep learning, the most widely used method for achieving  DP is DP-SGD \citep{song-et-al, Abadi-2016-dp-sgd}. It is an elegant and simple modification of SGD, which works by adding noise to the clipped per-sample gradients.
A key aspect of DP-SGD is amplification by subsampling, where privacy is enhanced by applying a differentially private mechanism to randomly selected batches from the dataset \citep{Kasiviswanathan2008WhatCW, Li2011OnSA}.
The idea behind this is that the private data contributes to each training step with only some small probability. Thus, the mechanism reveals less information about any single individual on average, and the overall privacy cost is reduced through structured randomness.

DP-SGD is broadly applicable across domains where datasets can be represented as sets of arbitrary records, such as images \citep{Abadi-2016-dp-sgd} or texts \citep{dp-bert}.
However, this generality also presents a limitation: it does not take advantage of domain-specific structure. In particular, stronger privacy guarantees may be possible when we (1) have additional knowledge about a domain, (2) can make more concrete assumptions about what constitutes private information, and (3) are able to redefine the neighboring relation to better match the nature of privacy risks in that domain. This opens the door to more refined and tighter privacy mechanisms that are still formally sound, but better aligned with how sensitive information manifests and is structured across different domains.

Building on this motivation, in this work we take advantage of domain-specific structure in vision. In many real-world vision applications, privacy-sensitive content is not spread across the entire image, but instead localized to small, well-defined \emph{patches}: a face in a photo, a license plate in a traffic scene. In such cases, assuming that the entire image is private is unnecessarily conservative. Instead, it is more natural to define privacy at the patch level. This motivates a patch-level neighboring relation, where datasets differ by a small modification inside one image, leaving the rest untouched.

Under this proposed patch-level notion of privacy, we investigate whether the privacy guarantees of DP-SGD can be improved. Our central observation is that we can leverage an already standard component of most vision training pipelines, \emph{random cropping}. Cropping selects a random subregion or \emph{crop} of each image in a minibatch during training and is commonly used to improve generalization and reduce compute in high-resolution tasks \citep{alexnet, cityscapes, deeplab1, deeplab2, Shorten2019ASO}. When private information is spatially localized, cropping introduces a powerful, implicit form of randomness which can \emph{exclude the sensitive region entirely} from the crop.
This implicit randomness of cropping introduces stochasticity at the patch level within each image, analogous to the stochasticity at the dataset level introduced by minibatch sampling.

\begin{figure}[t]
    \centering
    \includegraphics[width=0.95\textwidth]{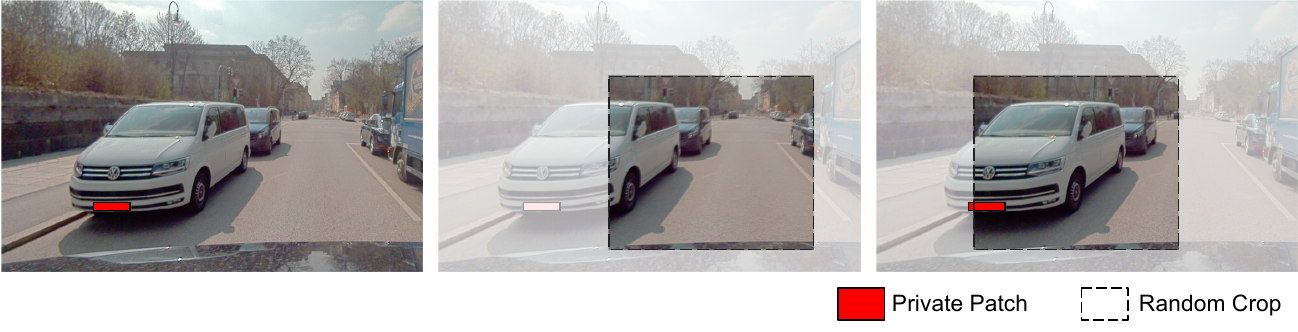} % Replace with your image file name
    \caption{Illustration of the effect of random cropping on a private patch as the license plate (in red). Left: Original image with a designated private patch. Middle: A random crop that excludes the private patch. Right: A random crop that includes \update{a part of} the private patch. \update{Any intersection allows the private patch to influence the output of the model.}}
    \label{fig:randomcrop}
\end{figure}

In this work, we formalize the analogy between cropping and minibatch sampling to rigorously analyze the resulting mechanism. Specifically, we: 

\begin{itemize}
    \item introduce a spatially localized, patch-level neighboring relation tailored to privacy in vision data,
    \item formalize random cropping as a privacy amplification mechanism and show that it probabilistically excludes sensitive regions,
    \item provide a tight theoretical privacy analysis under this new neighboring relation, demonstrating reduced sensitivity and improved privacy bounds for DP-SGD.
\end{itemize}

Importantly, our approach requires no changes to the training algorithm and introduces no additional computational overhead, making it a drop-in improvement in the privacy accounting of DP-SGD. The only assumption we make is that private information is spatially localized, an intuitive and realistic condition in many practical vision tasks.

We empirically validate our findings on semantic segmentation with DeepLabV3+ \citep{deeplab2} and PSPNet \citep{pspnet} trained on Cityscapes \citep{cityscapes} and A2D2 \citep{a2d2}, demonstrating that random cropping improves the privacy-utility trade-off in realistic training scenarios.
We view this work as a step toward harnessing more sources of randomness already present in machine learning pipelines to achieve stronger privacy guarantees.

\section{Related Work}
\textbf{Privacy in Computer Vision.}
A large body of prior work addresses the problem of releasing images in a privacy-preserving manner, i.e., obfuscating private information directly in images.  
Classical approaches such as blurring, pixelation, or mosaicing are intuitive but offer no protection against modern deep learning based attacks, which can often recover obfuscated content \citep{McPherson2016DefeatingIO, Oh2016FacelessPR}.
To address such model-based threats, some recent works explore GAN-based anonymization \citep{Sun2017NaturalAE, Wu2019PrivacyProtectiveGANFP}. Another direction seeks to minimize data exposure via encrypted processing \citep{Dowlin2016CryptoNetsAN}.  
While effective for specific release scenarios, these approaches focus on privatizing the output images rather than the models trained on them.

\textbf{Differential Privacy for Vision Models.}
DP-SGD \citep{Abadi-2016-dp-sgd} is the standard approach for training differentially private models. However, recent works highlight that maintaining utility at scale requires extensive hyperparameter tuning and compute, but also places increasing importance on strong data augmentation to stabilize training and improve generalization \citep{howto, scalemeta, deepmind}. In response to these challenges, some alternative approaches emerged that modify the training process, model architecture, or neighboring relation.
AdaMix \citep{Golatkar2022MixedDP} improves utility in mixed differential privacy by using a small amount of task-aligned public data for adaptive model initialization and training. However, its effectiveness depends on the availability of labeled public data closely matching the target distribution. 
DP-Image \citep{dp-image} achieves differential privacy by adding noise to encoder outputs and reconstructing samples with a denoising GAN. However, the method is geared toward releasing visually acceptable images and is not evaluated against standard DP training methods like DP-SGD, leaving open questions about its utility for downstream learning.
PixelDP \citep{pixeldp} connects adversarial robustness and differential privacy by enforcing guarantees against input perturbations at inference. While one can view this as a form of fine-grained privacy, its objective is fundamentally different: They focus on robustness against perturbations during inference, whereas we focus on privacy during training. Unlike all of the previously discussed methods, which require architectural changes or public data, we retain the standard DP-SGD training algorithm but improve its privacy analysis by accounting for the inherent randomness of commonly used data augmentations.

\textbf{Privacy Amplification via Structured Randomness.}
Privacy amplification by subsampling is a core principle in differential privacy and is already leveraged in DP-SGD through minibatch sampling. Building on this, a few works have explored how other types of structured randomness in training can further improve privacy.
\citet{leveragin_randomness} focus on partial participation mechanisms. A key example they discuss is dropout, which introduces randomness over model parameters and thus amplifies privacy.
However, \citet{deepmind} found that parameter-level regularization techniques such as weight decay and dropout harm both training and validation performance in DP settings.
In contrast, our work focuses on input-level augmentation. 
\citet{jan-timeseries} show that combining minibatch sampling with additive Gaussian data augmentation can amplify privacy in time series forecasting. Their analysis proves that the Gaussian noise has a similar effect to subsampling in amplifying privacy. This amplification-by-augmentation result is derived for relaxed neighboring relation that bounds both the number and magnitude of changes to sequences.
Our work focuses on a different data domain (images),
a different form of data augmentation (cropping),
and does not make any assumptions about the magnitude of change.
Similar to \citet{pixeldp}, \citet{lin2021certified} provide a formal analysis of random cropping as a defense against adversarial attacks, establishing a connection between adversarial robustness and differential privacy. However, their analysis focuses on inference-time robustness and privacy, whereas we study training-time privacy guarantees through DP-SGD.

\section{Background and Preliminaries}
\textbf{Differential Privacy.}
Differential Privacy (DP) \citep{Dwork2006DifferentialP} formalizes the requirement that the output of a randomized algorithm remains nearly unchanged when a single individual’s data is modified. Let $\mathcal{X}$ denote the space of datasets. A randomized mechanism $M : \mathcal{X} \to \mathbb{R}^D$ is said to satisfy DP if the distributions of $M(x)$ and $M(x')$ are almost indistinguishable whenever $x$ and $x'$ are \emph{neighboring datasets} (denoted $x \simeq x'$), meaning they differ in the data associated with a single individual. Formally:

\begin{definition}
    
A mechanism $M: \mathcal{X} \to \mathbb{R}^D$ satisfies $(\varepsilon, \delta)$-DP if for all measurable subsets $O\subseteq \mathbb{R}^D$ and for all $x, x' \in \mathcal{X}$ such that $x \simeq x'$,
$
\Pr[M(x) \in O] \leq e^\varepsilon \Pr[M(x') \in O] + \delta.
$
\end{definition}

This condition can equivalently be expressed in terms of the hockey-stick divergence between $M(x)$ and $M(x')$:

\begin{proposition} \citep{Barthe2013BeyondDP}
\label{prop_2}
Let $H_\alpha(P \| Q) := \int_{\mathbb{R}^D} \max\left\{ \frac{dP}{dQ}(o) - \alpha, 0 \right\} dQ(o)$. Then $M$ is $(\varepsilon, \delta)$-DP if and only if
$H_{e^\varepsilon}(M(x) \| M(x')) \leq \delta.$
\end{proposition}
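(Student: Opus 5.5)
The plan is to show both directions by reducing the $(\varepsilon,\delta)$ condition, which quantifies over all measurable sets $O$, to a single optimal set on which the hockey-stick divergence is attained. Fix neighboring $x \simeq x'$ and write $P = M(x)$, $Q = M(x')$. The first step is to make the density ratio meaningful. Take a common dominating measure $\mu = P + Q$ and densities $p = dP/d\mu$, $q = dQ/d\mu$. I will read $H_\alpha(P\|Q)$ as $\int \max\{p - \alpha q, 0\}\, d\mu$. This agrees with the stated formula when $P \ll Q$, and it is the standard convention otherwise, with the singular part of $P$ contributing fully. I will note explicitly that this is the interpretation needed for the equivalence to hold in general.

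The key identity to prove is
\begin{equation*}
H_\alpha(P\|Q) = \sup_{O} \bigl( P(O) - \alpha Q(O) \bigr),
\end{equation*}
where the supremum runs over measurable $O$. For any $O$,
\begin{equation*}
P(O) - \alpha Q(O) = \int_O (p - \alpha q)\, d\mu \le \int_O \max\{p-\alpha q,0\}\, d\mu \le H_\alpha(P\|Q),
\end{equation*}
which gives the upper bound. Conversely, the set $O^* = \{p > \alpha q\}$ is measurable and attains equality, since the integrand $p - \alpha q$ is nonnegative exactly on $O^*$. So the supremum is achieved, and it equals $H_\alpha$.

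With $\alpha = e^\varepsilon$, the $(\varepsilon,\delta)$-DP condition for the pair $(x,x')$ says exactly that $P(O) - e^\varepsilon Q(O) \le \delta$ for every $O$. By the identity, this is equivalent to $H_{e^\varepsilon}(P\|Q) \le \delta$. Quantifying over all neighboring pairs then gives the proposition. Both directions are immediate: "only if" uses the optimal set $O^*$, and "if" uses the upper bound.

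The only delicate point is the measure-theoretic one, namely handling the case where $P$ is not absolutely continuous with respect to $Q$. Working with densities relative to $P+Q$ resolves this cleanly. Everything else is a one-line calculation.
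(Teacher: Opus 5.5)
Your proof is correct. The paper gives no proof of this proposition; it imports it from \citet{Barthe2013BeyondDP}. Your argument is the standard proof behind that citation: establish the variational identity $H_\alpha(P\|Q)=\sup_O\bigl(P(O)-\alpha Q(O)\bigr)$, note the supremum is attained at $O^*=\{p>\alpha q\}$, and apply it with $\alpha=e^\varepsilon$ to every ordered neighboring pair.

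Taking densities with respect to $P+Q$ is genuinely needed. Suppose $dP/dQ$ in the paper's formula were read literally as the Radon--Nikodym derivative of the absolutely continuous part of $P$. Then the ``if'' direction would fail. For example, take point masses at two distinct points $a\neq b$: the formula returns $H_\alpha=0$, yet $O=\{a\}$ violates the DP inequality for every $\delta<1$. Your convention is exactly what makes the equivalence hold in general.
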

\update{As Proposition~\ref{prop_2} suggests, and following the literature (e.g., \citet{Barthe2013BeyondDP} and \citet{zhu-et-al}), we use $\alpha$ and $e^\varepsilon$ interchangeably throughout the manuscript.}

\textbf{Neighboring Dataset Relations.}
The choice of neighboring relation $x \simeq x'$ depends on the application. Two standard variants are:
The \emph{insertion/removal relation}, denoted $x \simeq_{\pm} x'$, which holds when $x'$ is obtained by adding or removing a single element: $x' = x \cup \{a\}$ or $x' = x \setminus \{a\}$ for some element $a$.
The \emph{substitution relation}, denoted $x \simeq_{\Delta} x'$, which holds when $x$ and $x'$ have the same size and differ in one entry: $x' = x \setminus \{a\} \cup \{a'\}$ for some $a, a'$.

\textbf{Subsampling.} 
We consider a random subsampling scheme $S: \mathcal{X} \to \mathcal{Z}$ and an $(\varepsilon', \delta')$-DP base mechanism $B: \mathcal{Z} \to \mathbb{R}^D$. 
The subsampled mechanism $M = B \circ S$ first draws a random batch from the dataset and then applies the base mechanism to this batch. 

\begin{proposition}
\label{prop:S_wo}
Let $\mathcal{X}_n \subseteq \mathcal{X}$ denote the set of datasets of size $n$. 
For $m \le n$, define the \emph{subsampling without replacement} mechanism \(
S^{\mathrm{wo}}_m : \mathcal{X}_n \to \mathcal{Z}
\) such that, given a dataset $x \in \mathcal{X}_n$, the mechanism $S^{\mathrm{wo}}_m(x)$ outputs a subset $y \subseteq x$ of size $|y| = m$ drawn uniformly at random (i.e.\ $ \mathcal{Y}:=\mathcal{X}_m$). 
Then the sampling rate is $\gamma_\text{wo} = m/n$.
\end{proposition}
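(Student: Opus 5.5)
The claim to establish is that the sampling rate of $S^{\mathrm{wo}}_m$ equals $m/n$. Here "sampling rate" means the probability that any fixed element $a$ of the dataset ends up in the sampled batch $y$. This is the quantity that enters the standard subsampling amplification bounds. Under substitution, the datasets $x$ and $x'$ differ only in the record $a$ versus $a'$. The two batch distributions can then be coupled so that they agree whenever the differing record is not drawn.

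So the plan is to compute $\Pr[a \in S^{\mathrm{wo}}_m(x)]$ for an arbitrary fixed $a \in x$, where $y$ is uniform over the $\binom{n}{m}$ subsets of $x$ of size $m$.

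\textbf{Direct counting.} The subsets of size $m$ that contain $a$ correspond bijectively to the subsets of size $m-1$ of $x \setminus \{a\}$. There are $\binom{n-1}{m-1}$ of these. Therefore
\[
\Pr[a \in y] = \binom{n-1}{m-1}\Big/\binom{n}{m} = \frac{m}{n},
\]
using the identity $\binom{n}{m} = \frac{n}{m}\binom{n-1}{m-1}$.

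\textbf{Symmetry check.} Alternatively, $\sum_{a \in x} \mathbf{1}[a \in y] = m$ holds deterministically. By exchangeability of the uniform distribution under permutations of $x$, every element has the same inclusion probability. Taking expectations then gives $n \cdot \Pr[a\in y] = m$, which is the same result.

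\textbf{Main obstacle.} The computation itself is trivial. The only delicate point is making the notion of "sampling rate" precise enough to use later. Specifically, for $x \simeq_\Delta x'$ one should note the following coupling: conditioned on the event that the index of the substituted record is not drawn (probability $1-m/n$), the batches $S(x)$ and $S(x')$ are identically distributed. This is what allows $\gamma_{\mathrm{wo}}=m/n$ to play the role of the mixture weight in the subsequent amplification analysis.
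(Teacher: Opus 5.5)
Your argument is correct. The paper states this as a standard background fact and gives no proof, and your count $\binom{n-1}{m-1}/\binom{n}{m}=m/n$ is exactly the inclusion probability it relies on. Your coupling remark also matches how the paper later uses $\gamma_{\mathrm{wo}}$ in the proof of Theorem~\ref{thm:effective-sampling}: there the batch is generated by drawing $m-1$ indices from $[n-1]$, then including the differing index $n$ with probability $m/n$, and otherwise adding a further uniformly random index from $[n-1]$.
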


While our methods apply to arbitrary subsampling schemes, in practice we focus on subsampling without replacement as in Proposition~\ref{prop:S_wo}. 
The following standard result makes explicit how the sampling rate $\gamma$ of such a scheme enters the privacy guarantees.

\begin{proposition}
\label{ullman-et-al}
\citep{ullman2017, Balle2018PrivacyAB} 
If $M'$ is an $(\varepsilon', \delta')$-DP randomized mechanism, then $M = M' \circ  S$ obeys $(\varepsilon, \delta)$-DP with \(
\varepsilon = \log\!\left(1 + \gamma \,(e^{\varepsilon'} - 1)\right), 
\quad 
\delta = \gamma \delta',
\)
where $\gamma$ is the sampling rate of the subsampling scheme.
\end{proposition}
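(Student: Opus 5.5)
The plan is the standard coupling-plus-convexity argument for amplification by subsampling, phrased through the hockey-stick divergence of Proposition~\ref{prop_2}. Fix neighboring datasets $x \simeq x'$ that differ in a single element. The subsampling scheme $S$ has sampling rate $\gamma$, meaning the differing element enters the batch with probability $\gamma$. I will first decompose the output distributions as mixtures. Write $M(x) = (1-\gamma)\,\mu_0 + \gamma\,\mu_1$, where $\mu_0$ is the law of $M'(S(x))$ conditioned on the batch excluding the differing element. Here $\mu_1$ is the same law conditioned on the batch including it.

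Under insertion/removal, take $x' = x \setminus \{a\}$. Conditioned on exclusion, the batches of $x$ have the same distribution as batches of $x'$ (for without-replacement sampling this needs a suitable coupling of batch sizes, or else is handled by the substitution coupling). This gives $M(x') = \mu_0$, and, via a coupling that pairs each batch containing $a$ with a neighboring batch without it, $\mu_1$ can be written as a mixture of laws $M'(y)$ with $y \simeq y_0$ for batches $y_0$ drawn from $\mu_0$'s mixture.

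Next I will use the advanced joint convexity of $H_\alpha$ from \citet{Balle2018PrivacyAB}. With $\alpha' = 1 + (\alpha - 1)/\gamma$ and $\beta = \alpha/\alpha'$,
\[
H_\alpha\bigl((1-\gamma)\mu_0 + \gamma\mu_1 \,\|\, \mu_0\bigr) = \gamma\, H_{\alpha'}\bigl(\mu_1 \,\|\, (1-\beta)\mu_0 + \beta\mu_1\bigr).
\]
I will then bound the right-hand side using ordinary joint convexity of $H_{\alpha'}$ over the coupled mixture components. Each pair of components is a pair $M'(y), M'(y_0)$ with $y \simeq y_0$, so by Proposition~\ref{prop_2} each term is at most $\delta'$ provided $\alpha' = e^{\varepsilon'}$. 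Solving $1 + (e^\varepsilon - 1)/\gamma = e^{\varepsilon'}$ gives $\varepsilon = \log(1 + \gamma(e^{\varepsilon'} - 1))$, with $\delta = \gamma\delta'$. The reverse direction $H_\alpha(M(x') \| M(x))$ is handled symmetrically, or by taking $x, x'$ in both orders.

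The main obstacle is the coupling step, specifically showing that $\mu_1$ and the relevant component of $\mu_0$ decompose into pairs of neighboring batches with matching mixture weights. For without-replacement sampling of size $m$ under substitution, I would first try the map that replaces the differing element in each including batch by a uniformly chosen excluded element. This map is measure-preserving onto the excluding batches. The subtlety in the mixed comparison term is $(1-\beta)\mu_0 + \beta\mu_1$, which I would handle by applying joint convexity componentwise with the same coupling, so that each component compares $M'(y)$ against a mixture of $M'(y_0)$ and $M'(y)$. That mixture is still within $\delta'$ by quasi-convexity of $H_{\alpha'}$ in its second argument.
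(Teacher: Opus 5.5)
\textbf{There is a genuine error in your key step.} The paper only cites this proposition and gives no proof. Your overall route is the standard argument of \citet{Balle2018PrivacyAB}: split each output law into an excluding and an including component, apply advanced joint convexity, then bound the remainder by joint convexity over a coupling of neighboring batches. The paper reuses exactly this template in its appendix proof of Theorem~\ref{thm:effective-sampling}.

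However, your central displayed identity is false as written. Advanced joint convexity applied to $(1-\gamma)\mu_0+\gamma\mu_1$ versus $(1-\gamma)\mu_0+\gamma\mu_1'$ gives $\gamma H_{\alpha'}(\mu_1\|(1-\beta)\mu_0+\beta\mu_1')$. Here $\mu_1'$ is the including component for the \emph{other} dataset; you never introduce it and write $\mu_1$ in its place. With $\mu_1$ there, the right-hand side is identically zero for $\alpha\ge 1$. If $p_1,q_0$ are the densities of $\mu_1,\mu_0$, the integrand is $\bigl((1-\alpha)p_1-(\alpha'-\alpha)q_0\bigr)_+=0$, since $\alpha'\ge\alpha\ge 1$. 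So your equation would assert $H_\alpha(M(x)\|M(x'))=0$. Your closing paragraph, which compares $M'(y)$ against a mixture of $M'(y_0)$ and $M'(y)$ itself, therefore bounds a vacuous quantity rather than the one you need.

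\textbf{The repair is local but differs by case.}

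\emph{Removal direction.} With $x'=x\setminus\{a\}$ you have $\mu_1'=\mu_0$. Writing $p$ for the density of $M(x)$, directly $p-\alpha q_0=\gamma(p_1-\alpha' q_0)$. Hence $H_\alpha(M(x)\|M(x'))=\gamma H_{\alpha'}(\mu_1\|\mu_0)$ with no mixed term, and your coupling gives $\le\gamma\delta'$.

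\emph{Insertion direction.} This is not a symmetric copy of the removal case. One has $H_\alpha(\mu_0\|(1-\gamma)\mu_0+\gamma\mu_1)=\gamma H_{\alpha'}(\mu_0\|(1-\beta)\mu_0+\beta\mu_1)$. This is where the mixed term actually lives, with $\mu_0$ rather than $\mu_1$ in the first slot. Convexity in the second argument bounds it by $\gamma\beta H_{\alpha'}(\mu_0\|\mu_1)\le\gamma\delta'$, using the base guarantee in the order $H_{\alpha'}(M'(y_0)\|M'(y))$.

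\emph{Without-replacement sampling under substitution} (the setting the paper actually uses). Keep your measure-preserving map. With $z$ the common $(m-1)$-subset, set $y=z\cup\{x_n\}$, $y'=z\cup\{x_n'\}$ and $y_0=z\cup\{x_a\}$. Each coupled term is then $H_{\alpha'}(M'(y)\|(1-\beta)M'(y_0)+\beta M'(y'))\le(1-\beta)\delta'+\beta\delta'=\delta'$, because $y\simeq_\Delta y_0$ and $y\simeq_\Delta y'$. In this case the reverse direction genuinely is symmetric.
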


\textbf{Dominating Pairs.} In DP-SGD \citep{song-et-al, Abadi-2016-dp-sgd} the mechanism $M$ is a single training step. Therefore, during a training run, we apply $M$ repeatedly. To determine the resulting $(\varepsilon, \delta)$ different \emph{privacy accountants} can be used \citep{koskela2019computingtightdifferentialprivacy, r-fold, centrallimit, sommer2019privacy}. Recently, \citet{zhu-et-al} introduced the notion of \emph{dominating pairs}, which fully characterizes the tradeoff between $\varepsilon$ and $\delta$, providing a unifying framework for tracking the privacy loss of composed mechanisms.

\begin{definition}
Let $M$ be a randomized mechanism. A pair of distributions $(P, Q)$ with densities $(p, q)$ is said to be a \emph{dominating pair} for $M$ iff
$
H_\alpha(M(x) \,\|\, M(x')) \leq H_\alpha(P \,\|\, Q) \quad \text{for all } x \simeq x' \text{ and all } \alpha \geq 0,
$
where $H_\alpha$ denotes the hockey-stick divergence.
\end{definition}

\begin{proposition}\label{prop:gaussian_dominating_pair}
\citep{zhu-et-al} For the Gaussian mechanism, $(P ,Q)$ with $P: \mathcal{N}(1, \sigma^2), Q:\mathcal{N}(0, \sigma^2)$ is a dominating pair.
\end{proposition}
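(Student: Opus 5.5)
The plan is to reduce the Gaussian mechanism to the one-dimensional pair $(\mathcal{N}(1,\sigma^2),\mathcal{N}(0,\sigma^2))$ in two steps: first a sensitivity normalization and rotation, then a direct computation of the hockey-stick divergence.

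\textbf{Reduction to one dimension.} The Gaussian mechanism is $M(x) = f(x) + Z$ with $Z\sim\mathcal{N}(0,\sigma^2 I_D)$, and (after rescaling $\sigma$ by the $\ell_2$-sensitivity) $\|f(x)-f(x')\|_2\le 1$ for all $x\simeq x'$. Fix a neighboring pair and write $\mu=f(x)-f(x')$, $\Delta=\|\mu\|_2\le 1$. The hockey-stick divergence is invariant under applying the same bijective measurable map to both arguments. So we first translate by $-f(x')$ and then apply an orthogonal rotation sending $\mu$ to $\Delta e_1$. This reduces the problem to comparing $\mathcal{N}(\Delta e_1,\sigma^2 I)$ with $\mathcal{N}(0,\sigma^2 I)$.

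The two distributions now share all coordinates other than the first, and those coordinates are independent of it. The likelihood ratio therefore depends only on the first coordinate, and integrating out the others gives $H_\alpha(M(x)\|M(x')) = H_\alpha(\mathcal{N}(\Delta,\sigma^2)\|\mathcal{N}(0,\sigma^2))$.

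\textbf{Closed form and monotonicity.} Next we compute this one-dimensional divergence explicitly. The likelihood ratio is $\exp((\Delta t-\Delta^2/2)/\sigma^2)$, which is increasing in $t$. Hence the set where it exceeds $\alpha$ is a half-line $t>t^*$ with $t^* = \sigma^2\log\alpha/\Delta + \Delta/2$. This yields
\[
H_\alpha = \Phi\!\left(-\tfrac{\sigma\log\alpha}{\Delta}+\tfrac{\Delta}{2\sigma}\right) - \alpha\,\Phi\!\left(-\tfrac{\sigma\log\alpha}{\Delta}-\tfrac{\Delta}{2\sigma}\right).
\]
It remains to show this expression is nondecreasing in $\Delta$, so that the worst case $\Delta=1$ gives $H_\alpha(\mathcal{N}(1,\sigma^2)\|\mathcal{N}(0,\sigma^2))$.

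\textbf{Main obstacle.} The delicate step is this monotonicity in $\Delta$. There are two ways to handle it.
\begin{itemize}
\item Direct differentiation. The derivative in $\Delta$ collapses, because $\alpha\,\phi(b)=\phi(a)$ at the threshold, where $a,b$ are the two $\Phi$-arguments. What remains is a nonnegative multiple of $\phi(a)$.
\item Post-processing (cleaner). For $\Delta<1$, $\mathcal{N}(\Delta,\sigma^2)$ versus $\mathcal{N}(0,\sigma^2)$ is obtained from $\mathcal{N}(1,\sigma^2)$ versus $\mathcal{N}(0,\sigma^2)$ by a common Markov kernel. Concretely, take the 2D pair $(\mathcal{N}((1,0),\sigma^2 I),\mathcal{N}(0,\sigma^2 I))$, rotate, and project onto a direction at angle $\theta$ with $\cos\theta=\Delta$. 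The data-processing inequality for $f$-divergences, of which $H_\alpha$ is one, then gives the claim.
\end{itemize}
I would use the post-processing argument, since it avoids calculus.

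The degenerate case $\Delta=0$ is trivial, since then $H_\alpha=\max(1-\alpha,0)\le H_\alpha(P\|Q)$. The case $\alpha<1$ is covered by the same data-processing argument.
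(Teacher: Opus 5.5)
Your argument is correct. The paper does not prove this proposition; it imports it from \citet{zhu-et-al} and uses it as a black box. Your proposal is therefore a self-contained substitute rather than a variant of an in-paper argument.

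Both of your monotonicity routes work. In the first, the identity $\alpha\,\phi(b)=\phi(a)$ does collapse the derivative to $\phi(a)/\sigma>0$. That calculation also gives you the explicit privacy profile in closed form.

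The second route relies only on the data-processing inequality. $H_\alpha$ satisfies it for every $\alpha\ge 0$, since it equals $\int f(dP/dQ)\,dQ$ with $f(t)=\max\{t-\alpha,0\}$ convex. As a result, $\alpha<1$ and $\Delta=0$ need no separate treatment.

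The post-processing idea can even replace your rotation and one-dimensional reduction. Take any $\mu$ with $\|\mu\|_2\le 1$, and use the kernel $t\mapsto t\mu+W$ with $W\sim\mathcal{N}(0,\sigma^2(I-\mu\mu^\top))$. This covariance is valid exactly because $\|\mu\|_2\le 1$. The kernel maps $(\mathcal{N}(1,\sigma^2),\mathcal{N}(0,\sigma^2))$ directly to $(\mathcal{N}(\mu,\sigma^2 I),\mathcal{N}(0,\sigma^2 I))$, and translating by $f(x')$ then finishes the proof in one step.
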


While Proposition~\ref{ullman-et-al} provides a simple and intuitive privacy amplification bound for $\varepsilon > 0$ (i.e., $\alpha > 1$), it does not cover the complementary regime $\varepsilon \leq 0$ (i.e., $0 < \alpha \leq 1$).  
To address this gap, we turn to the notion of dominating pairs, which yields a more general subsampling bound valid across the entire range of $\varepsilon$.

\begin{proposition} \citep{zhu-et-al}
\label{prop:subsampling-bound}
Let $\mathcal{M}$ be a mechanism with dominating pair $(P, Q)$ under the substitution relation $\simeq_{\Delta}$ and $S^{\mathrm{wo}}_m$ the subsampling mechanism from Proposition \ref{prop:S_wo} with sampling rate $\gamma_\text{wo} = m/n$. Then the composed mechanism $\mathcal{M} \circ S^{\mathrm{wo}}_m : \mathcal{X}_n \to \mathbb{R}^D$ satisfies:
\[
\delta_{\mathcal{M} \circ S^{\mathrm{wo}}_m}(\alpha) \le
\begin{cases}
H_\alpha((1 - \gamma_\text{wo})Q + \gamma_\text{wo} P \,\|\, Q) & \text{for } \alpha \ge 1, \\
H_\alpha(P \,\|\, (1 - \gamma_\text{wo})P + \gamma_\text{wo} Q) & \text{for } 0 < \alpha < 1.
\end{cases}
\]
\end{proposition}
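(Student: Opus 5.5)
We prove the bound through a coupling argument for subsampling without replacement. Combined with the joint convexity of the hockey-stick divergence, the coupling reduces the composed mechanism to a two-component mixture, and the dominating pair $(P,Q)$ then controls each component. Fix neighboring datasets $x \simeq_\Delta x'$ of size $n$ that differ only in the entry $a$ (replaced by $a'$), and fix $\alpha>0$.

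\textbf{Step 1: decompose the batch distribution.} A batch $y$ drawn by $S^{\mathrm{wo}}_m(x)$ contains $a$ with probability exactly $m/n=\gamma_{\mathrm{wo}}$. Write $\mu_1$ for the law of batches containing $a$ and $\mu_0$ for the law of batches not containing $a$. Define the analogous laws $\mu_1', \mu_0'$ for $x'$ and $a'$. The batches avoiding the differing entry have identical distributions, so $\mu_0=\mu_0'$. There is also a natural coupling $\pi$ between $\mu_1$ and $\mu_1'$: replace $a$ by $a'$ in the batch. Under this coupling the two batches satisfy $y\simeq_\Delta y'$. Writing $\nu_0=\int \mathcal{M}(y)\,d\mu_0(y)$, we obtain
\[
\mathcal{M}\circ S(x) = (1-\gamma)\nu_0 + \gamma \int \mathcal{M}(y)\,d\pi, \qquad
\mathcal{M}\circ S(x') = (1-\gamma)\nu_0 + \gamma \int \mathcal{M}(y')\,d\pi.
\]

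\textbf{Step 2: reduce to a single pair of outputs.} The hockey-stick divergence is jointly convex. This yields
\[
H_\alpha\big(\mathcal{M}\circ S(x)\,\|\,\mathcal{M}\circ S(x')\big) \le \sup H_\alpha\big((1-\gamma)\mathcal{M}(z)+\gamma \mathcal{M}(y)\,\|\,(1-\gamma)\mathcal{M}(z)+\gamma\mathcal{M}(y')\big),
\]
with the supremum over batches $z$ and substitution-neighboring batches $y\simeq_\Delta y'$. We then apply the advanced joint convexity of \citet{zhu-et-al}. For $\alpha\ge1$, set $\alpha'=1+(\alpha-1)/\gamma$. The right-hand side equals $\gamma H_{\alpha'}\big(\mathcal{M}(y)\,\|\,(1-\beta)\mathcal{M}(z)+\beta\mathcal{M}(y')\big)$ for an explicit $\beta$. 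Convexity in the second argument bounds this by a combination of $H_{\alpha'}(\mathcal{M}(y)\|\mathcal{M}(z))$ and $H_{\alpha'}(\mathcal{M}(y)\|\mathcal{M}(y'))$. Both $z$ and $y'$ are at substitution distance at most one from $y$, up to an additional coupling of $z$ with $y$ that we must verify. Each of these divergences is therefore at most $H_{\alpha'}(P\|Q)$.

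\textbf{Step 3: identify the final form.} Running the same advanced joint convexity identity in reverse on $(P,Q)$ gives $\gamma H_{\alpha'}(P\|Q) = H_\alpha((1-\gamma)Q+\gamma P\,\|\,Q)$ when $\alpha\ge1$. This matches the first case of the claimed bound. For $0<\alpha<1$ we use the symmetry $H_\alpha(A\|B)=\alpha H_{1/\alpha}(B\|A)+1-\alpha$ (valid for probability measures) to swap the roles of the two arguments. We then rerun the argument with the mixture placed in the second argument, which yields $H_\alpha(P\,\|\,(1-\gamma)P+\gamma Q)$.

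The main obstacle is Step 2. The mixture component $\mathcal{M}(z)$ shares no batch with $y$, so we need a coupling under which $z$, $y$, and $y'$ are pairwise substitution-neighbors. We would first try to construct $z$ from $y$ by swapping $a$ for a uniformly chosen element outside the batch; this preserves the law $\mu_0$, and we would check that claim carefully. The $\alpha<1$ case also needs care, since the dominating pair must be applied with its arguments in the reversed order.
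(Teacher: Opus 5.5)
Your proposal is correct and follows essentially the same route the paper relies on. The paper cites Zhu et al.\ for this proposition, and its proof of Theorem~\ref{thm:effective-sampling} runs exactly your argument with an extra crop component: the coupling that swaps the differing element for a uniformly chosen outside element, joint convexity, then advanced joint convexity with a dominating-pair bound on each component, the reverse identity $\gamma H_{\alpha'}(P\|Q)=H_\alpha((1-\gamma)Q+\gamma P\|Q)$, and the $(Q,P)$ symmetry for $0<\alpha<1$. The coupling step you flagged does check out, since each $m$-subset $z$ avoiding $a$ is produced with probability $m\binom{n-1}{m-1}^{-1}(n-m)^{-1}=\binom{n-1}{m}^{-1}$, which is exactly its probability under $\mu_0$.
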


This result provides a tight upper bound on the privacy profile of the subsampled mechanism. We will later improve upon this bound by leveraging the interaction of random cropping and patch-level privacy.

% ----------------------------------------------------------------
\section{Privacy Amplification via Random Cropping}

We study a form of privacy amplification that arises uniquely in vision models trained with data augmentations. Specifically, we show that the spatial structure of visual data, combined with random cropping, naturally gives rise to an amplification effect akin to that provided by minibatch subsampling in DP-SGD. Our analysis formalizes this intuition through the notion of \emph{patch-level differential privacy}.

This section introduces the key components of our framework. First, we define a patch-level neighboring relation that captures localized substitutions within an image. Second, we model random cropping as a stochastic transformation over padded images. Finally, we characterize how often a fixed region of interest appears in a random crop, and use this to derive privacy bounds under patch-level substitutions.

\subsection{Patch-Level Neighboring Relation}

As previously introduced, differential privacy is defined with respect to a neighboring relation, which specifies when two datasets are considered adjacent. To capture localized privacy risks in vision tasks, we define a neighboring relation for bounded spatial regions in a single image.
Specifically, we define it as the substitution of a rectangular patch. Substitution is a natural choice because insertion or removal would alter the size or structure of an image (incompatible with fixed-shape learning pipelines), while substitution preserves structure and realistically captures localized, contiguous changes of data.

\begin{definition}
\label{def:patch-neighbor}
Let \(\mathcal{I} := [0,255]^{3 \times H_I \times W_I}\) denote the set of input images with 
\(C=3\) channels and spatial dimensions \(H_I \times W_I\).  
Define \( \mathcal{X} \) as the space of datasets of fixed size \( n \), where each dataset 
\( x = \{x_1, \dots, x_n\} \in \mathcal{X} \) consists of \(n\) images \(x_i \in \mathcal{I}\).  
Let \update{\( R \subseteq [H_I] \times [W_I] \) be a fixed rectangular region of pixel coordinates (i.e., a patch).}
\update{We write $x_i(s,t)$ for the pixel value at coordinates $(s,t)$ in image $x_i$ of dataset $x$.}
We say that two datasets \( x, x' \in \mathcal{X} \) are patch-level substitution neighbors, denoted 
\( x \simeq_{\Delta,p} x' \), if there exists an index \( i \in [n] \) and a region \( R \) such that  
\(x_j = x_j'\) for all \( j \ne i \), and \( x_i(s,t) = x_i'(s,t)\) for all pixels \((s,t) \notin R\),  
but \( x_i(s,t) \ne x_i'(s,t)\) for some pixels \((s,t) \in R\).
\end{definition}

In other words, the datasets differ only in a single spatially localized region \( R \) within a single image, and are otherwise identical. \update{Notably, even a single-pixel change within $R$ suffices to constitute adjacency.} This formulation aligns with many real-world privacy scenarios in vision, where sensitive content is confined to a localized area, as discussed in the introduction, and serves as the foundation for our analysis of spatially-aware privacy mechanisms.
\update{At its core, this notion is analogous to the usage of record-level DP (protecting individual data points) \citep{Dwork2006CalibratingNT, Dwork2006DifferentialP}, compared to user-level DP (protecting all data from a single user) \citep{user-level}. Coarser relations are always valid, but finer-grained relations yield tighter guarantees when domain structure permits. We provide a detailed discussion of when patch-level privacy is appropriate versus record-level privacy in Appendix~\ref{app:patch-vs-record}.}

\subsection{Random Cropping as a Stochastic Mechanism} 

Random cropping introduces stochasticity at the spatial level by selecting subregions of an image.
It is primarily used as a data augmentation technique to improve generalization \citep{alexnet, Shorten2019ASO}, but this randomness can also be interpreted as a stochastic transformation, akin to minibatch sampling in DP-SGD.  
To formalize random cropping, we first account for the fact that, in practice, images could be symmetrically padded before cropping. Padding enlarges the spatial domain without altering the image content, and all subsequent operations are then performed on the padded image. Given an image \(x_i \in \mathcal{I}\), we use \(\tilde{x}_i \in [0,255]^{3 \times (H_I + 2\mathrm{pad}_y) \times (W_I + 2\mathrm{pad}_x)}\) to denote the corresponding symmetrically padded image, where \(\mathrm{pad}_x, \mathrm{pad}_y \in \mathbb{N}_0\) are the horizontal and vertical padding sizes. In the following, all coordinates are defined with respect to the padded image, using a bottom-left origin.

\begin{definition}
\label{def:crop-region}
Let \( H_C \times W_C \) be a fixed crop size. The set of valid crop origins is \(
\Omega := \left\{(u,v) \in \mathbb{N}_0^2 \,\middle|\, 0 \le u \le W_I + 2\mathrm{pad}_x - W_C, \quad 0 \le v \le H_I + 2\mathrm{pad}_y - H_C \right\}.
\)
For each origin \( (u,v) \in \Omega \), the corresponding crop region is \(
C_{u,v} := \left\{ (i,j) \in \mathbb{N}_0^2 \,\middle|\, u \le i < u + W_C, \quad v \le j < v + H_C \right\}.
\)
\end{definition}

In words, Definition~\ref{def:crop-region} specifies the set of all possible rectangular windows of size \(H_C \times W_C\) that can be extracted from the padded image.  
\begin{definition}
\label{def:crop-mechanism}
The random cropping mechanism \( S^{\mathrm{crop}} : \mathcal{I} \to \mathcal{I}_C \) samples a crop origin \( (u,v) \sim \mathrm{Unif}(\Omega) \) from the set of valid origins (Definition~\ref{def:crop-region}) and returns
\begin{equation}
S^{\mathrm{crop}}(x_i) := \tilde{x}_i \big|_{C_{u,v}},
\end{equation}
where \( \tilde{x}_i \) is the padded image and \( \tilde{x}_i \big|_{C_{u,v}} \) denotes its restriction to the crop region \( C_{u,v} \).
\end{definition}

\subsection{Patch Inclusion Probability}

To analyze privacy amplification under patch-level substitution, we first need to quantify how likely it is that a sensitive region is included in a crop. If a private patch is excluded entirely, the mechanism's output becomes independent of that region, meaning it cannot influence the gradient. \update{Conversely, any intersection means the private region can influence the output.} This motivates the definition of the \emph{patch inclusion probability}, i.e., the probability that a randomly chosen crop intersects with region $R$.

\begin{definition}
\label{def:gamma-prime}
Let $R \subseteq \mathbb{N}_0^2$ be a fixed rectangular region in the padded image.  
Let $\Omega_R := \{(u,v) \in \Omega \mid C_{u,v} \cap R \ne \emptyset \}$ be the set of crop origins for which the corresponding crop region $C_{u,v}$ intersects $R$.  
Then the patch-level inclusion probability is \(\gamma_{\mathrm{crop}}'(R) := \frac{|\Omega_R|}{|\Omega|}.
\)
\end{definition}

Intuitively, this probability increases with crop size and decreases with image size.
Under the neighboring relation \( \simeq_{\Delta,p} \), two datasets differ only in a single spatially localized region \( R \). In analogy to standard amplification by subsampling, where each record is included with some probability, 
we treat \( \gamma_{\mathrm{crop}}' \) as the sampling rate at which the private region is included in the mechanism’s input.
We now derive a closed-form expression for this quantity, assuming a fixed patch location.

\begin{lemma}
\label{lem:gamma-prime}
Let $R \subseteq \mathbb{N}_0^2$ be a fixed rectangular private patch in the \emph{original} image, with bottom-left coordinate $(R_x, R_y)$ and size $H_R \times W_R$.  
After applying symmetric padding of $\mathrm{pad}_x$ pixels horizontally and $\mathrm{pad}_y$ pixels vertically, we denote its bottom-left coordinate in the \emph{padded} image by \(
R_x' = R_x + \mathrm{pad}_x
\), $R_y' = R_y + \mathrm{pad}_y$.
Let $(X_{\min}, Y_{\min})$ and $(X_{\max}, Y_{\max})$ denote the minimum and maximum crop origins for which the crop region $C_{u,v}$ intersects $R$. These are given by:
\[
\begin{aligned}
X_{\min} &= \max\left(0,\, R_x' - W_C + 1\right), 
&\quad X_{\max} &= \min\left(W_I + 2\mathrm{pad}_x - W_C,\, R_x' + W_R - 1\right), \\
Y_{\min} &= \max\left(0,\, R_y' - H_C + 1\right), 
&\quad Y_{\max} &= \min\left(H_I + 2\mathrm{pad}_y - H_C,\, R_y' + H_R - 1\right).
\end{aligned}
\]

Then, the \emph{patch-level inclusion probability} (Definition~\ref{def:gamma-prime}) is:
\begin{equation}
\gamma_{\mathrm{crop}}'(R) 
= \frac{(X_{\max} - X_{\min} + 1)\,(Y_{\max} - Y_{\min} + 1)}{(W_I + 2\operatorname{pad}_x - W_C + 1)\,(H_I + 2\operatorname{pad}_y - H_C + 1)}.
\end{equation}
\end{lemma}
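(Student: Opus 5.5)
The plan is to compute $|\Omega_R|$ and $|\Omega|$ directly from Definitions~\ref{def:crop-region} and~\ref{def:gamma-prime} and take their ratio. The denominator follows immediately from Definition~\ref{def:crop-region}. $\Omega$ is a product of two integer intervals: $u\in\{0,\dots,W_I+2\mathrm{pad}_x-W_C\}$ and $v\in\{0,\dots,H_I+2\mathrm{pad}_y-H_C\}$. Hence
\[
|\Omega|=(W_I+2\mathrm{pad}_x-W_C+1)(H_I+2\mathrm{pad}_y-H_C+1).
\]
This assumes the crop fits in the padded image, so both factors are positive.

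For the numerator, I would use the fact that both $C_{u,v}$ and $R$ are axis-aligned integer rectangles. In padded coordinates, $R=\{R_x',\dots,R_x'+W_R-1\}\times\{R_y',\dots,R_y'+H_R-1\}$. Two products of intervals intersect exactly when their projections intersect on each axis. So $C_{u,v}\cap R\neq\emptyset$ holds if and only if both
\[
[u,u+W_C-1]\cap[R_x',R_x'+W_R-1]\neq\emptyset \quad\text{and}\quad [v,v+H_C-1]\cap[R_y',R_y'+H_R-1]\neq\emptyset .
\]
For integer intervals, the first condition is equivalent to $u\le R_x'+W_R-1$ and $u+W_C-1\ge R_x'$. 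This is the same as $R_x'-W_C+1\le u\le R_x'+W_R-1$, and the $v$-condition is analogous.

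Intersecting these ranges with the admissible range $0\le u\le W_I+2\mathrm{pad}_x-W_C$ gives exactly $X_{\min}\le u\le X_{\max}$ with the stated max/min expressions, and similarly $Y_{\min}\le v\le Y_{\max}$. Therefore $\Omega_R=\{X_{\min},\dots,X_{\max}\}\times\{Y_{\min},\dots,Y_{\max}\}$, a product set. This gives $|\Omega_R|=(X_{\max}-X_{\min}+1)(Y_{\max}-Y_{\min}+1)$, and dividing by $|\Omega|$ yields the claimed formula.

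The only delicate step is checking that these ranges are nonempty, so that the counts $X_{\max}-X_{\min}+1$ and $Y_{\max}-Y_{\min}+1$ are valid rather than negative. I would argue this from $R$ lying inside the original image. Then $R_x'\ge\mathrm{pad}_x\ge 0$ and $R_x'+W_R-1\le W_I+\mathrm{pad}_x-1$.

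It remains to show $X_{\min}\le X_{\max}$, which amounts to four pairwise inequalities:
\begin{itemize}
\item $0\le W_I+2\mathrm{pad}_x-W_C$ holds by the crop-fits assumption.
\item $0\le R_x'+W_R-1$ holds trivially.
\item $R_x'-W_C+1\le R_x'+W_R-1$ holds since $W_C,W_R\ge1$.
\item $R_x'-W_C+1\le W_I+2\mathrm{pad}_x-W_C$ holds since $R_x'\le W_I+\mathrm{pad}_x-1$.
\end{itemize}
The $y$-direction is identical. Hence both factors are at least $1$, and the formula holds.
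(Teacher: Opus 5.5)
Your proof is correct and follows the same route as the paper: count $|\Omega|$ as the product of the two origin ranges, identify $\Omega_R$ with the product set $\{X_{\min},\dots,X_{\max}\}\times\{Y_{\min},\dots,Y_{\max}\}$, and take the ratio. The paper only asserts the per-axis intersection condition and never checks that the ranges are nonempty, so your interval-projection argument and your verification that $X_{\min}\le X_{\max}$ and $Y_{\min}\le Y_{\max}$ make the argument more complete than the original.
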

\textit{Proof.} See Appendix~\ref{appendix:gamma-proof}.

\subsection{Worst-Case Private Patch Exposure}

The definition of \(\gamma_{\mathrm{crop}}'(R)\) above provides the inclusion probability for a fixed patch location \(R\).  
For sound privacy guarantees, however, we need to account for the worst-case placement of the private region, i.e., the position that maximizes its inclusion probability and thus its potential privacy exposure.

\begin{definition}
\label{def:worst-case-gamma}
The worst-case patch inclusion probability is defined as \(
{\gamma_{\mathrm{crop}}} := \max\limits_{R} \gamma_{\mathrm{crop}}'(R),
\)
where the maximization ranges over all valid placements of \( R \) within the padded image domain.
\end{definition}

\begin{theorem}
\label{thm:max-gamma}
Let \(R_{\text{center}} = \left( \left\lfloor \frac{H_I - H_R}{2} \right\rfloor, \left\lfloor \frac{W_I - W_R}{2} \right\rfloor \right)
\) denote the centrally placed patch.
Then, the worst-case patch inclusion probability is achieved at \( R_{\text{center}} \), i.e., $\underset{R}{\arg\max}\ \gamma_{\mathrm{crop}}'(R) = R_{\text{center}}.$
\end{theorem}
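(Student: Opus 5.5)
The plan is to use the closed form from Lemma~\ref{lem:gamma-prime}. There the denominator does not depend on $R$, and the numerator factorizes into a horizontal factor $N_x(R_x) := X_{\max}-X_{\min}+1$ and a vertical factor $N_y(R_y) := Y_{\max}-Y_{\min}+1$. The horizontal factor depends only on $R_x$, and the vertical factor only on $R_y$. Maximizing $\gamma'_{\mathrm{crop}}(R)$ over placements therefore splits into two independent one-dimensional problems: maximize $N_x$ over $R_x \in \{0,\dots,W_I-W_R\}$ and $N_y$ over $R_y \in \{0,\dots,H_I-H_R\}$. The argument for the two coordinates is identical, so I will treat only the horizontal one and set $L := W_I + 2\mathrm{pad}_x - W_C$ (the largest origin), $a := R_x' = R_x + \mathrm{pad}_x$, and $w := W_R$.

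\textbf{One-dimensional analysis.} The factor is
\[
N_x(a) = \min(L,\, a+w-1) - \max(0,\, a-W_C+1) + 1 .
\]
It can be read as the number of integer origins in $[0,L]$ that lie in the interval $[a-W_C+1,\, a+w-1]$. That interval has fixed length $W_C+w-1$, and its left endpoint slides with $a$. The sliding-window view makes the shape of $N_x$ clear: the count is nondecreasing in $a$ while the left end is clipped at $0$ and the right end is not yet clipped at $L$. It is constant once both ends are clipped, or once neither is. It is nonincreasing once only the right end is clipped. So $N_x$ is a unimodal (trapezoidal) function of $a$. It is also symmetric under the reflection $a \mapsto L - a - w + 2 - W_C + \dots$; concretely, reflecting the padded image left-right maps the patch at $R_x$ to the patch at $W_I - W_R - R_x$ and maps crop origins bijectively onto crop origins. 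Hence $N_x(R_x) = N_x(W_I-W_R-R_x)$.

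\textbf{Concluding the argument.} A function on an integer interval that is unimodal and symmetric about the midpoint $(W_I-W_R)/2$ attains its maximum at the midpoint, or at both neighbouring integers when the midpoint is a half-integer. In particular it is maximized at $\lfloor (W_I-W_R)/2\rfloor$. The same holds for the vertical factor, and since the product of two nonnegative factors is maximized by maximizing each one, $R_{\text{center}}$ is a maximizer. I will state explicitly that the $\arg\max$ need not be unique: on the plateau, for example when the crop is large relative to the image, many placements tie. The theorem should therefore be read as "$R_{\text{center}}$ attains the maximum."

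\textbf{Main obstacle.} The delicate step is proving unimodality rigorously from the min/max expression. The plan is to write the increment $N_x(a+1)-N_x(a) = \mathbb{1}[a+w \le L] - \mathbb{1}[a-W_C+2 \ge 1]$, which lies in $\{-1,0,1\}$. The first indicator is nonincreasing in $a$ and the second is nondecreasing, so the increment is nonincreasing in $a$. That makes $N_x$ discretely concave, which gives unimodality, and combined with the reflection symmetry it yields the result.
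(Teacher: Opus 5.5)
Your proof is correct and shares the paper's skeleton. Both use Lemma~\ref{lem:gamma-prime} to write $\gamma'_{\mathrm{crop}}(R)$ as a constant denominator times independent horizontal and vertical factors, and both rest on the one-dimensional factor being trapezoidal. The difference is in how the maximizer is located.

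The paper enumerates four regimes (left-bound only, right-bound only, fully contained, both bounds active). It computes the factor in each ($R_x'+W_R$, $W_I+2\mathrm{pad}_x-R_x'$, $W_R+W_C-1$, $W_I+2\mathrm{pad}_x-W_C+1$). It then simply asserts that the central placement lands in the constant regime. You instead get discrete concavity for all regimes and their boundaries at once from the increment identity $N_x(a+1)-N_x(a)=\mathbf{1}[a+w\le L]-\mathbf{1}[a\ge W_C-1]$. You then pin the maximizer at the midpoint with the reflection $R_x\mapsto W_I-W_R-R_x$, which relies on symmetric padding and uniform crop origins.

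That symmetry step supplies exactly the justification the paper leaves informal. The plateau $W_C-1\le R_x'\le W_I+2\mathrm{pad}_x-W_C-W_R+1$, and likewise the both-bounds-active range, is symmetric about $\mathrm{pad}_x+(W_I-W_R)/2$, which the paper never states. Your version is therefore the cleaner argument for the theorem as stated. The paper's case split gives in exchange the explicit maximal value of each factor, which is what one plugs in to compute $\gamma_{\mathrm{crop}}$.

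Your remark that the $\arg\max$ is in general a set containing $R_{\text{center}}$, not a singleton, is correct. It is consistent with the plateaus in the paper's own case analysis.

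One small fix: the reflection you write in the padded coordinate $a$ is garbled. It should be $a\mapsto L+W_C-w-a$, which is equivalent to the $R_x\mapsto W_I-W_R-R_x$ form you actually use.
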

\textit{Proof.} See Appendix~\ref{appendix:gamma-center}.

The patch inclusion probability quantifies an image-level stochasticity that is distinct from, and composes naturally with, the minibatch-level sampling performed in DP-SGD.  
Specifically, each image undergoes two layers of randomness: first, random cropping at the patch level, and second, dataset-level minibatch subsampling without replacement.

This leads to an effective sampling rate \(
\gamma_{\mathrm{eff}} := \gamma_\text{wo} \cdot \gamma_{\mathrm{crop}},
\)
where \( \gamma_\text{wo} = m/n \) is the minibatch sampling rate and \( \gamma_{\mathrm{crop}} \) is the worst-case patch inclusion probability.
We now state a result that makes this interaction explicit in terms of differential privacy.

\begin{theorem}
\label{thm:effective-sampling}
Let \( \mathcal{M} \) be a mechanism that is dominated by $P,Q$ under the substitution relation \( \simeq_{\Delta} \). Let \( S^{\mathrm{wo}}_m \) denote without-replacement subsampling of minibatch size \( m \) and $S^{\mathrm{crop}}$ the random cropping mechanism we defined in Definition \ref{def:crop-mechanism}. Then,
under the patch-level substitution relation \( \simeq_{\Delta,p} \), 
the composed mechanism \( \mathcal{M} \circ S^{\mathrm{crop}} \circ S^{\mathrm{wo}}_m \) satisfies
\begin{equation}\label{eq:main_tight_bound}
\delta_{\mathcal{M} \circ S^{\mathrm{crop}} \circ S^{\mathrm{wo}}_m}(\alpha) \le
\begin{cases}
H_\alpha\left((1 - \gamma_{\mathrm{eff}}) Q + \gamma_{\mathrm{eff}} P \,\middle\|\, Q\right) & \text{if } \alpha \ge 1, \\
H_\alpha\left(P \,\middle\|\, (1 - \gamma_{\mathrm{eff}}) P + \gamma_{\mathrm{eff}} Q\right) & \text{if } 0 < \alpha < 1,
\end{cases}
\end{equation}
where \( \gamma_{\mathrm{eff}} := \gamma \cdot \gamma_{\mathrm{crop}} \), with minibatch sampling rate \( \gamma = \frac{m}{n} \) and worst-case patch inclusion probability \( \gamma_{\mathrm{crop}} \).
\end{theorem}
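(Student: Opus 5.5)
Fix patch-level neighbors $x \simeq_{\Delta,p} x'$ that differ only in image $i$, inside a region $R$, and fix $\alpha>0$. The plan is to write both output distributions as mixtures over the joint randomness of minibatch sampling and cropping. We split this randomness according to the event $E$ that image $i$ is both sampled into the minibatch and cropped so that $C_{u,v}\cap R\neq\emptyset$. Minibatch sampling and cropping are independent, and each image is cropped independently. Hence
\[
\Pr[E] = \gamma\,\gamma_{\mathrm{crop}}'(R) \le \gamma\,\gamma_{\mathrm{crop}} = \gamma_{\mathrm{eff}},
\]
where the last inequality uses Definition~\ref{def:worst-case-gamma}; Theorem~\ref{thm:max-gamma} is not even needed for this step. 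On the complement $E^c$, either image $i$ is not sampled, or its crop misses $R$. In both cases the cropped minibatch produced from $x$ coincides with the one produced from $x'$, because $x_i$ and $x_i'$ agree outside $R$ and padding is applied identically to both. Conditioned on $E$, the two cropped minibatches differ in exactly one element. Under a natural coupling (same batch indices, same crop origins for every image), they are therefore substitution neighbors under $\simeq_\Delta$.

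This gives the decomposition
\[
\mathcal{M}(x) \sim (1-w)\,\mu_0 + w\,\mu_1,\qquad \mathcal{M}(x') \sim (1-w)\,\mu_0 + w\,\mu_1',
\]
where $w=\Pr[E]$, $\mu_0$ is the common output distribution on $E^c$, and $\mu_1,\mu_1'$ are mixtures, over the coupled randomness conditioned on $E$, of $\mathcal{M}(y)$ and $\mathcal{M}(y')$ with $y\simeq_\Delta y'$. This has exactly the structure of subsampling with inclusion probability $w$.

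Next we bound the hockey-stick divergence. Rather than re-proving Proposition~\ref{prop:subsampling-bound}, I would follow its proof technique: the advanced joint convexity of $H_\alpha$ (as in \citet{zhu-et-al} and Balle et al.). For $\alpha\ge1$, joint convexity gives
\[
H_\alpha\big((1-w)\mu_0+w\mu_1\,\|\,(1-w)\mu_0+w\mu_1'\big)\le w\,H_{\beta}(\mu_1\|\,\cdot\,),
\]
with a modified parameter $\beta$ and a mixed reference distribution. Each conditional pair is dominated by $(P,Q)$, using quasi-convexity over the coupled mixture together with the dominating-pair assumption under $\simeq_\Delta$. This reproduces the bound $H_\alpha((1-w)Q+wP\,\|\,Q)$. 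The case $0<\alpha<1$ follows symmetrically by swapping the roles of the two datasets: use $H_\alpha(P\|Q)=\alpha H_{1/\alpha}(Q\|P)+1-\alpha$ and derive the second branch.

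The main obstacle is the reduction to the dominating pair. The subtle point is that in the subsampling proof $\mu_0$ is itself a mixture of the same conditional distributions that make up $\mu_1$, and it is this shared structure that lets $\mu_0$ be replaced by $Q$ (respectively $P$). I would try to handle this by coupling: condition on the crops and batch of all other images, so that the problem reduces to the general subsampling bound for a mechanism with a one-element difference.

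Finally we replace $w$ by $\gamma_{\mathrm{eff}}$. The right-hand side $H_\alpha((1-w)Q+wP\|Q)=H_\alpha(Q+w(P-Q)\|Q)$ is the hockey-stick divergence of a path that is linear in $w$. It is convex in $w$ and equals $0$ at $w=0$, so it is nondecreasing in $w$ on $[0,1]$. The same holds for the second branch by the symmetric argument. Hence $w\le\gamma_{\mathrm{eff}}$ yields the claimed bound uniformly over all $R$ and all neighboring pairs.
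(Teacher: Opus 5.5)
Your proposal is correct and follows essentially the same route as the paper. The paper splits the randomness into three components (image excluded, included with the crop missing $R$, included with the crop hitting $R$) and immediately merges the first two, which is your $\mu_0$; it then applies advanced joint convexity with weight $\gamma\,\gamma'_{\mathrm{crop}}(R)$, bounds each cross term by $H_{\alpha'}(P\|Q)$ via a coupling in which the batches differ by a single substitution, passes to $\gamma_{\mathrm{eff}}$ by monotonicity, and handles $0<\alpha<1$ by the symmetry/conjugation argument. The only cosmetic differences are that the paper conditions on the other images' batch and crops before invoking advanced joint convexity, and argues monotonicity through the product form $w\,H_{\alpha'(w)}(P\|Q)$ rather than through your convexity-in-$w$ argument.
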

\textit{Proof.} See Appendix~\ref{appendix:patch-composition}.

This theorem captures the central insight of our work. It shows that the randomness already present in data augmentation, i.e., random cropping, can be leveraged as an additional source of privacy amplification \emph{for free}. This amplification is possible because we adopt a spatially localized neighboring relation, which aligns the privacy definition with the structure of vision data. Together, these elements explain why, for any minibatch sampling rate \( \gamma\), our privacy guarantees are strictly stronger than the usual tight analysis of DP-SGD from Proposition~\ref{prop:subsampling-bound} would suggest.
Not only are the guarantees strictly stronger, but one can also control both sources of randomness independently to attain a wider range of privacy-utility trade-offs.

\textbf{Tightness for DP-SGD.} Eq.\ref{eq:main_tight_bound} provides a sound upper bound for arbitrary mechanisms. In \ref{proof:tightness}, we additionally prove that the bound holds with equality when $P, Q$ are a tight dominating pair of the mechanism $\mathcal{M} : 2^{\mathcal{I}_C} : \rightarrow \mathbb{R}^D$ that maps batches of cropped images to clipped, noised, and averaged gradients for some worst-case model $f$.
Without further assumptions about the model, it is therefore the strongest possible privacy guaranty that can be derived for DP-SGD with patch-level subsampling. 

\textbf{Generalization to Arbitrary Regions.}
\label{sec:varying_shapes}
While we assume rectangular patches here for clarity and tractability, the same reasoning applies to other shapes as well. For certain simple geometries such as circular regions (e.g., faces), it may still be possible to derive closed-form inclusion probabilities. For irregular blobs, however, no compact expression exists, and the probability must instead be calculated using computational methods. The more faithfully the private region is modeled by its true structure, the less conservative the assumptions need to be, resulting in tighter inclusion probability bounds and correspondingly stronger privacy amplification from domain-specific knowledge. \update{We demonstrate this empirically with varied geometries in Appendix~\ref{app:varying_geometry}.}

\update{\textbf{Generalization to Multiple Patches.} Our analysis extends naturally to images containing multiple disjoint sensitive regions. One could (1) use the group privacy property of DP \citep{Vadhan2017TheCO}, or (2) assume the union of these patches as a single private blob and directly use our tight analysis as discussed above. In either case, the worst-case limit of multiple patches covering the entire image recovers standard DP-SGD.}

\begin{figure}[ht]
    \centering
    \begin{subfigure}[t]{0.49\textwidth}
        \centering
        \includegraphics[width=\textwidth]{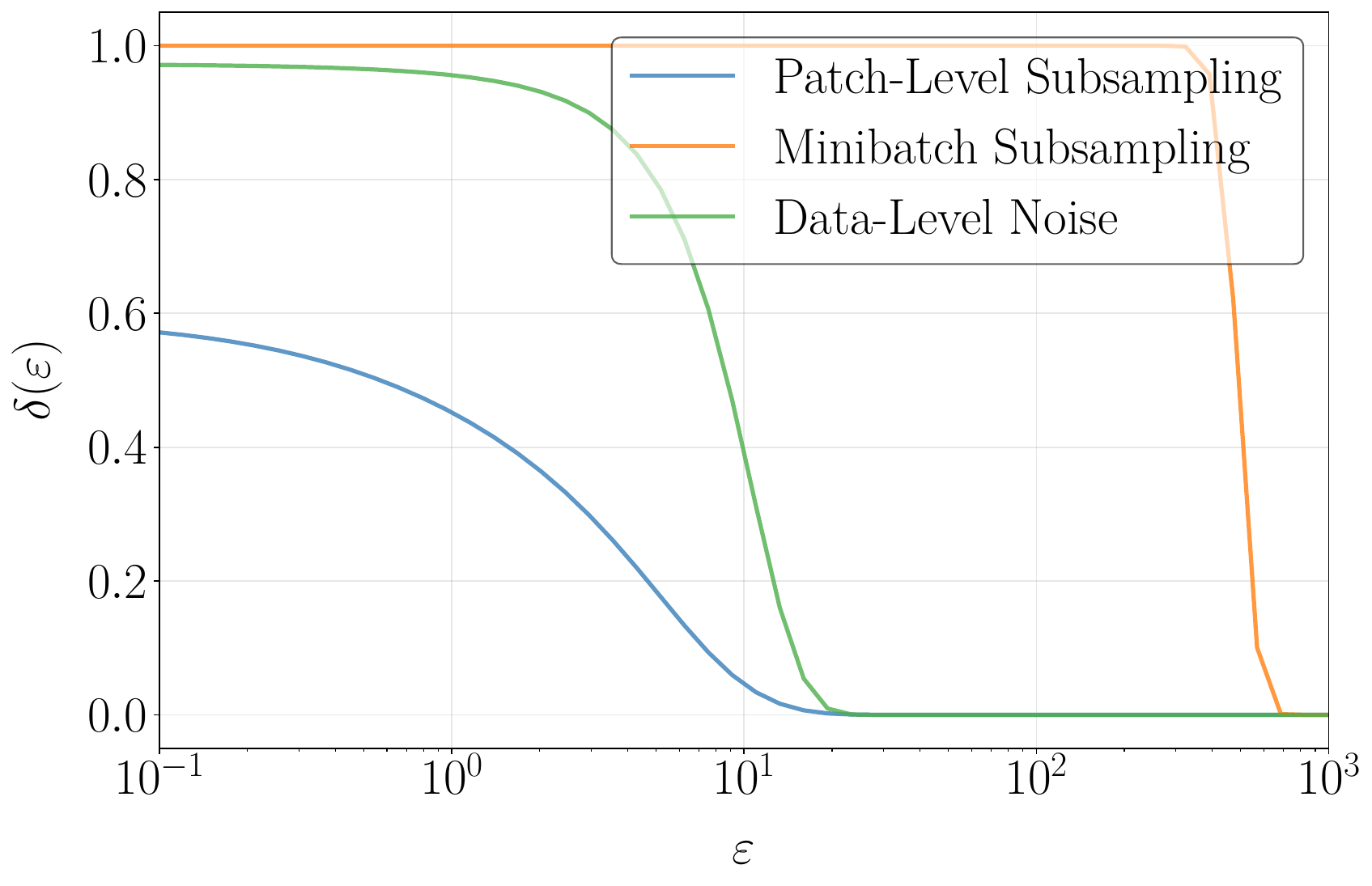}
        \caption{Private patch $10 \times 10$}
        \label{privacy_prof_a}
    \end{subfigure}
    \hfill
    \begin{subfigure}[t]{0.49\textwidth}
        \centering
        \includegraphics[width=\textwidth]{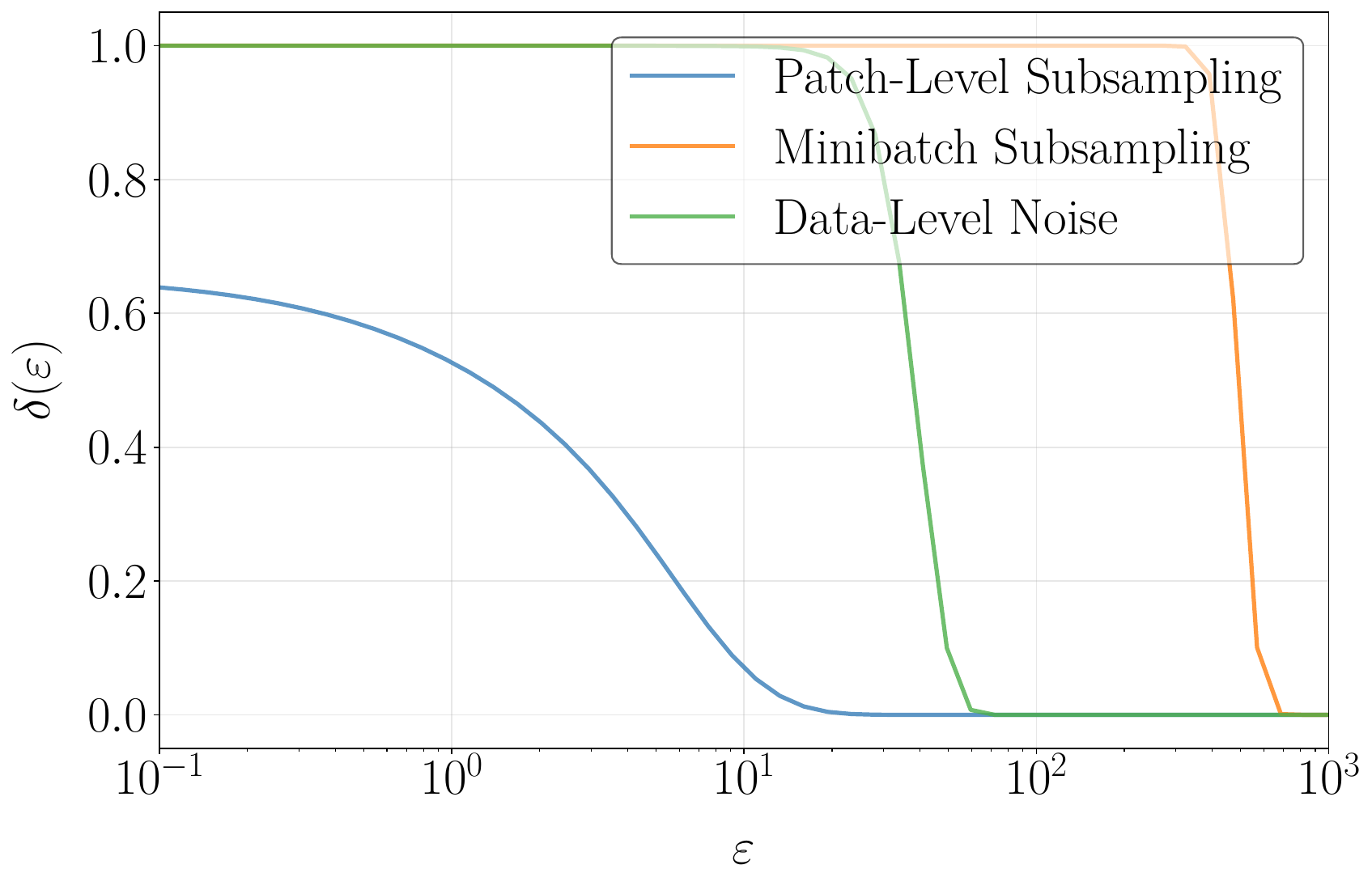}
        \caption{Private patch $20 \times 20$}
        \label{privacy_prof_b}
    \end{subfigure}
    \caption{Privacy profiles for different mechanisms. DP-SGD with patch-level subsampling (blue) achieves stronger privacy amplification compared to standard minibatch subsampling (orange)
    with identical $\sigma=1$.
    Even with an an extremely large noise scale  $\sigma_{\text{data}}=1000$, data-level noise addition (green) is less private.}
    \label{fig:privacy_profiles}
\end{figure}

%%%%%%%%%%%%%%%%%%%%%%%%%%%%%%%
\section{Experimental Evaluation}
\label{experiments}

We now empirically investigate the impact of patch-level subsampling on both privacy guarantees and model utility. Our goal is to understand how patch-level differential privacy interacts with key parameters and to validate that patch-level amplification provides tangible advantages over standard subsampling strategies. When we refer to patch-level subsampling, we mean this mechanism applied on top of standard minibatch subsampling. We organize our results into three parts. (1) Direct comparison of privacy profiles across different sampling strategies, (2) dependence of privacy amplification on crop and patch size, and (3) privacy-utility tradeoffs under patch-level subsampling versus standard minibatch subsampling. 

\subsection{Mechanism-Specific Privacy Profiles}

To begin, we compare the privacy of three mechanisms in isolation: standard DP-SGD with minibatch subsampling, our patch-level sampling combined with minibatch subsampling, and a baseline where Gaussian noise is added directly to the input images. This highlights the privacy properties of patch-level amplification independently of downstream training performance.

For all methods, we compute the privacy profile  $\delta(\varepsilon)$ as a function of $\varepsilon$. We use a fixed setup with crop size $100 \times 100$, private patch sizes $10 \times 10$ (Figure \ref{privacy_prof_a}) and $20 \times 20$ (Figure \ref{privacy_prof_b}). For the two subsampling methods, we assume a Gaussian noise multiplier $\sigma = 1$. Since the privacy profiles are sensitive to many hyperparameters (as discussed in \citep{howto, scalemeta, deepmind}), we report results here for this representative configuration and defer additional combinations to Appendix~\ref{app:priv_prof_exp}.

The data-level noise addition baseline is included to address a natural question: can a simpler alternative, such as directly adding Gaussian noise to the input pixels before training, provide comparable privacy guarantees? We scale the noise using the expected sensitivity of a \emph{localized private patch}, parallel to our method. In particular, we compute the \( \ell_2 \)-sensitivity as
$\Delta_2 := 255 \cdot \sqrt{3 \cdot H_R \cdot W_R},$
where \( H_R, W_R \) are the side lengths of the private region.

Figure~\ref{fig:privacy_profiles} shows the resulting privacy profiles. 
Patch-level sampling (blue) consistently achieves lower privacy leakage across a wide range of \( \varepsilon \) values compared to standard DP-SGD (orange), confirming that incorporating spatial structure into the mechanism leads to stronger privacy guarantees. 
It also strictly outperforms the noise addition baseline (green): even with a noise standard deviation as large as \( \sigma_{\text{data}} = 1000 \), privacy is uniformly worse than with patch-level subsampling. 
This extreme noise level further exceeds the dynamic range of non-normalized image pixels $[0, 255]$, underscoring the impracticality of uniform noise injection as a DP strategy for visual data.

\begin{figure}[h]
    \centering
    \begin{minipage}[t]{0.48\textwidth}
        \centering
        \includegraphics[width=\textwidth]{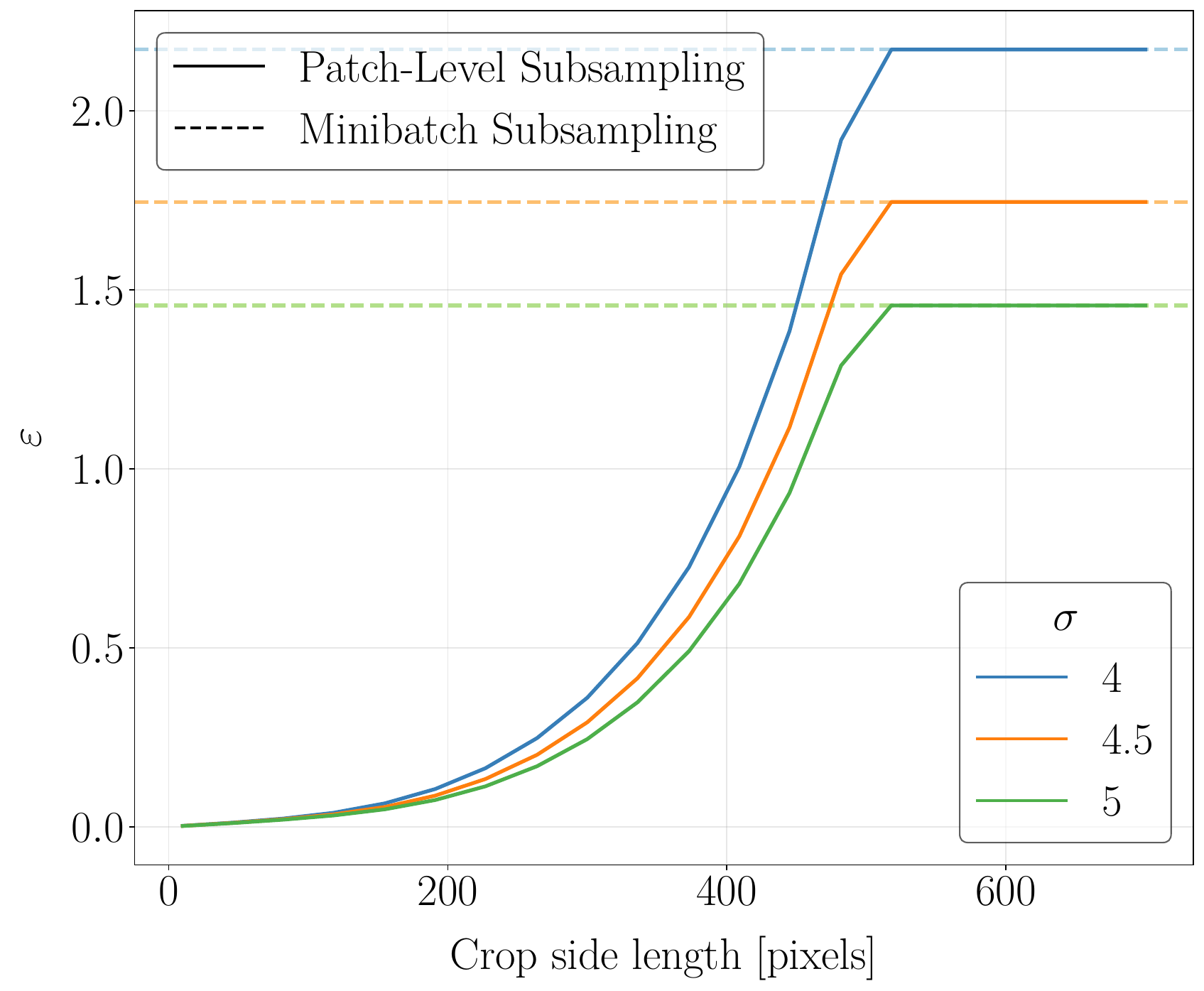}
        \caption{Varying crop size for patch-level subsampling, for different noise levels and $\delta = 10^{-5}$. The privacy parameter \( \varepsilon\) increases rapidly with the crop size. All curves saturate at the minibatch subsampling baseline once the intersection probability becomes $1$.}
        \label{fig:eps_vs_crop}
    \end{minipage}\hfill
    \begin{minipage}[t]{0.48\textwidth}
        \centering
        \includegraphics[width=\textwidth]{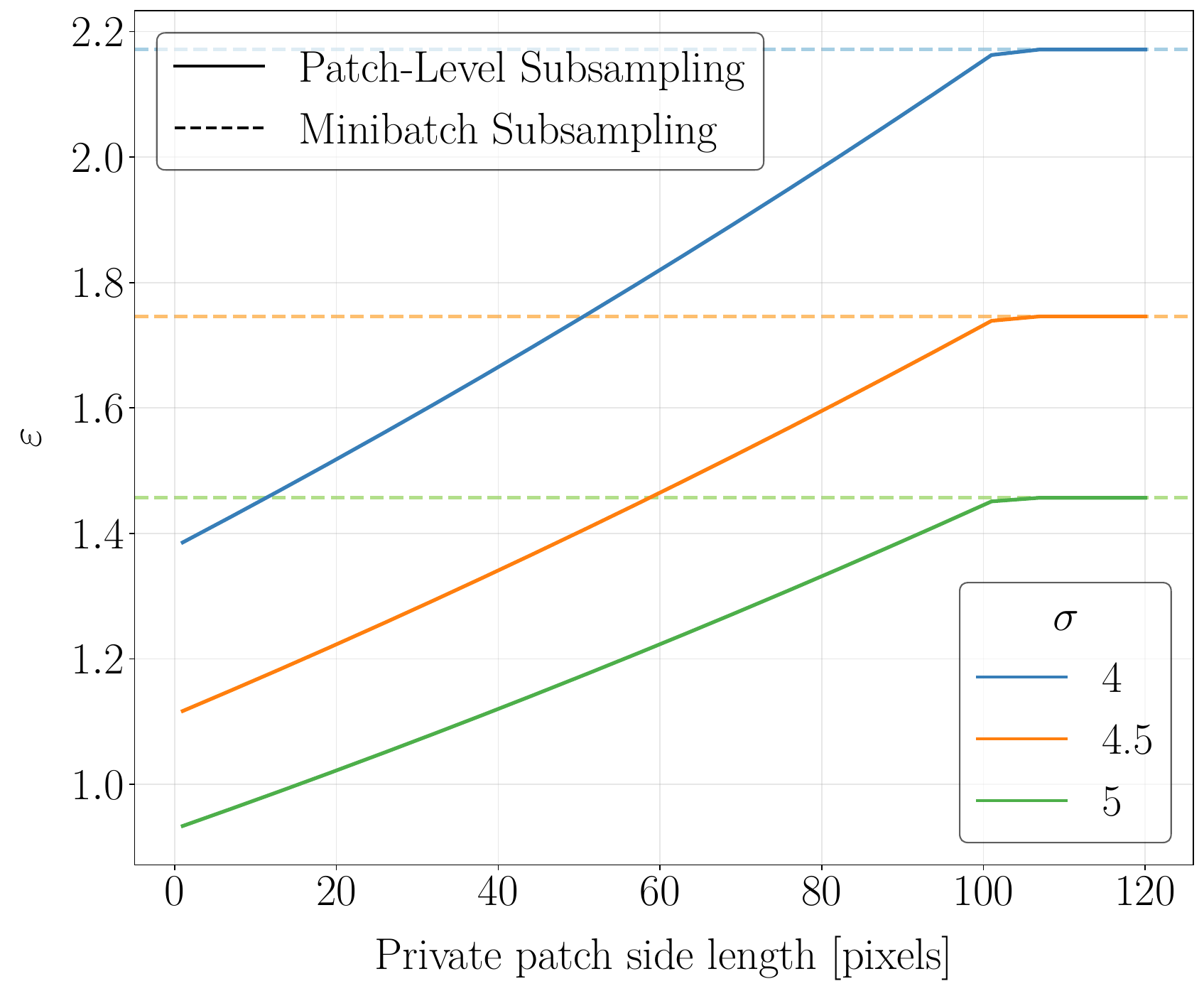}
        \caption{Varying private patch size for patch-level subsampling, for different noise levels and $\delta = 10^{-5}$. The privacy parameter $\varepsilon$ increases approximately linearly with the size of the private patch. All curves saturate at the minibatch-only baseline once the intersection probability becomes $1$.}
        \label{fig:eps_vs_patch}
    \end{minipage}
\end{figure}

\subsection{Influence of Crop and Patch Size on Privacy}
\label{exp_inf_crop_and_priv_patch}

We now analyze how the privacy parameter \( \varepsilon \) varies with crop size and private patch size for three different noise levels. These factors determine the inclusion probability of the private region, which directly affects the effective sampling rate \( \gamma_{\mathrm{eff}} = \gamma_\text{wo} \cdot \gamma_{\mathrm{crop}} \). We apply Theorem~\ref{thm:effective-sampling} to compute \( \varepsilon \) from \( \gamma_{\mathrm{eff}} \) using PLD accounting.
In the first experiment, we vary the crop size with a fixed private patch size of \(10 \times 10 \). In the second, we fix the crop size at \(450 \times 450\) and vary the private patch size from 1 to 120 pixels. \update{We set $\delta=1/\texttt{epoch\_size}$ and assume $\texttt{epoch\_size}=10^{5}$.}
In this section, we report results with Gaussian noise multipliers $\sigma \in \{4.0,4.5,5.0\}$, which keep $\varepsilon$ in a practically relevant interval and yield clearly interpretable plots. In Appendix~\ref{app:crop_priv_patch_exp}, we repeat the experiments with a wider range of noise values. All findings remain consistent with the ones shown here.

Figure~\ref{fig:eps_vs_crop} shows that $\varepsilon$ increases rapidly with crop size. In standard training, crop size is a hyperparameter that affects model performance. Our evaluation further indicates that it also influences privacy: larger crops yield higher $\varepsilon$, which in turn requires stronger noise for a fixed privacy budget. This creates an additional link between crop size and model performance.
Figure~\ref{fig:eps_vs_patch} shows an approximately linear increase in \( \varepsilon \) as private patch size grows. 
An important point is that both curves saturate at a cutoff, reaching the minibatch subsampling baseline. This occurs when the intersection probability reaches $1$, meaning the private patch is included in every crop and contributes to the gradient whenever the image is selected. In this regime, patch-level subsampling becomes equivalent to standard minibatch sampling.

These results show that the strength of patch-level amplification depends critically on the inclusion probability of the private region, determined jointly by crop size, which should be tuned, and the (fixed) size of the private patch in the dataset. Additional results on the effect of image padding are provided in Appendix~\ref{appendix:more-crop-pads}.

\subsection{Privacy-Utility Tradeoffs with Patch-Level Sampling}
\label{sec:priv-utility}

We conclude our experiments by evaluating whether the amplification provided by patch-level sampling improves model utility under a fixed privacy budget. We consider two widely used segmentation architectures, DeepLabV3+~\citep{deeplab2} and PSPNet~\citep{pspnet}, each trained on two datasets: Cityscapes~\citep{cityscapes} and A2D2~\citep{a2d2}. For each model-dataset combination, we compare two variants of DP-SGD: one using standard minibatch subsampling, and one using our proposed patch-level sampling on top. Importantly, both settings use \emph{exactly the same training algorithm}; the only difference lies in how privacy is analyzed. This means our method can be “switched on” without altering hyperparameters or implementation, and it will, in principle, always yield at least as much utility, either by allowing more training iterations for the same privacy budget or by requiring less noise to meet a given budget and epoch number.

To make this comparison fair and controlled, we adopt a single fixed hyperparameter configuration across all experiments. This configuration was chosen based on preliminary tuning and then held constant throughout, ensuring consistency across model-dataset pairs. While this sacrifices peak performance in individual cases, it isolates the effect of the privacy analysis itself. Each experiment is repeated with four random seeds, and we report the mean and standard deviation, depicted as error bars in Figure~\ref{fig:perf_tradeoff}. Full details of the training are given in Appendix~\ref{app:tranining_config}.

As shown in Figure~\ref{fig:perf_tradeoff}, patch-level sampling consistently yields higher utility for a given privacy-level \( \varepsilon \). On Cityscapes with DeepLabV3+, it improves mean Intersection over Union (mIoU) by an average of over 40\% across the tested range of \( \varepsilon \), with gains as large as 81\% at \(\varepsilon \approx 5\). For PSPNet on the same dataset, the advantage is even more pronounced, with an average improvement of over 110\% and up to 330\% at \(\varepsilon \approx 5\). 
On A2D2, we present results for PSPNet, where patch-level sampling improves mIoU by an average of 18\% and up to 23\% at \(\varepsilon \approx 5\). \update{We observe some variability across random initializations for DP-SGD, particularly with DeepLabV3+ on Cityscapes, which results in partially overlapping error bars between methods; per-seed results are provided in Appendix~\ref{app:per_seed_results}.} 
Additionally, under our compute constraints, DeepLabV3+ did not consistently converge on A2D2, and we therefore defer a detailed discussion of this setting to Appendix~\ref{app:a2d2_deeplab_convergence}.

\begin{figure}[h]
    \centering
    \includegraphics[width=0.49\textwidth]{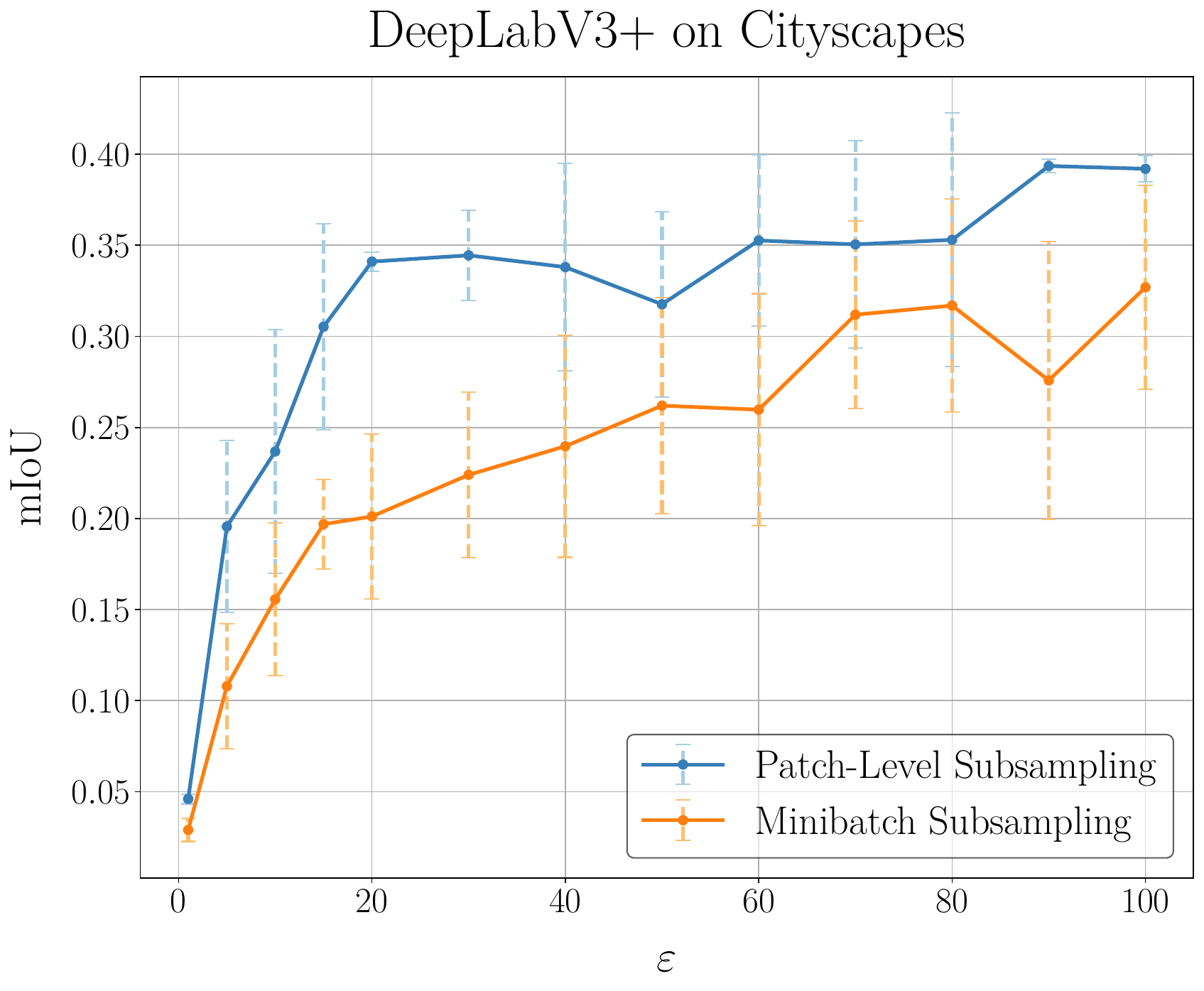}
    \includegraphics[width=0.49\textwidth]{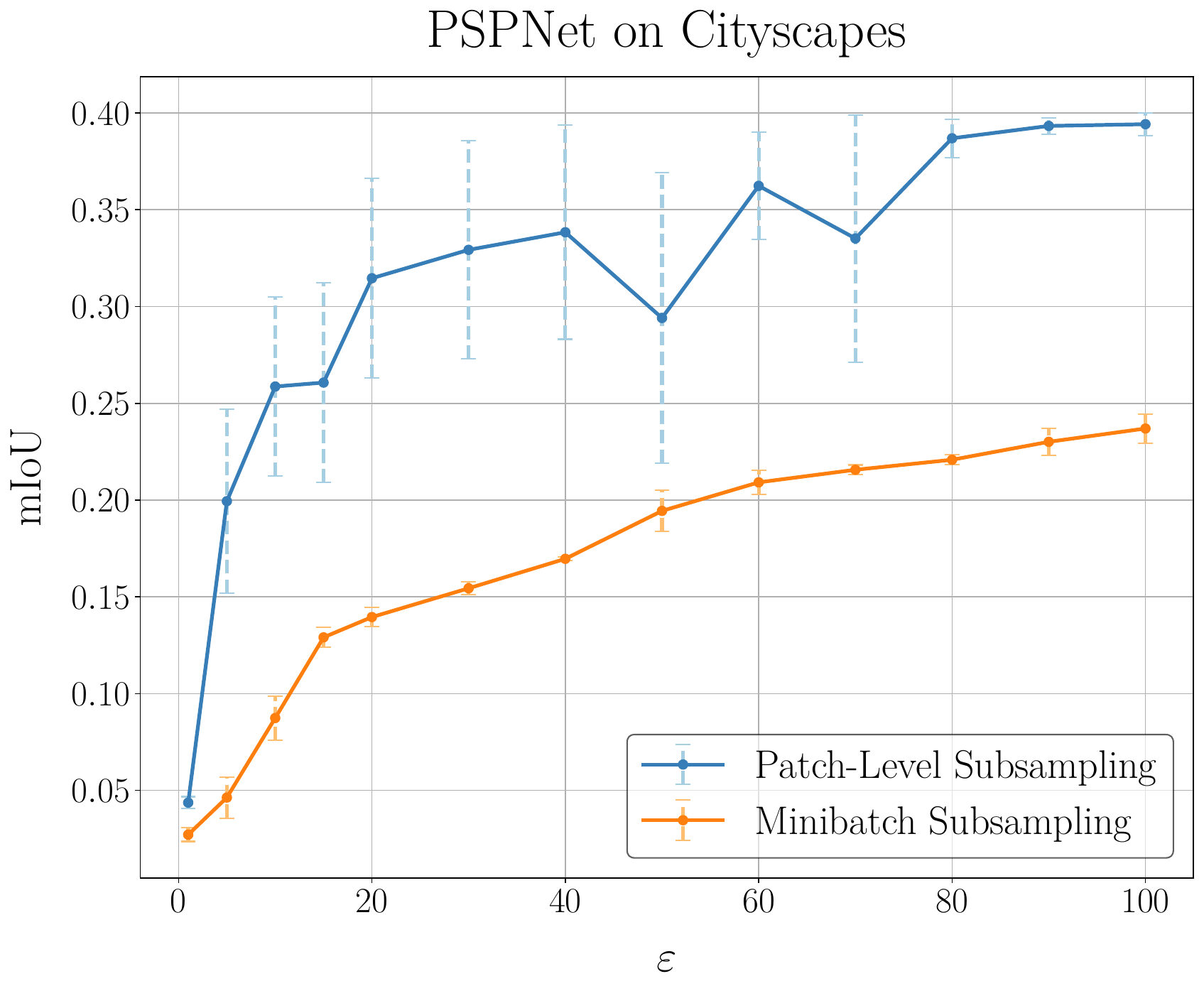}

    \includegraphics[width=0.49\textwidth]{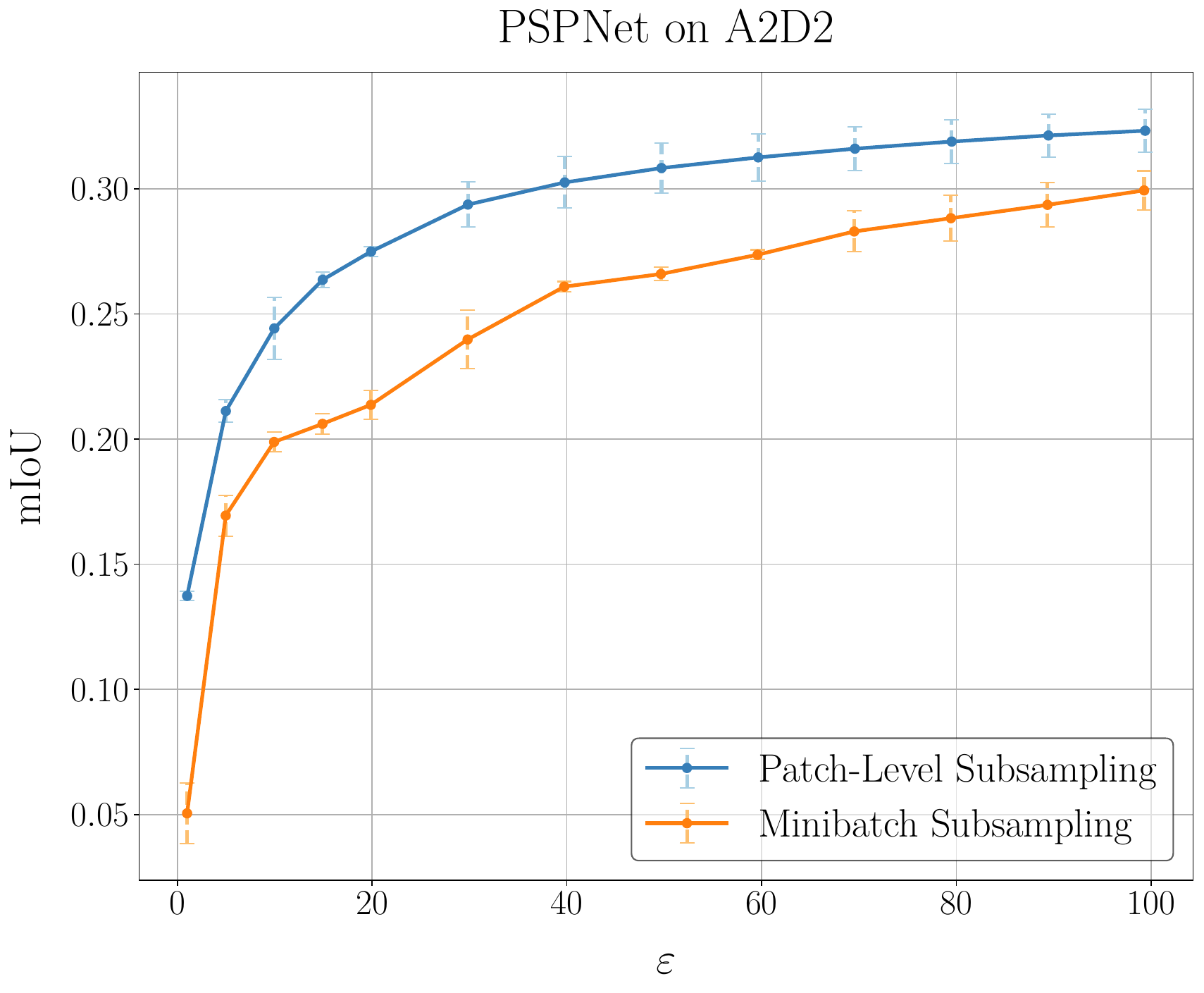}
    \caption{Model performance versus privacy-level \( \varepsilon \) for DP-SGD with patch-level sampling (blue) and minibatch subsampling (orange). Results are averaged over four seeds; error bars show standard deviation. \update{For Cityscapes, we use \( \delta = 1/2975 \), and for A2D2, \( \delta = 1/18557 \), following \( \delta = 1/\texttt{epoch\_size} \). For both datasets a private patch size of $10 \times 10$ is assumed.} DG-SGD with patch-level privacy overperforms significantly, given the exact same setup.}
    \label{fig:perf_tradeoff}
\end{figure}

\update{We also compare with Gaussian noise augmentation from \citet{jan-timeseries} as an alternative amplification mechanism in Appendix~\ref{app:gaussian_augmentation}; it performs worse than both our method and standard minibatch subsampling due to the high input sensitivity.}

\update{Lastly, we evaluate on image classification to show that the amplification effect generalizes beyond segmentation; results are provided in Appendix~\ref{app:classification_results}. Since our method improves privacy accounting rather than training, the benefit is task-agnostic whenever random cropping is part of the pipeline.}

Together, these results confirm that patch-level amplification enables substantially more favorable privacy-utility tradeoffs, especially in the low-to-moderate privacy budget regime where standard DP-SGD suffers most from noise-induced degradation.

\section{Limitations and Future Work}

Our work introduces patch-level DP as a way to exploit spatial structure in vision data. While our results demonstrate clear theoretical and empirical advantages over standard minibatch sampling, several limitations remain. 

\update{First, our analysis relies on a binary treatment of patch intersection: a patch is either included or not, regardless of the fraction of overlap. This may be conservative when partial overlap truly leaks less information. However, this is a fundamental property of DP-SGD: since the gradient is an arbitrary function of the input, we cannot assume that partial visibility leaks less than full visibility. Tighter guarantees could potentially be achieved with additional knowledge about the model, such as Lipschitz continuity of gradients~\citep{das2023uniformlipschitzconditiondifferentially, bethune2024dpsgdclippinglipschitzneural}, but our focus is on providing a general tool that works for arbitrary models.}

Second, we assume that the size of the private patch is fixed and known in advance. \update{If the assumed patch size is \emph{larger} than the true sensitive region, our guarantees remain valid but become more conservative, potentially sacrificing some utility. In the limit, if we assume the entire image is sensitive, we recover minibatch subsampling and standard record-level DP-SGD guarantees. 
If the assumed patch size is \emph{smaller} than the true sensitive region, the stated guarantee can be too optimistic. However, this is general limitation in differential privacy, analogous to e.g. assuming record-level privacy when user-level privacy is required. 
In practice, practitioners should choose a conservative upper bound on the sensitive region size based on the amount of domain knowledge they have of their application. 
Crucially, our analysis is decoupled from the mechanism: given any combination of crop size and private patch size, we can compute the corresponding privacy guarantee without modifying training.}
\update{Developing adaptive strategies for patch size remains an open question. One potential direction is detection-based patch modeling using domain-specific detectors (e.g., face or license plate detection), though such detectors must themselves be DP-trained or use public data to preserve guarantees. Importantly, when in doubt, practitioners should conservatively assume larger patch sizes, or in the limit, default to standard record-level DP-SGD, which is always a valid choice.}

\update{Finally, our method is applicable when sensitive content is spatially localized and random cropping is a natural part of the training pipeline. This is typically the case for high-resolution images where cropping improves generalization. Conversely, our method does not apply when cropping is impractical (e.g., on low-resolution datasets like MNIST~\citep{LeCun1998GradientbasedLA} or CIFAR-10~\citep{Krizhevsky2009LearningML} where cropping destroys semantic content; we report the emprical results on MNIST in Appendix~\ref{app:classification_results}), or when crop size approaches image size, eliminating any exclusion probability. Many important real-world applications satisfy our requirements, including autonomous driving, medical imaging, surveillance, and satellite imagery.}

Overall, we view these challenges not as drawbacks but as opportunities. 
Our framework opens the door to more sophisticated treatments of spatial privacy, such as overlap-aware accounting, adaptive private regions, and domain-specific extensions beyond vision. 
Exploring these directions could further strengthen the connection between differential privacy and the structured nature of real-world data. 

\section{Conclusion}

In this work, we extended the traditional analysis of DP-SGD by considering \emph{patch-level differential privacy} to better reflect the spatial structure of vision data. We formalized a patch-level neighboring relation and analyzed how random cropping can serve as a privacy amplification mechanism when private information is spatially localized. Our theoretical results show that the effective sampling rate decomposes into independent image- and patch-level components, generalizing classical amplification bounds. Empirically, we demonstrated that patch-level sampling consistently improves the privacy-utility trade-off across multiple architectures, datasets, \update{and tasks}. The approach is fully compatible with existing DP-SGD implementations and introduces no additional computational overhead. More broadly, our findings demonstrate that aligning privacy mechanisms with domain-specific structure and leveraging stochastic elements already present in training pipelines is a powerful approach, enabling the design of learning systems that are simultaneously more private and more accurate.

\subsection*{Acknowledgements}

This paper was supported by the DAAD programme Konrad Zuse Schools of Excellence in Artificial Intelligence, sponsored by the Federal Ministry of Research, Technology and Space.
It was further funded by the German Research
Foundation (grant GU 1409/4-1).

\bibliographystyle{plainnat}
\bibliography{main.bib}

\newpage

\appendix
\section{\update{Comparison of Patch-Level and Record-Level Privacy}}
\label{app:patch-vs-record}

\update{The choice of neighboring relation is a fundamental modeling decision in differential privacy. In the vision domain, standard record-level privacy \citep{Dwork2006CalibratingNT, Dwork2006DifferentialP} assumes each image is a record, and changing a single attribute of a record is treated identically to changing all attributes. This is because it is assumed that the gradient is an arbitrary function of the input, and the only information available for privacy accounting is the clipping norm. We cannot assume that small input changes yield small gradient changes. So changing a single pixel in an image constitutes adjacency.}

\update{However, record-level privacy can be overly pessimistic when sensitive content is spatially localized. This is analogous to the distinction between user-level \citep{user-level} and record-level privacy. In user-level privacy, neighboring datasets differ in all records belonging to a single user, whereas in record-level privacy, they differ in only a single record. User-level provides stronger protection but is more conservative; record-level is finer-grained and yields tighter guarantees when each user contributes exactly one record.}

\update{Our method extends this hierarchy by exploiting the spatial structure of vision data, introducing patch-level privacy as a finer-grained notion when sensitive content is localized. Patch-level DP is appropriate when privacy-sensitive information is confined to spatial regions whose size can be estimated or (conservatively) bounded. Typical examples include faces in photographs, license plates in traffic scenes, or lesions in medical imaging.}

\update{Thus, our work can be interpreted as weakening the worst-case assumption of record-level DP: instead of assuming the entire image may change, we assume only a bounded region may change (patch-level). Analogous to record-level privacy, a single pixel change in the private patch constitutes adjacency. When the domain structure supports this assumption, tighter guarantees follow.}

\update{In contrast, when the location of sensitive content cannot be bounded, record-level privacy should be used. Importantly, our method gracefully handles this case: as the private patch size approaches the full image size, patch-level privacy naturally converges to record-level privacy with minibatch subsampling (see Figure~\ref{fig:eps_vs_patch}).}

\update{Finally, it is important to consider that coarser-grained relations always imply finer-grained ones: standard record-level privacy guarantees automatically provide patch-level privacy, since any patch substitution is a special case of record substitution. Standard record-level privacy is always a valid and conservative choice when no such assumptions can be safely made.}

\section{Proofs of patch-level privacy}
\subsection{Proof of Lemma 3}
\label{appendix:gamma-proof}
 
Let the original image have width $W_I$ and height $H_I$, and let the crop have width $W_C$ and height $H_C$. Padding is applied symmetrically with $\mathrm{pad}_x$ pixels horizontally and $\mathrm{pad}_y$ pixels vertically.  
After padding, the number of valid horizontal and vertical crop origins are \(
W_{\mathrm{tot}} = W_I + 2\mathrm{pad}_x - W_C + 1, 
\quad 
H_{\mathrm{tot}} = H_I + 2\mathrm{pad}_y - H_C + 1,
\)
as images form a discrete space by virtue of their pixel structure. The total number of possible positions of the crop $|\Omega|$ can be written as the total area $A_{\mathrm{tot}} = W_{\mathrm{tot}} \cdot H_{\mathrm{tot}}$.

Let $R \subseteq \mathbb{N}_0^2$ be a fixed rectangular patch in the \emph{original} image with bottom-left coordinate $(R_x, R_y)$ and size $H_R \times W_R$.  
After padding, its bottom-left coordinate becomes \(R_x' = R_x + \mathrm{pad}_x\), $R_y' = R_y + \mathrm{pad}_y$.

A crop region $C_{u,v}$ of size $H_C \times W_C$ intersects $R$ with a minimum of a single pixel if and only if its origin $(u,v)$ satisfies \(
u \in [X_{\min}, X_{\max}],\) \(v \in [Y_{\min}, Y_{\max}],
\)
where
\[
\begin{aligned}
X_{\min} &= \max\left(0,\, R_x' - W_C + 1\right), 
&\quad X_{\max} &= \min\left(W_I + 2\mathrm{pad}_x - W_C,\, R_x' + W_R - 1\right), \\
Y_{\min} &= \max\left(0,\, R_y' - H_C + 1\right), 
&\quad Y_{\max} &= \min\left(H_I + 2\mathrm{pad}_y - H_C,\, R_y' + H_R - 1\right),
\end{aligned}
\]
as the image boundaries constrain the valid horizontal and vertical placements of the crop origin relative to the padded patch.

The number of crop origins that intersect $R$, meaning $|\Omega_R|$, can be written as the favorable area \(
A_{\mathrm{fav}} = (X_{\max} - X_{\min} + 1) \cdot (Y_{\max} - Y_{\min} + 1).
\)
As we define the random cropping as a uniform distribution (Definition \ref{def:crop-mechanism}), all crop origins are equally likely. Thus, the patch-level inclusion probability is
\[
\gamma_{\mathrm{crop}}'(R) 
= \frac{A_{\mathrm{fav}}}{A_{\mathrm{tot}}}
= \frac{(X_{\max} - X_{\min} + 1)(Y_{\max} - Y_{\min} + 1)}
       {(W_I + 2\mathrm{pad}_x - W_C + 1)(H_I + 2\mathrm{pad}_y - H_C + 1)}.
\]

\subsection{Proof of Theorem 1}
\label{appendix:gamma-center}

Given $\gamma_{\mathrm{crop}}'(R)$ from Lemma~\ref{lem:gamma-prime}, we want to maximize the patch-level inclusion probability. The denominator is constant, so it suffices to maximize $A_{\mathrm{fav}}(R) := (X_{\max}-X_{\min}+1)\,(Y_{\max}-Y_{\min}+1).$
As defined previously, the horizontal factor $X_{\max}-X_{\min}+1$ equals \( \min\!\left(W_I + 2\,\mathrm{pad}_x - W_C,\; R_x' + W_R - 1\right)
 - \max\!\left(0,\; R_x' - W_C + 1\right) + 1 \), a function of $R_x'$ only. Consider four cases:

\textbf{Left-bound only.} If $R_x' - W_C + 1 < 0$ and $R_x' + W_R - 1 \le W_I + 2\,\mathrm{pad}_x - W_C$, then
\[
X_{\max}-X_{\min}+1 = (R_x' + W_R - 1) - 0 + 1 = R_x' + W_R,
\]
which is strictly increasing in $R_x'$ for $R_x' \le W_I + 2\,\mathrm{pad} -W_C -W_R + 1$. Moving the patch to the right linearly increases the horizontal contribution.

\textbf{Right-bound only.} If $R_x' - W_C + 1 \ge 0$ and $R_x' + W_R - 1 > W_I + 2\,\mathrm{pad}_x - W_C$, then
\[
X_{\max} - X_{\min} + 1 
= (W_I + 2\,\mathrm{pad}_x - W_C) - (R_x' - W_C + 1) + 1 
= W_I + 2\,\mathrm{pad}_x - R_x',
\]
which is strictly decreasing in $R_x'$ for $R_x' \ge W_C - 1$. 
Moving the patch to the left linearly increases the horizontal contribution.

\textbf{Fully contained.}  
If $0 \le R_x' - W_C + 1$ and $R_x' + W_R - 1 \le W_I + 2\,\mathrm{pad}_x - W_C$, then  
\[
X_{\max} - X_{\min} + 1 
= (R_x' + W_R - 1) - (R_x' - W_C + 1) + 1 
= W_R + W_C - 1,
\]
which is constant in $R_x'$ for $W_C - 1 \le R_x' \le W_I + 2\,\mathrm{pad}_x - W_C - W_R + 1$.  
Any horizontal shift within this range leaves the horizontal contribution unchanged at its maximum value.

\textbf{Both bounds active.}  
If $R_x' - W_C + 1 < 0$ and $R_x' + W_R - 1 > W_I + 2\,\mathrm{pad}_x - W_C$, which can occur only if $W_I + 2\,\mathrm{pad}_x < 2W_C + W_R - 2$,
then  
\[
X_{\max} - X_{\min} + 1 
= (W_I + 2\,\mathrm{pad}_x - W_C) - 0 + 1
= W_I + 2\,\mathrm{pad}_x - W_C + 1,
\]
which is constant in $R_x'$ for all positions satisfying the inequalities above.  
In this regime, every admissible horizontal placement achieves the largest possible horizontal value, and the maximizer is determined solely by the vertical factor.

From the four regimes above, we conclude that only (1) in the left-bound regime, shifting $R$ right increases the horizontal factor until the left bound is inactive. (2) In the right-bound only regime, shifting $R$ left increases the horizontal factor until the right bound is inactive. (3) Once both bounds are inactive (fully contained), the horizontal factor is constant at its maximum possible value within unclipped regimes. (4) In the case that both bounds are active, the horizontal factor attains its absolute maximum and is independent of $R_x'$.
In all cases, the horizontal factor is maximized when the patch is positioned as far from both borders as possible.  
When the width allows an unclipped regime, placing the patch in the middle ensures it lies in that regime; the same holds in the special case where both bounds are active.  
In edge cases where no constant regime exists, every admissible position already achieves the maximum.

By symmetry, the vertical factor is optimized by the same reasoning, placing $R$ centrally along the vertical axis.  
Combining both axes yields the centrally placed patch, whose coordinates in the original image are  

\[
R_{\text{center}}
= \left(\left\lfloor \frac{H_I - H_R}{2}\right\rfloor,\;
        \left\lfloor \frac{W_I - W_R}{2}\right\rfloor\right),
\]
which maximizes $\gamma_{\mathrm{crop}}'(R)$.

\input{proof_of_theorem_2}

\section{Experimental Setup}
\label{appendix:experimental-setup}

In Section~\ref{experiments}, we described the main experiments and their results. 
Here, we provide the detailed configurations and discussion to ensure reproducibility and to clarify the precise conditions under which our results were obtained. 

\subsection{Datasets}
We evaluate on \texttt{Cityscapes}~\citep{cityscapes} and \texttt{A2D2}~\citep{a2d2}. 
Cityscapes contains $5000$ finely annotated street-scene images at $1024 \times 2048$ resolution with a $2975/500$ train/val split; since the official test labels are unavailable, we report on the validation set. 
A2D2 consists of $33{,}441$ driving-scene images at $1208 \times 1920$, annotated for 18 classes (grouped from 38 original classes to create a Cityscapes-like taxonomy), and we use a session-based split with 11 training sessions, 3 validation sessions, and 3 test sessions.

\subsection{Models}
We use \texttt{DeepLabV3+}~\citep{deeplab2}\footnote{DeepLabV3+ is implemented based on the repository  \url{https://github.com/VainF/DeepLabV3Plus-Pytorch}.} and \texttt{PSPNet}~\citep{pspnet}.
For compatibility with DP-SGD, we disabled batch normalization layers in both architectures, 
as batch normalization relies on batch-level statistics, which are incompatible with privacy guarantees under DP training. 
Backbones are initialized from ImageNet~\citep{imagenet}; segmentation heads are randomly initialized. 
We fine-tune the entire model end-to-end.

\subsection{Differential Privacy Setup}
\label{app:diff_priv_setup}
Differential privacy is implemented with Opacus' \citep{yousefpour2022opacususerfriendlydifferentialprivacy} \texttt{DPOptimizer} and accounted with privacy loss distribution (PLD) composition~\citep{centrallimit,sommer2019privacy} using the Google \texttt{dp\_accounting} library \citep{dp-accounting}. 
Minibatches are sampled uniformly without replacement via \texttt{DPDataLoader}. Gradients are clipped at per-sample $\ell_2$ norm.
$\delta$ is set to $1/\texttt{epoch\_size}$. 
We quantize PLDs using the “connect-the-dots” algorithm~\citep{doroshenko2022connectdotstighterdiscrete}.

\subsection{Per-Experiment Configurations}

\textbf{Mechanism-Specific Privacy Profiles.}
\label{app:priv_prof_config}
We compare three mechanisms: (1) standard DP-SGD with minibatch subsampling, (2) patch-level subsampling applied on top of minibatching, and (3) a noise-addition baseline. The default setup uses image size $1000 \times 1000$, padding $0$, clip norm $2.0$, batch size $100$, epoch size $3000$, and number of epochs $100$. Crop and patch sizes, as well as noise multipliers, are varied depending on the specific experiment (see Appendix~\ref{app:priv_prof_exp}). We report $\delta$ as a function of $\varepsilon$.

\textbf{Influence of Crop and Patch Size.}
We fix the same defaults as above and additionally set \update{$\delta=1/\texttt{epoch\_size}$ and $\texttt{epoch\_size} = 10^{5}$}. We perform two sweeps. First, we vary the crop side length from $50$ to $700$ pixels while keeping the private patch size fixed at $10\times10$. Second, we fix the crop size at $450\times450$ and vary the patch size from $1$ to $50$. We report $\varepsilon$ in both setups. 

\textbf{Privacy-Utility Tradeoffs.}
\label{app:tranining_config}
Following prior work \citep{howto, scalemeta, deepmind, deepmind2}, which emphasizes the strong dependence of DP-SGD performance on hyperparameters, we first conducted a tuning stage on Cityscapes with DeepLabV3+. The goal was to maximize utility at a fixed privacy budget of $\varepsilon=10$ with $\delta = 1/2975$ and private patch size $10\times 10$. This yielded a configuration with crop size $505 \times 505$, zero padding, batch size $200$ (accumulated from microbatches of size $2$ via Opacus’ \texttt{BatchMemoryManager}), clipping norm $C=2.0$, learning rate $0.02$, and training horizon of $100$ epochs.
We then fixed these hyperparameters across all subsequent experiments, including different models and datasets, to ensure comparability and to isolate the effect of the privacy analysis. For A2D2, the only modification was reducing the training horizon to $25$ epochs due to computational limits, with $\delta = 1/18{,}557$.
For experiments spanning different privacy budgets, we kept the training horizon constant (100 epochs on Cityscapes, 25 on A2D2) and used \texttt{dp\_accounting} \citep{dp-accounting} library to compute the Gaussian noise multiplier $\sigma$ corresponding to each target $\varepsilon$ (ranging from 5 to 100). This guarantees that results are reported under consistent training conditions, with $\sigma$ adapted to match the specified privacy-level.
All experiments were repeated with four random seeds; we report the mean and standard deviation of the mean Intersection-over-Union (mIoU). We managed these experiments with \texttt{seml} \citep{seml_2023}.

\begin{figure}[h]
    \centering
    \begin{subfigure}[h]{0.49\textwidth}
        \centering
        \includegraphics[width=\textwidth]{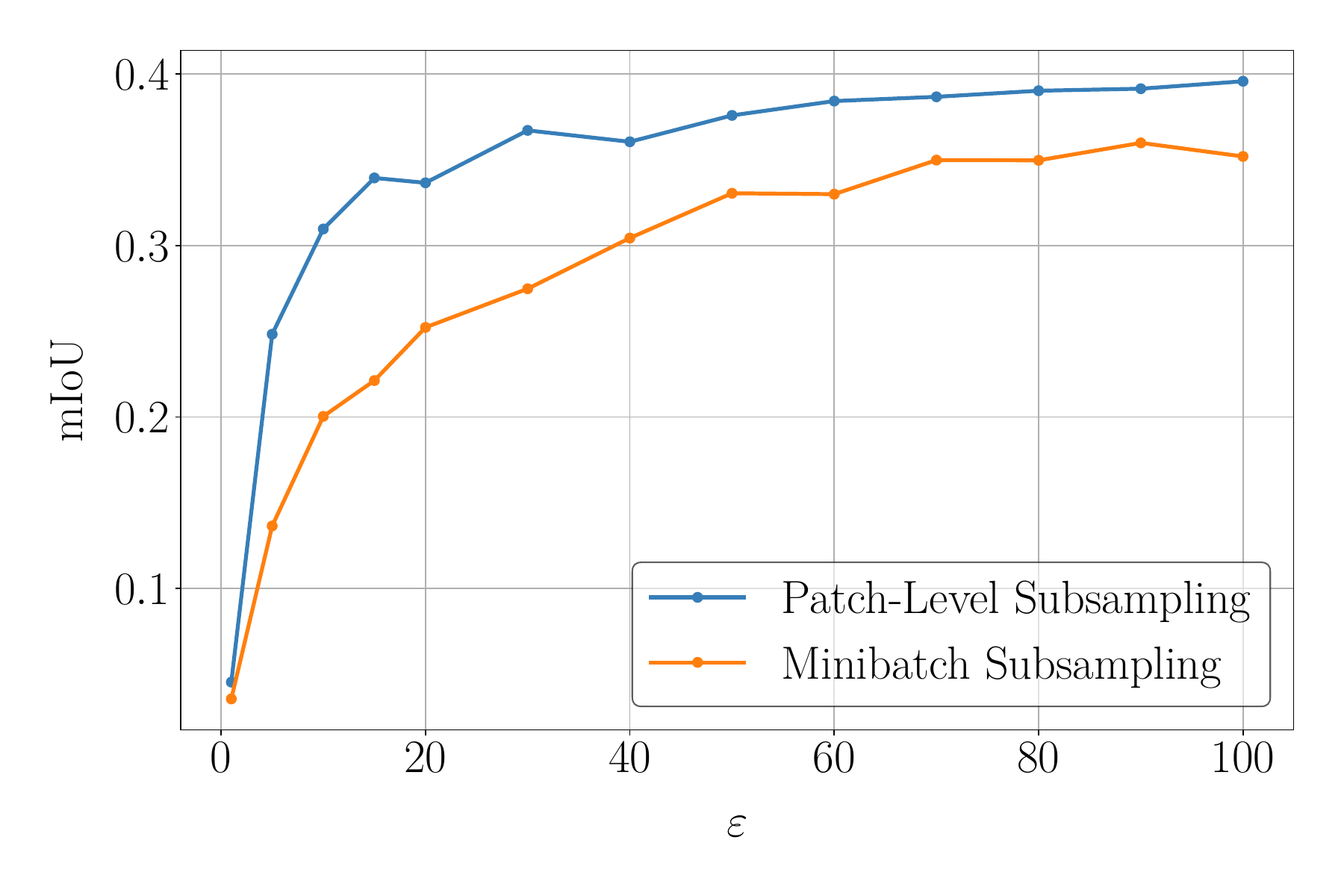}
        \caption{Seed 10}
    \end{subfigure}
    \hfill
    \begin{subfigure}[h]{0.49\textwidth}
        \centering
        \includegraphics[width=\textwidth]{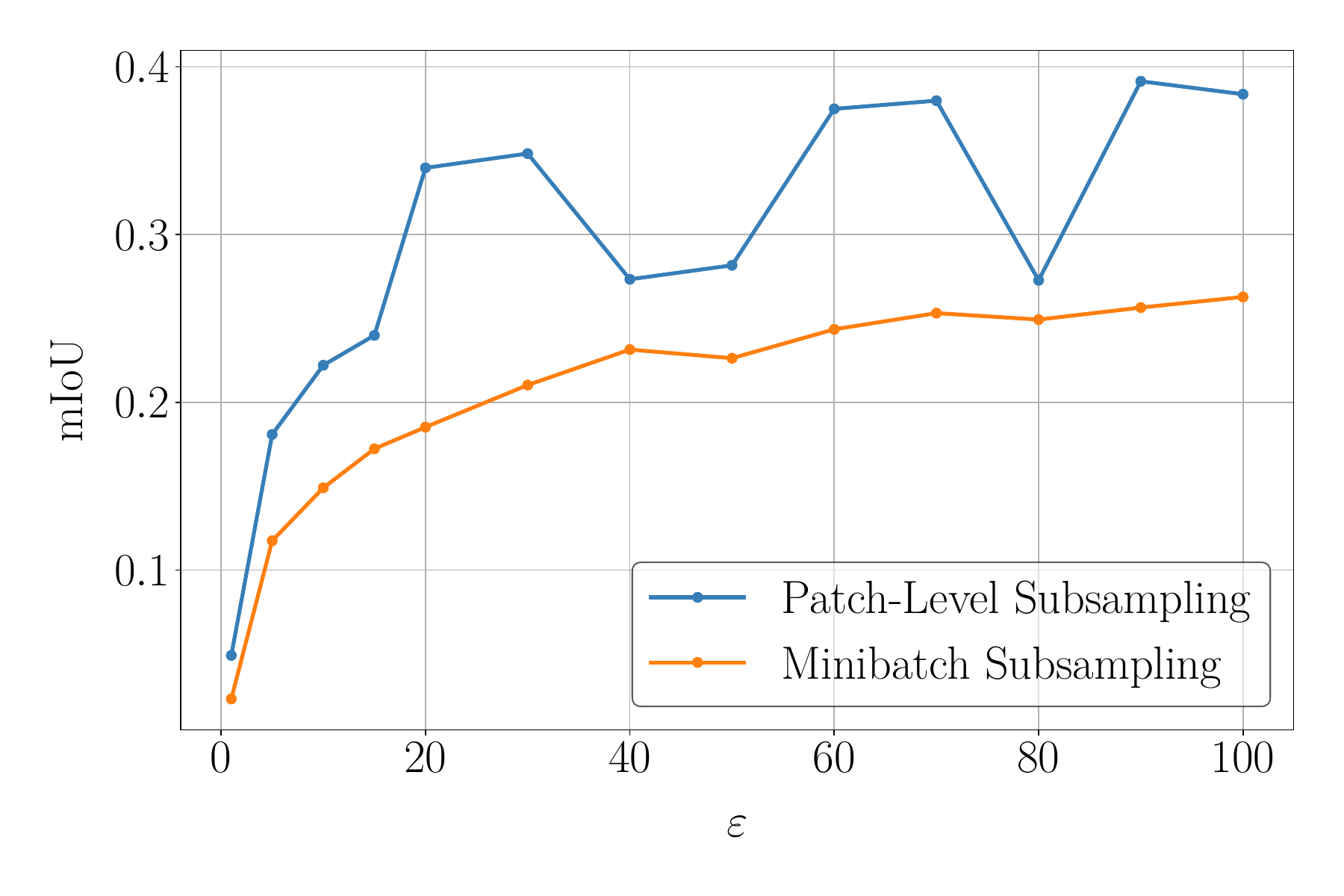}
        \caption{Seed 20}
    \end{subfigure}

    \begin{subfigure}[h]{0.49\textwidth}
        \centering
        \includegraphics[width=\textwidth]{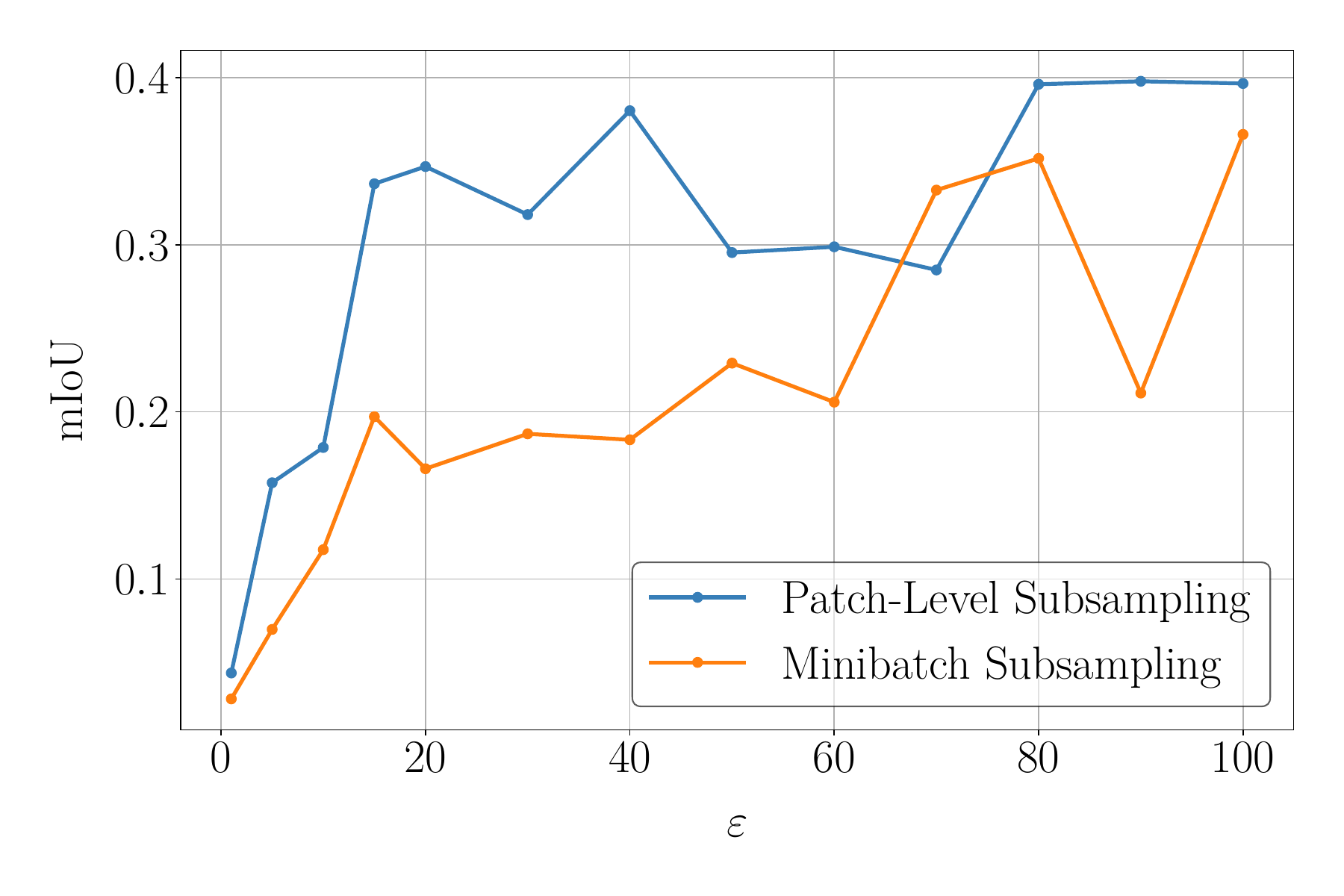}
        \caption{Seed 30}
    \end{subfigure}
    \hfill
    \begin{subfigure}[h]{0.49\textwidth}
        \centering
        \includegraphics[width=\textwidth]{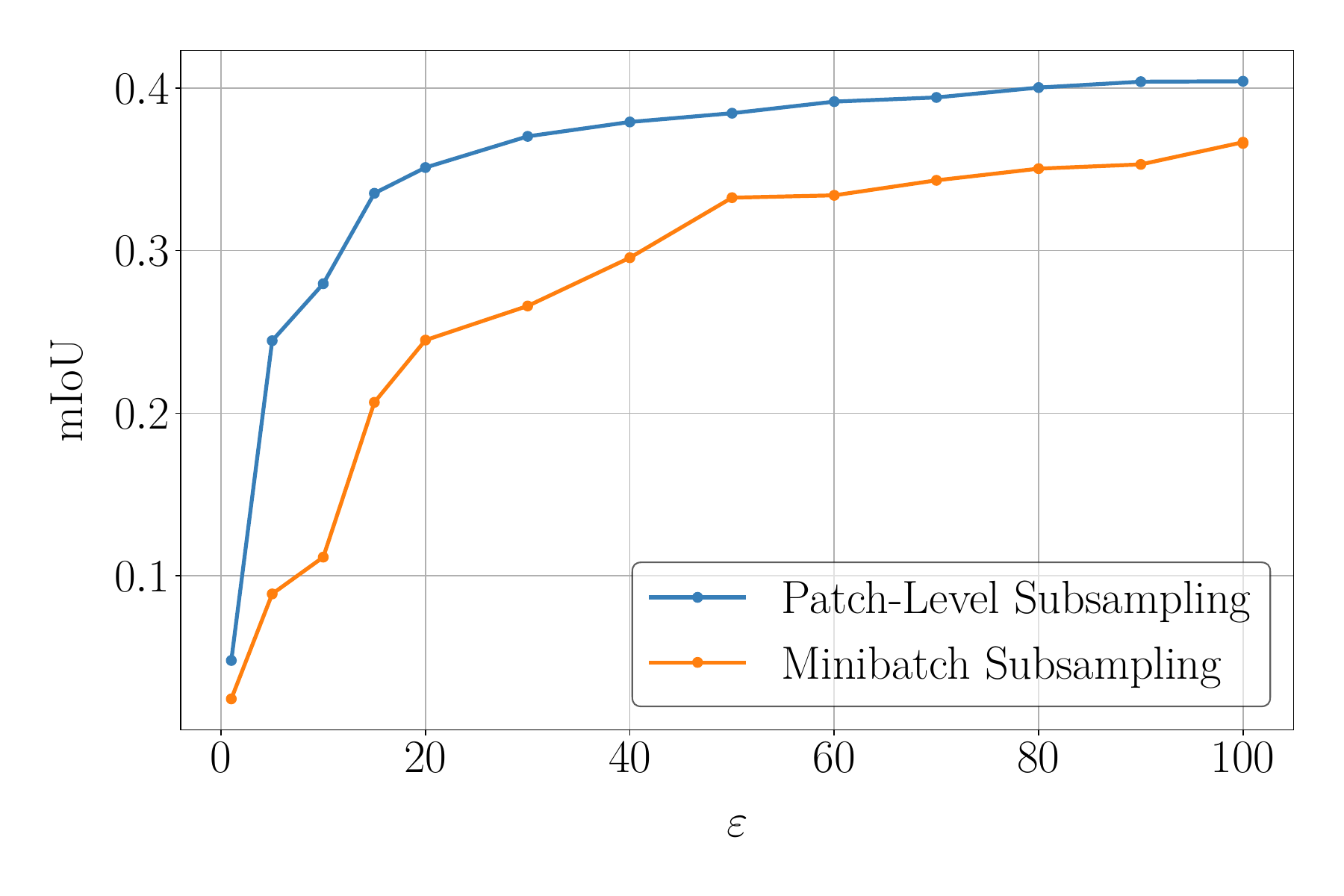}
        \caption{Seed 516}
    \end{subfigure}
    \caption{\update{Per-seed privacy-utility tradeoffs of DeepLabV3+ on Cityscapes. We assume a private patch size of $10 \times 10$, and set $\delta = 1/2975$.}}
    \label{fig:perf_per_seed_1}
\end{figure}

\update{\subsection{Per-Seed Results for Privacy-Utility Tradeoffs}
\label{app:per_seed_results} 
To further examine the high variability in the results of Figure~\ref{fig:perf_tradeoff}, we report per-seed results for model–dataset combinations in Figures~\ref{fig:perf_per_seed_1}--\ref{fig:perf_per_seed_3}.
These plots explicitly illustrate the variability across random initializations that is characteristic of DP-SGD training, as we mentioned in Appendix \ref{app:tranining_config}.
On Cityscapes with DeepLabV3+, this variability explains the overlapping error bars noted in the main text.
In one seed, for $\varepsilon=70$, the baseline briefly exceeds the patch-level sampling variant at a single privacy level, while the reverse holds across all other values and all remaining seeds.
This behavior is typical as the performance of DP-SGD is highly parameter dependent, and the fixed hyperparameter configuration is optimized for high peak performance rather than reduced variance. 
Across all seeds, patch-level sampling consistently achieves higher mean utility, which is what is reflected in the aggregated results shown in Figure~\ref{fig:perf_tradeoff}.}

\begin{figure}[h]
    \centering
    \begin{subfigure}[h]{0.49\textwidth}
        \centering
        \includegraphics[width=\textwidth]{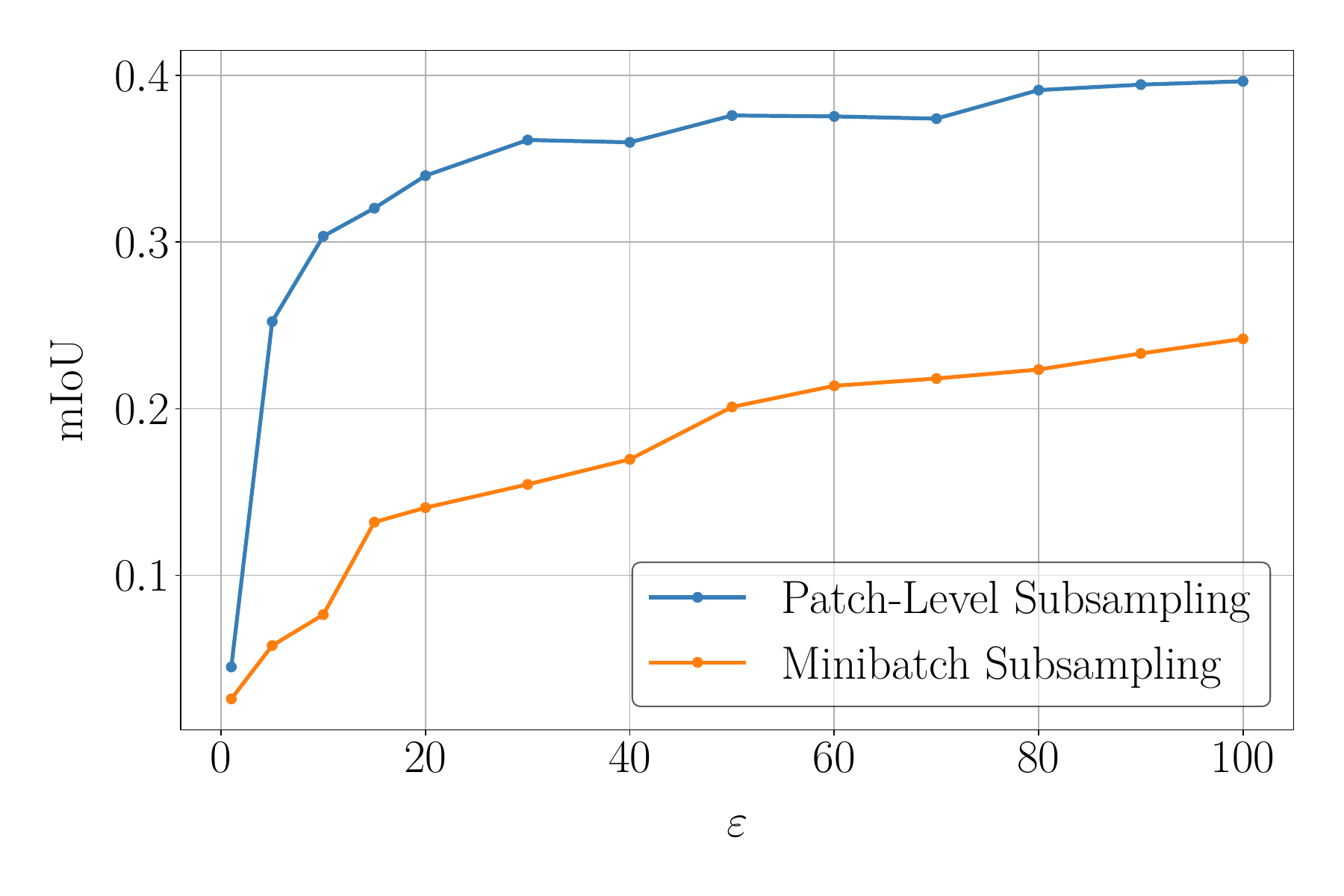}
        \caption{Seed 10}
    \end{subfigure}
    \hfill
    \begin{subfigure}[h]{0.49\textwidth}
        \centering
        \includegraphics[width=\textwidth]{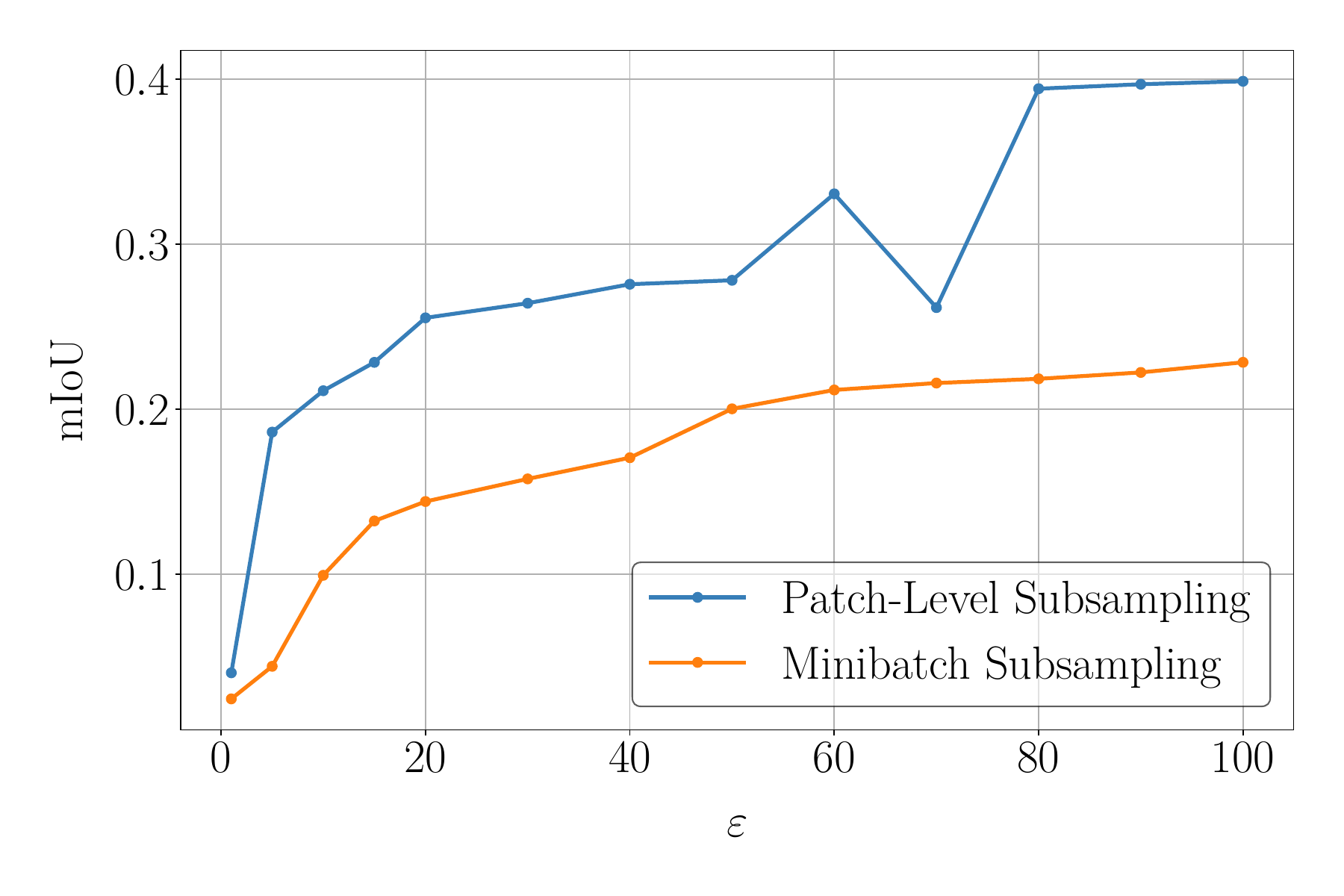}
        \caption{Seed 20}
    \end{subfigure}

    \begin{subfigure}[h]{0.49\textwidth}
        \centering
        \includegraphics[width=\textwidth]{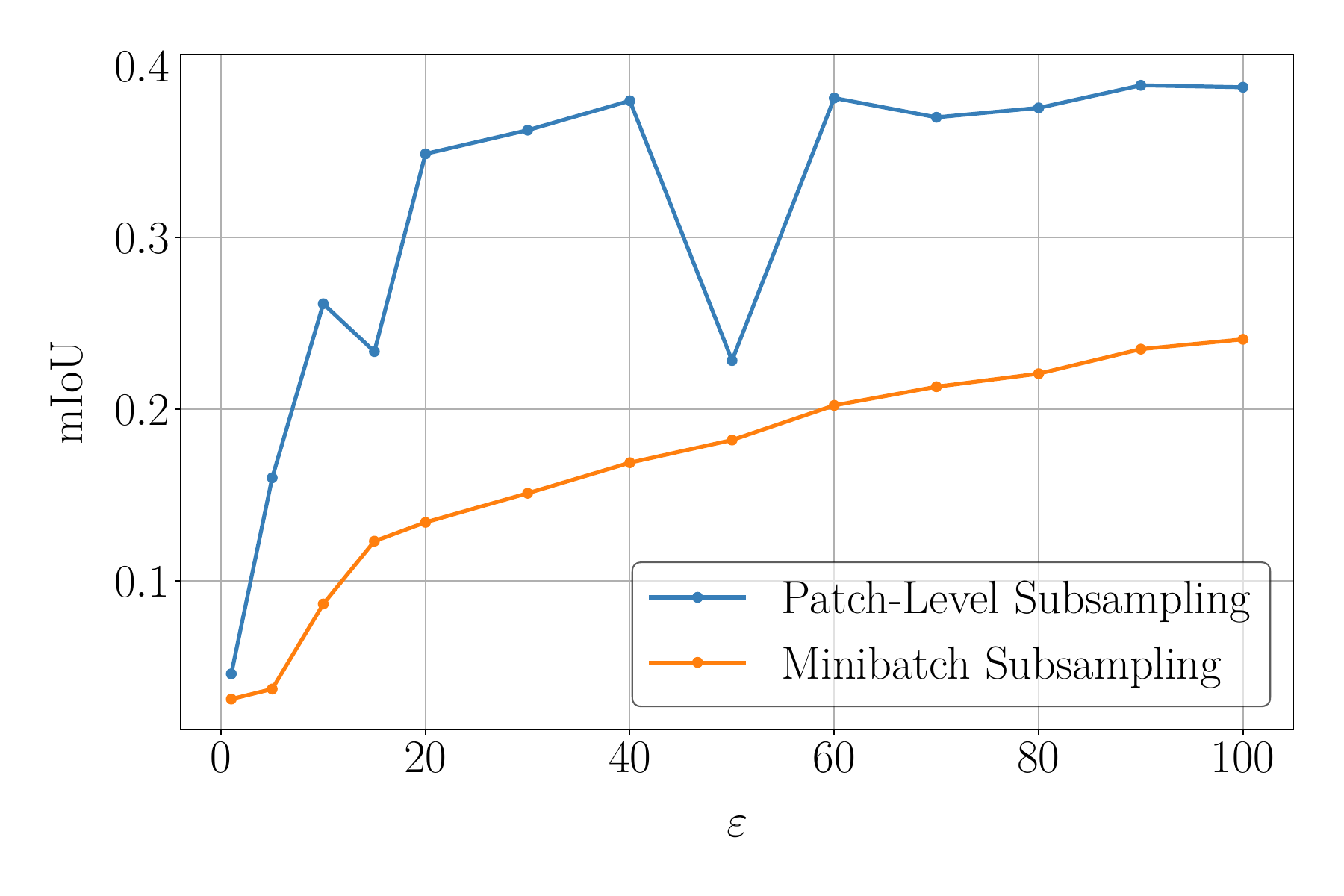}
        \caption{Seed 30}
    \end{subfigure}
    \hfill
    \begin{subfigure}[h]{0.49\textwidth}
        \centering
        \includegraphics[width=\textwidth]{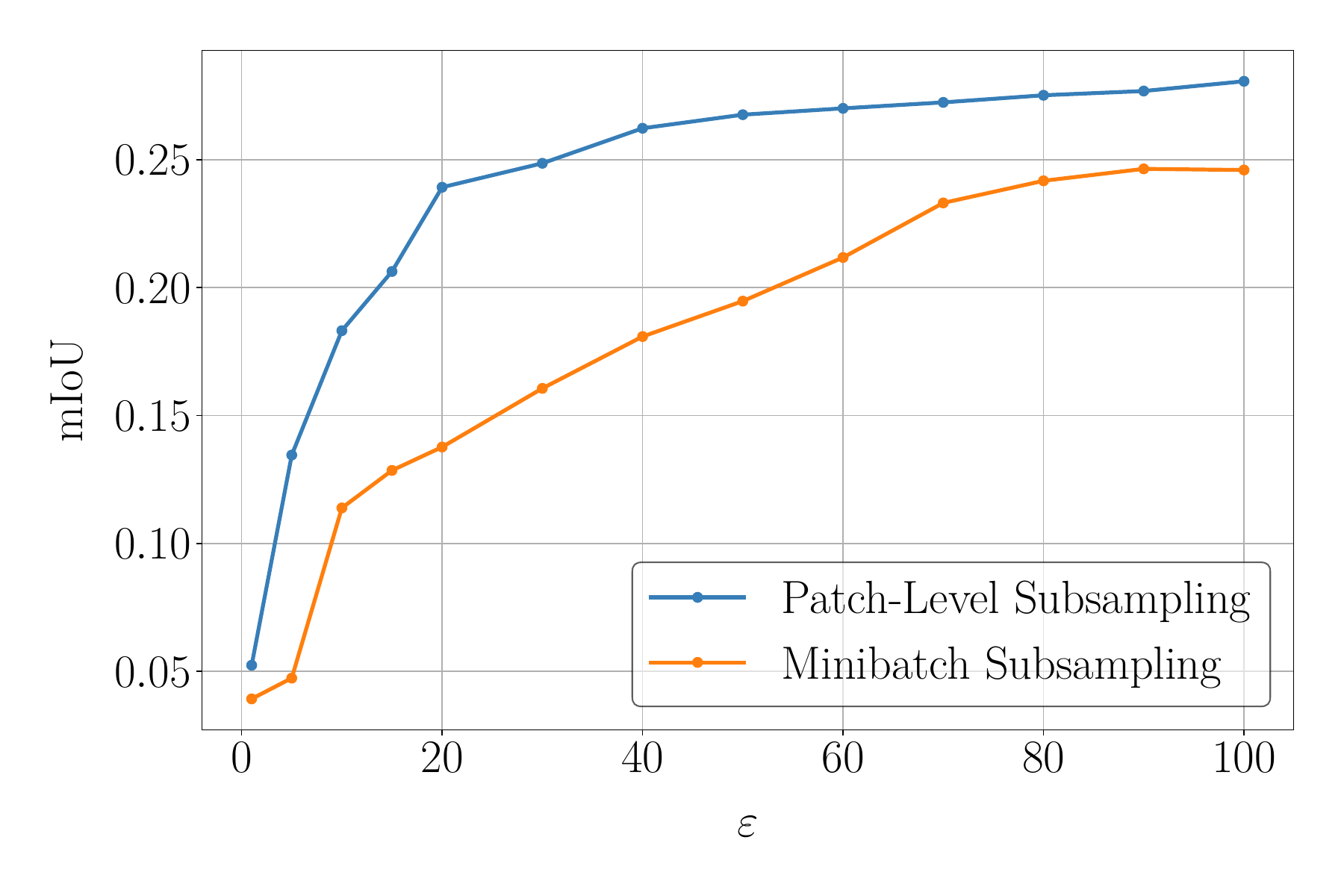}
        \caption{Seed 516}
    \end{subfigure}
    \caption{\update{Per-seed privacy-utility tradeoffs of PSPNet on Cityscapes. We assume a private patch size of $10 \times 10$, and set $\delta = 1/2975$.}}
    \label{fig:perf_per_seed_2}
\end{figure}

\begin{figure}[h]
    \centering
    \begin{subfigure}[h]{0.49\textwidth}
        \centering
        \includegraphics[width=\textwidth]{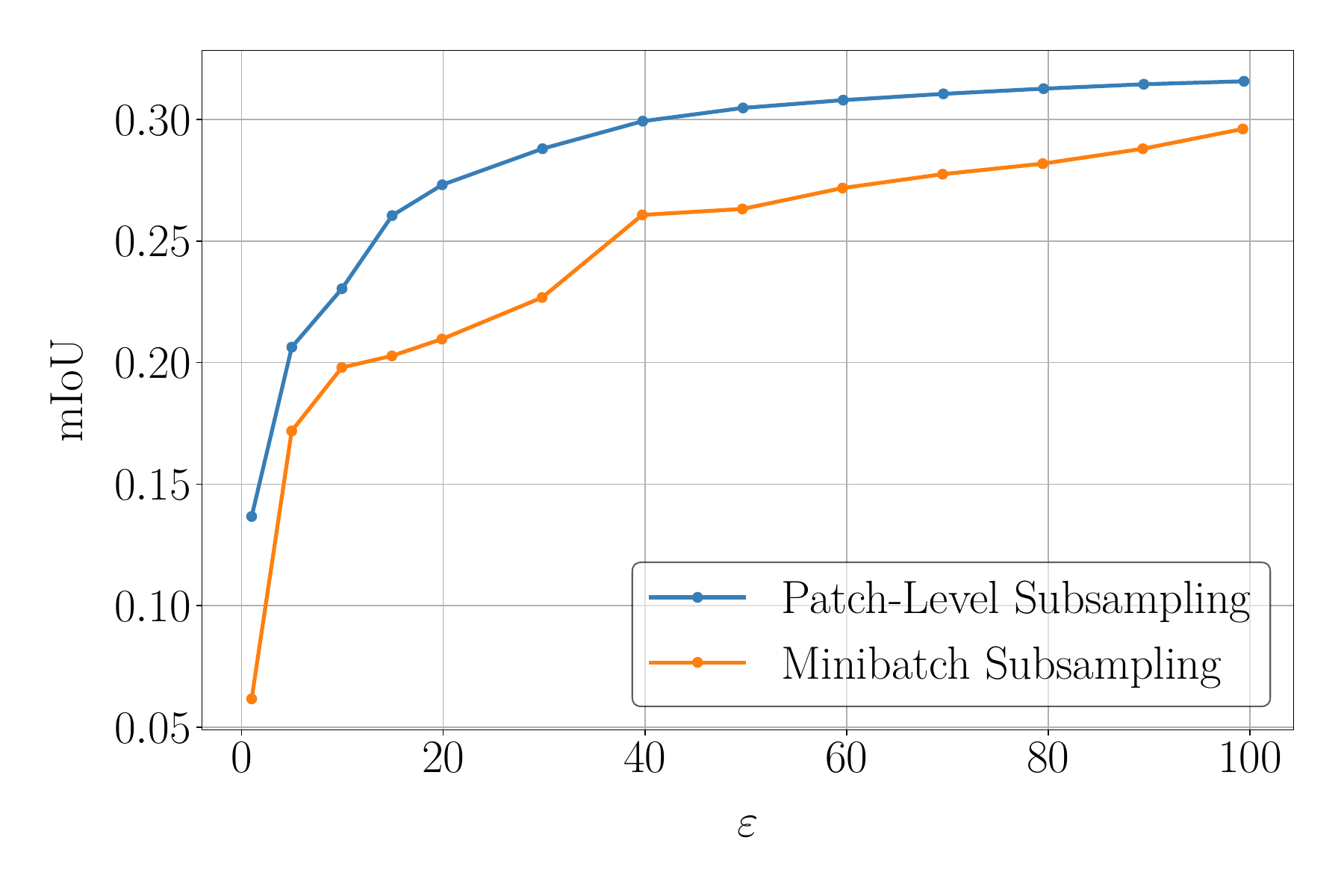}
        \caption{Seed 10}
    \end{subfigure}
    \hfill
    \begin{subfigure}[h]{0.49\textwidth}
        \centering
        \includegraphics[width=\textwidth]{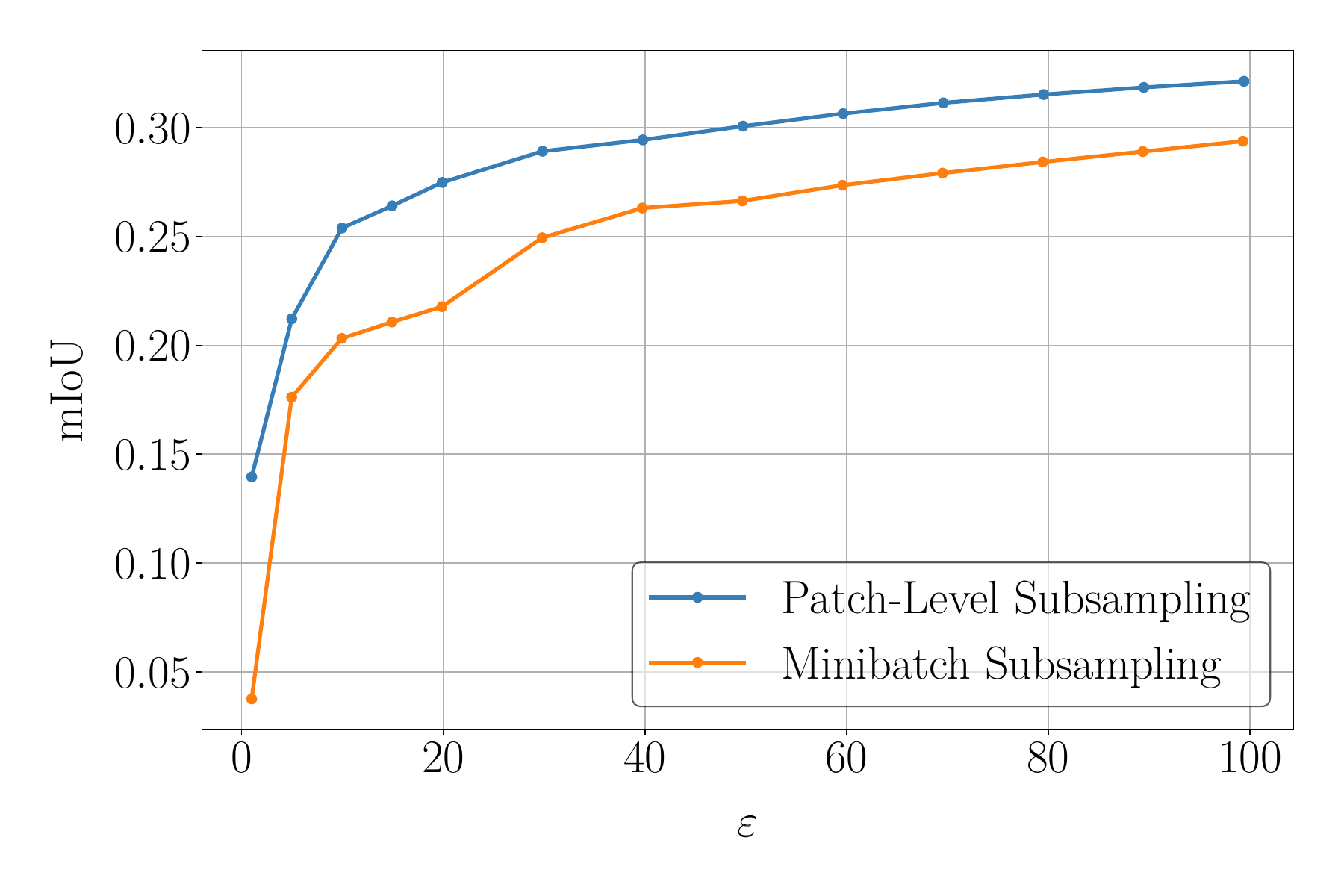}
        \caption{Seed 20}
    \end{subfigure}

    \begin{subfigure}[h]{0.49\textwidth}
        \centering
        \includegraphics[width=\textwidth]{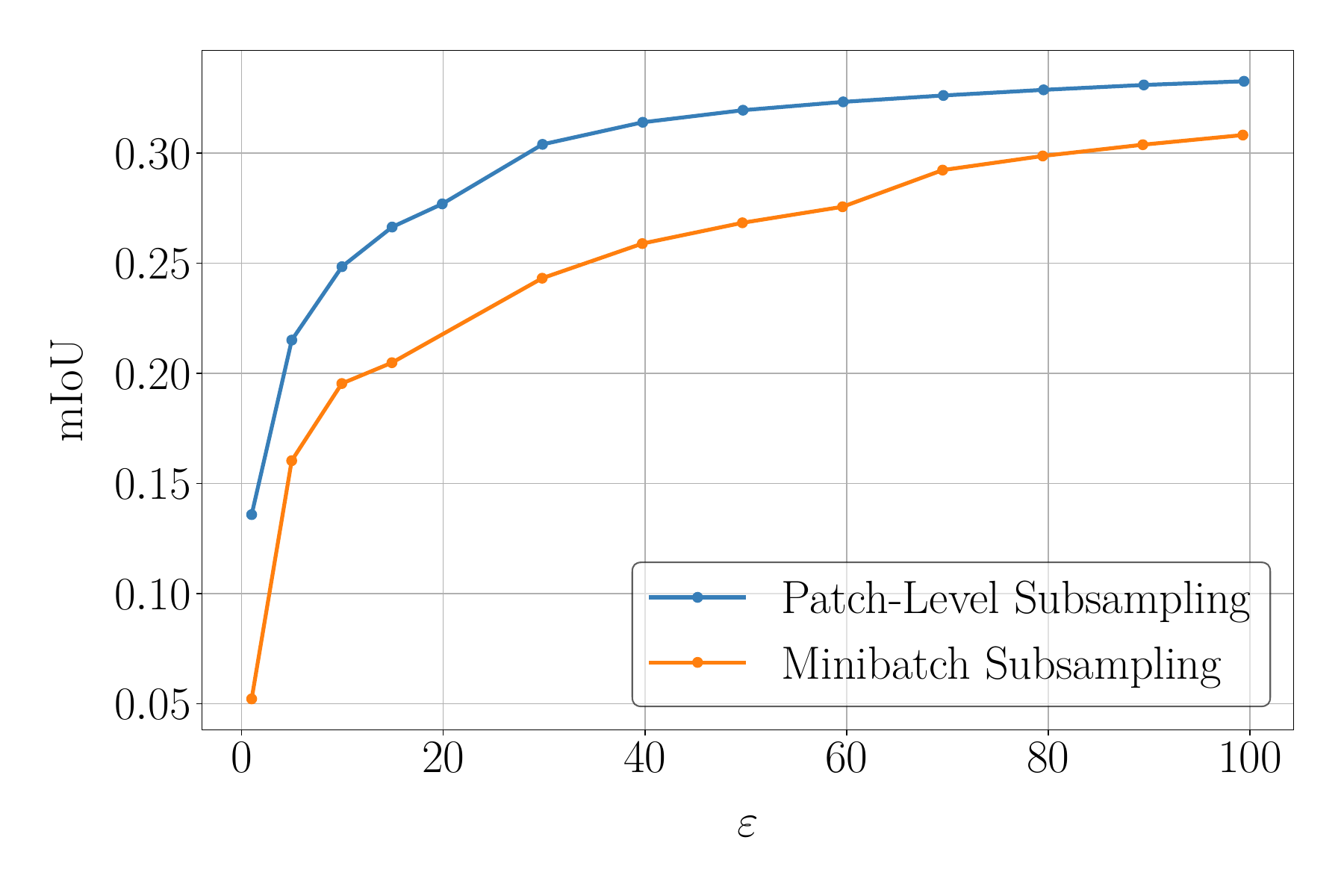}
        \caption{Seed 30}
    \end{subfigure}
    \hfill
    \begin{subfigure}[h]{0.49\textwidth}
        \centering
        \includegraphics[width=\textwidth]{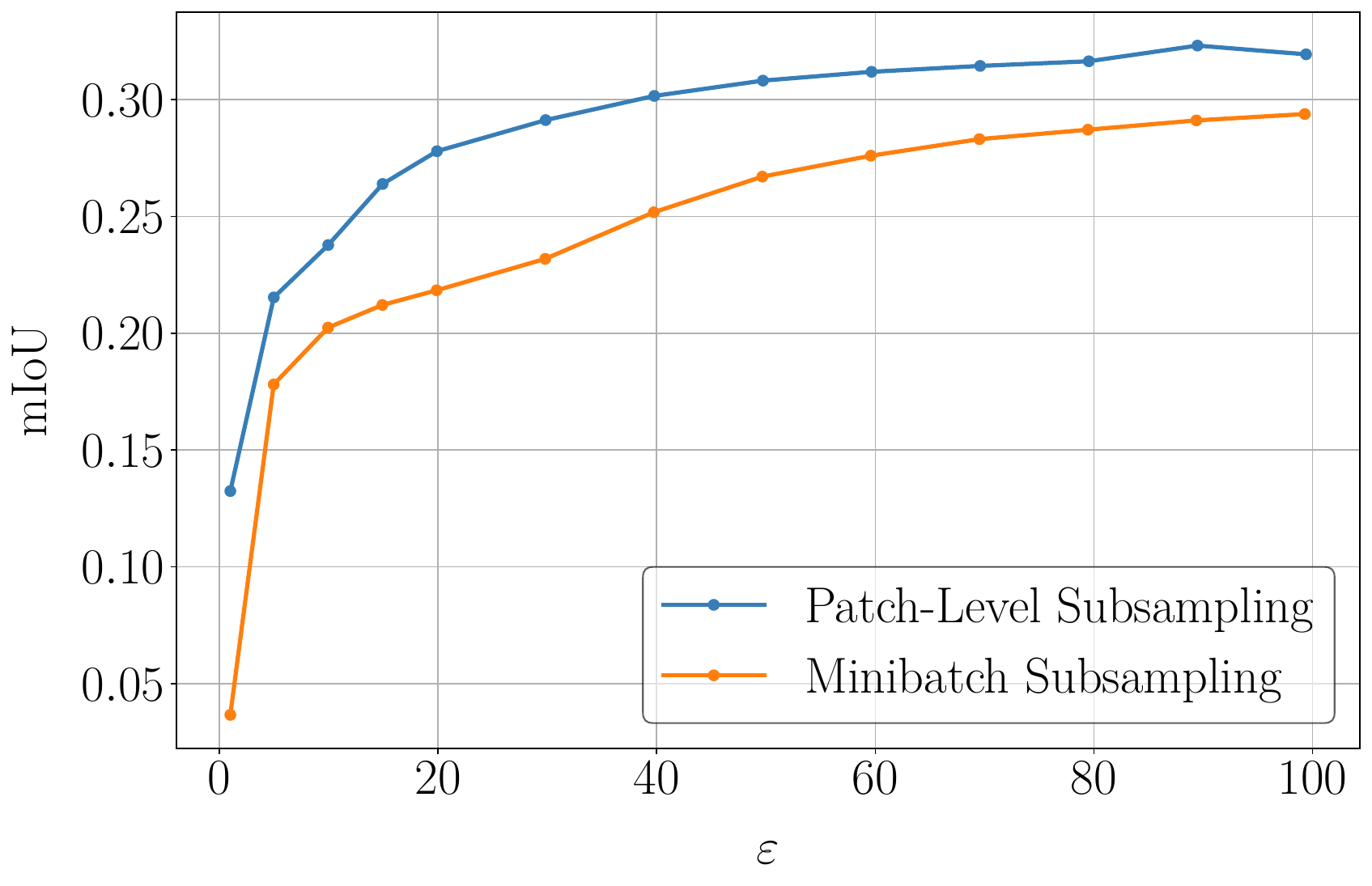}
        \caption{Seed 516}
    \end{subfigure}
    \caption{\update{Per-seed privacy-utility tradeoffs of PSPNet on A2D2. We assume a private patch size of $10 \times 10$, and set $\delta = 1/18557$.}}
    \label{fig:perf_per_seed_3}
\end{figure}

\subsection{Convergence for A2D2 with DeepLabV3+}
\label{app:a2d2_deeplab_convergence}
We attempted training DeepLabV3+ on A2D2 under the fixed hyperparameter configuration described above.
Unlike the other model-dataset combinations, however, this configuration did not consistently converge. 
Out of four random seeds, three produced stable training and yielded results consistent with our main conclusions, while one failed to make progress (see Figure \ref{diverge}).

Importantly, the same instability was also observed for the baseline method with standard minibatch subsampling. 
Thus, while we cannot make conclusive comparisons between the two sampling strategies in this setting, the divergence is not inherent to patch-level subsampling. 
Instead, we attribute it to a combination of (1) the reduced training horizon of 25 epochs imposed by dataset size and computational constraints, and (2) the lack of dataset-specific hyperparameter tuning. 
Given the sensitivity of DP-SGD training to hyperparameters~\citep{howto, scalemeta, deepmind, deepmind2}, it is likely that adjustments (e.g., learning rate schedules or longer training) would resolve this issue, but such tuning was not feasible under our compute budget. 

Figure~\ref{fig:a2d2_deeplab_convergence} shows the per-seed results. 
While one run diverges, the remaining three clearly demonstrate the same utility gains from patch-level subsampling that we observe in the other settings. 
This suggests that the lack of convergence is a practical artifact of limited resources rather than a limitation of the method itself.

\begin{figure}[h]
    \centering
    \includegraphics[width=0.49\textwidth,keepaspectratio]{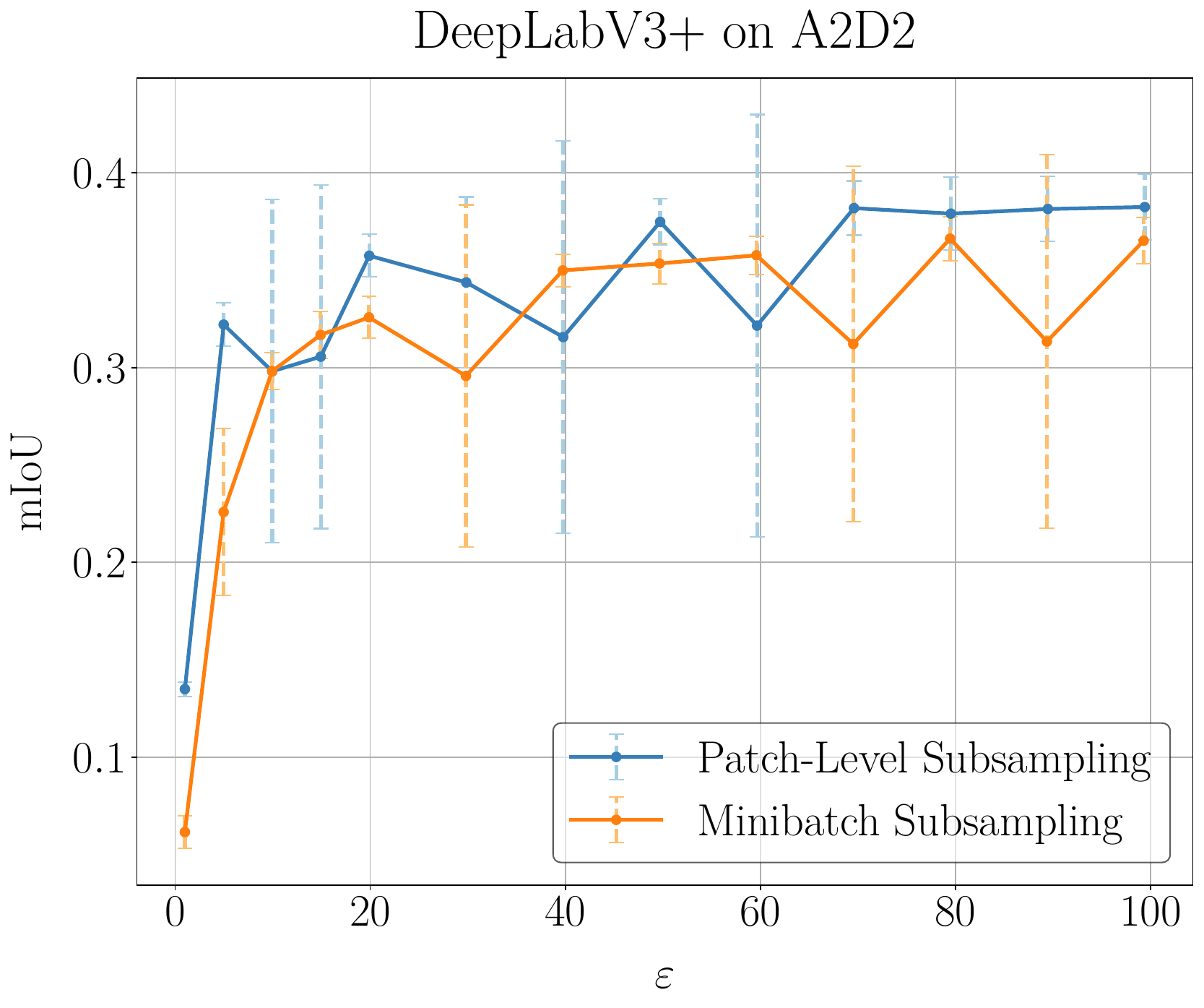}

    \caption{Privacy-utility tradeoff on A2D2 with DeepLabV3+. Results averaged over four seeds, with error bars indicating standard deviation. \update{We assume a private patch size of $10 \times 10$, and set $\delta = 1/18557$.}}
    \label{diverge}
\end{figure}

\begin{figure}[h]
    \centering
    \begin{subfigure}[t]{0.49\textwidth}
        \centering
        \includegraphics[width=\textwidth]{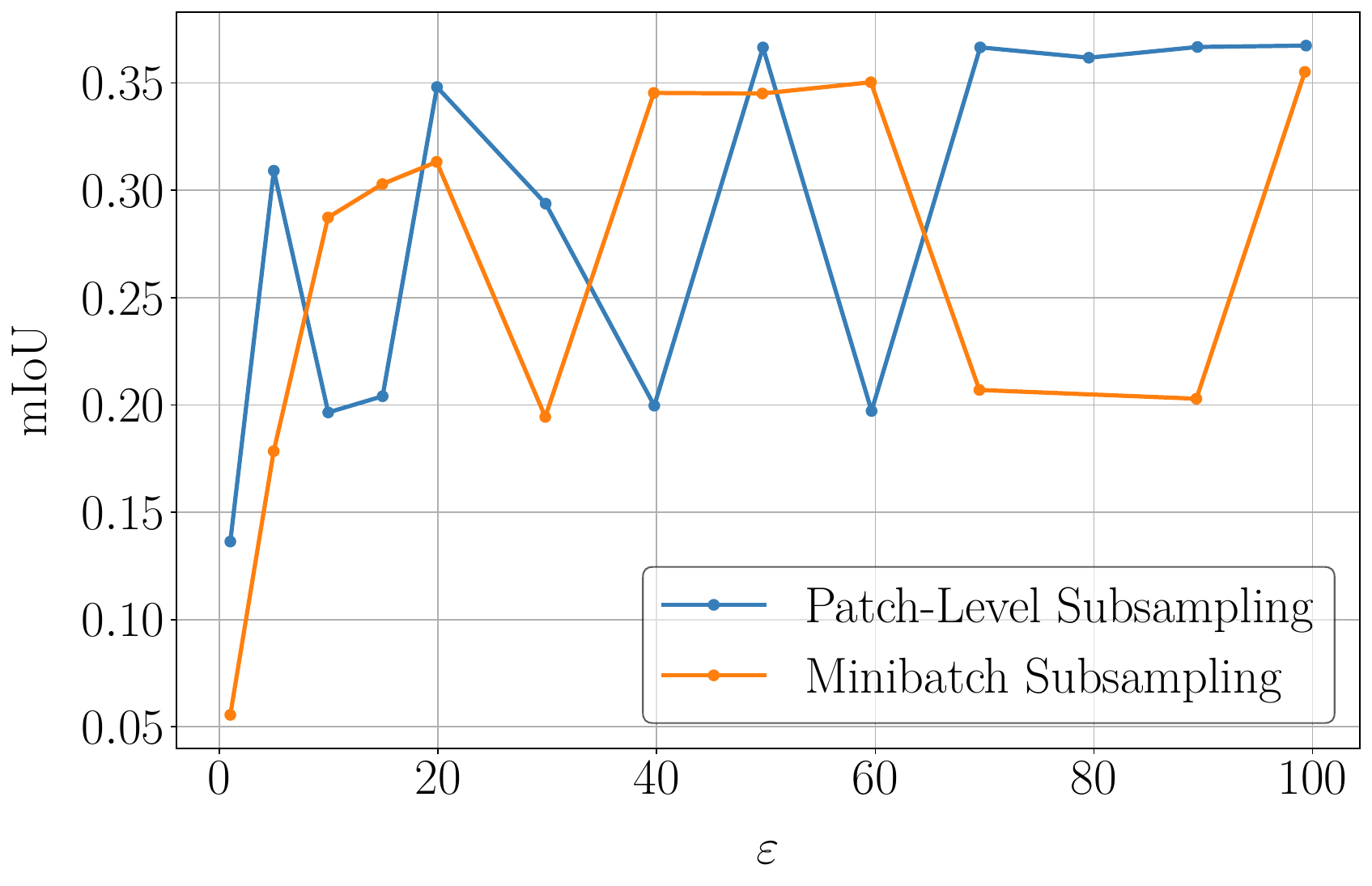}
        \caption{Seed 10}
    \end{subfigure}
    \hfill
    \begin{subfigure}[t]{0.49\textwidth}
        \centering
        \includegraphics[width=\textwidth]{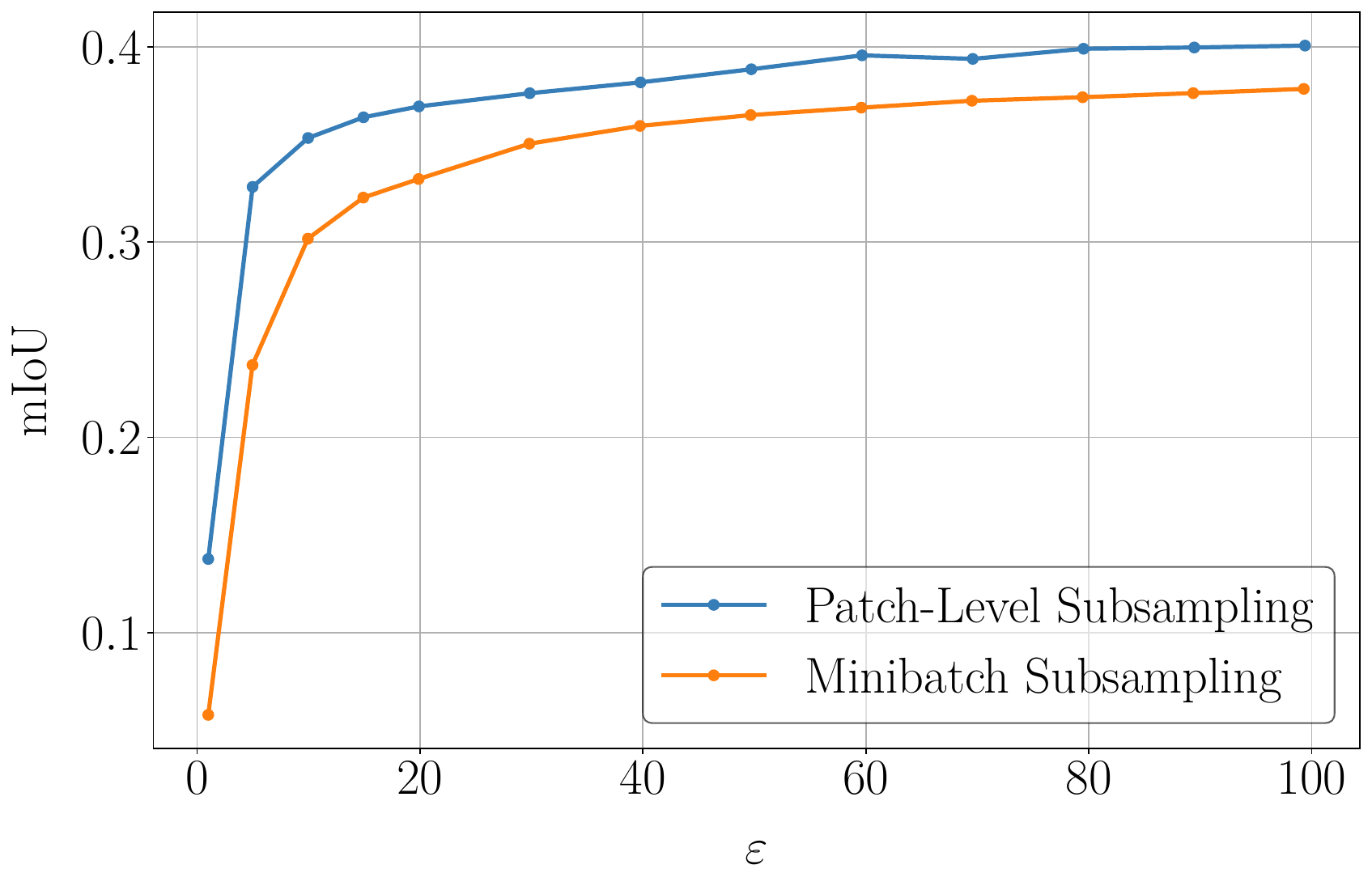}
        \caption{Seed 20}
    \end{subfigure}
    \begin{subfigure}[t]{0.49\textwidth}
        \centering
        \includegraphics[width=\textwidth]{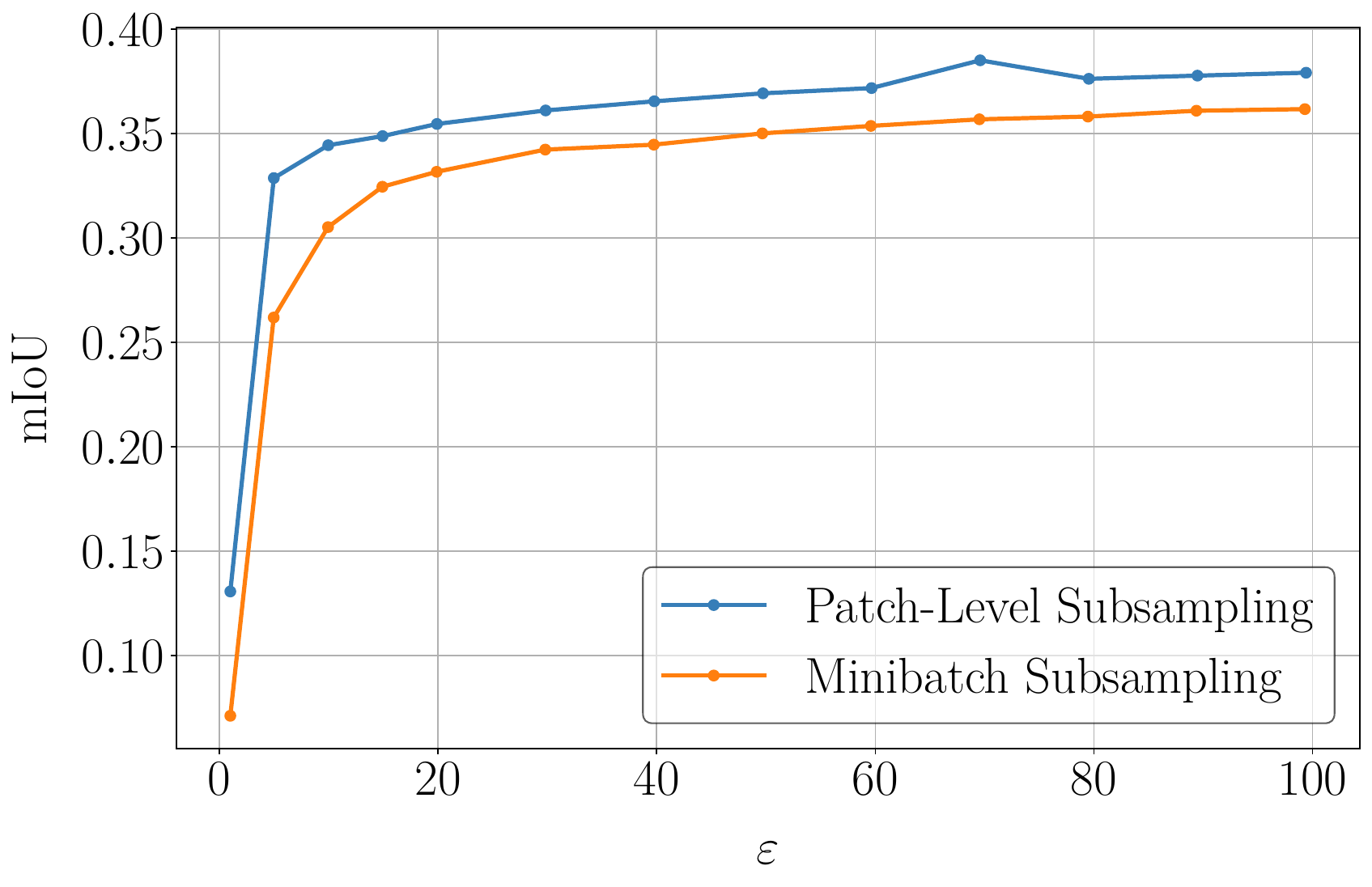}
        \caption{Seed 30}
    \end{subfigure}
    \begin{subfigure}[t]{0.49\textwidth}
        \centering
        \includegraphics[width=\textwidth]{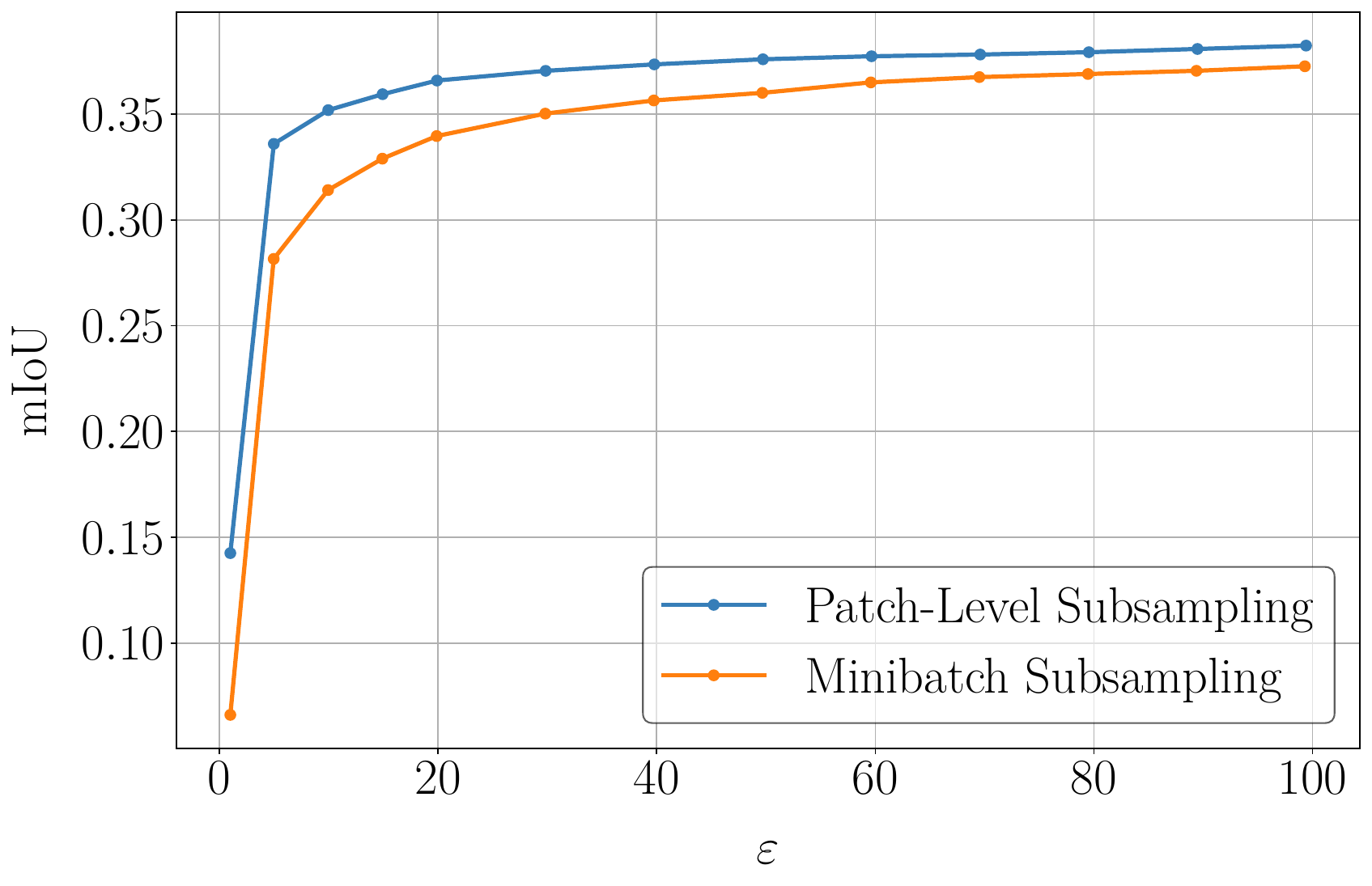}
        \caption{Seed 516}
    \end{subfigure}
    \caption{Per-seed privacy-utility tradeoffs for A2D2 with DeepLabV3+. Three seeds converge and confirm the utility benefits of patch-level subsampling. Seed 516 diverged under both patch-level and minibatch subsampling. \update{We assume a private patch size of $10 \times 10$, and set $\delta = 1/18557$.}}
    \label{fig:a2d2_deeplab_convergence}
\end{figure}

\update{\subsection{Comparison with Gaussian Noise Augmentation}
\label{app:gaussian_augmentation}
In the following, we compare our patch-level privacy amplification with Gaussian noise augmentation as an alternative amplification mechanism, following the analysis of \citet{jan-timeseries}. In this setup, Gaussian noise is added to images during training, and the resulting amplification is accounted for via TVD-based composition with minibatch subsampling. We calibrate the noise level to match the same target $\varepsilon$ values as our other experiments.
Figures~\ref{fig:gaussian_augmentation} and  \ref{fig:gaussian_augmentation_2} shows results for DeepLabV3+ and PSPNet on Cityscapes and A2D2. Gaussian noise augmentation performs worse than both our method and standard minibatch subsampling. This is because the $\ell_2$ sensitivity of the input is large, so achieving meaningful amplification via TVD would require impractically high noise levels. When we instead calibrate the noise to match our target $\varepsilon$ values, the resulting amplification on top of minibatch subsampling is negligible, while the added noise still degrades model performance. This makes the model perform even worse than the minibatch subsampling baseline.
In contrast, our patch-level analysis leverages the inherent randomness of cropping, which is already part of standard training pipelines and does not introduce additional perturbations to the input.}

\begin{figure}[h]
    \centering
    \includegraphics[width=0.49\textwidth]{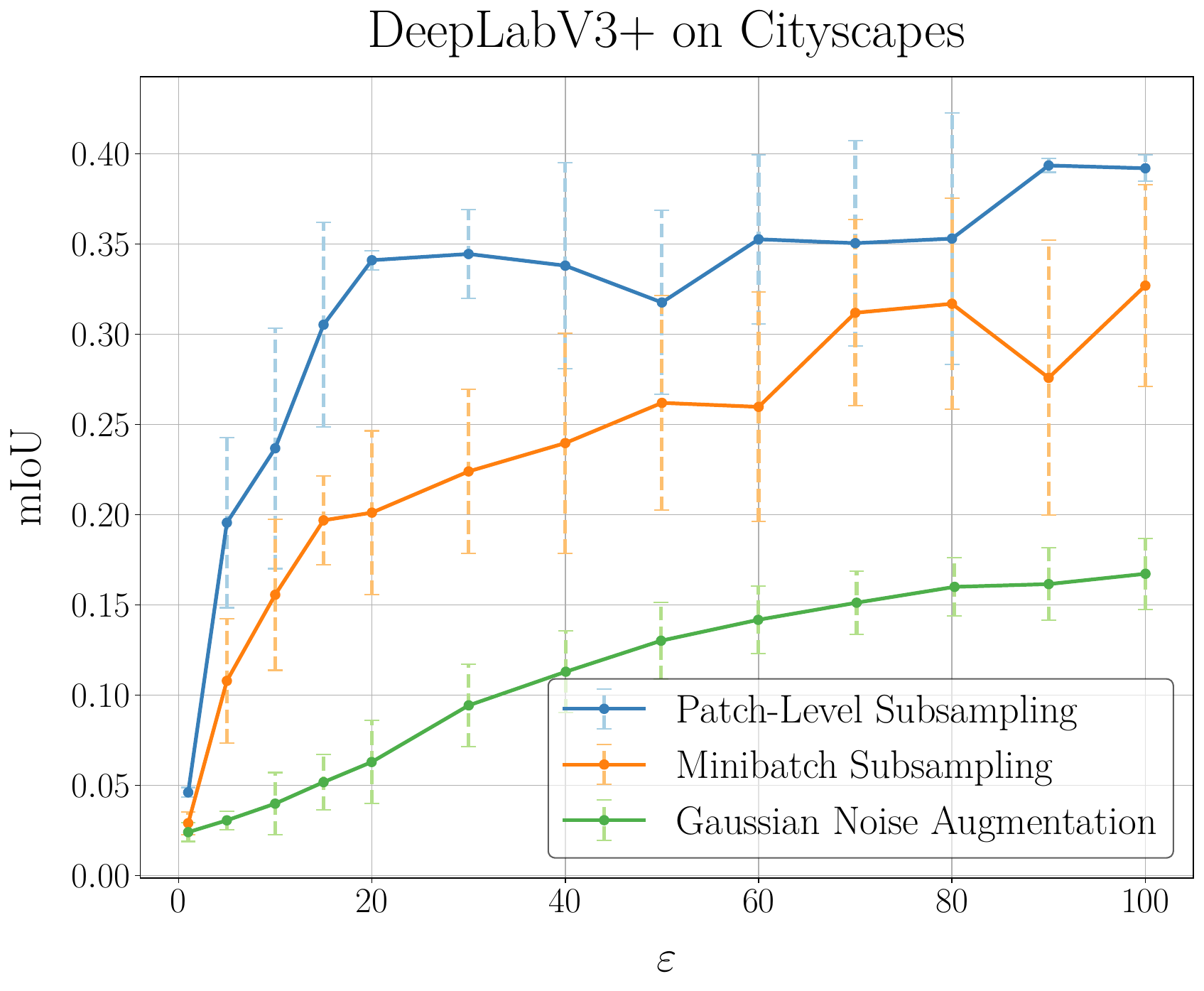}
     \includegraphics[width=0.49\textwidth]{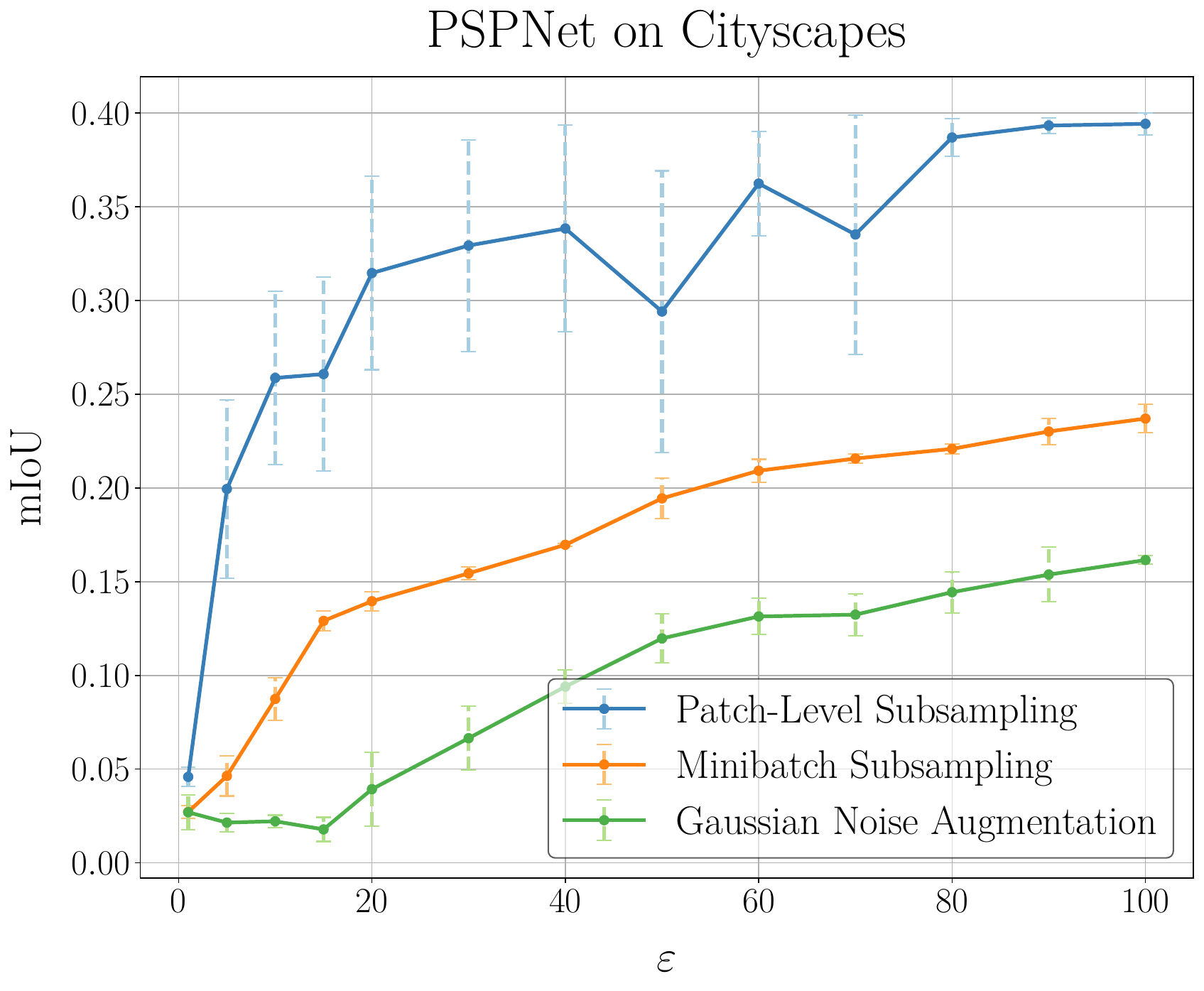}
    
    \caption{\update{Privacy-utility tradeoff comparing patch-level subsampling (blue), minibatch subsampling (orange), and Gaussian noise augmentation (green) for DeepLabV3+ and PSPNet on Cityscapes. Gaussian noise augmentation performs worse than both alternatives due to the high noise levels required for amplification. $\delta = 1/2975$, private patch size $10 \times 10$.}}
    \label{fig:gaussian_augmentation}
\end{figure}

\begin{figure}[h]
    \centering
    \includegraphics[width=0.49\textwidth]{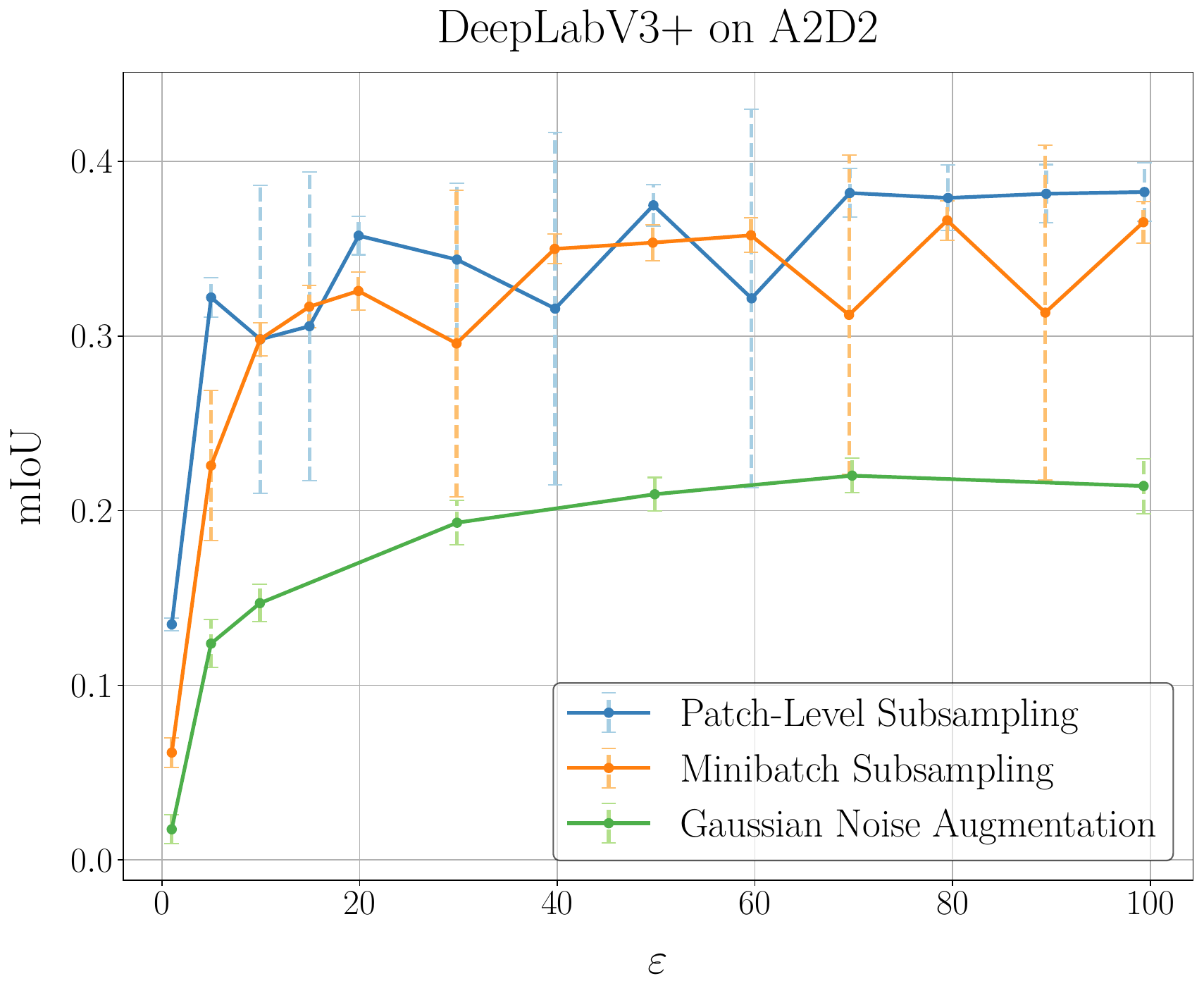}
     \includegraphics[width=0.49\textwidth]{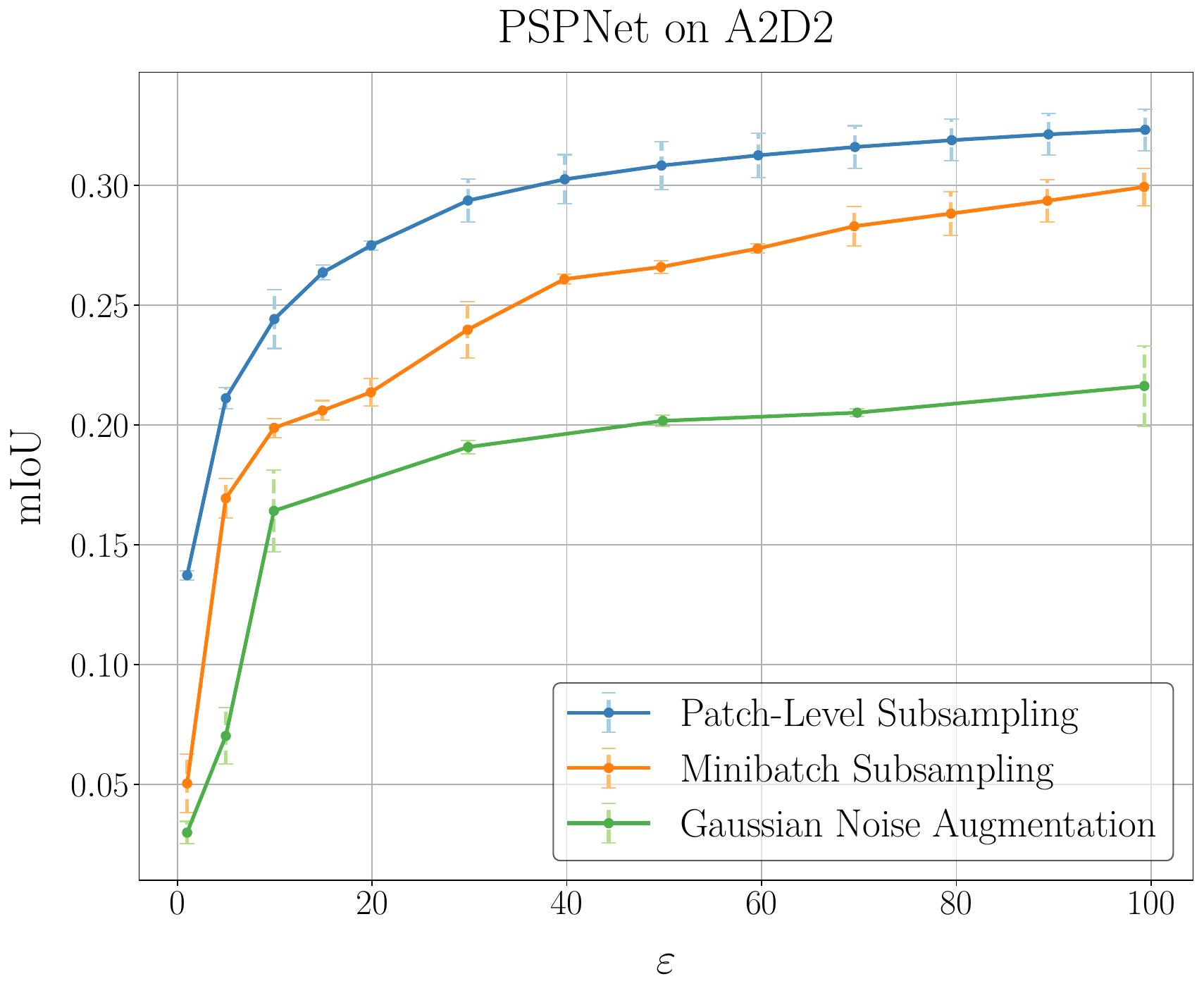}
    
    \caption{\update{Privacy-utility tradeoff comparing patch-level subsampling (blue), minibatch subsampling (orange), and Gaussian noise augmentation (green) for DeepLabV3+ and PSPNet on A2D2. Gaussian noise augmentation performs worse than both alternatives due to the high noise levels required for amplification. $\delta = 1/18557$, private patch size $10 \times 10$.}}
    \label{fig:gaussian_augmentation_2}
\end{figure}

\subsection{\update{Classification Experiments}}
\label{app:classification_results}

\update{In addition to segmentation, we evaluate our method on the image classification task, as mentioned in Section \ref{sec:priv-utility}. In the following, we provide detailed results and discussion.}

\update{\textbf{Datasets and Models.} We train ResNet-18 \citep{resnet} and VGG-11 \citep{vgg} on DTD \citep{cimpoi14describing} and MNIST \citep{LeCun1998GradientbasedLA} datasets. 
DTD dataset has 5640 images of different sizes, with a 1880/1880/1880 train/val/test split. To better align with our method, we drop out the images that have any dimension smaller than 300 and resize the rest to $300 \times 300$ as preprocessing. This transforms the data split to 1879/1878/1880. MNIST has a $60{,}000$/$10{,}000$/$10{,}000$ train/val/test split consisting of $28 \times 28$ images. 
For both models, on DTD, we freeze the batch normalization layers \citep{batchnorm} because they are incompatible with DP-SGD, as mentioned before. Additionally, we initialize the models with ImageNet \citep{imagenet} weights, then fine-tune the entire model. On MNIST, we remove the batch normalization layers and initialize randomly.}

\update{\textbf{Experimental Setup.} For differential privacy, we use the same setup from Section \ref{app:diff_priv_setup}. For hyperparameter configuration, we performed separate tuning for each dataset-model combination. For the DTD dataset, we assumed a private patch size of $5 \times 5$, smaller relative to A2D2 and Cityscapes, as the image sizes are much smaller. All tuning was done with fixed $\varepsilon=10$ and used for all epsilon values.}
\update{For ResNet-18 on DTD ($\delta = 1/1879$), the optimal configuration was crop size $100 \times 100$, zero padding, batch size 128, clipping norm $C=0.5$, learning rate $0.01$, and 150 epochs. For VGG-11 on DTD, we changed batch size to 64, learning rate to $0.002$, and added weight decay $10^{-4}$, while keeping the rest the same. Unlike the segmentation experiments, we used SGD with momentum $0.9$ and a cosine learning rate scheduler for both models. For experiments across privacy budgets, we kept training horizons fixed and adapted the noise multiplier $\sigma$ accordingly. All DTD experiments were repeated with four seeds.}

\update{\textbf{Limitations with MNIST.} We additionally experimented with MNIST to explore the boundaries of our method's applicability. We used a crop size of $10 \times 10$ and assumed a private patch size of only 1 pixel, as images are much smaller compared to other datasets. 
However, MNIST presents a fundamental challenge: cropping destroys semantic content regardless of privacy considerations. At $28 \times 28$ resolution, digits contain minimal redundancy, and random crops can make classes not understandable or indistinguishable. For example, cropping "8" can produce shapes similar to "0", "3" or "9". As shown in Table~\ref{tab:mnist_cropping}, even without privacy noise ($\varepsilon=\infty$), cropping reduces accuracy from over 99\% to below 26\%. This confirms that our method is unsuitable for datasets where global structure is essential for classification and random cropping fundamentally alters class identity. We include MNIST results to delineate when patch-level privacy analysis is \emph{not} appropriate: namely, when the task requires holistic image understanding and cropping cannot preserve semantic content. For MNIST, we removed batch normalization layers entirely and trained from random initialization.}

\begin{table}[h]
\centering
\caption{\update{MNIST accuracy without differential privacy ($\varepsilon = \infty$). Cropping alone destroys utility.}}
\label{tab:mnist_cropping}
\begin{tabular}{lcc}
\hline
Model & No Crop & Crop ($10 \times 10$) \\
\hline
ResNet-18 & 99.4\% & 26.0\% \\
VGG-11 & 99.2\% & 16.7\% \\
\hline
\end{tabular}
\end{table}

\update{\textbf{Results on DTD.} Figure~\ref{fig:classification_results} shows privacy-utility tradeoffs for VGG-11 and ResNet-18 on DTD. Consistent with our segmentation experiments, patch-level sampling yields higher accuracy across all tested privacy budgets. For both models, the improvement becomes more pronounced at smaller $\varepsilon$ values, with the largest gains observed in the low to moderate privacy regime. At $\varepsilon=1$, however, the difference diminishes as the high noise magnitude degrades training for both methods. Per-seed results are provided in Figure~\ref{fig:classification_per_seed_1} and \ref{fig:classification_per_seed_2}.}

\update{\textbf{Results on MNIST.} As anticipated, both methods perform poorly on MNIST due to the cropping-induced information loss discussed above. Table \ref{tab:mnist_cropping} shows that random cropping alone, even without differential privacy, reduces accuracy from over 99\% to below 27\%, confirming that cropping destroys class-discriminative information for MNIST. Since cropping already fails in the non-private setting, applying DP-SGD on top of it cannot recover utility, as reflected by the divergence and high variance in Figure~\ref{fig:classification_results_mnist}. In such cases, one would not use cropping and instead revert to standard DP-SGD with minibatch subsampling.}

\begin{figure}[h]
    \centering
    \includegraphics[width=0.49\textwidth]{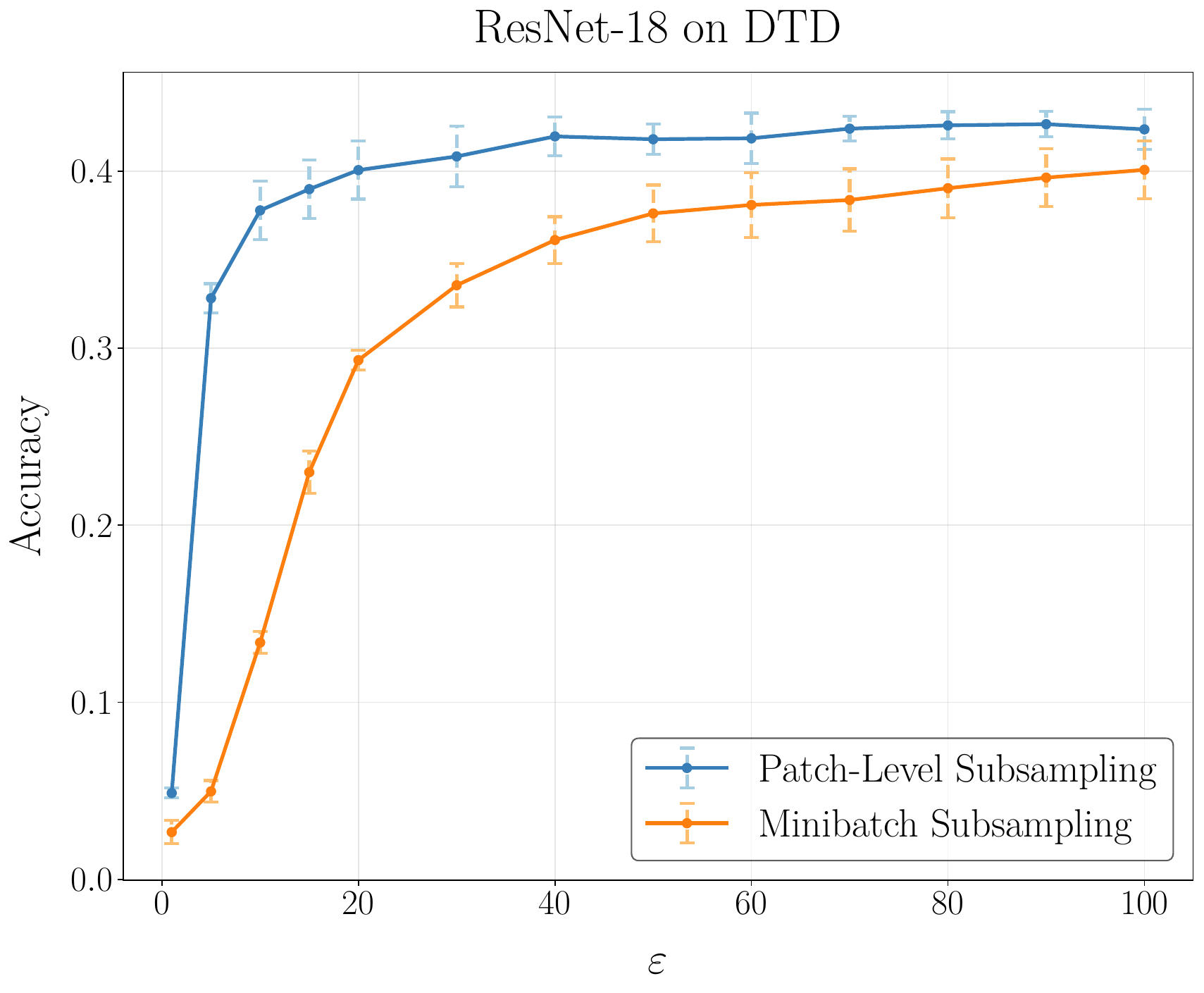}
    \includegraphics[width=0.49\textwidth]{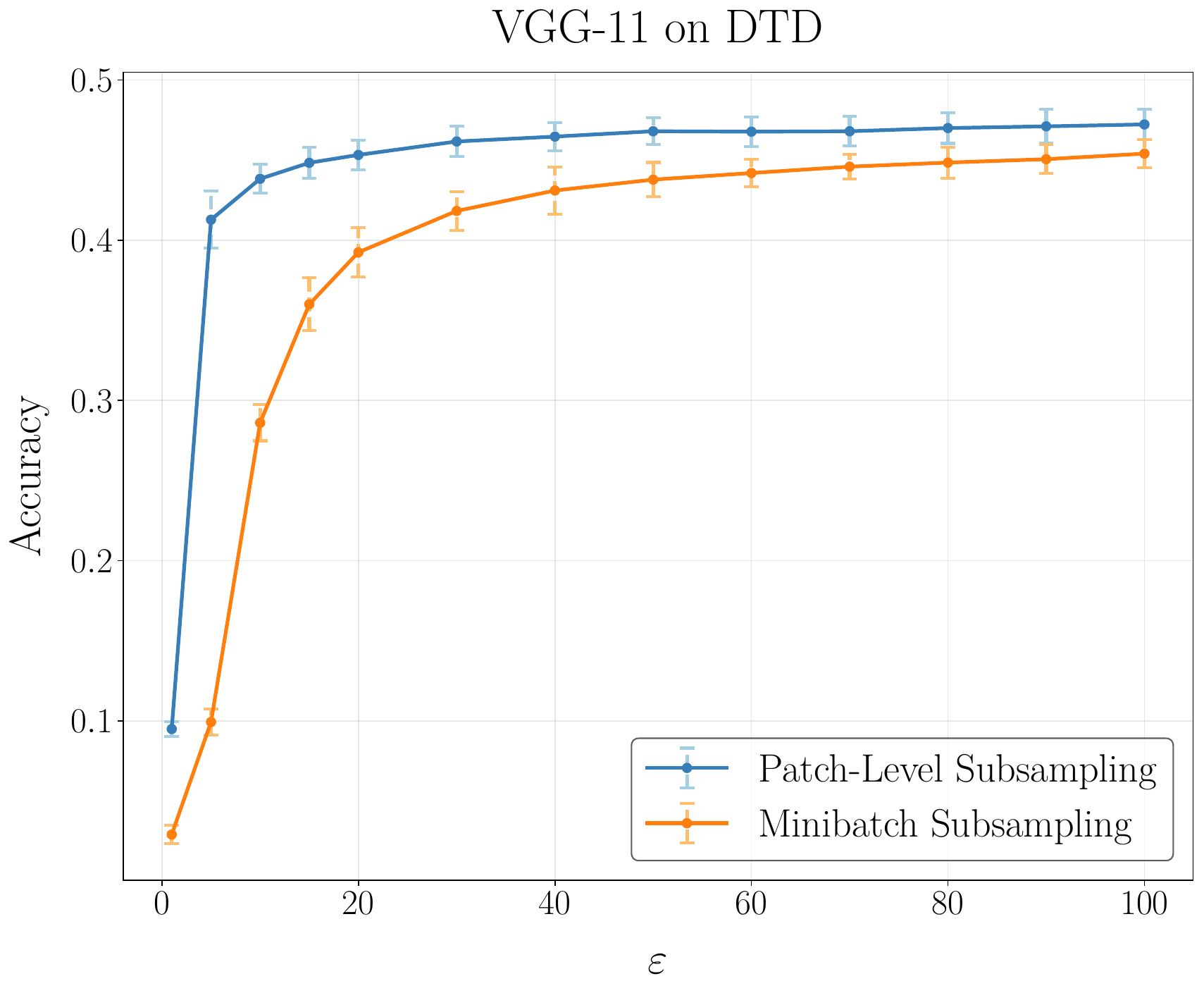}

    \caption{\update{Model performance versus privacy-level \( \varepsilon \) for DP-SGD with patch-level sampling (blue) and minibatch subsampling (orange). Results are averaged over four seeds; error bars show standard deviation. We use \( \delta = 1/1879 \), following \( \delta = 1/\texttt{epoch\_size} \) and a private patch size of $5 \times 5$ is assumed. DG-SGD with patch-level privacy overperforms significantly, given the exact same setup.}}
    \label{fig:classification_results}
\end{figure}

\begin{figure}[h]
    \centering
    \includegraphics[width=0.49\textwidth]{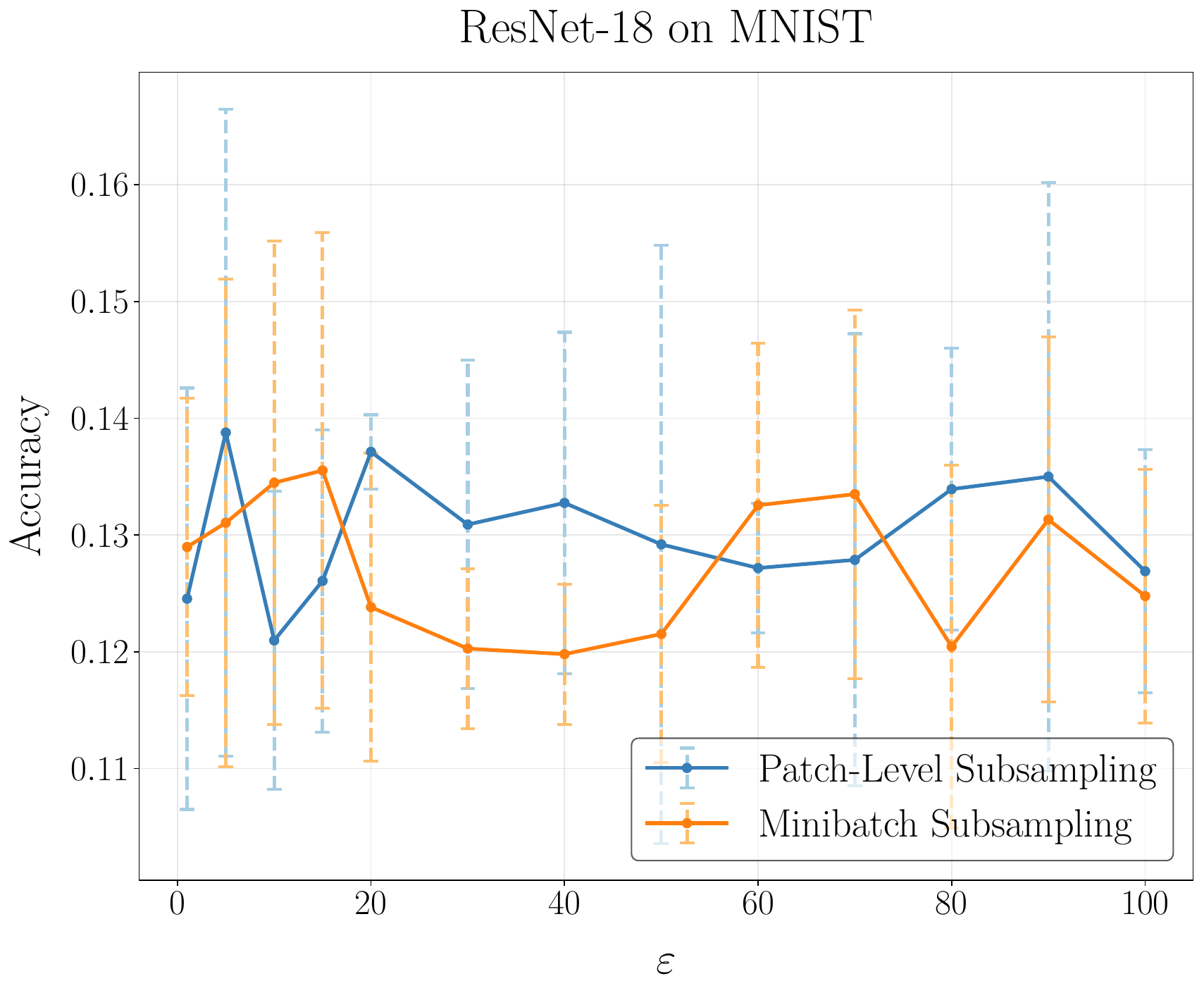}
    \includegraphics[width=0.49\textwidth]{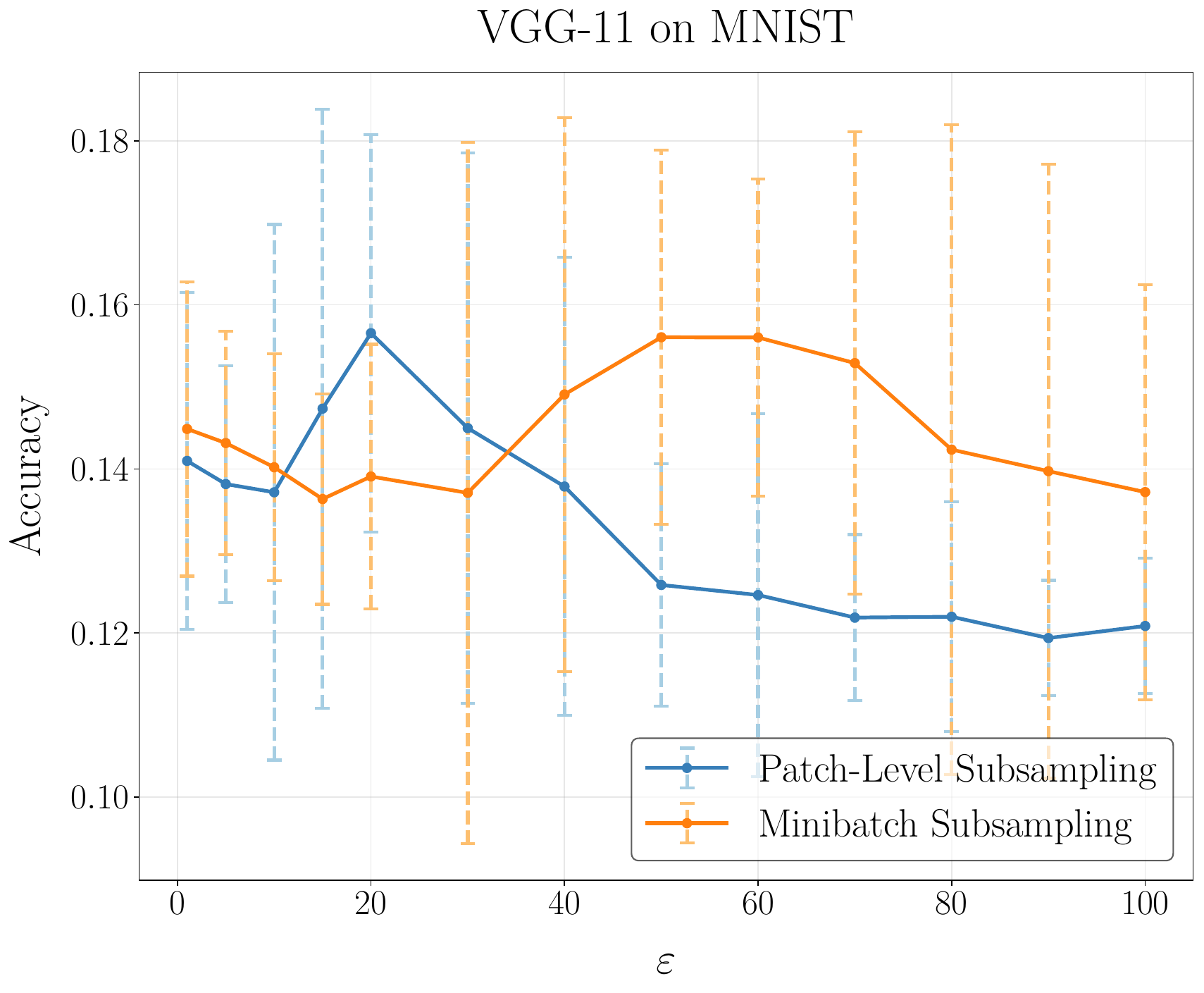}

    \caption{\update{Model performance versus privacy-level \( \varepsilon \) for DP-SGD with patch-level sampling (blue) and minibatch subsampling (orange). Results are averaged over four seeds; error bars show standard deviation. We use \( \delta = 1/60000 \), following \( \delta = 1/\texttt{epoch\_size} \) and a private patch size of $5 \times 5$ is assumed.}}
    \label{fig:classification_results_mnist}
\end{figure}

\begin{figure}[h]
    \centering
    \begin{subfigure}[h]{0.49\textwidth}
        \centering
        \includegraphics[width=\textwidth]{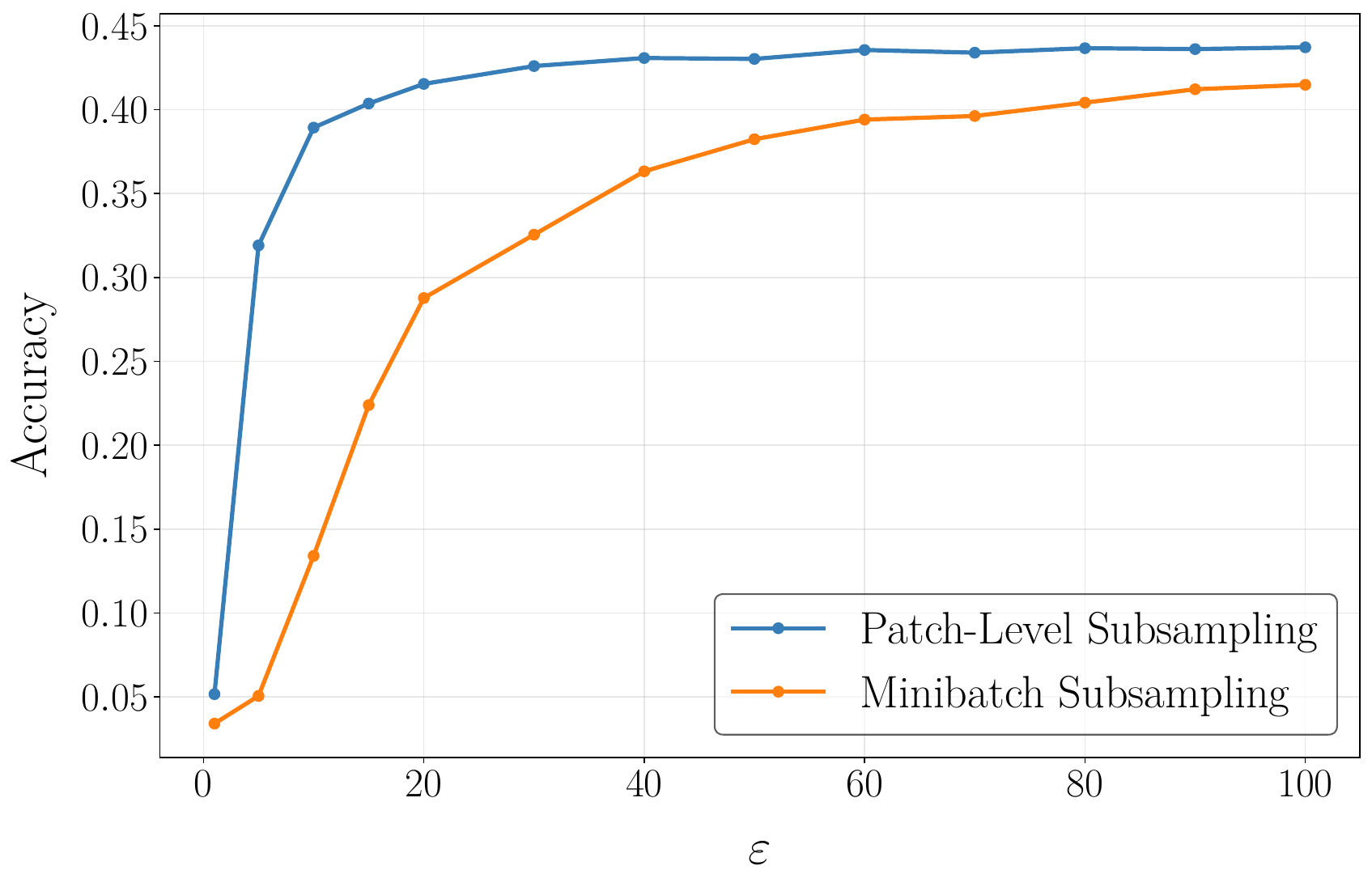}
        \caption{Seed 10}
    \end{subfigure}
    \hfill
    \begin{subfigure}[h]{0.49\textwidth}
        \centering
        \includegraphics[width=\textwidth]{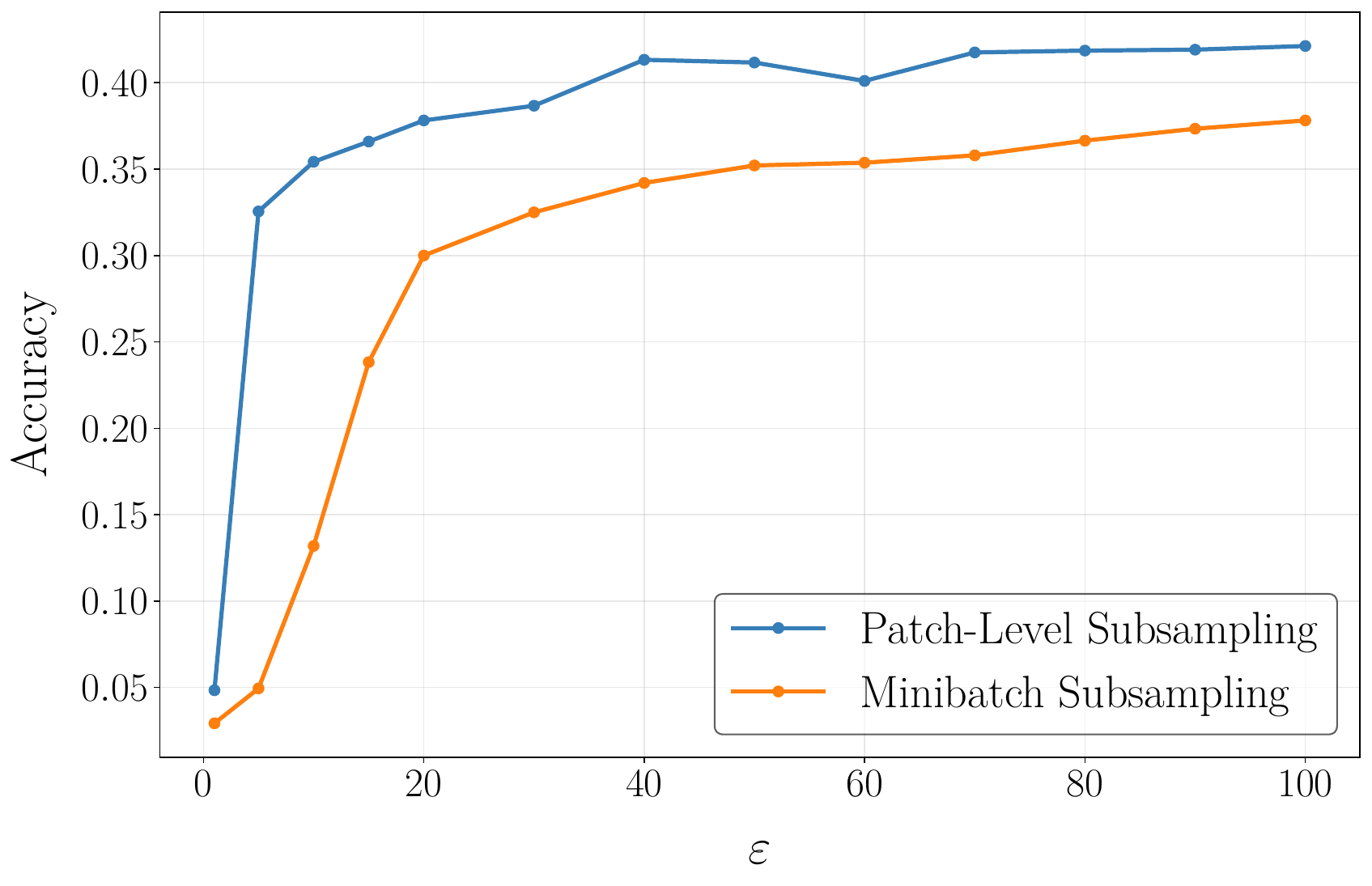}
        \caption{Seed 20}
    \end{subfigure}

    \begin{subfigure}[h]{0.49\textwidth}
        \centering
        \includegraphics[width=\textwidth]{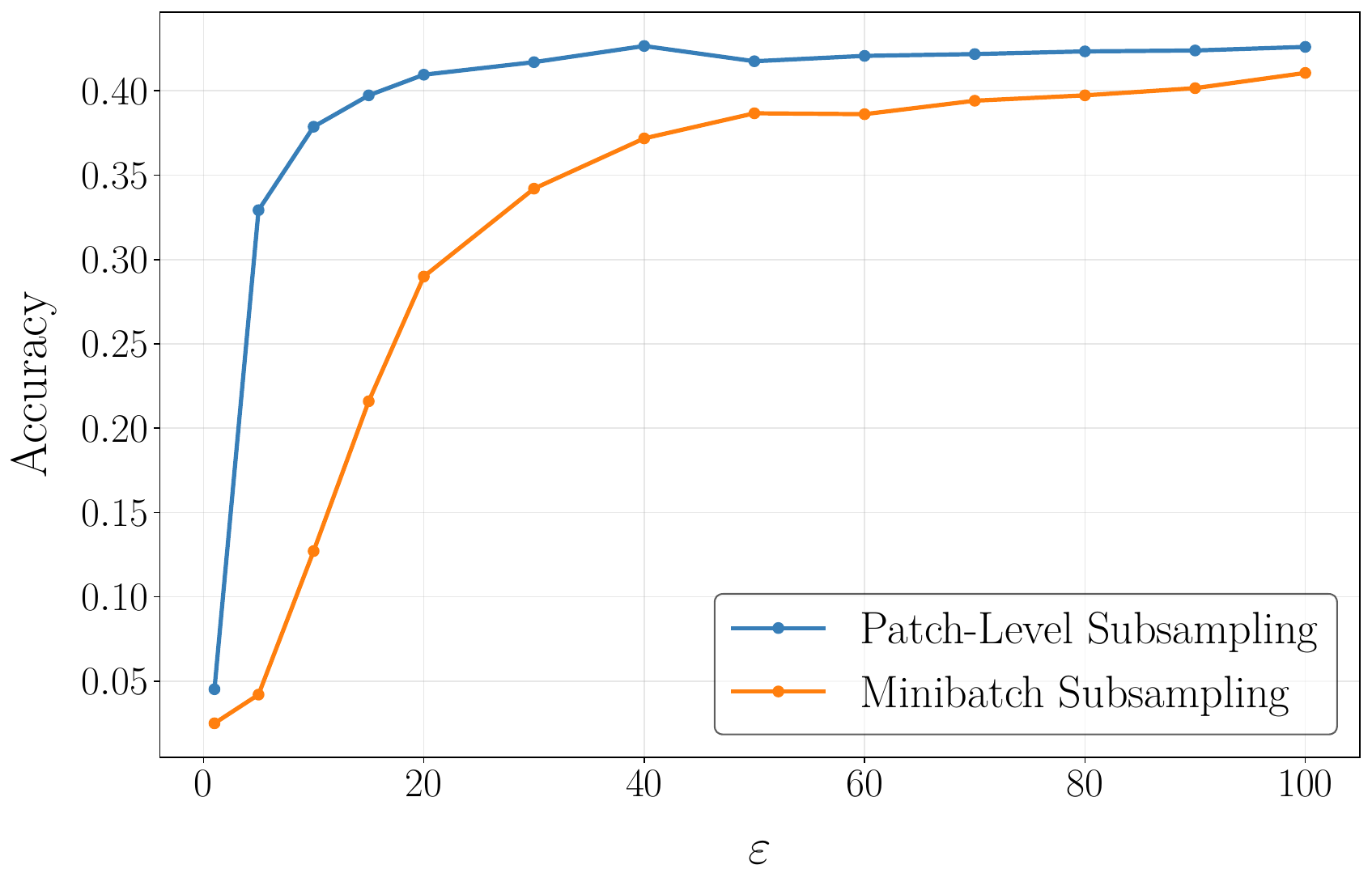}
        \caption{Seed 30}
    \end{subfigure}
    \hfill
    \begin{subfigure}[h]{0.49\textwidth}
        \centering
        \includegraphics[width=\textwidth]{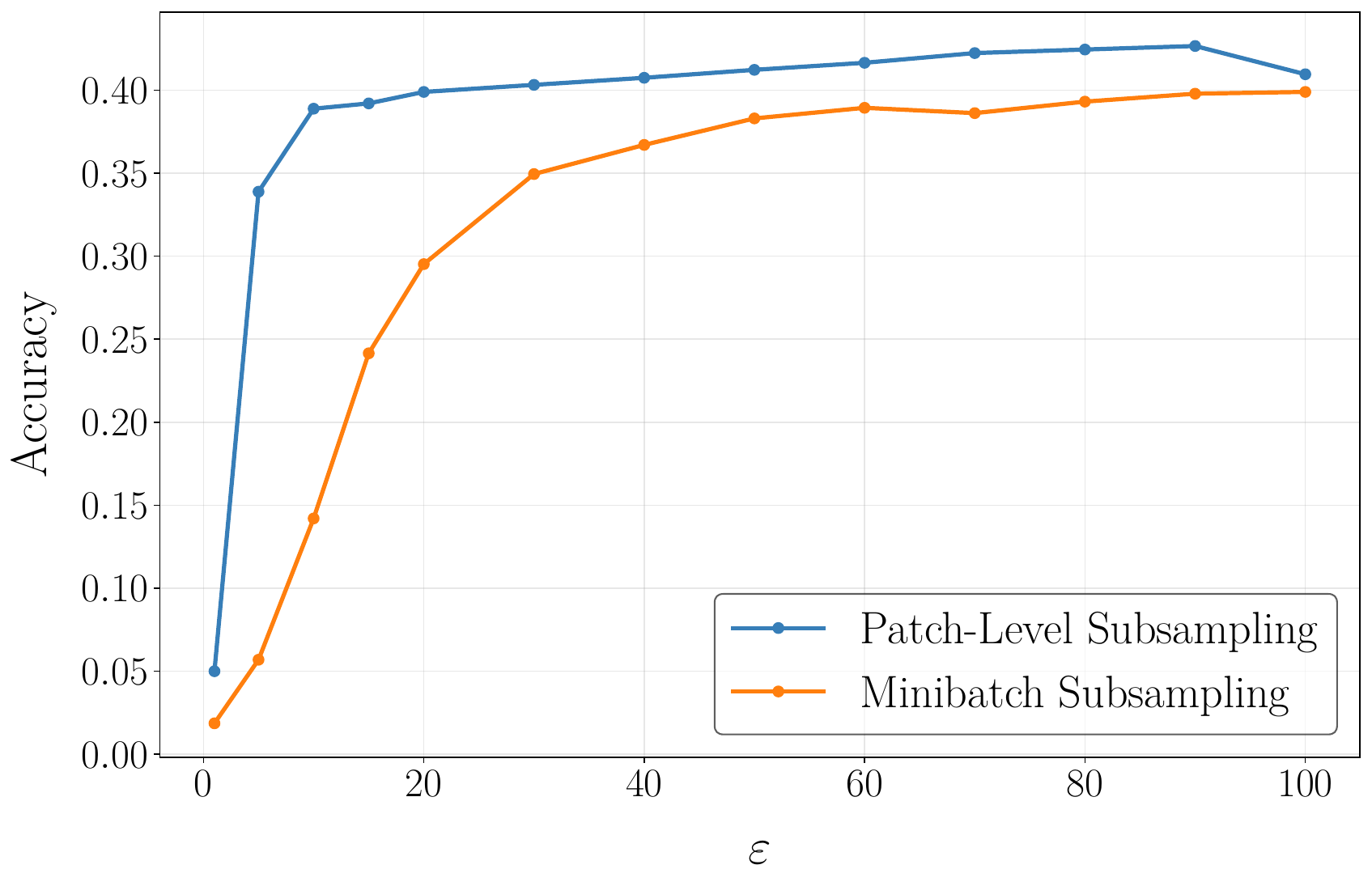}
        \caption{Seed 516}
    \end{subfigure}
    \caption{\update{Per-seed privacy-utility tradeoffs of ResNet-18 on DTD. We assume a private patch size of $5 \times 5$, and set $\delta = 1/1879$.}}
    \label{fig:classification_per_seed_1}
\end{figure}

\begin{figure}[h]
    \centering
    \begin{subfigure}[h]{0.49\textwidth}
        \centering
        \includegraphics[width=\textwidth]{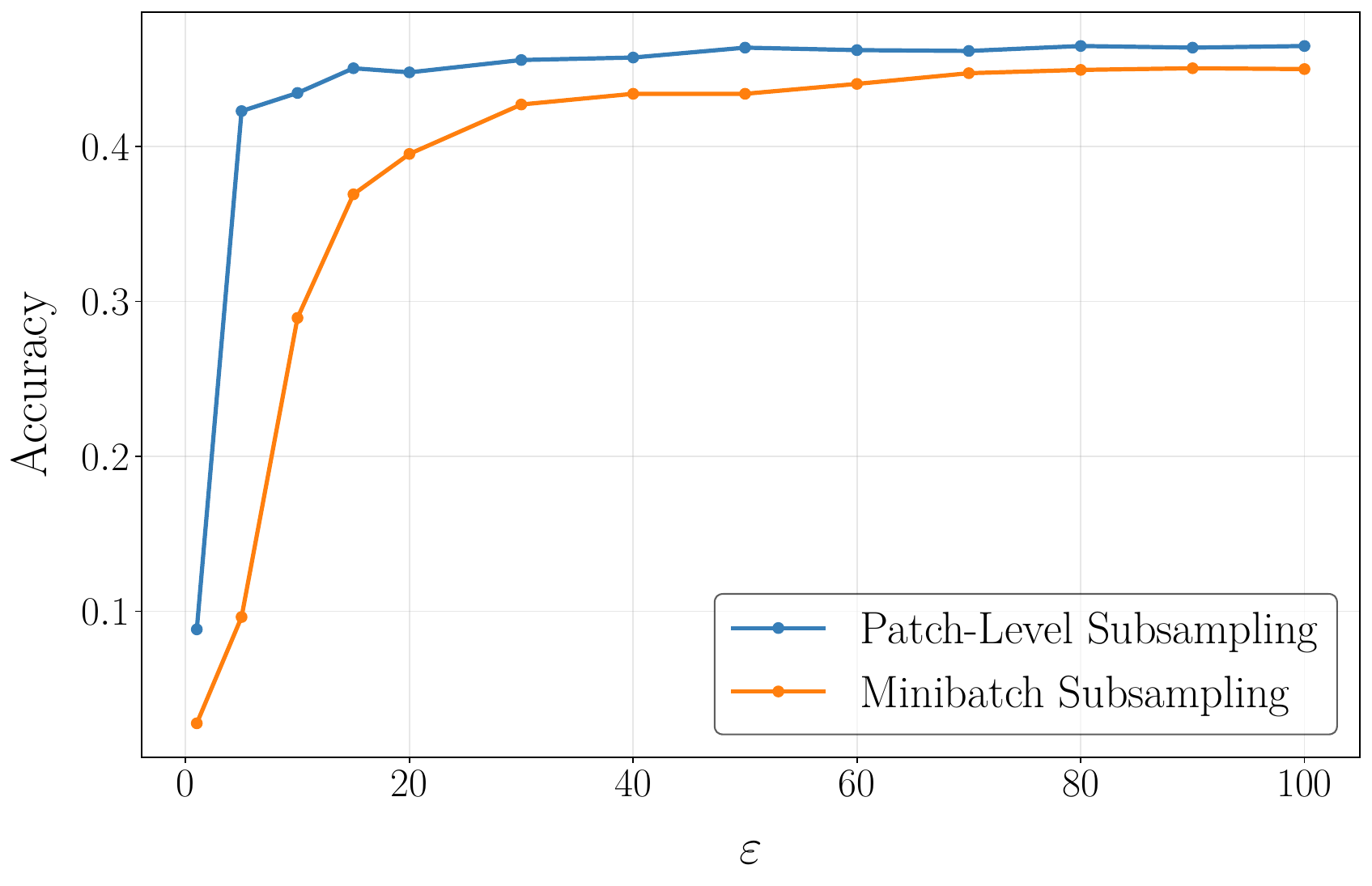}
        \caption{Seed 10}
    \end{subfigure}
    \hfill
    \begin{subfigure}[h]{0.49\textwidth}
        \centering
        \includegraphics[width=\textwidth]{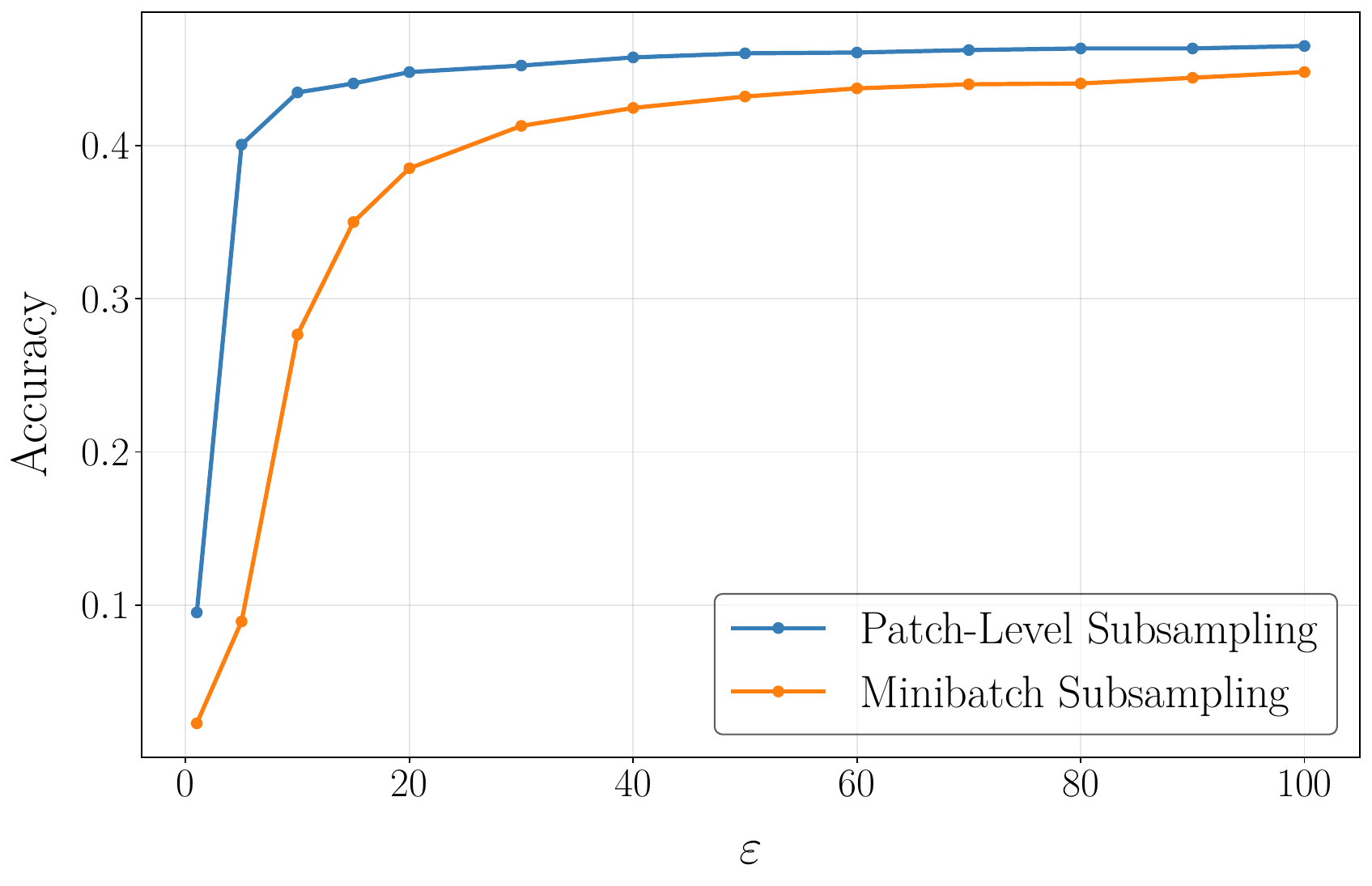}
        \caption{Seed 20}
    \end{subfigure}

    \begin{subfigure}[h]{0.49\textwidth}
        \centering
        \includegraphics[width=\textwidth]{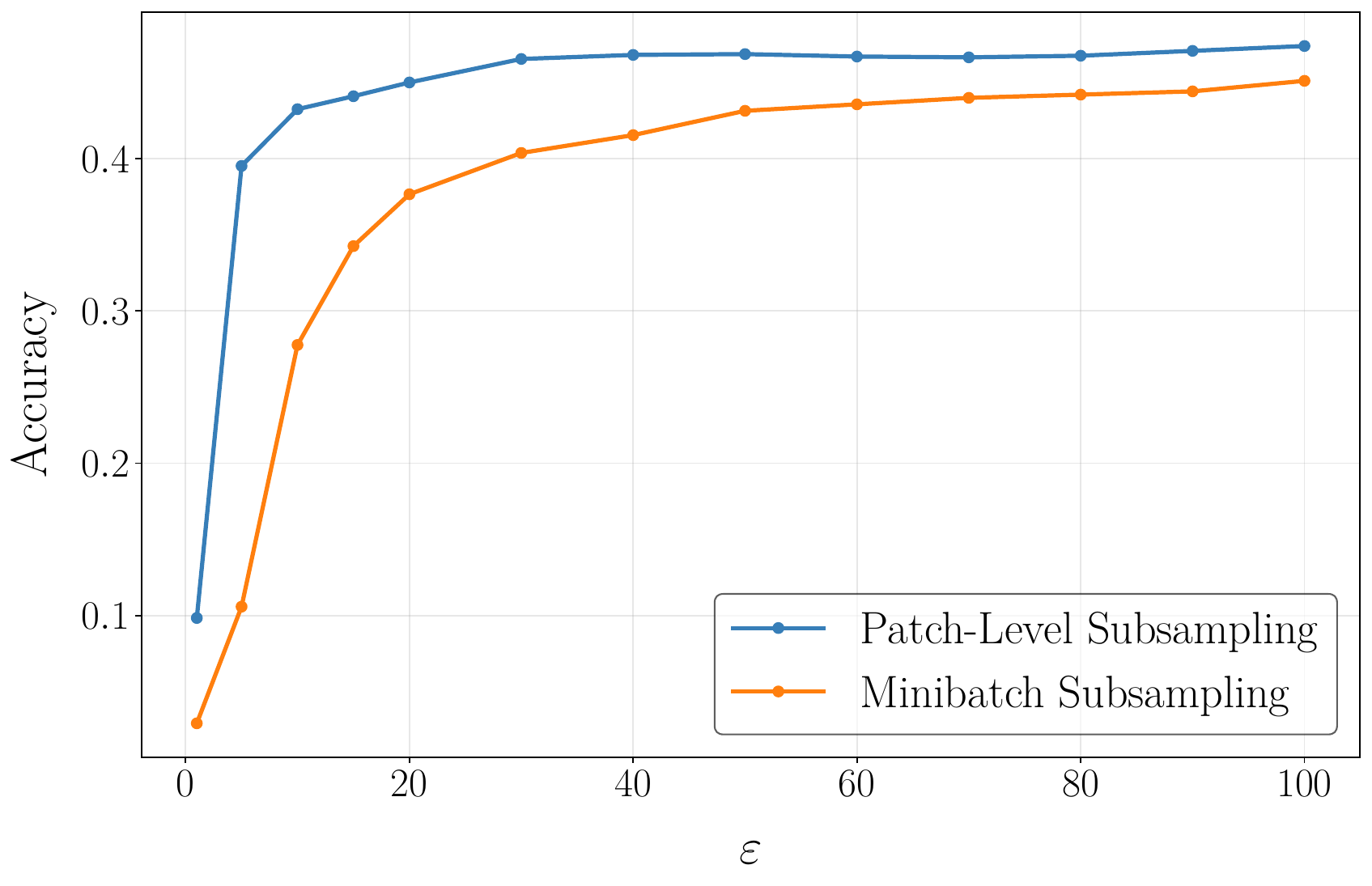}
        \caption{Seed 30}
    \end{subfigure}
    \hfill
    \begin{subfigure}[h]{0.49\textwidth}
        \centering
        \includegraphics[width=\textwidth]{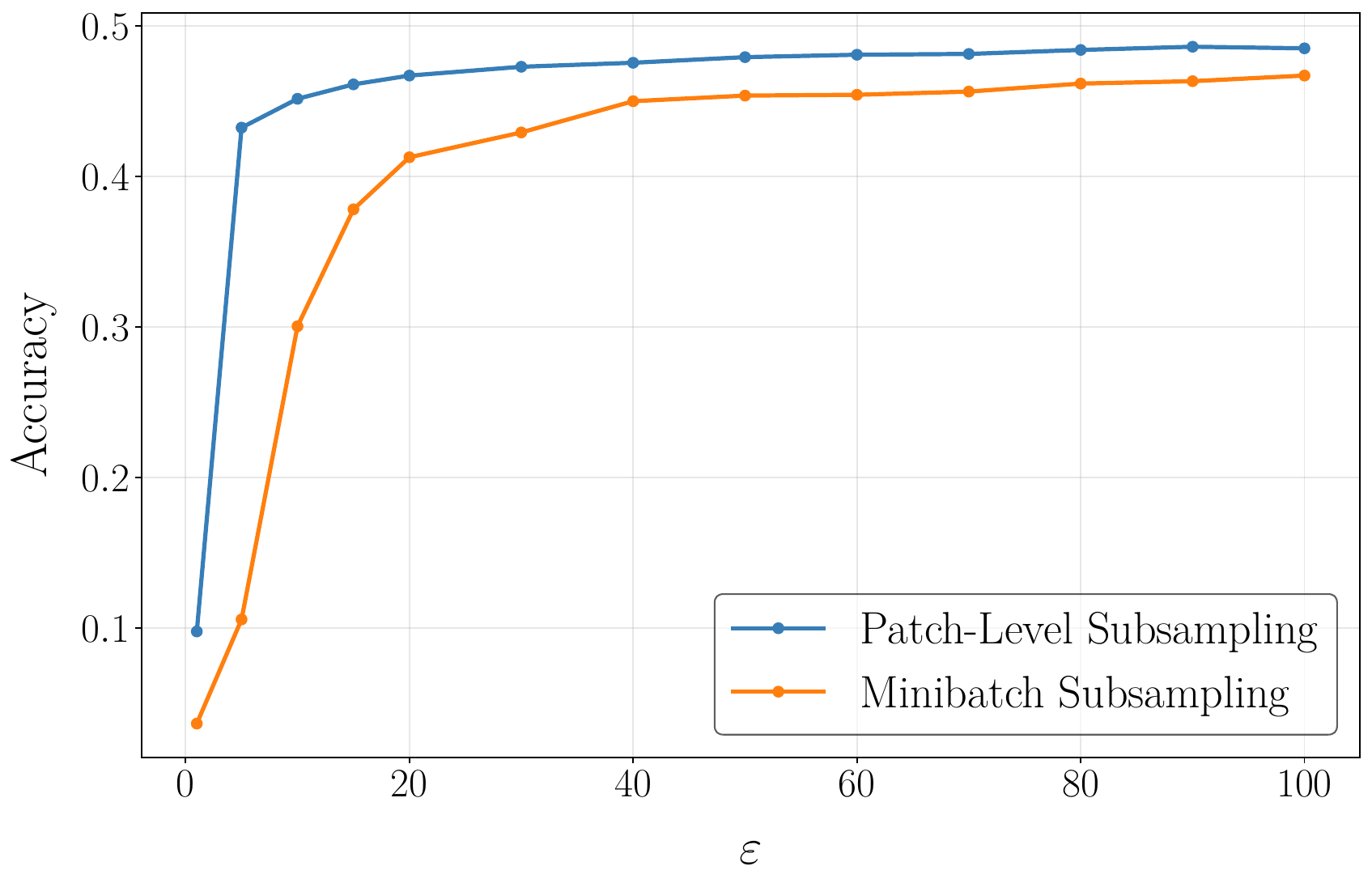}
        \caption{Seed 516}
    \end{subfigure}
    \caption{\update{Per-seed privacy-utility tradeoffs of VGG-11 on DTD. We assume a private patch size of $5 \times 5$, and set $\delta = 1/1879$.}}
    \label{fig:classification_per_seed_2}
\end{figure}

\section{Additional Experiments}

\subsection{Privacy Profile Experiments}
\label{app:priv_prof_exp}
In the main text, we reported privacy profiles for a representative setup. Here, we provide additional variations to demonstrate robustness. We use the hyperparameter values stated in App. \ref{app:priv_prof_config}.

\begin{figure}[h]
    \centering
    \begin{subfigure}[t]{0.49\textwidth}
        \centering
        \includegraphics[width=\textwidth]{imgs/privacy_profile_crop100_priv10_fsd1000_noise1.0.pdf}
        \caption{$\sigma=1$}
    \end{subfigure}
    \hfill
    \begin{subfigure}[t]{0.49\textwidth}
        \centering
        \includegraphics[width=\textwidth]{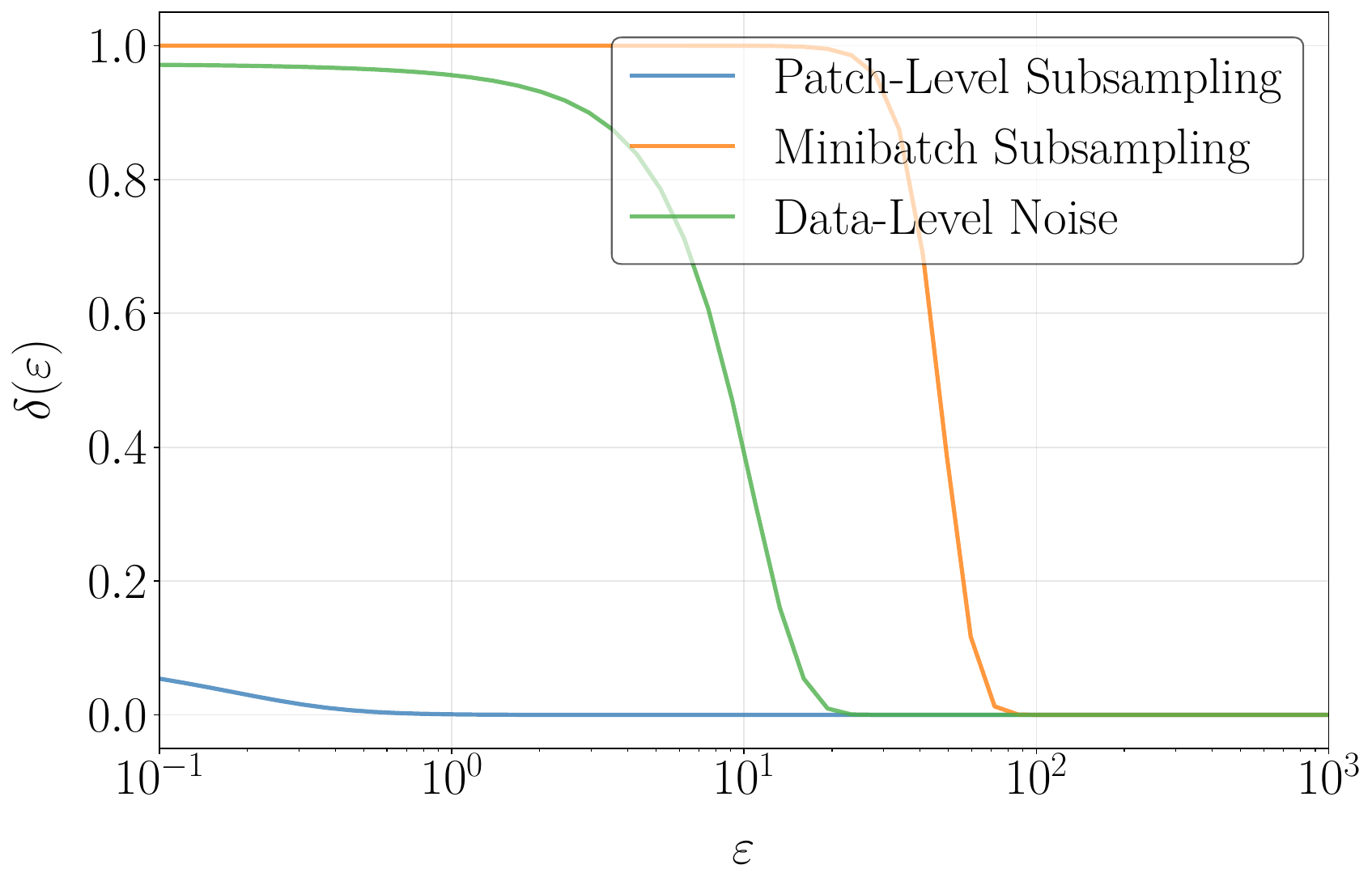}
        \caption{$\sigma=2$}
    \end{subfigure}
    \begin{subfigure}[t]{0.49\textwidth}
        \centering
        \includegraphics[width=\textwidth]{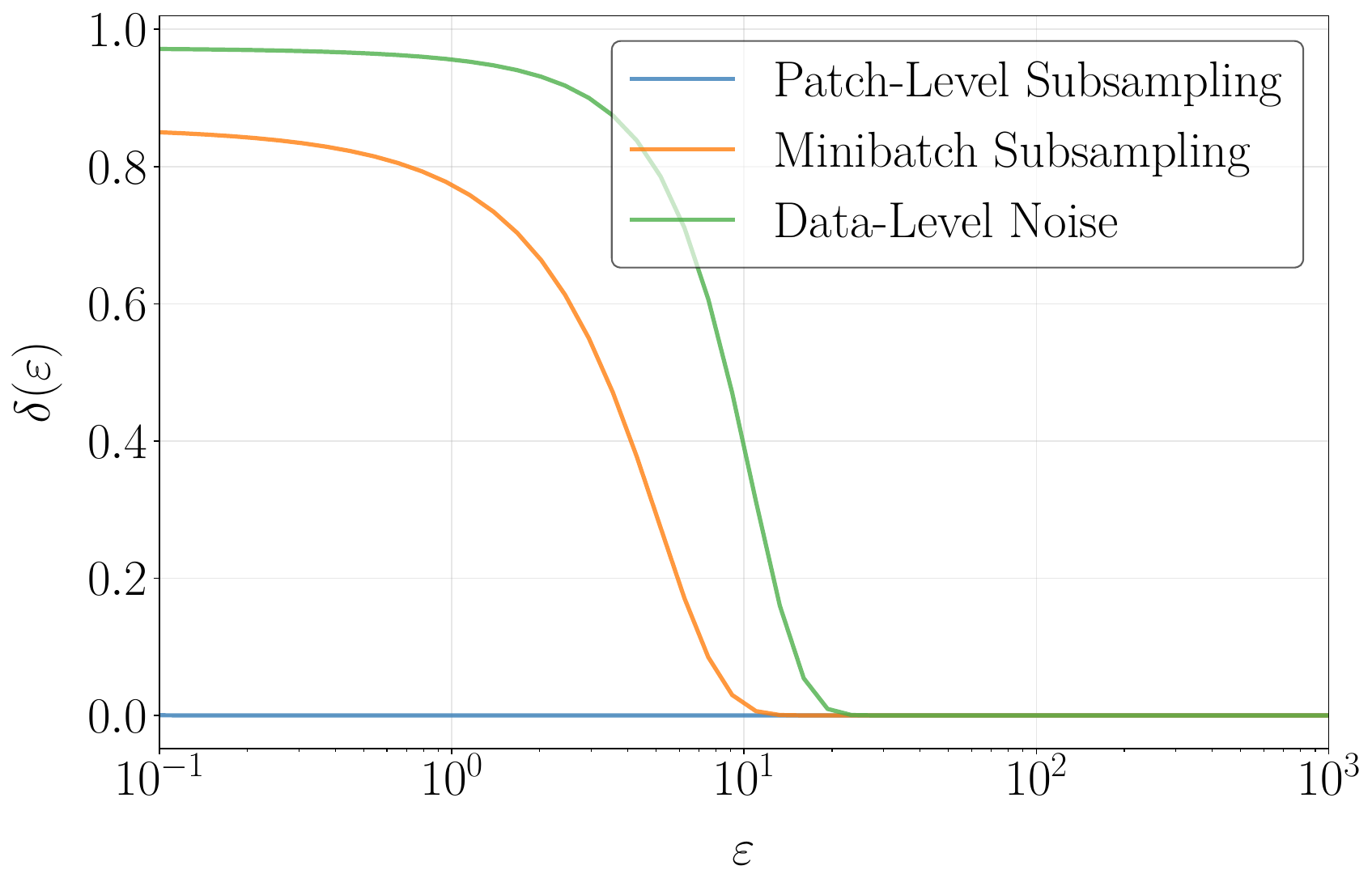}
        \caption{$\sigma=4$}
    \end{subfigure}
   \caption{Privacy profiles for different mechanisms: DP-SGD with patch-level subsampling (blue), standard minibatch subsampling (orange), and data-level noise addition (green). We vary the noise levels $\sigma$ for the subsampling mechanisms, while keeping the data-level baseline fixed at $\sigma_{\text{data}} = 1000$.}

    \label{fig:app_privacy_profiles_noise}
\end{figure}

\subsubsection{Varying Noise Levels}
We vary the Gaussian noise multiplier across $\sigma \in \{1, 2, 4\}$, fixing crop size $100 \times 100$, and private patch size $10 \times 10$. For the data-level noise baseline, we set $\sigma_{\text{data}}= 1000$.
As expected, we observe in Fig.~\ref{fig:app_privacy_profiles_noise} that larger noise multipliers reduce privacy leakage, shifting the profiles downward. Notably, while the data-level baseline requires extreme values such as $\sigma_\text{data}=1000$ to achieve modest privacy protection, comparably small increases in the multiplier for patch-level subsampling yield meaningful improvements.

\subsubsection{Varying Private Patch Size}
We next analyze the effect of enlarging the private region itself. 
Fixing crop size to $100 \times 100$ and Gaussian noise multiplier $\sigma=1$, we vary the private patch side length across $\{10, 20, 40\}$, while keeping the data-level baseline constant at $\sigma_{\text{data}}=1000$. 
As shown in Fig.~\ref{fig:app_privacy_profiles_patch}, larger private patches gradually bring patch-level subsampling closer to the behavior of standard minibatch sampling, since the intersection probability between the crop and the private region increases. This reduces the effective amplification and slowly narrows the gap between the two methods. Additionally, the negative effect of enlarging private patches is even more pronounced for the data-level baseline, where privacy leakage grows substantially despite already extreme noise levels.

\begin{figure}[h]
    \centering
    \begin{subfigure}[t]{0.49\textwidth}
        \centering
        \includegraphics[width=\textwidth]{imgs/privacy_profile_crop100_priv10_fsd1000_noise1.0.pdf}
        \caption{Private patch $10 \times 10$}
    \end{subfigure}
    \hfill
    \begin{subfigure}[t]{0.49\textwidth}
        \centering
        \includegraphics[width=\textwidth]{imgs/privacy_profile_crop100_priv20_fsd1000_noise1.0.pdf}
        \caption{Private patch $20 \times 20$}
    \end{subfigure}
    \begin{subfigure}[t]{0.49\textwidth}
        \centering
        \includegraphics[width=\textwidth]{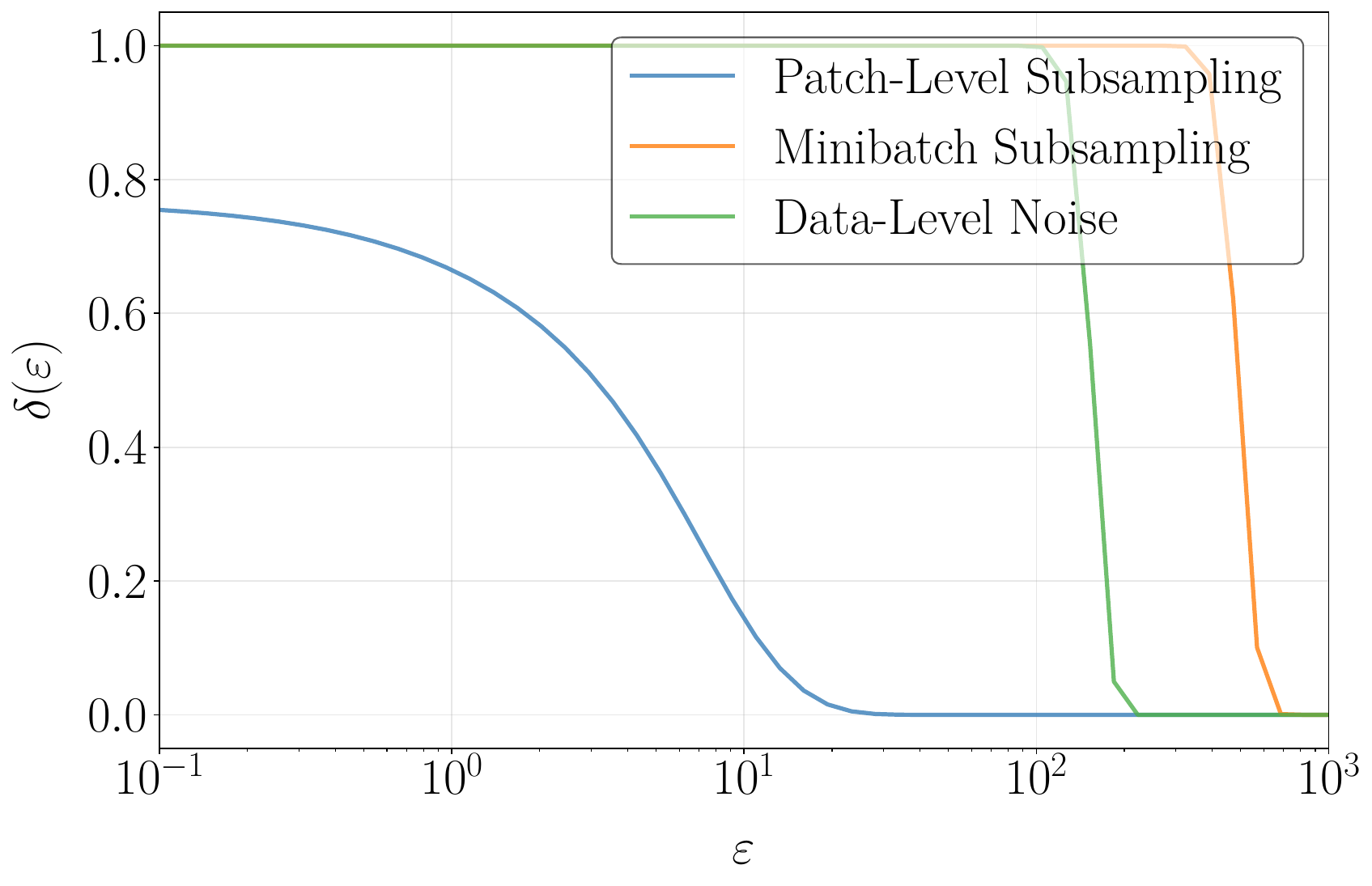}
        \caption{Private patch $40 \times 40$}
    \end{subfigure}
    \caption{Privacy profiles for different mechanisms under varying private patch sizes: DP-SGD with patch-level subsampling (blue) and standard minibatch subsampling (orange) with identical $\sigma=1$, and data-level noise addition (green) with $\sigma_\texttt{data} = 1000$.}
    \label{fig:app_privacy_profiles_patch}
\end{figure}

\subsubsection{Varying Crop Sizes}
We now examine how the choice of crop size influences the privacy profiles. 
Fixing private patch size to $20 \times 20$ and Gaussian noise multiplier $\sigma=2$, we vary the crop side length across $\{100, 200, 400\}$ pixels. 
For the data-level baseline, we again fix $\sigma_{\text{data}}=1000$. 
As shown in Fig.~\ref{fig:app_privacy_profiles_crop}, larger crop sizes weaken the benefits of patch-level subsampling, and its privacy profiles shift upward toward the minibatch sampling privacy profile as crop size grows. 
The effect and its reasoning are very similar to those of private patch size, but it has zero effect on both of the baseline privacy profiles.

\begin{figure}[h]
    \centering
    \begin{subfigure}[t]{0.49\textwidth}
        \centering
        \includegraphics[width=\textwidth]{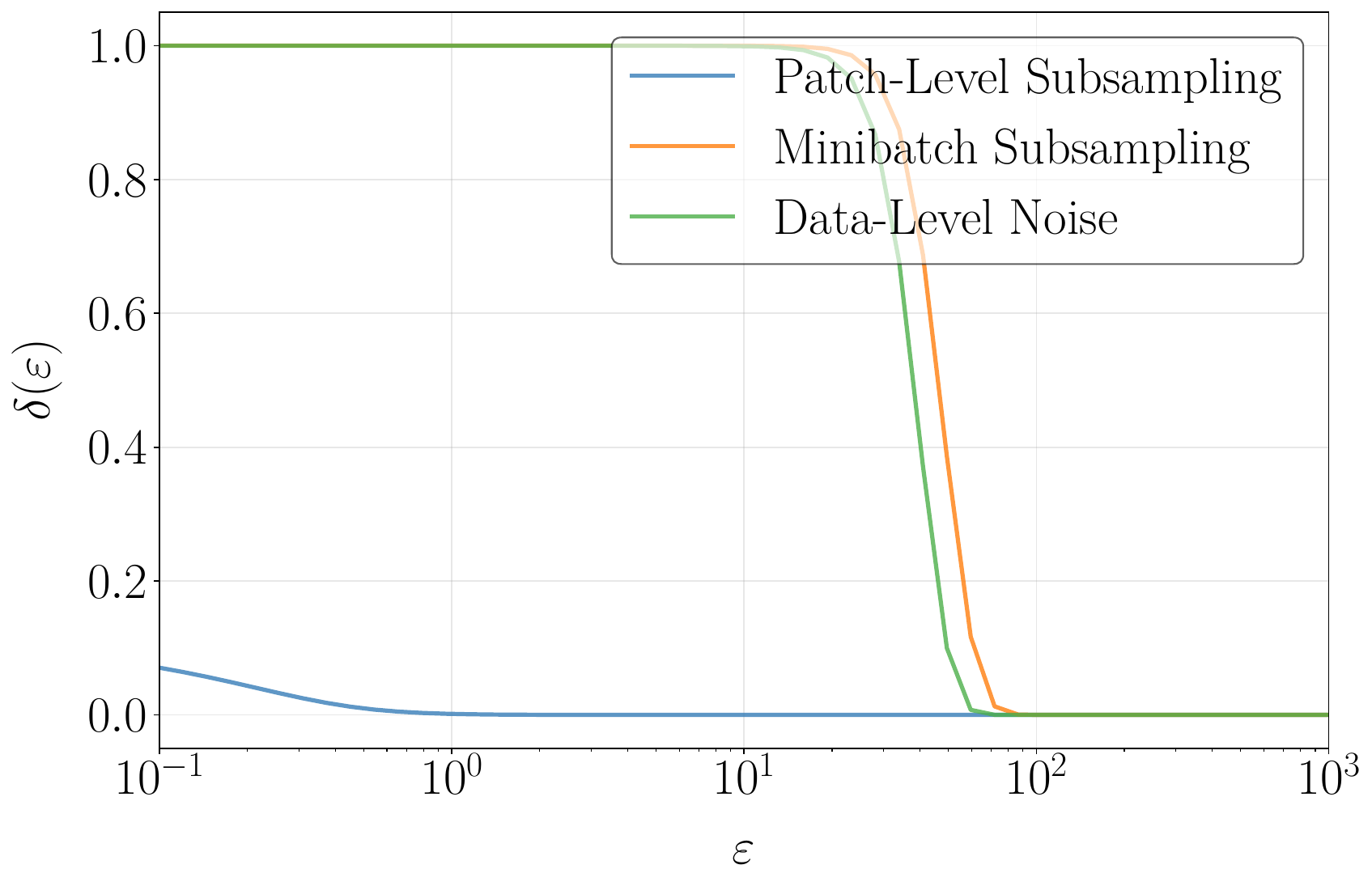}
        \caption{Crop $100 \times 100$}
    \end{subfigure}
    \hfill
    \begin{subfigure}[t]{0.49\textwidth}
        \centering
        \includegraphics[width=\textwidth]{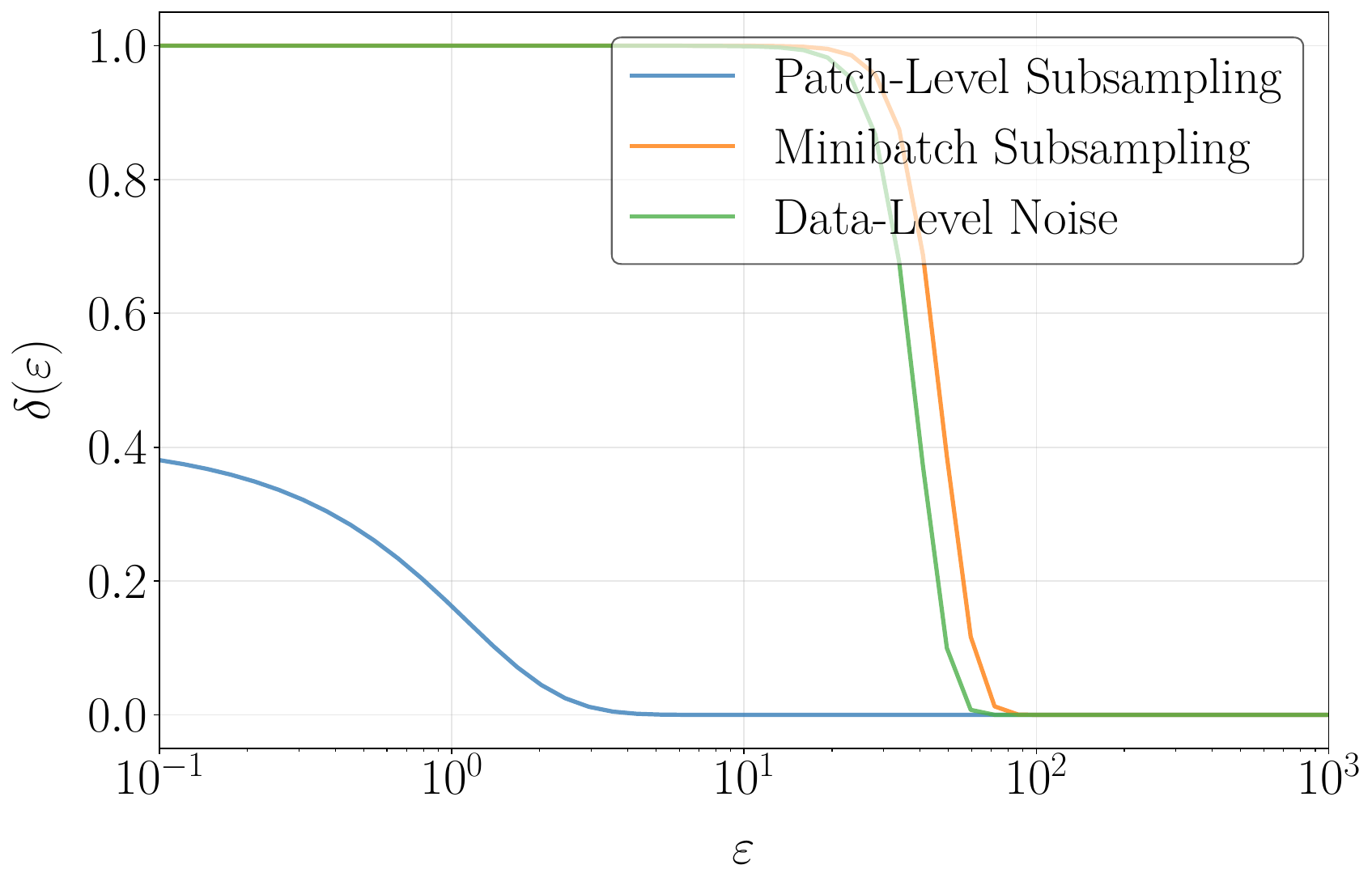}
        \caption{Crop $200 \times 200$}
    \end{subfigure}
    \begin{subfigure}[t]{0.49\textwidth}
        \centering
        \includegraphics[width=\textwidth]{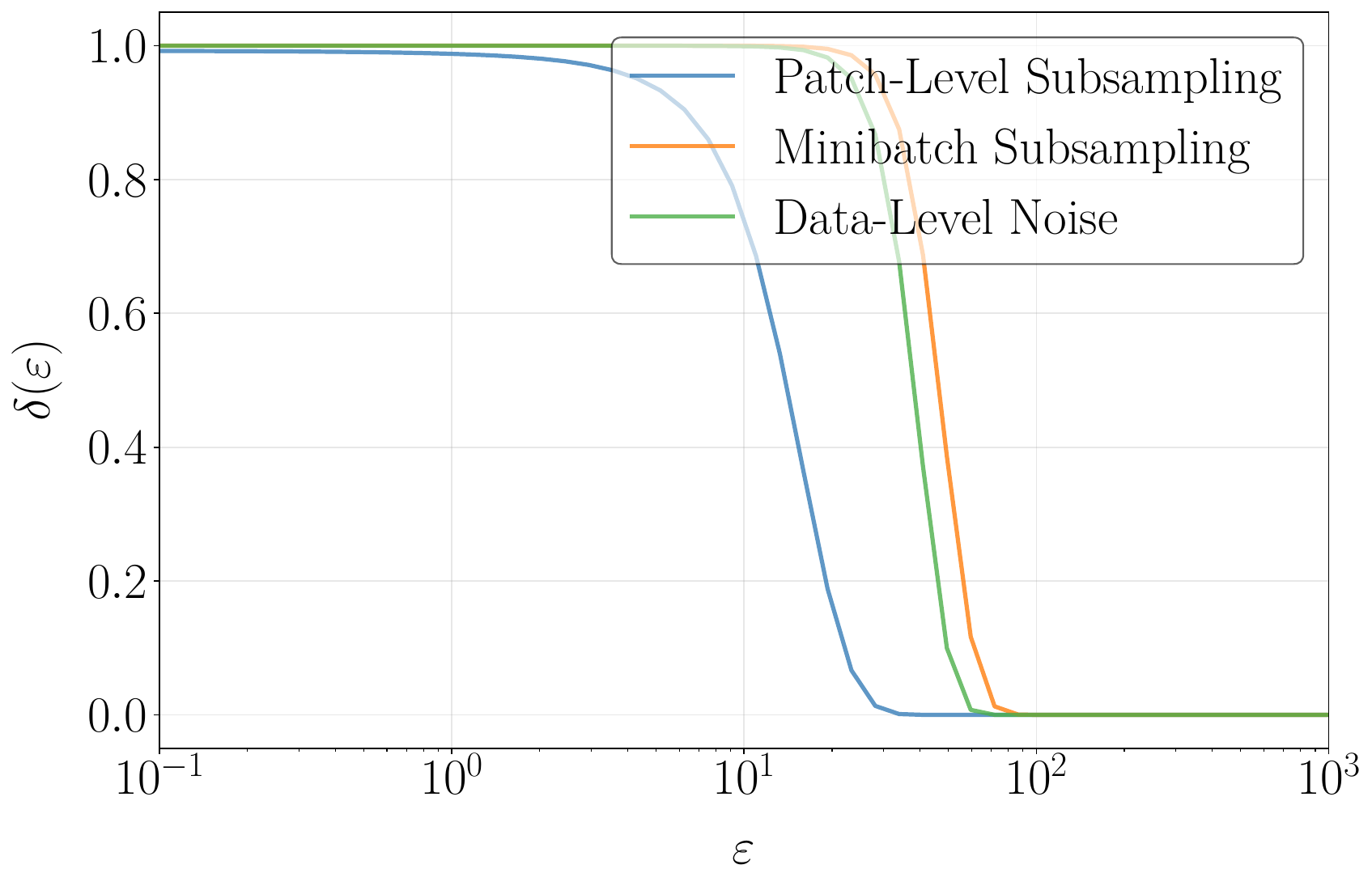}
        \caption{Crop $400 \times 400$}
    \end{subfigure}
    \caption{Privacy profiles for different mechanisms under varying crop sizes: DP-SGD with patch-level subsampling (blue) and standard minibatch subsampling (orange) with identical $\sigma = 2$, and data-level noise addition (green) with $\sigma_\texttt{data}=1000$.}
    \label{fig:app_privacy_profiles_crop}
\end{figure}

\subsubsection{Varying Image Resolution}
We now analyze the effect of changing the overall image resolution while keeping other parameters fixed. 
We compare experiments with image sizes of $1000 \times 1000$ and $2000 \times 2000$, fixing crop size to $200 \times 200$, private patch size to $20 \times 20$, and Gaussian noise multiplier $\sigma=2$. 
The data-level baseline is again set to $\sigma_{\text{data}}=1000$. 
As shown in Fig.~\ref{fig:app_privacy_profiles_resolution}, increasing the image resolution strengthens the relative amplification of patch-level subsampling. 
The effect and reasoning behind it are the same as in crop size, just for the variation in the opposite direction. An increase in the image size affects the privacy profile of patch-level subsampling positively, whereas an increase in the crop size affects it negatively.

\begin{figure}[h]
    \centering
    \begin{subfigure}[t]{0.49\textwidth}
        \centering
        \includegraphics[width=\textwidth]{imgs/privacy_profile_crop200_priv20_fsd1000_noise2.0.pdf}
        \caption{Image $1000 \times 1000$}
    \end{subfigure}
    \hfill
    \begin{subfigure}[t]{0.49\textwidth}
        \centering
        \includegraphics[width=\textwidth]{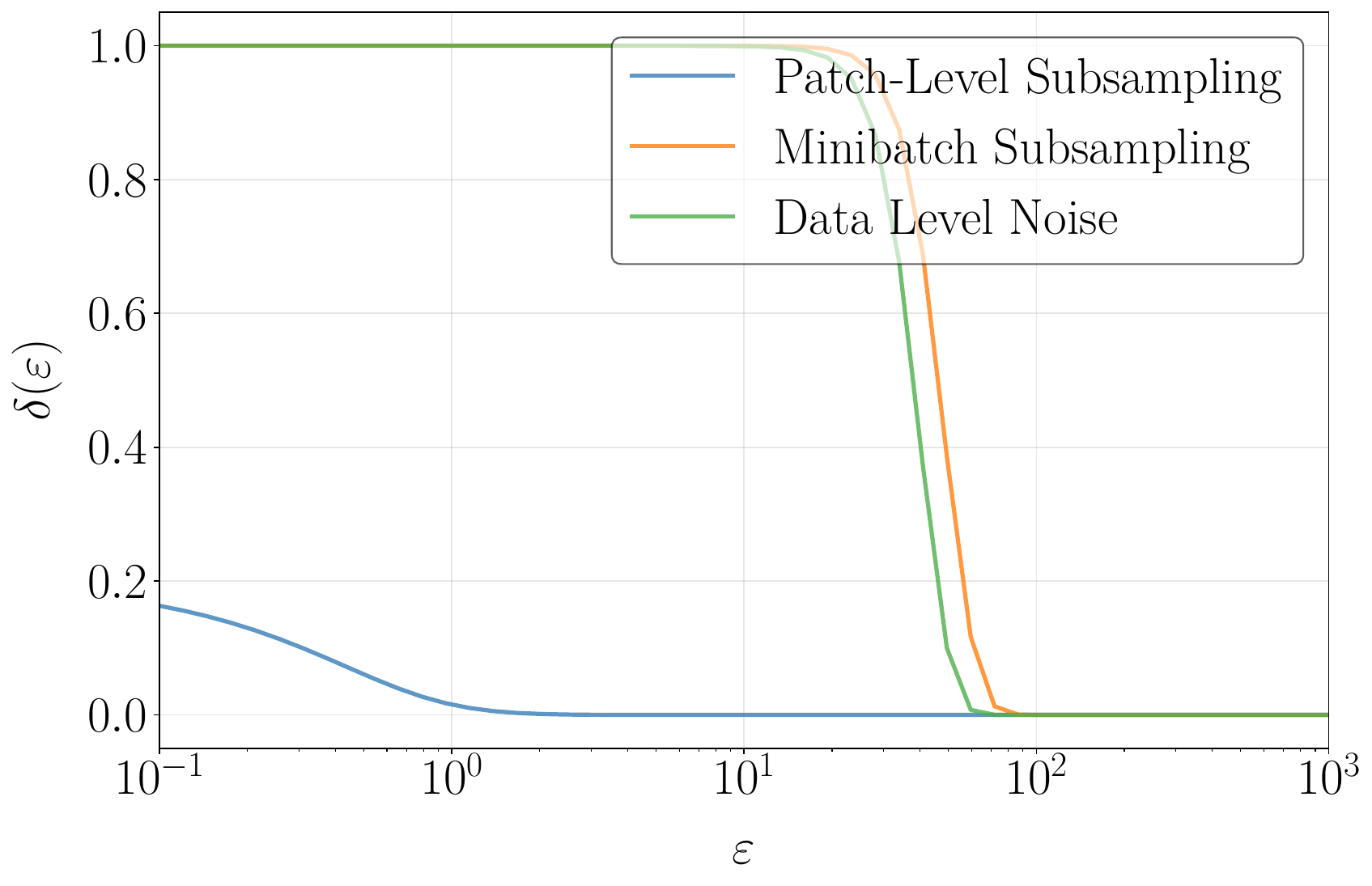}
        \caption{Image $2000 \times 2000$}
    \end{subfigure}
    \caption{Privacy profiles for different mechanisms under varying image resolutions: DP-SGD with patch-level subsampling (blue), standard minibatch subsampling (orange), and data-level noise addition (green). Crop size $200 \times 200$, private patch $20 \times 20$, $\sigma=2$, $\sigma_{\text{data}}=1000$.}
    \label{fig:app_privacy_profiles_resolution}
\end{figure}

\subsubsection{Varying Data-Level Noise}
We finally turn to the data-level noise addition baseline and vary the injected noise standard deviation directly. 
Fixing crop size to $200 \times 200$, private patch size to $20 \times 20$, Gaussian noise multiplier $\sigma=2$, and image resolution $1000 \times 1000$, we compare $\sigma_{\text{data}} \in \{1000, 2000\}$. 
As shown in Fig.~\ref{fig:app_privacy_profiles_sigma_data}, even doubling the standard deviation of pixel-level Gaussian noise fails to match the privacy guarantees achieved by patch-level subsampling in this setup.

\begin{figure}[h]
    \centering
    \begin{subfigure}[t]{0.49\textwidth}
        \centering
        \includegraphics[width=\textwidth]{imgs/privacy_profile_crop200_priv20_fsd1000_noise2.0.pdf}
        \caption{$\sigma_{\text{data}}=1000$}
    \end{subfigure}
    \hfill
    \begin{subfigure}[t]{0.49\textwidth}
        \centering
        \includegraphics[width=\textwidth]{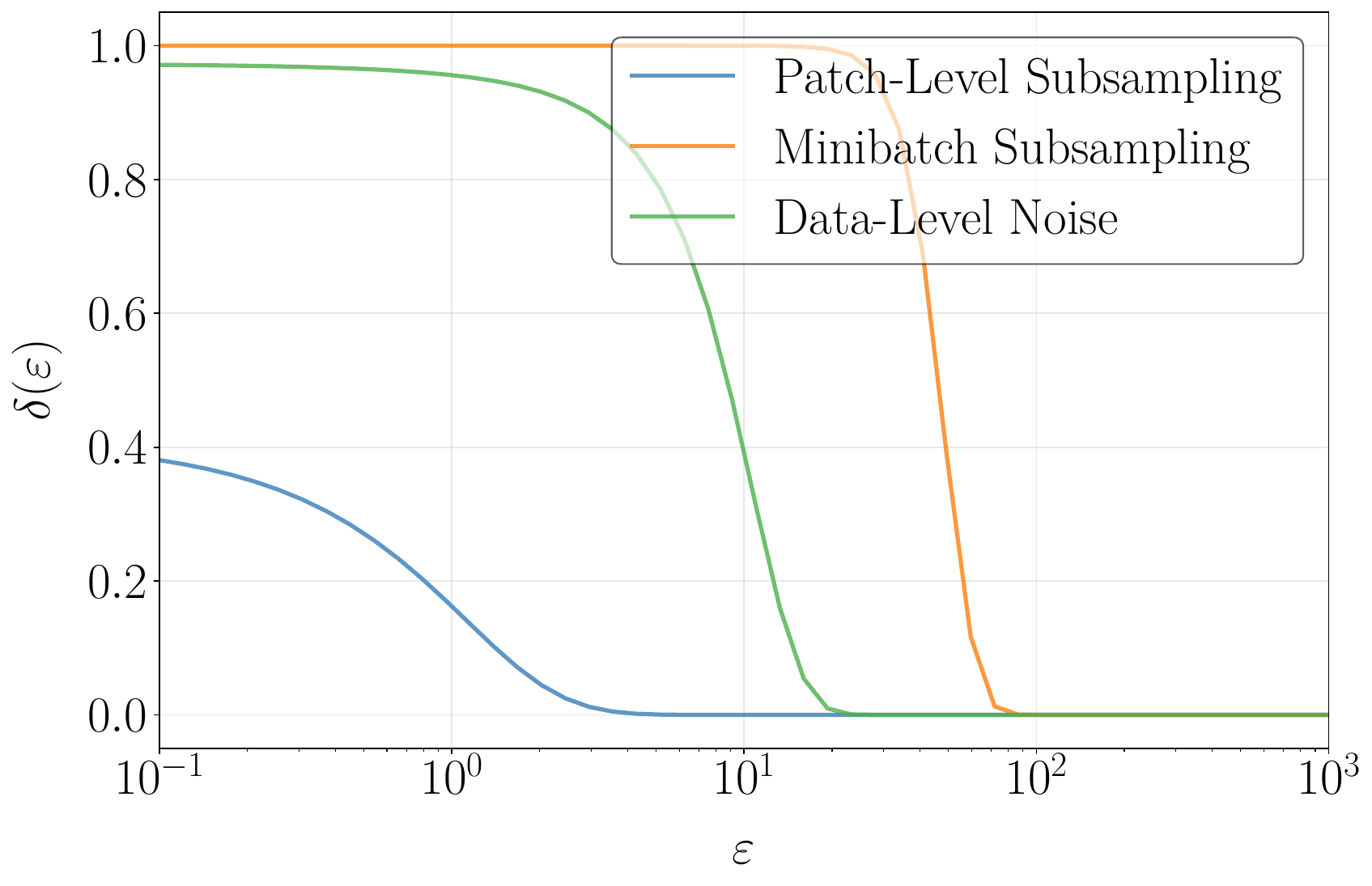}
        \caption{$\sigma_{\text{data}}=2000$}
    \end{subfigure}
    
    \caption{Privacy profiles for different mechanisms under varying data-level noise standard deviations: DP-SGD with patch-level subsampling (blue), standard minibatch subsampling (orange), and data-level noise addition (green). Crop size $200 \times 200$, private patch $20 \times 20$, $\sigma=2$.}
    \label{fig:app_privacy_profiles_sigma_data}
\end{figure}

\subsubsection{\update{Varying Private Patch Geometry}}
\label{app:varying_geometry}
\update{In the main text and experiments above, we assume square private patches. Here, we demonstrate that our framework generalizes to arbitrary shapes, as discussed in Section~\ref{sec:varying_shapes}. While closed-form expressions exist for rectangles (and squares) (Lemma~\ref{lem:gamma-prime}), inclusion probabilities for other shapes can be computed by iterating over all crop positions.
We compare privacy profiles for three geometries: (1) rectangular patches with varying aspect ratios, (2) circular regions (e.g., approximating faces), and (3) irregular blobs (cluster of 2 circles). Results are shown in Figure~\ref{fig:app_privacy_profiles_shapes}. 
These results confirm that our theoretical framework extends beyond square patches, and practitioners can use domain-appropriate shapes for tighter analysis.}

\begin{figure}[h]
    \centering
    \begin{subfigure}[t]{0.49\textwidth}
        \centering
        \includegraphics[width=\textwidth]{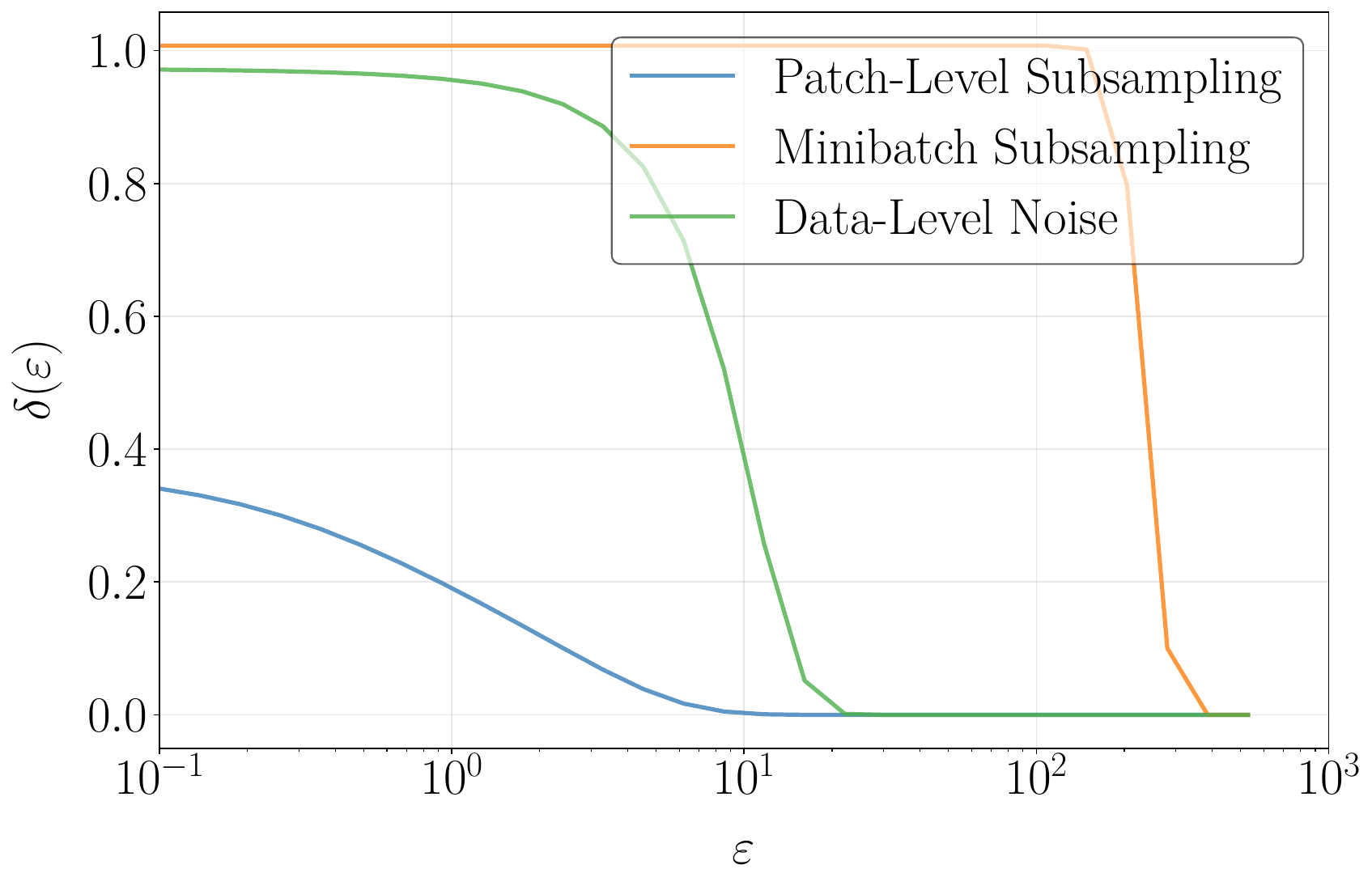}
        \caption{Square-shaped private patch of size $10 \times 10$}
    \end{subfigure}
    \hfill
    \begin{subfigure}[t]{0.49\textwidth}
        \centering
        \includegraphics[width=\textwidth]{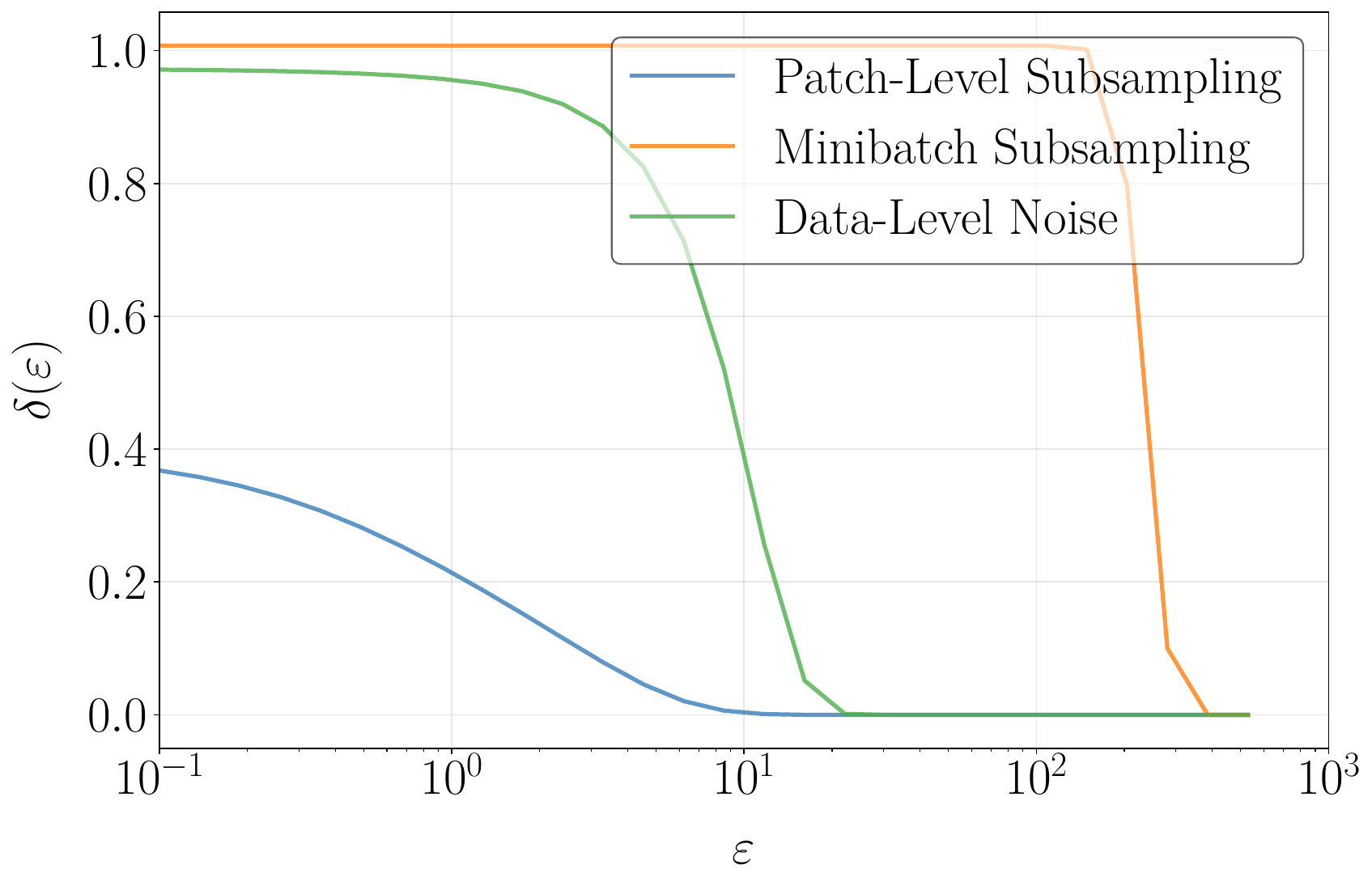}
        \caption{Rectangle-shaped private patch of size $20 \times 10$}
    \end{subfigure}
    \begin{subfigure}[t]{0.49\textwidth}
        \centering
        \includegraphics[width=\textwidth]{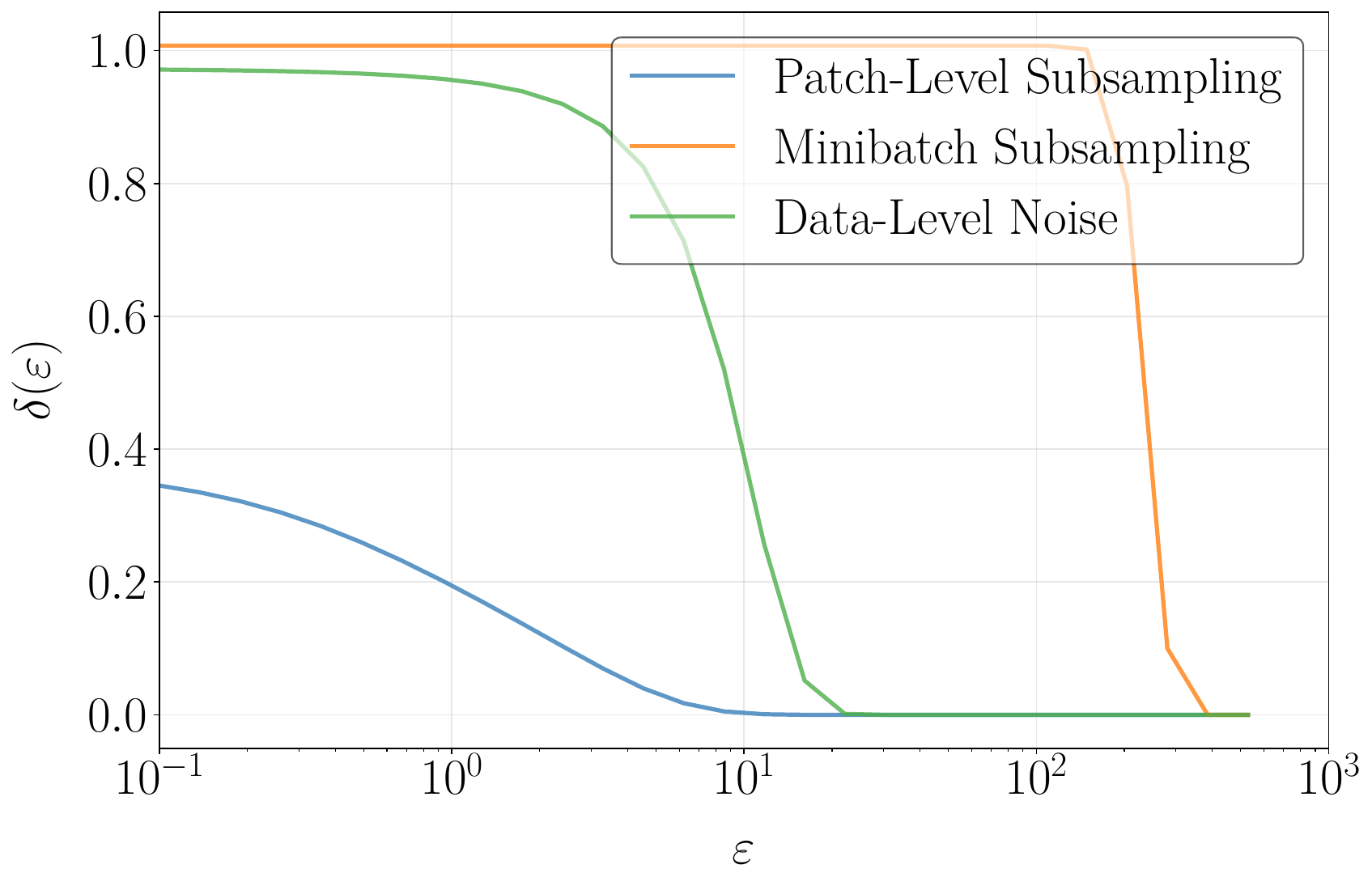}
        \caption{Circle-shaped private patch with radius $r=5$}
    \end{subfigure}
    \hfill
    \begin{subfigure}[t]{0.49\textwidth}
        \centering
        \includegraphics[width=\textwidth]{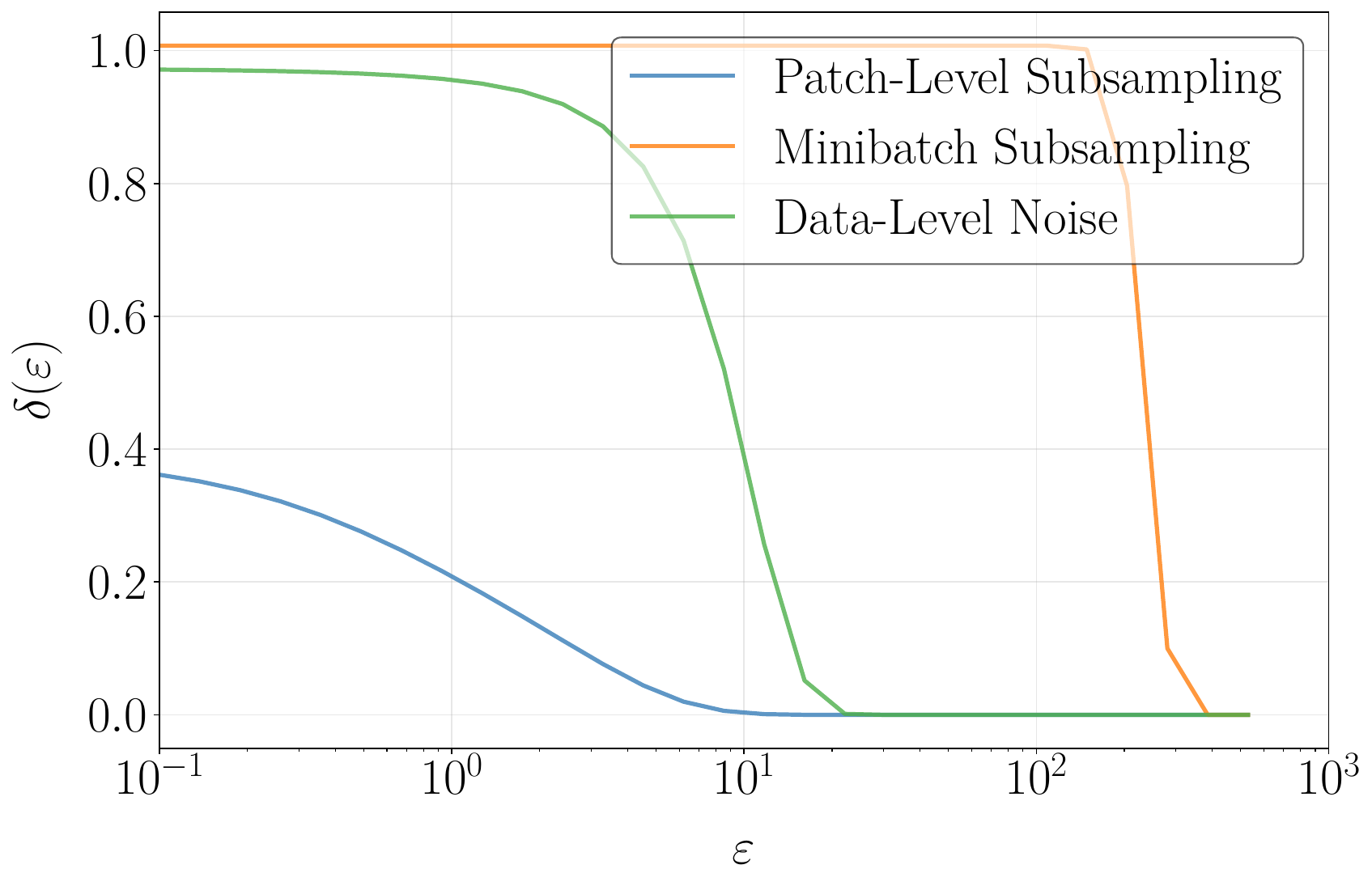}
        \caption{Private patch with a shape of cluster of 2 distinct circles, each circle with radius $r=4$ and a gap in between of 2 pixels}
    \end{subfigure}
    
   \caption{Privacy profiles for different mechanisms: DP-SGD with patch-level subsampling (blue), standard minibatch subsampling (orange), and data-level noise addition (green). We vary the shapes of the private patches.}

    \label{fig:app_privacy_profiles_shapes}
\end{figure}

\subsection{Additional Experiments on Crop and Patch Size}
\label{app:crop_priv_patch_exp}
In Section~\ref{exp_inf_crop_and_priv_patch}, we analyzed how the privacy parameter $\varepsilon$ varies with crop size and private patch size, highlighting the role of intersection probability in determining the strength of patch-level amplification. 
There, we reported results for a representative setting with $\sigma \in \{4.0, 4.5, 5.0\}$. 
Here, we expand these experiments by systematically varying the Gaussian noise multiplier and repeating the sweeps across two image resolutions. 
As before, \update{we fix $\texttt{epoch\_size}=10^{5}$, and $\delta=1/\texttt{epoch\_size}$} and perform two sweeps in each configuration: (1) varying crop sizes with private patch size fixed at $10 \times 10$, and (2) fixing crop size at $450 \times 450$ while varying private patch sizes. 
Together with the sensitivity analyses in Appendix~\ref{app:priv_prof_exp}, these results confirm that our findings are robust across a wide range of settings.

We vary the Gaussian noise multiplier between $1.0$ and $5.0$ in steps of $0.5$, and repeat both sweeps at two image resolutions: $1000 \times 1000$ (Figures \ref{1} - \ref{3}) and $1000 \times 2000$ (Figures \ref{4} - \ref{6}). 
As expected, larger noise multipliers uniformly lower the $\varepsilon$ curves, while the qualitative dependence on crop and patch size remains unchanged. 
For the crop-size sweeps, $\varepsilon$ rises rapidly with increasing crop size and saturates once intersection probability reaches $1$, at which point patch-level subsampling collapses to the minibatch baseline. 
For the patch-size sweeps, $\varepsilon$ grows approximately linearly with the side length of the private region, again approaching the minibatch baseline at saturation. 
Importantly, the rectangular image resolution illustrates that saturation is not guaranteed, even
with very large crops or patches. Depending on the image dimensions, the intersection probability may never reach 1, and patch-level subsampling
preserves an advantage throughout. This reinforces that dataset geometry itself can modulate the strength of amplification.

\begin{figure}[h]
    \centering
    \begin{subfigure}[h]{0.49\textwidth}
        \centering
        \includegraphics[width=\textwidth]{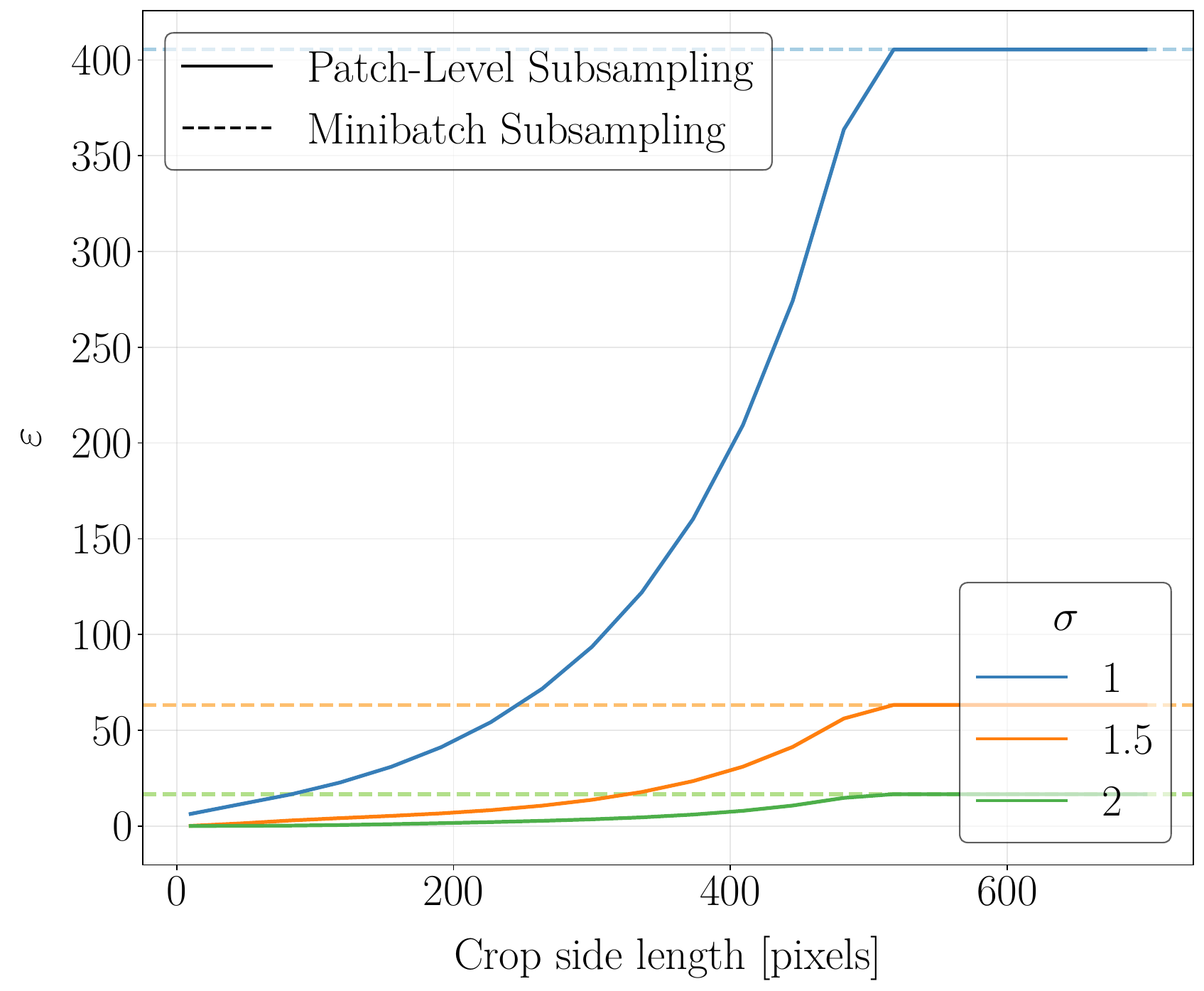}
        % \caption{}
    \end{subfigure}
    \hfill
    \begin{subfigure}[h]{0.49\textwidth}
        \centering
        \includegraphics[width=\textwidth]{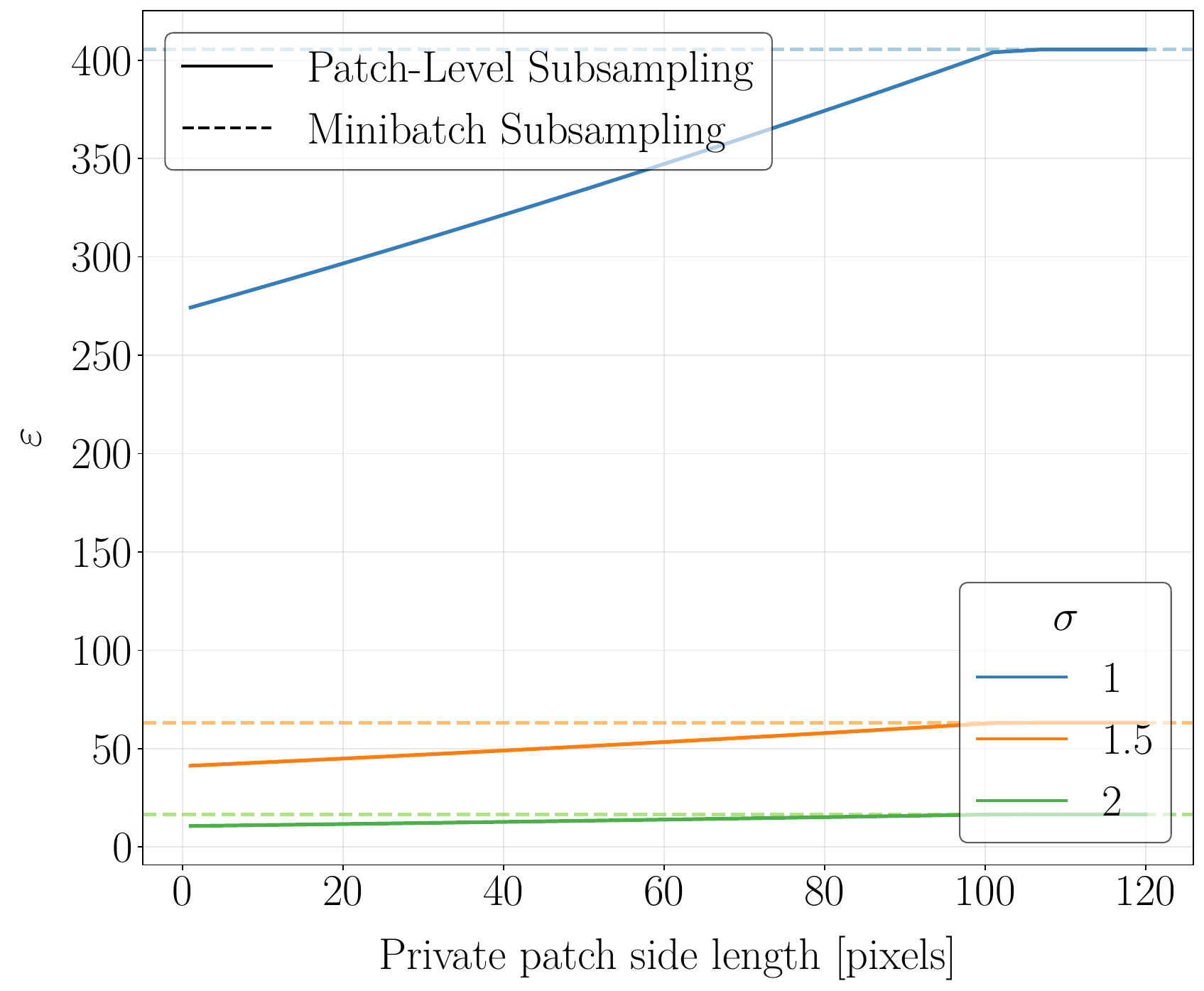}
        % \caption{}
    \end{subfigure}
    \caption{$\varepsilon$ as a function of crop size (left) and private patch size (right),  at $\delta = 10^{-5}$ and image resolution $1000 \times 1000$ for $\sigma \in \{1.0, 1.5, 2.0\}$.}
    \label{1}
\end{figure}

\begin{figure}[h]
    \centering
    \begin{subfigure}[h]{0.49\textwidth}
        \centering
        \includegraphics[width=\textwidth]{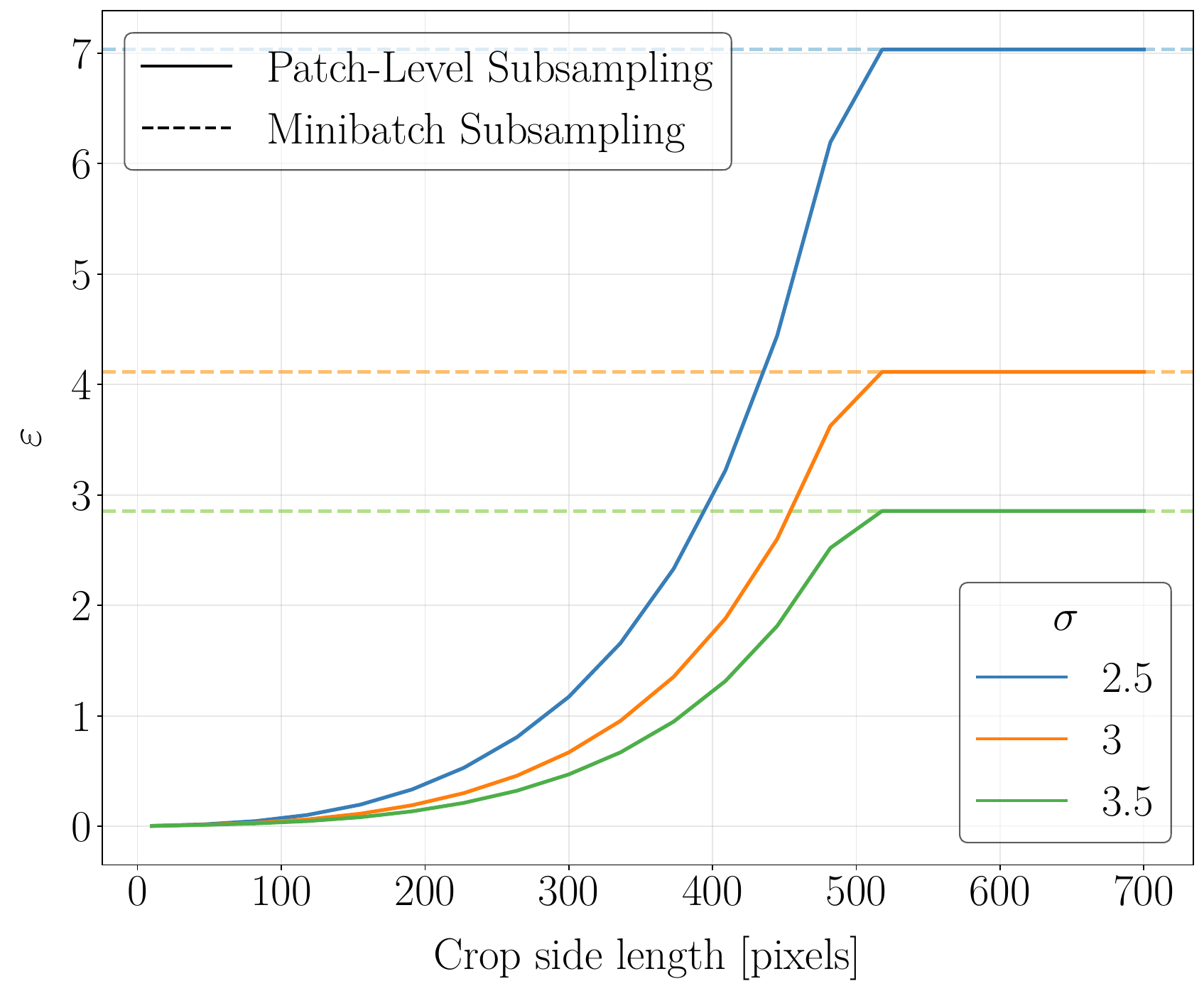}
        % \caption{}
    \end{subfigure}
    \hfill
    \begin{subfigure}[h]{0.49\textwidth}
        \centering
        \includegraphics[width=\textwidth]{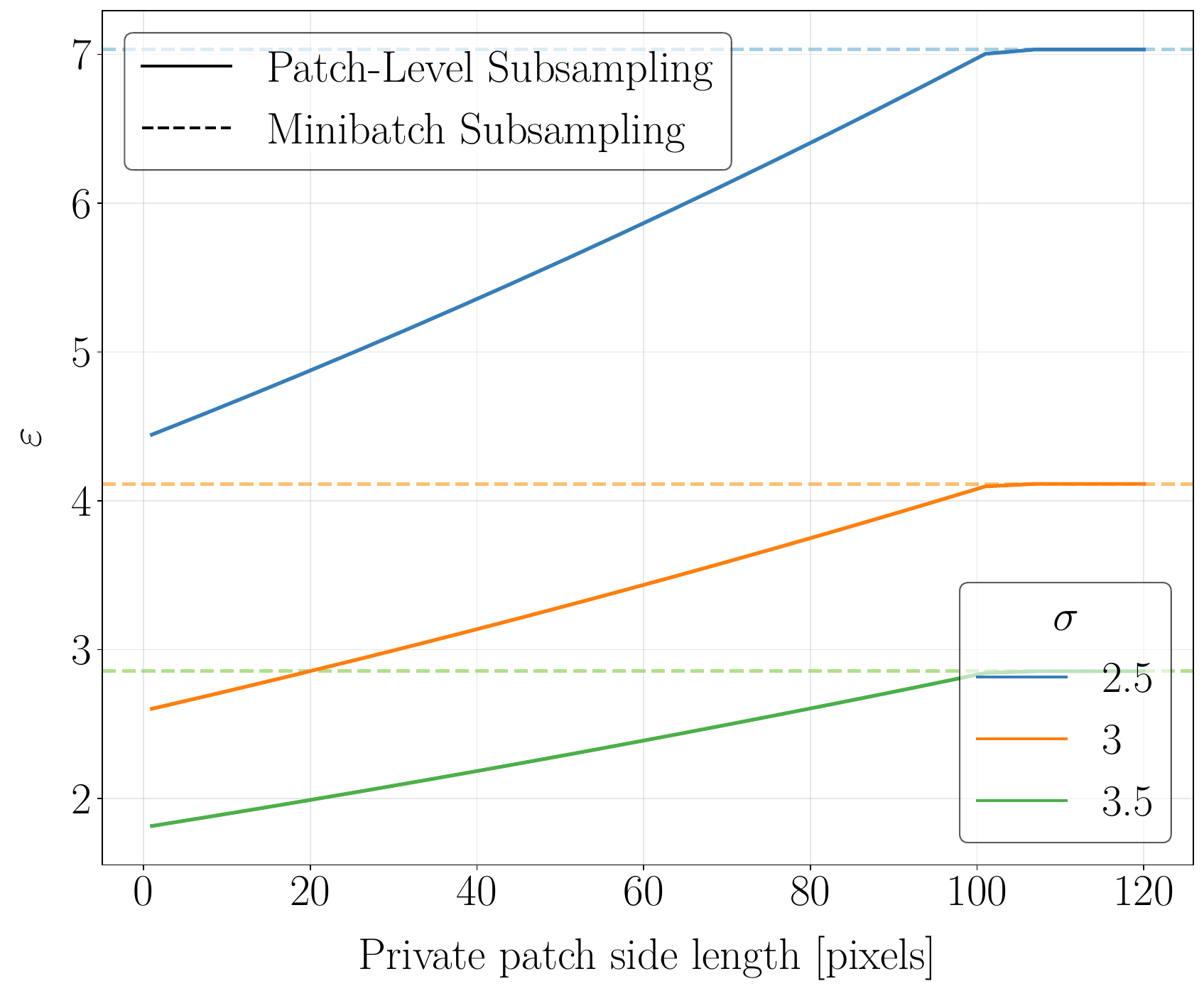}
        % \caption{}
    \end{subfigure}
    \caption{$\varepsilon$ as a function of crop size (left) and private patch size (right),  at $\delta = 10^{-5}$ and image resolution $1000 \times 1000$ for $\sigma \in \{2.5, 3.0, 3.5\}$.}    \label{2}
\end{figure}

\begin{figure}[h]
    \centering
    \begin{subfigure}[h]{0.49\textwidth}
        \centering
        \includegraphics[width=\textwidth]{imgs/experiment_eps_vs_crop_size_delta1e-05_noise4_imgw1000_one.pdf}
        % \caption{}
    \end{subfigure}
    \hfill
    \begin{subfigure}[h]{0.49\textwidth}
        \centering
        \includegraphics[width=\textwidth]{imgs/experiment_eps_vs_priv_patch_size_delta1e-05_noise4_imgw1000_two.pdf}
        % \caption{}
    \end{subfigure}
    \caption{$\varepsilon$ as a function of crop size (left) and private patch size (right),  at $\delta = 10^{-5}$ and image resolution $1000 \times 1000$ for $\sigma \in \{4.0, 4.5, 5.0\}$.}
    \label{3}
\end{figure}
%%%%%%%%%%
\begin{figure}[h]
    \centering
    \begin{subfigure}[h]{0.49\textwidth}
        \centering
        \includegraphics[width=\textwidth]{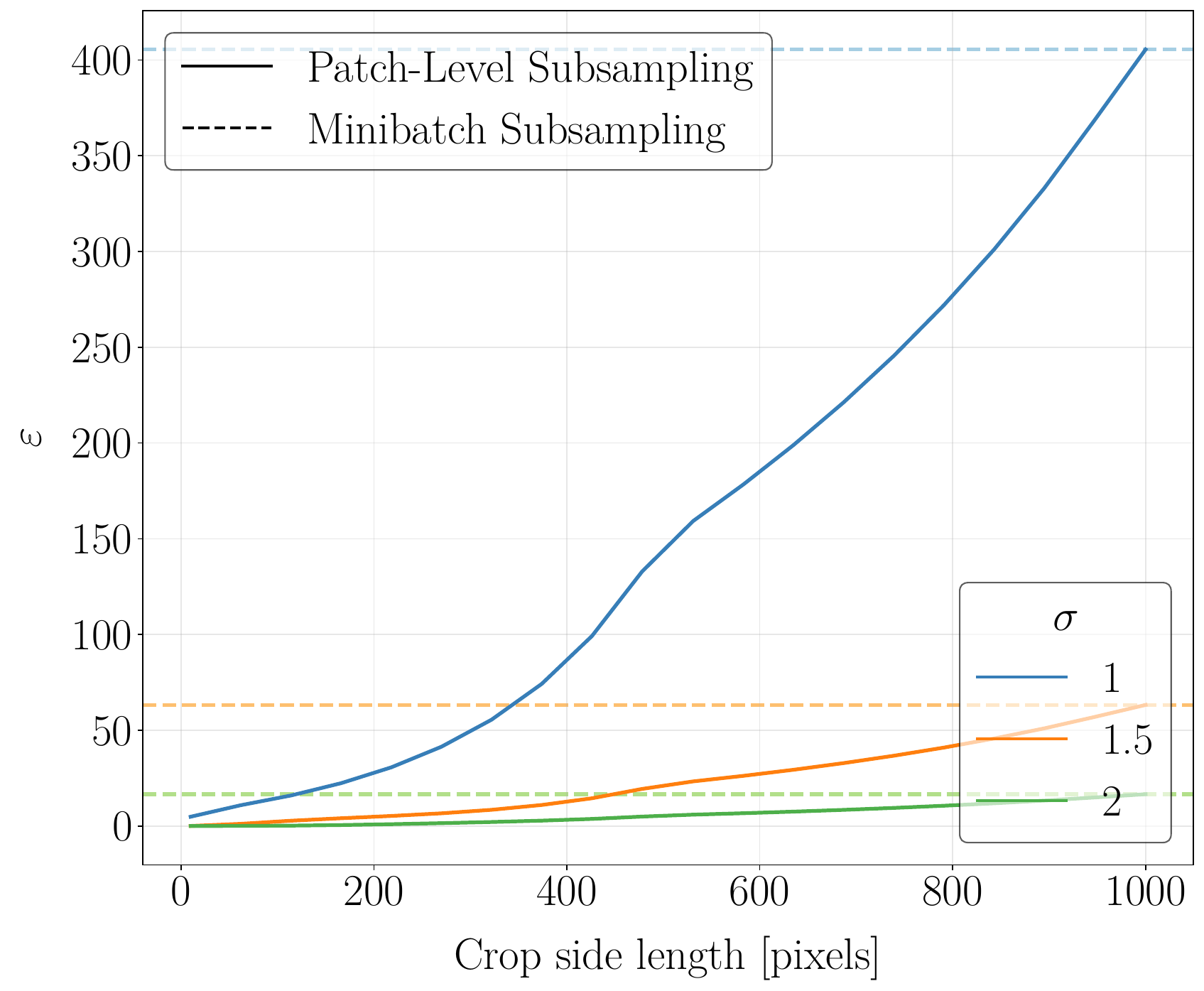}
        % \caption{}
    \end{subfigure}
    \hfill
    \begin{subfigure}[h]{0.49\textwidth}
        \centering
        \includegraphics[width=\textwidth]{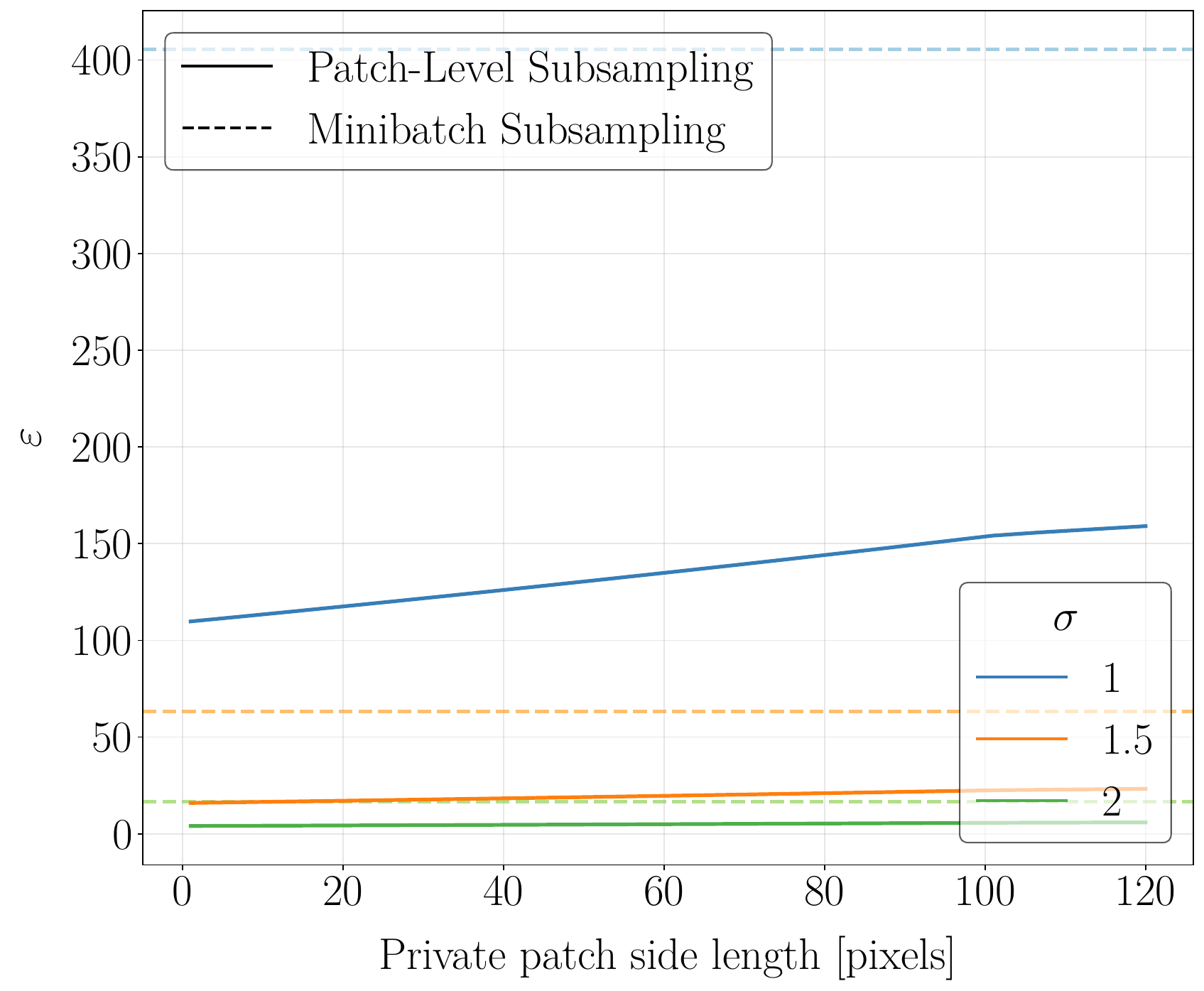}
        % \caption{}
    \end{subfigure}
    \caption{$\varepsilon$ as a function of crop size (left) and private patch size (right),  at $\delta = 10^{-5}$ and image resolution $1000 \times 2000$ for $\sigma \in \{1.0, 1.5, 2.0\}$.}
    \label{4}
\end{figure}

\begin{figure}[h]
    \centering
    \begin{subfigure}[h]{0.49\textwidth}
        \centering
        \includegraphics[width=\textwidth]{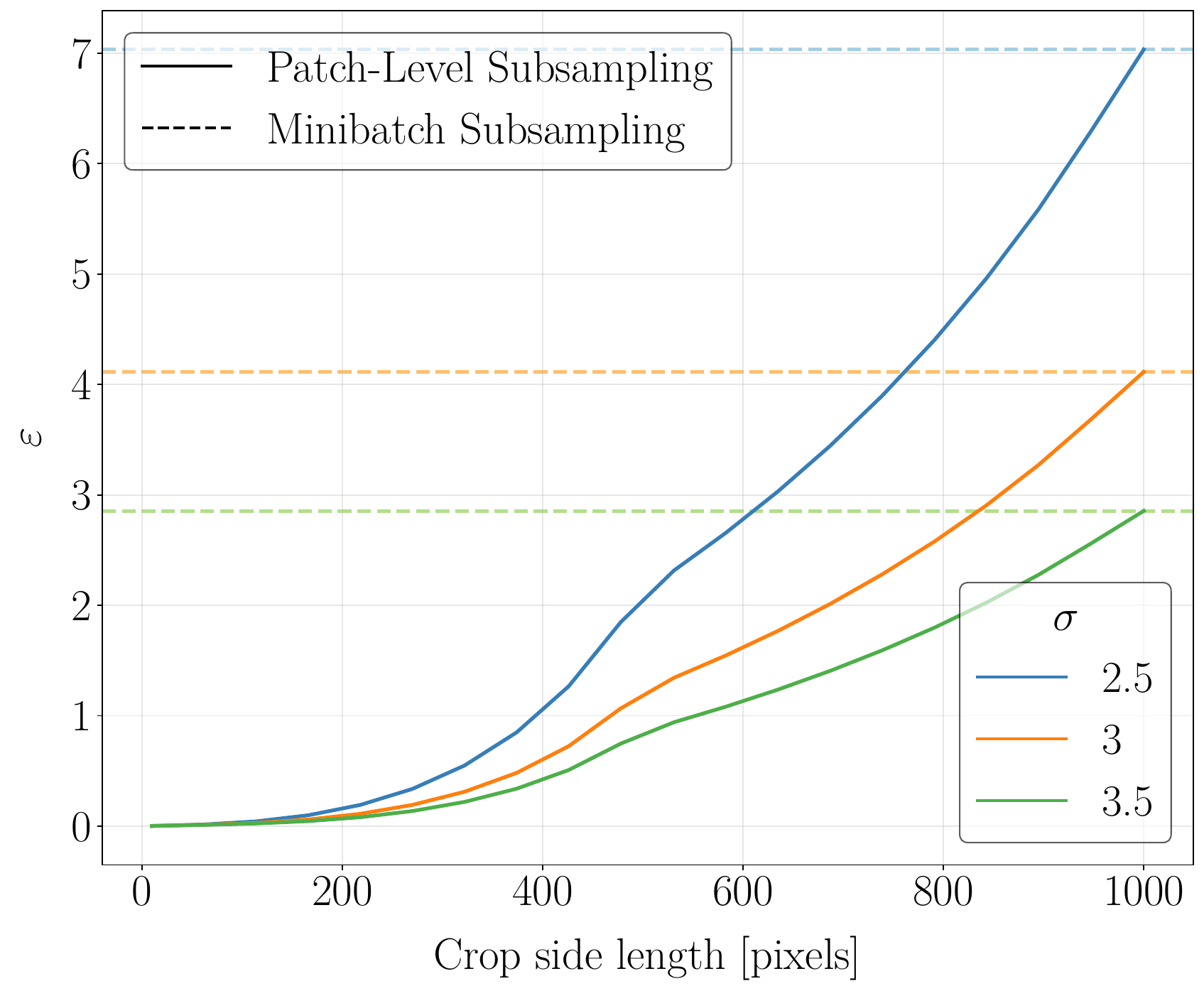}
        \caption{}
    \end{subfigure}
    \hfill
    \begin{subfigure}[h]{0.49\textwidth}
        \centering
        \includegraphics[width=\textwidth]{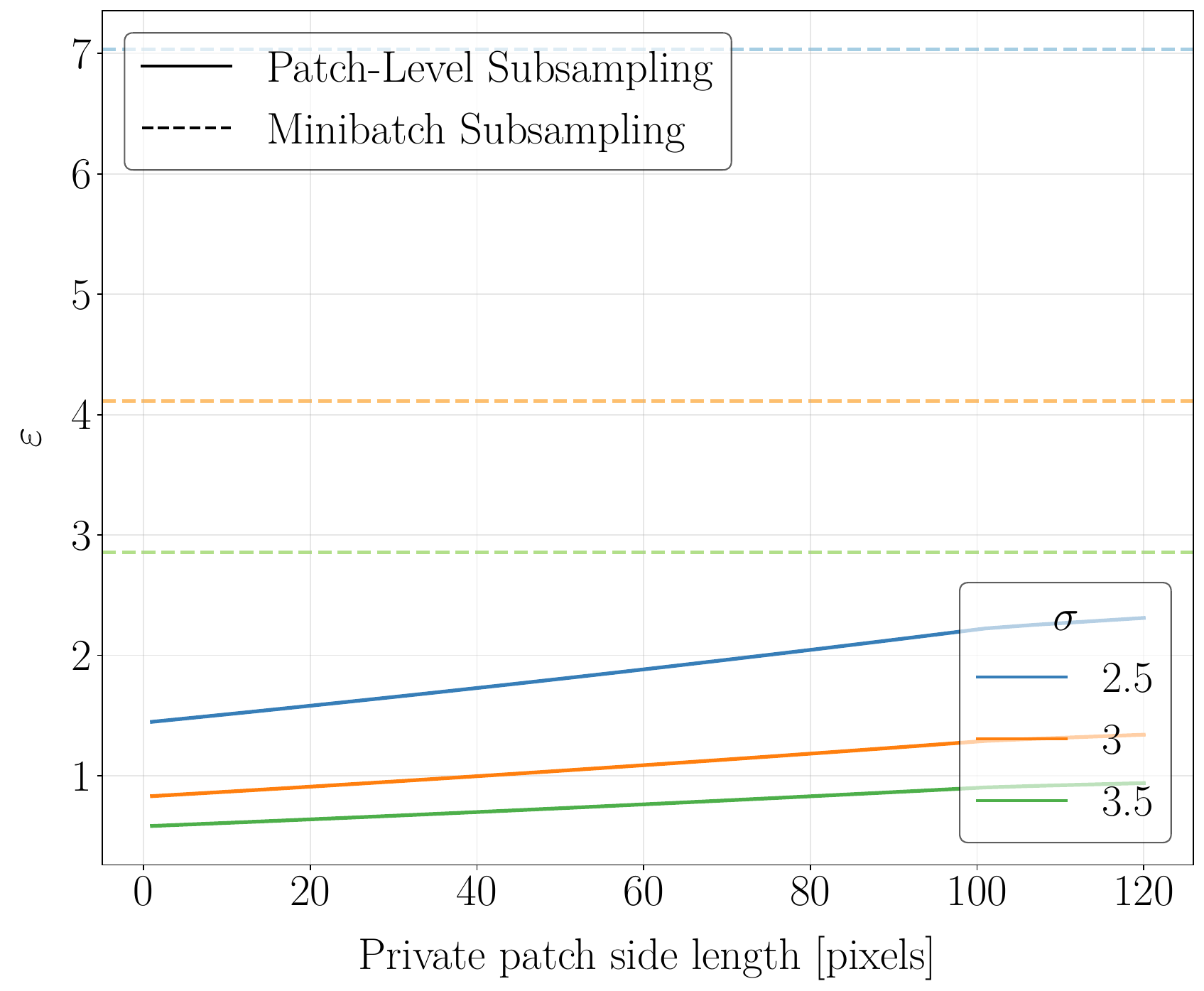}
        \caption{}
    \end{subfigure}
    \caption{$\varepsilon$ as a function of crop size (left) and private patch size (right),  at $\delta = 10^{-5}$ and image resolution $1000 \times 2000$ for $\sigma \in \{2.5, 3.0, 3.5\}$.}    \label{5}
\end{figure}

\begin{figure}[h]
    \centering
    \begin{subfigure}[h]{0.49\textwidth}
        \centering
        \includegraphics[width=\textwidth]{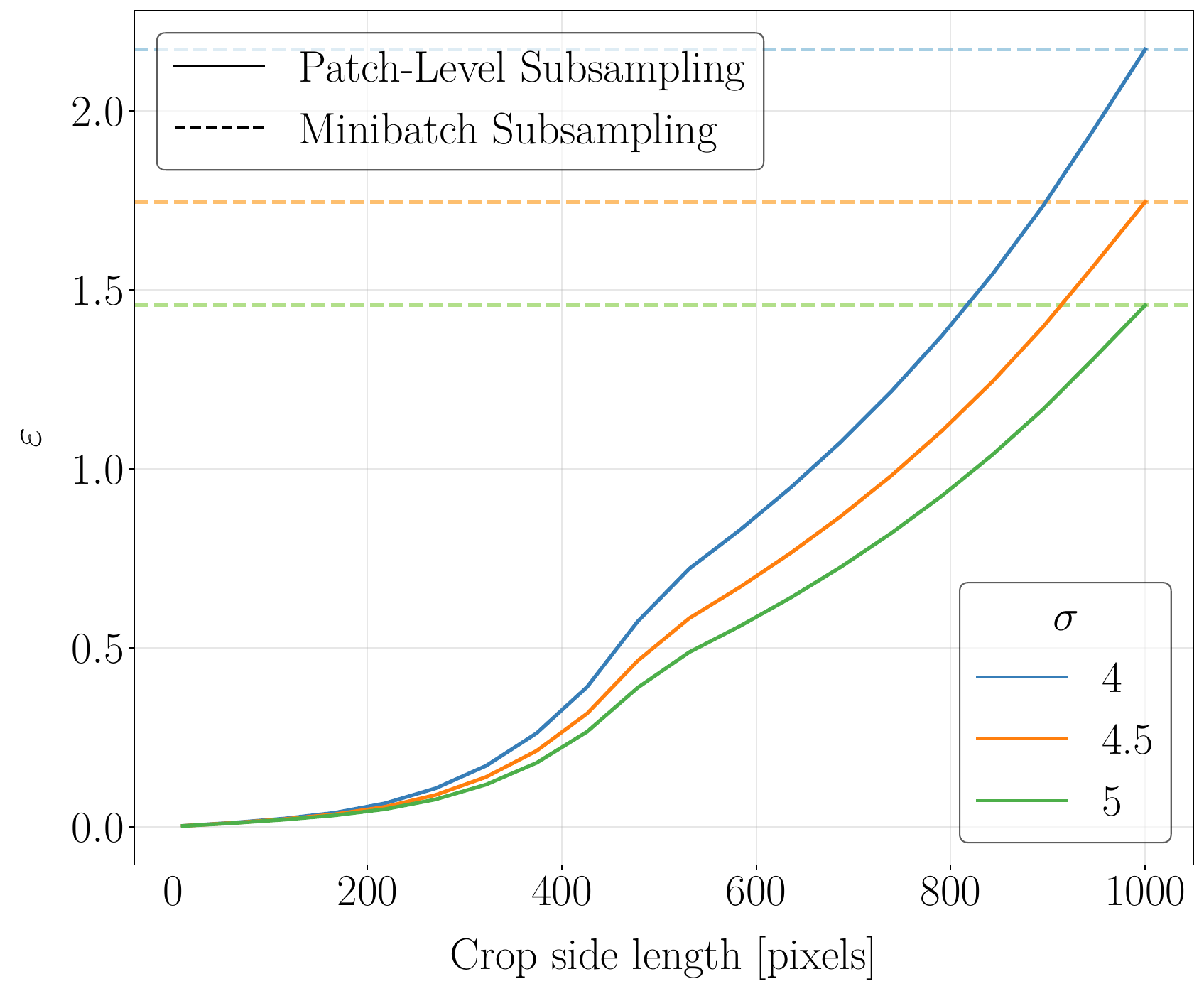}
        \caption{}
    \end{subfigure}
    \hfill
    \begin{subfigure}[h]{0.49\textwidth}
        \centering
        \includegraphics[width=\textwidth]{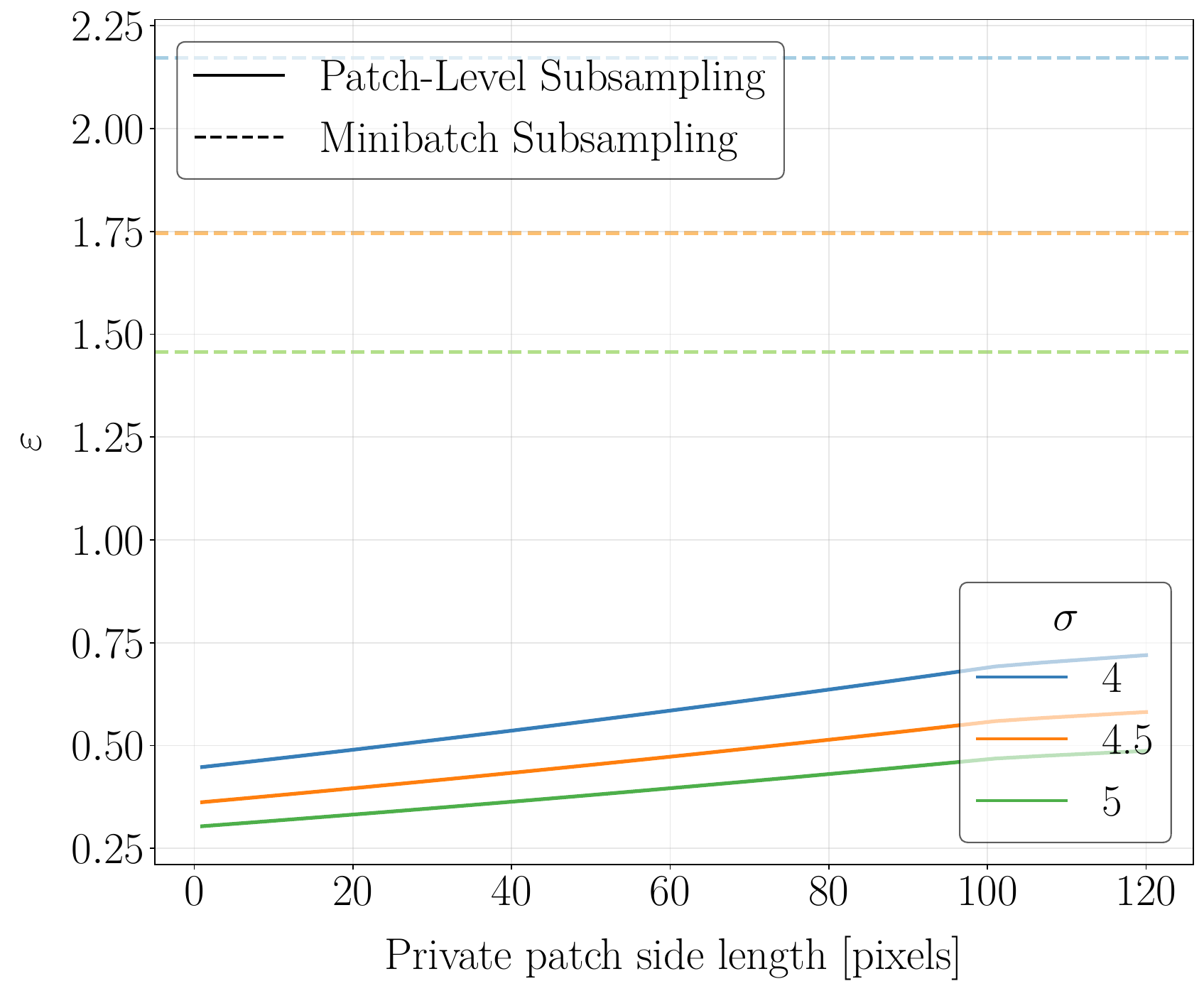}
        \caption{}
    \end{subfigure}
    \caption{$\varepsilon$ as a function of crop size (left) and private patch size (right),  at $\delta = 10^{-5}$ and image resolution $1000 \times 2000$ for $\sigma \in \{4.0, 4.5, 5.0\}$.}    \label{6}
\end{figure}

\subsection{Influence of Padding on Privacy}
\label{appendix:more-crop-pads}

Finally, we investigate how image padding affects the strength of patch-level amplification. 
Padding alters the effective geometry of the cropping operation by adding empty pixels along the edges of the image before cropping. Therefore, it modulates the intersection probability between the private region and the sampled crop. 

Unless otherwise stated, all experiments use images of size $1000 \times 1000$, clipping norm $C=2.0$, batch size $100$, epoch size $3000$, and $100$ training epochs. 
\update{We fix $\delta=1/\texttt{epoch\_size}$ and assume $\texttt{epoch\_size}= 10^{5}$} throughout. 
On top of these defaults, we perform three sweeps under varying padding configurations: 
(1) varying crop side lengths with private patch size fixed at $10 \times 10$ and Gaussian noise multipliers $\sigma \in \{4.0, 4.5, 5.0\}$, 
(2) varying private patch sizes with crop size fixed at $500 \times 500$ and the same set of noise multipliers, and 
(3) varying Gaussian noise multipliers with crop size fixed at $500 \times 500$ and private patch size $20 \times 20$. 
In each case, we repeat the experiment under multiple padding values to assess their influence on $\varepsilon$. 
Figures~\ref{fig:padding_crops}--\ref{fig:padding_noise} show the results. 

\begin{figure}[h]
    \centering
    \begin{subfigure}[h]{0.49\textwidth}
        \centering
        \includegraphics[width=\textwidth]{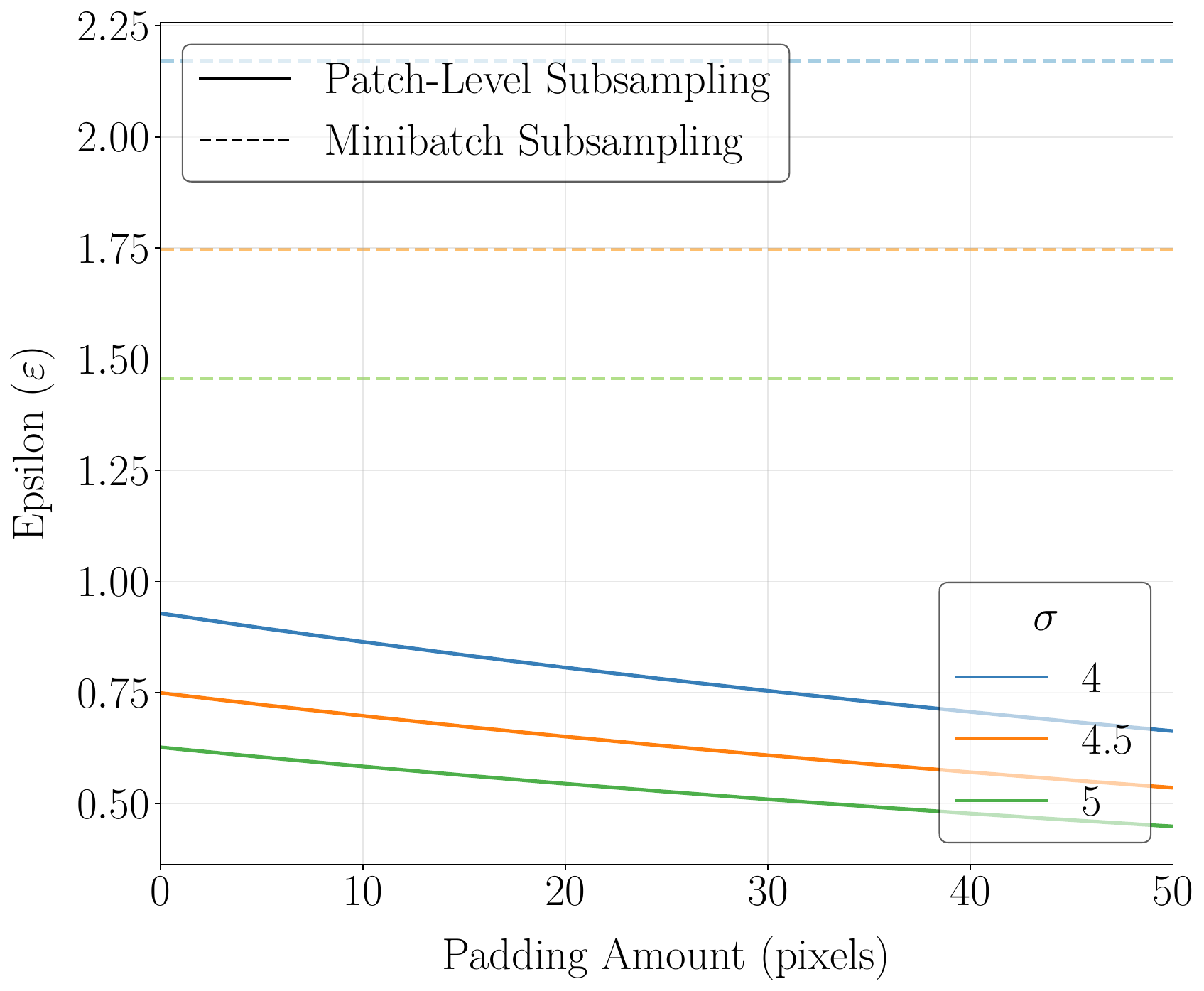}
        \caption{Crop size $400\times400$}
    \end{subfigure}
    \hfill
    \begin{subfigure}[h]{0.49\textwidth}
        \centering
        \includegraphics[width=\textwidth]{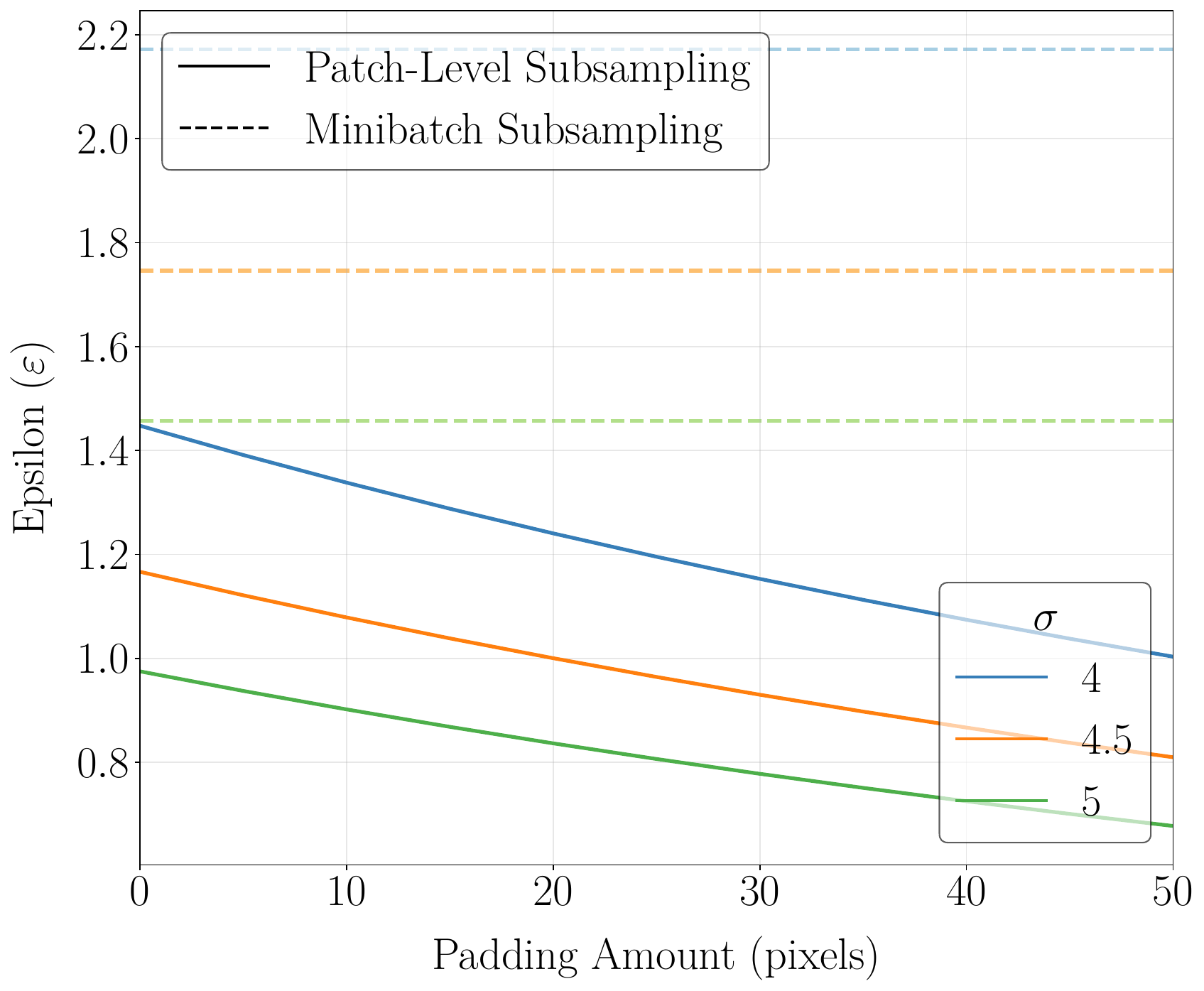}
        \caption{Crop size $450\times450$}
    \end{subfigure}

    \begin{subfigure}[h]{0.49\textwidth}
        \centering
        \includegraphics[width=\textwidth]{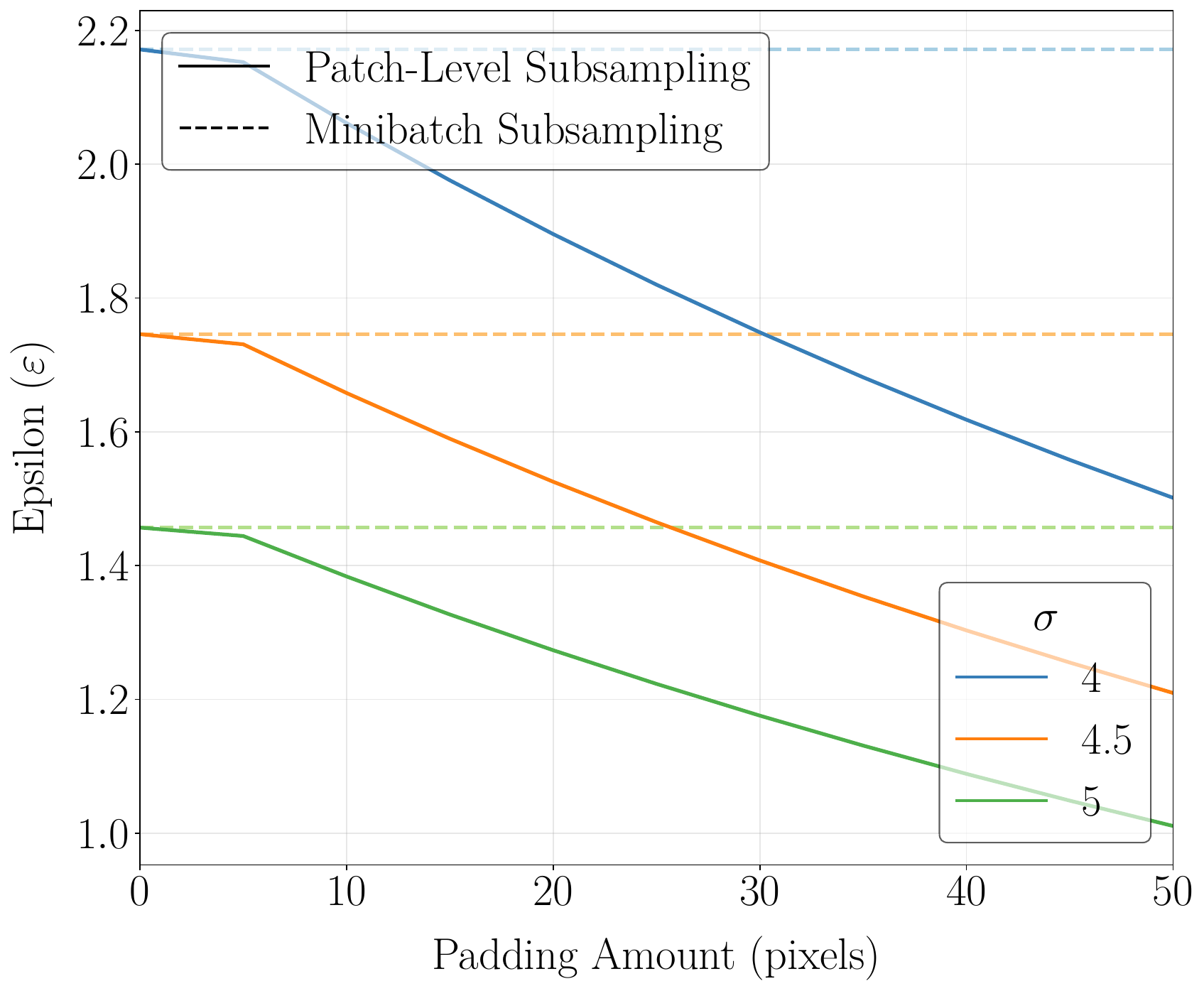}
        \caption{Crop size $500\times500$}
    \end{subfigure}
    \caption{$\varepsilon$ vs. padding amount for crop sizes (a) $400 \times 400$, (b) $450 \times 450$, (c) $500 \times 500$, with patch size $ 10\times10$ and noise multipliers $\sigma\in \{4.0, 4.5, 5.0\}$.}
    \label{fig:padding_crops}
\end{figure}

\begin{figure}[h]
    \centering
    \begin{subfigure}[h]{0.49\textwidth}
        \centering
        \includegraphics[width=\textwidth]{imgs/epsilon_vs_padding_crop500_priv10_noise4.0.pdf}
        \caption{Private patch size $10\times10$}
    \end{subfigure}
    \hfill
    \begin{subfigure}[h]{0.49\textwidth}
        \centering
        \includegraphics[width=\textwidth]{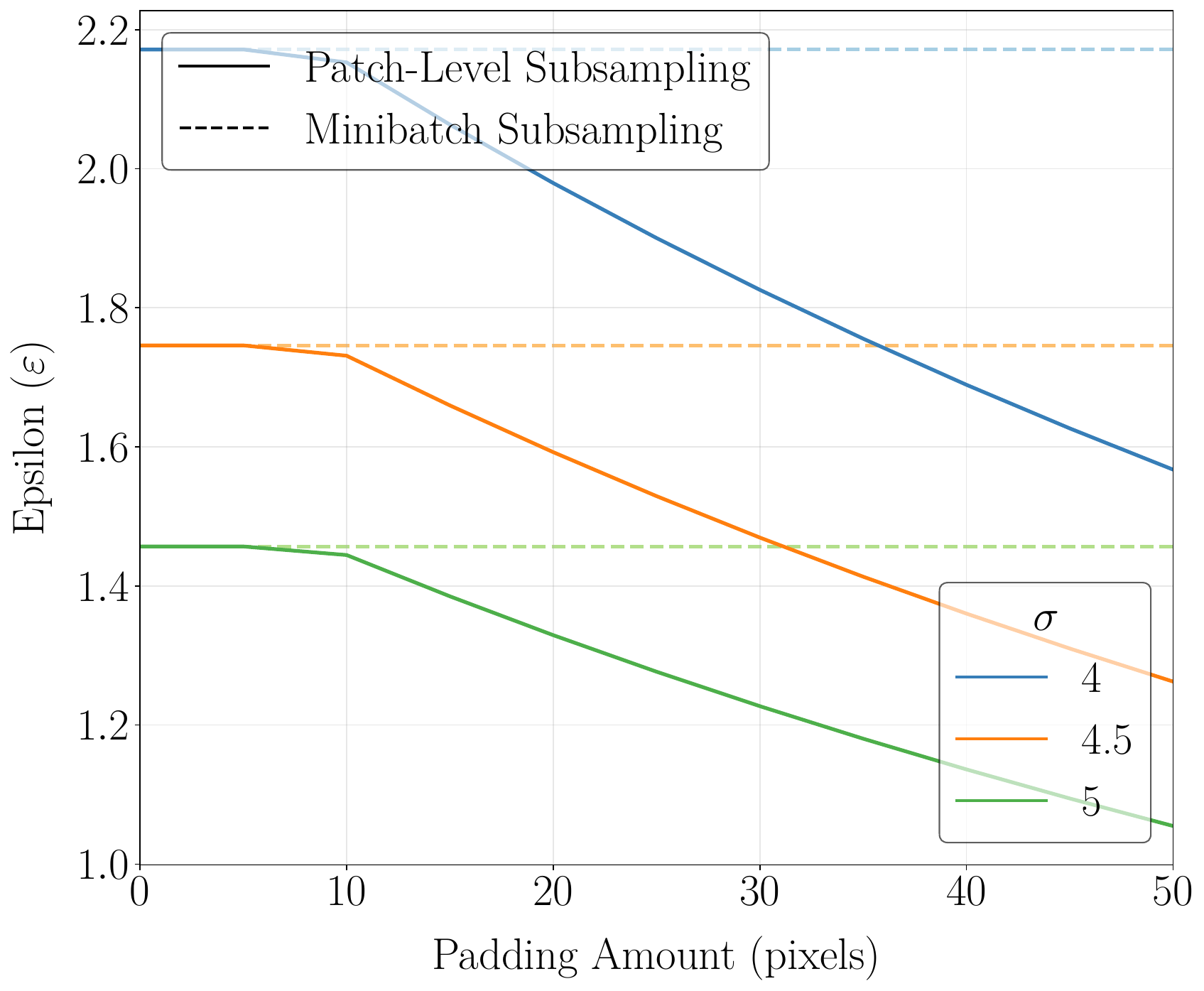}
        \caption{Private patch size $20\times20$}
    \end{subfigure}

    \begin{subfigure}[h]{0.49\textwidth}
        \centering
        \includegraphics[width=\textwidth]{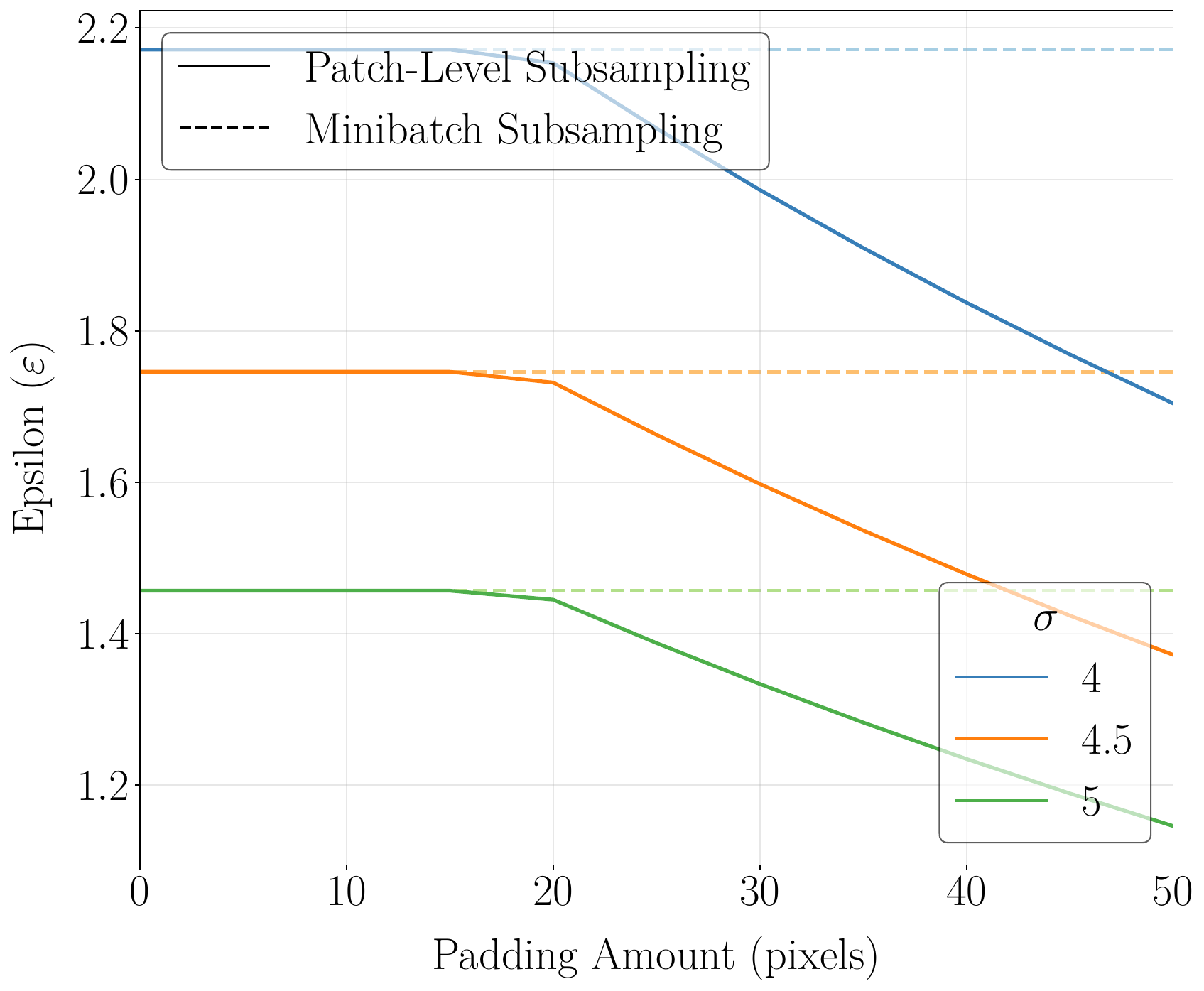}
        \caption{Private patch size $40\times40$}
    \end{subfigure}
    \caption{$\varepsilon$ vs. padding amount for private patch sizes (a) $10 \times 10$, (b) $20 \times 20$, (c) $40 \times 40$, with crop size $ 500\times500$ and noise multipliers $\sigma\in \{4.0, 4.5, 5.0\}$.}
    \label{fig:padding_patches}
\end{figure}

\begin{figure}[h]
    \centering
    \begin{subfigure}[h]{0.49\textwidth}
        \centering
        \includegraphics[width=\textwidth]{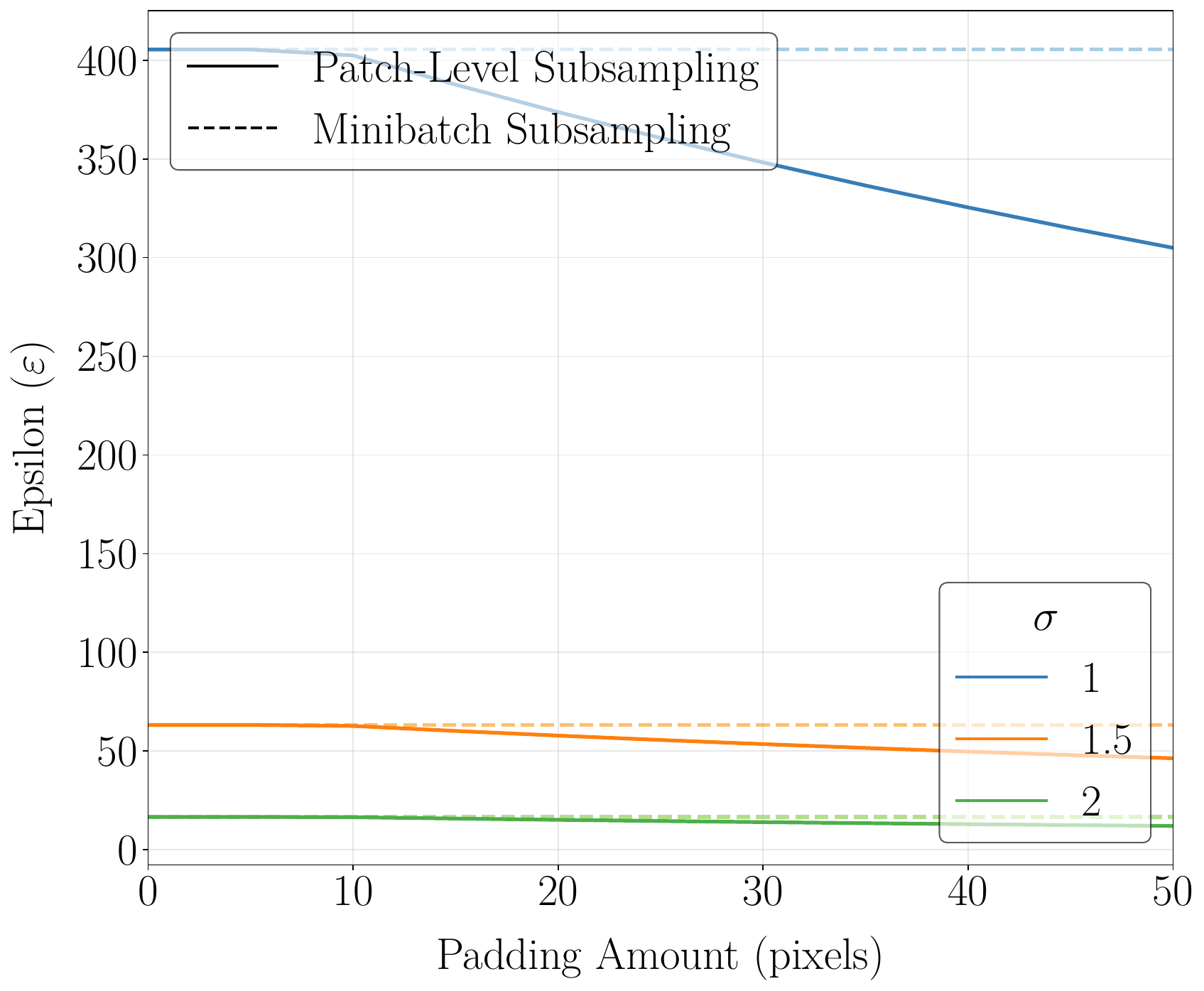}
        \caption{}
    \end{subfigure}
    \hfill
    \begin{subfigure}[h]{0.49\textwidth}
        \centering
        \includegraphics[width=\textwidth]{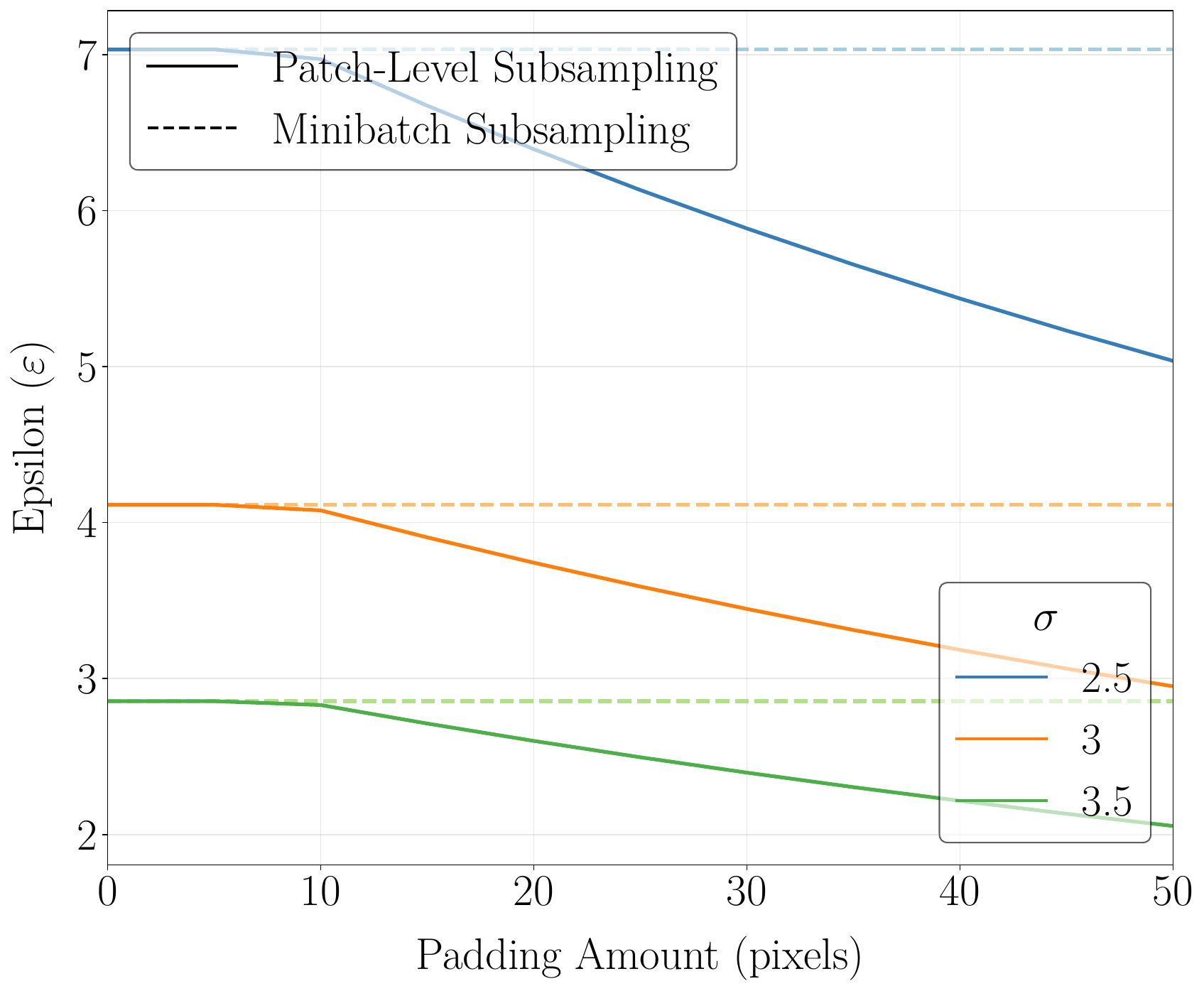}
        \caption{}
    \end{subfigure}

    \begin{subfigure}[h]{0.49\textwidth}
        \centering
        \includegraphics[width=\textwidth]{imgs/epsilon_vs_padding_crop500_priv20_noise4.0.pdf}
        \caption{}
    \end{subfigure}
    \caption{$\varepsilon$ vs. padding amount for noise values (a) $\sigma \in \{1.0,1.5,2.0\}$, (b) $\sigma \in \{2.5,3.0,3.5\}$, (c) $\sigma \in \{4.0,4.5,5.0\}$, with crop size $500\times500$ and private patch size $20\times20$.}
    \label{fig:padding_noise}
\end{figure}

\end{document}

%% file: math_commands.tex
\usepackage{amsmath,amsfonts,bm}

\def\eqref#1{equation~\ref{#1}}
\def\1{\bm{1}}

\def\veta{{\bm{\eta}}}

\def\ve{{\bm{e}}}

\DeclareMathAlphabet{\mathsfit}{\encodingdefault}{\sfdefault}{m}{sl}
\SetMathAlphabet{\mathsfit}{bold}{\encodingdefault}{\sfdefault}{bx}{n}

\def\sJ{{\mathbb{J}}}

\def\sR{{\mathbb{R}}}

\newcommand{\R}{\mathbb{R}}

\DeclareMathOperator*{\argmax}{arg\,max}

%% file: proof_of_theorem_2.tex
\subsection{Proof of Theorem 2}
\label{appendix:patch-composition}

In the following, we derive the dominating pair for patch-level subsampled mechanism \( \mathcal{M} \circ S^{\mathrm{crop}} \circ S^{\mathrm{wo}}_m \), where $M : \mathcal{X} \rightarrow \sR^D$ is an arbitrary mechanism, i.e., $M(x)$ is a distribution over co-domain $\sR^D$ given a (subsampled) dataset $x \in \mathcal{X}$.

To characterize the output distribution of the patch-level subsampled mechanism,
we first observe that cropping and dataset-level subsampling commute, meaning we can first sample a random crop for each image and then select a subset of images.
Thus, the distribution for a specific batch can be fully characterized 
by dataset $x = \{x_1, \dots, x_n\}$,
the subset of selected indices $\sJ \subseteq [n]$,
and the sequence of selected crop origins $o \in \Omega^{n}$.\footnote{The selected origins are chosen to be a sequence to define a correspondence between crops and images.}
With a slight abuse of notation, we denote this distribution as $M(x, J, o)$.
Thus, for a batch size $m$, the overall patch-level subsampled distribution is
\begin{equation}\label{eq:subsampling_mixture_original}
    (\mathcal{M} \circ S^{\mathrm{crop}} \circ S^{\mathrm{wo}}_m)(x)
    =
    \sum_{\sJ \subseteq [n] \mid |\sJ| = m}
    \binom{n}{m}^{-1}
    \sum_{o \in \Omega^n} \left(\frac{1}{|\Omega|}\right)^n M(x, \sJ, o).
\end{equation}

In the following, we consider w.l.o.g. a patch-level neighboring dataset $x' \simeq_{\Delta,p}$ which differ within some worst-case patch $R$ within the $n$th image, i.e., $x' = \{x_1, x_2,\dots, x'_n\}$.
Under this assumption, we can separate out parts of the subsampling distribution that are identical between $x$ and $x'$, and those are that dissimilar:
We observe that the generative process defined by~\ref{eq:subsampling_mixture_original} is equivalent to
\begin{enumerate}[noitemsep,topsep=0pt]
    \item sampling crop origins $o_{-n} \in \Omega^{n-1}$ for the first $n-1$ images
    \item sampling uniformly at random a subset $\sJ_{-n}$ of size $m-1 \subseteq [n-1]$ from the first $n-1$ images and an arbitrary 
\end{enumerate}
and then either
\begin{enumerate}[label=(\alph*),noitemsep,topsep=0pt]
    \item with probability $1 - \gamma_\text{wo}$, sampling uniformly at random an index $a \in [n-1] \setminus \sJ$ to include in the batch and selecting an arbitrary $nth$ crop origin $o_n$
    \item with probability $\gamma_\text{wo} (1 - \gamma_{\mathrm{crop}}'(R)$, including the $n$th image in the batch and sampling uniformly at random a crop origin $o_{n,\neg R} \in \Omega_{\neg R}$ that does not intersect with $R$
    \item with probability $\gamma_\text{wo} \gamma_{\mathrm{crop}}'(R)$, including the $n$th image in the batch and sampling uniformly at random a crop origin $o_{n, R} \in \Omega_R$ that intersects with $R$
\end{enumerate}
Thus, we can restate Eq.~\ref{eq:subsampling_mixture_original} as
\begin{equation*}
    \begin{split}
    \sum_{\sJ_{-n} \subseteq [n-1] \mid |\sJ| = m - 1}
    \binom{n-1}{m-1}^{-1}
    \sum_{o_{-n} \in \Omega^{n-1}}
    \left(\frac{1}{|\Omega|}\right)^{n-1}
    \sum_{a \in [n-1] \setminus \sJ_{n-1}}
    \frac{1}{n-m}
    \sum_{o_{n,\neg R} \in \Omega_{\neg R}}
    \sum_{o_{n,R} \in \Omega_R}
    \frac{1}{|\Omega_{\neg R} | \cdot |\Omega_R|}
     \cdot 
     \Bigl(
     \\
     (1 - \gamma_\text{wo}) \cdot M(x, \sJ_{-n} \cup \{a\}, (o_{-n}, o_n))\\
     +
     \gamma_\text{wo} \cdot (1 - \gamma'_\mathrm{crop}(R)) \cdot M(x, \sJ_{-n} \cup \{n\}, (o_{-n}, o_{n,\neg R}))\\
     +
     \gamma_\text{wo} \cdot \gamma'_\mathrm{crop}(R) \cdot M(x, \sJ_{-n} \cup \{n\}, (o_{-n}, o_{n,R}))
     \Bigr)
     \end{split}
\end{equation*}
For neighboring dataset $x' \simeq_{\Delta,p}$, we can restate $(\mathcal{M} \circ S^{\mathrm{crop}} \circ S^{\mathrm{wo}}_m)(x')$ in an analogous manner by replacing every occurrence of $x$ with $x'$.
Since both mixture distributions share identical mixture weights, the following result follows immediately from the joint convexity~\citep{balle2020privacy} and thus joint quasi-convexity of hockey-stick divergence $H_\alpha$:
\begin{lemma}\label{lemma:three_component_bound}
    The patch-level subsampled mechanism $(\mathcal{M} \circ S^{\mathrm{crop}} \circ S^{\mathrm{wo}}_m)$
    is dominated by the three-component mixture mechanism
    \begin{equation}\label{eq:three_component_mixture}
        \tilde{M}(x) = 
        (1 - \gamma_\text{wo}) M_{\neg n}(x)
     +
     \gamma_\text{wo} \cdot (1 - \gamma'_\mathrm{crop}(R)) M_{n, \neg R}(x)
     +
     \gamma_\text{wo} \cdot \gamma'_\mathrm{crop}(R) M_{n, R}(x)
    \end{equation}
    with mixture components
    \begin{align*}
        M_{\neg n}(x)
        =
        M(x, \sJ_{-n} \cup \{a\}, (o_{-n}, o_n))
        \\
        M_{n, \neg R}(x)
        =
        M(x, \sJ_{-n} \cup \{n\}, (o_{-n}, o_{n,\neg R}))
        \\
        M_{n, R}(x)
        =
        M(x, \sJ_{-n} \cup \{n\}, (o_{-n}, o_{n, R}))
    \end{align*}
    for some  index $a \in [n-1]$,
    and some crop origns $o_n \in \Omega$,
    $o_{n, \neg R} \in \Omega_{\neg R}$,
    and $o_{n, \neg R} \in \Omega_R$.
\end{lemma}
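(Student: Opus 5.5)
We need to prove Lemma~\ref{lemma:three_component_bound}: the hockey-stick divergence between the patch-level subsampled outputs on $x$ and $x'$ is at most that between $\tilde M(x)$ and $\tilde M(x')$ for some fixed choice of indices and crop origins.

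The plan uses the restated mixture representation derived just above. We write both $(\mathcal{M} \circ S^{\mathrm{crop}} \circ S^{\mathrm{wo}}_m)(x)$ and $(\mathcal{M} \circ S^{\mathrm{crop}} \circ S^{\mathrm{wo}}_m)(x')$ as a single outer mixture. The outer mixing variable is $\theta = (\sJ_{-n}, o_{-n}, a, o_n, o_{n,\neg R}, o_{n,R})$, drawn from the same distribution $\mu$ for both datasets. Each inner term is the three-component mixture $\tilde M_\theta(x)$ or $\tilde M_\theta(x')$ of Eq.~\ref{eq:three_component_mixture}, with identical weights $(1-\gamma_\text{wo})$, $\gamma_\text{wo}(1-\gamma'_\mathrm{crop}(R))$ and $\gamma_\text{wo}\gamma'_\mathrm{crop}(R)$.

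\textbf{Step 1: the representation is exact.} We verify that the representation equals Eq.~\ref{eq:subsampling_mixture_original}, i.e.\ that the two-stage generative process reproduces uniform size-$m$ subsets together with i.i.d.\ uniform crop origins. Three checks are needed.
\begin{enumerate}
\item The marginal probability that $n\in\sJ$ is $m/n=\gamma_\text{wo}$.
\item Conditional on $n\notin\sJ$, the set $\sJ_{-n}\cup\{a\}$ is uniform over size-$m$ subsets of $[n-1]$. This holds because every such set arises from exactly $m$ pairs $(\sJ_{-n},a)$, each with the same probability.
\item Conditional on $n\in\sJ$, the crop origin $o_n$ is uniform on $\Omega=\Omega_R\sqcup\Omega_{\neg R}$. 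Splitting it as $\Omega_R$ with probability $\gamma'_\mathrm{crop}(R)=|\Omega_R|/|\Omega|$, then uniform within the chosen part, reproduces this.
\end{enumerate}
When $n\notin\sJ$, the value of $o_n$ is irrelevant, so we may set it arbitrarily. Averaging over the auxiliary variables $o_{n,\neg R}$ and $o_{n,R}$, which each component ignores except its own, leaves the distribution unchanged.

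\textbf{Step 2: quasi-convexity.} Since $H_\alpha$ is jointly convex in $(P,Q)$,
\[
H_\alpha\Bigl(\textstyle\int \tilde M_\theta(x)\,d\mu(\theta)\,\Big\|\,\int \tilde M_\theta(x')\,d\mu(\theta)\Bigr)\le \int H_\alpha\bigl(\tilde M_\theta(x)\|\tilde M_\theta(x')\bigr)\,d\mu(\theta)\le \max_\theta H_\alpha\bigl(\tilde M_\theta(x)\|\tilde M_\theta(x')\bigr).
\]
The mixing set is finite, so the maximum is attained at some $\theta^*$. This gives the claimed index $a$ and crop origins $o_n$, $o_{n,\neg R}$, $o_{n,R}$. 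Because the bound holds for every $\alpha\ge0$, the dominance statement follows.

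\textbf{Main obstacle.} The main obstacle is the bookkeeping in Step 1, in particular avoiding the pitfall that a maximizing $\theta^*$ could depend on $\alpha$. The lemma as stated only asserts domination, which means a bound for each $\alpha$, so an $\alpha$-dependent choice is acceptable. I would note this explicitly, and also observe that the subsequent step, bounding $\tilde M$ by $(P,Q)$, must be uniform over the choice of components.

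A remaining degenerate case is $\Omega_{\neg R}=\emptyset$ or $\Omega_R=\emptyset$. Then the corresponding weight is zero and that component is simply dropped. The normalization $1/(|\Omega_{\neg R}||\Omega_R|)$ must then be read as averaging only over the nonempty sets.
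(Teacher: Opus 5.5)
Your proposal is correct and follows the paper's own route. The paper likewise writes both outputs as the same outer mixture over $(\sJ_{-n}, o_{-n}, a, o_n, o_{n,\neg R}, o_{n,R})$ of three-component mixtures with identical weights, then uses joint convexity (hence quasi-convexity) of $H_\alpha$ to reduce to one worst-case choice of components. Your Step 1 check of the generative process, your remark that the maximizer may depend on $\alpha$, and your treatment of the degenerate empty-set case spell out details the paper only asserts.
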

More simply stated, $M_{\neg n}$ refers to the the mechanism for a specific subset of indices in which the $n$th image is not included.
Next, $M_{n, \neg R}$ is the mechanism for a specific batch in which the $n$th image is included but the private region $R$ is not in the crop.
Finally,  $M_{n, R}$  is the mechanism for a subset of indices in which the $n$th image is included and the private region $R$ is in the crop.
Overall, Lemma~\ref{lemma:three_component_bound} lets us reduce the analysis of our high-dimensional mixture mechanism to a mixture mechanism with just three components.

In the following, we will derive privacy guarantees for this simplified mechanism in terms of dominating pairs of the underlying mechanism $M : \mathcal{X} \rightarrow \sR^D$,
beginning with $\alpha \geq 1$ and then proceeding to $0 \leq \alpha < 1$.

\paragraph{Case 1: $\mathbf{\bm{\alpha} \bm{\geq} 1}$}
%In the following, assume w.l.o.g. an arbitrary combination of datasets $x \simeq_{\Delta,p} x'$, set of indices $\sJ_{-n} \subseteq [n-1]$, index $a \in [n-1]$,
%and crop origins $o_n \in \Omega$,
%$o_{n, \neg R} \in \Omega_{\neg R}$,
%and $o_{n, \neg R} \in \Omega_R$, as in Lemma~\ref{lemma:three_component_bound}.

For our proof, we will expand upon the following ``advanced joint convexity'' property~\citep{Balle2018PrivacyAB}, which allows for a simplified analysis of two-component mixtures where the first component is identical.
\begin{lemma}[Advanced joint convexity]\label{lemma:advanced_joint_convexity}
    Let $P = (1-\eta) P_1 + \eta P_2$ and $Q = (1 - \eta) Q_1 + Q_2$ be two mixture distributions with $P_1 = Q_1$ and some $\eta \in [0,1]$.
    Given $\alpha \geq 1$, let $\alpha' = (\alpha - 1) \mathbin{/} \eta + 1$
    and $\beta = \alpha \mathbin{/} \alpha'$.
    Then, the following holds:
    \begin{equation*}
        H_\alpha(P || Q) = \eta H_{\alpha'}(P_2 || (1 - \beta) P_1 + \beta Q_2).
    \end{equation*}
\end{lemma}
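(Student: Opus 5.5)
The plan is a direct pointwise computation with densities. I read the statement with the evident correction $Q = (1-\eta) Q_1 + \eta Q_2$. I fix a $\sigma$-finite measure $\mu$ dominating $P_1, P_2, Q_2$ (for instance their sum), with densities $p_1, p_2, q_2$. I then use the standard equivalent form of the hockey-stick divergence,
\begin{equation*}
H_\alpha(P \,\|\, Q) = \int \max\{ p(o) - \alpha\, q(o), 0\}\, d\mu(o).
\end{equation*}
This agrees with the definition in Proposition~\ref{prop_2} whenever $P \ll Q$, and is the usual extension otherwise. The degenerate case $\eta = 0$ is handled separately: there $P = Q = P_1$, so the left-hand side is $0$, and the right-hand side is $0$ under the convention $0 \cdot H_\infty = 0$. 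From now on I assume $\eta \in (0,1]$.

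The key step is to expand $p - \alpha q$ using $p_1 = q_1$. This gives
\begin{equation*}
p - \alpha q = \eta p_2 - (\alpha - 1)(1-\eta) p_1 - \alpha \eta q_2 = \eta\Bigl( p_2 - \tfrac{(\alpha-1)(1-\eta)}{\eta} p_1 - \alpha q_2 \Bigr).
\end{equation*}
Next I check that the bracket equals $p_2 - \alpha'\bigl((1-\beta) p_1 + \beta q_2\bigr)$. Since $\alpha'\beta = \alpha$, it suffices that
\begin{equation*}
\alpha'(1-\beta) = \alpha' - \alpha = \frac{\alpha-1}{\eta} + 1 - \alpha = \frac{(\alpha-1)(1-\eta)}{\eta},
\end{equation*}
which is exactly the coefficient of $p_1$ above. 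Pulling the nonnegative factor $\eta$ out of the positive part and integrating yields
\begin{equation*}
H_\alpha(P\,\|\,Q) = \eta \int \max\{p_2 - \alpha' \bigl((1-\beta)p_1 + \beta q_2\bigr), 0\}\, d\mu = \eta\, H_{\alpha'}\bigl(P_2 \,\|\, (1-\beta)P_1 + \beta Q_2\bigr).
\end{equation*}

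The only point needing care, and the main obstacle, is that the last expression must be a genuine hockey-stick divergence between probability distributions. This requires $\beta \in [0,1]$ and $\alpha' \geq 1$. Both follow from $\alpha \geq 1$ and $\eta \in (0,1]$:
\begin{itemize}
\item $\alpha' - 1 = (\alpha-1)/\eta \geq 0$, so $\alpha' \geq 1$;
\item $\alpha' - \alpha = (\alpha-1)(1-\eta)/\eta \geq 0$, so $\beta = \alpha/\alpha' \in (0,1]$.
\end{itemize}
Hence $(1-\beta)P_1 + \beta Q_2$ is a convex combination, i.e.\ a probability measure. I would also note that the identity is exact rather than an inequality, because no convexity bound is used, only linear rearrangement inside the positive part. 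This exactness is what later allows the three-component mixture of Lemma~\ref{lemma:three_component_bound} to be reduced tightly.
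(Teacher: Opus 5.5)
Your proof is correct, including the repair $Q=(1-\eta)Q_1+\eta Q_2$ of the typo and the checks $\alpha'\geq 1$, $\beta\in(0,1]$ that make $(1-\beta)P_1+\beta Q_2$ a probability measure. The paper gives no proof of its own and imports the lemma from \citet{Balle2018PrivacyAB}, whose argument is this same linear rearrangement of $p-\alpha q$ (phrased there set-wise via $\sup_E\, P(E)-\alpha Q(E)$). Your use of the $\int \max\{p-\alpha q,0\}\,d\mu$ form instead of the paper's $dP/dQ$ definition is the right choice, because at $\alpha=1$ (so $\beta=1$) one can have $P\ll Q$ but $P_2\not\ll Q_2$.
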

Specifically, the advanced joint convexity property can also be used to analyze three-component mixtures, where the first three components are identical, which is the case when applying $\tilde{M}$ from Eq.~\ref{eq:three_component_mixture} to $x$ and $x'$:
\begin{lemma}\label{lemma:advanced_joint_convexity_three_components}
    Let $P = \eta_1 P_1 + \eta_2 P_2 + \eta_3 P_3$ and
    $Q = \eta_1 Q_1 + \eta_2 Q_2 + \eta_3 Q_3$ be two mixture distributions with $P_1 = Q_1$, $P_2 = Q_2$ and some $\veta \in [0,1]^3$ with $||\veta||_1 = 1$.
    Given $\alpha \geq 1$, let $\alpha' = (\alpha - 1) \mathbin{/} \eta_3 + 1$
    and $\beta = \alpha \mathbin{/} \alpha'$.
    Then, the following holds:
    \begin{equation*}
        H_\alpha(P || Q) \leq
        \eta_3 
        \max
        \{
            H_{\alpha'}(P_3 || P_1),
            H_{\alpha'}(P_3 || P_2),
            H_{\alpha'}(P_3 || Q_3)
        \}
    \end{equation*}
\end{lemma}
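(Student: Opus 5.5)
The plan is to reduce the three-component case to the two-component advanced joint convexity (Lemma~\ref{lemma:advanced_joint_convexity}) by merging the two shared components into one. Then I would use convexity of the hockey-stick divergence in its second argument to split the resulting mixed reference distribution back into its pieces.

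First I would dispose of the degenerate cases. If $\eta_3 = 0$, then $P = Q$ and $H_\alpha(P\|Q) = 0$ for $\alpha \geq 1$, so the bound is trivial. If $\eta_3 = 1$, then $\alpha' = \alpha$ and $\beta = 1$, and the claim reads $H_\alpha(P_3\|Q_3) \leq \max\{\dots\}$, which holds since the third entry of the max is exactly this term. For $\eta_3 \in (0,1)$ I would define the common component
\[
R := \frac{\eta_1}{1-\eta_3} P_1 + \frac{\eta_2}{1-\eta_3} P_2 .
\]
Because $P_1 = Q_1$ and $P_2 = Q_2$, this gives $P = (1-\eta_3) R + \eta_3 P_3$ and $Q = (1-\eta_3) R + \eta_3 Q_3$. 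Applying Lemma~\ref{lemma:advanced_joint_convexity} with $\eta = \eta_3$ then yields the identity
\[
H_\alpha(P\|Q) = \eta_3 \, H_{\alpha'}\bigl(P_3 \,\|\, (1-\beta) R + \beta Q_3\bigr),
\]
with $\alpha'$ and $\beta$ exactly as in the statement.

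Next I would check that $\beta \in [0,1]$, so that the second argument is a genuine convex combination. Since $\alpha \geq 1$ and $\eta_3 \leq 1$, we have $\alpha' - \alpha = (\alpha-1)(1/\eta_3 - 1) \geq 0$. Hence $\alpha' \geq \alpha > 0$, which gives $\beta = \alpha/\alpha' \in (0,1]$, and also $\alpha' \geq 1$. Expanding $R$, the second argument equals
\[
\tfrac{(1-\beta)\eta_1}{1-\eta_3} P_1 + \tfrac{(1-\beta)\eta_2}{1-\eta_3} P_2 + \beta Q_3 ,
\]
whose nonnegative weights sum to one.

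Finally I would invoke convexity of $H_{\alpha'}$ in its second argument. This follows from joint convexity of the hockey-stick divergence, as an $f$-divergence with $f(t) = \max\{t-\alpha',0\}$ \citep{balle2020privacy}, by fixing the first argument to $P_3$. It bounds the divergence by the corresponding weighted average of $H_{\alpha'}(P_3\|P_1)$, $H_{\alpha'}(P_3\|P_2)$ and $H_{\alpha'}(P_3\|Q_3)$. A weighted average is at most the maximum, and multiplying by $\eta_3$ gives the claim.

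The only step needing care is the middle one: ensuring $\beta \le 1$, which is where $\alpha \geq 1$ is used, and confirming that Lemma~\ref{lemma:advanced_joint_convexity} is applied with a common first component. In the two-component lemma as printed, the weight on $Q_2$ should be read as $\eta$. Everything else is direct bookkeeping of mixture weights.
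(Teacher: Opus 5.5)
Your proposal is correct and follows the paper's route: you merge the two shared components into a single common component, apply the two-component advanced joint convexity lemma with $\eta = \eta_3$, and bound the divergence against the resulting mixed reference distribution by the maximum over its components. The differences are minor. The paper invokes quasi-convexity directly for the last step, whereas you go through convexity and a weighted average. You also handle the degenerate cases $\eta_3 \in \{0,1\}$ and check $\beta \in (0,1]$, details the paper leaves implicit.
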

\begin{proof}
    The distributions $P$ and $Q$ can be trivially reformulated as
    \begin{align*}
        & P = (\eta_1 + \eta_2) \left(\frac{\eta_1}{\eta_1 + \eta_2} P_1 + \frac{\eta_2}{\eta_1 + \eta_2} P_2\right)
        + \eta_3 P_3,\\
        &
        Q = (\eta_1 + \eta_2) \left(\frac{\eta_1}{\eta_1 + \eta_2} Q_1 + \frac{\eta_2}{\eta_1 + \eta_2} Q_2\right)
        + \eta_3 Q_3.
    \end{align*}

    Using $P_1 = Q_1$ and $P_2 = Q_2$ and applying Lemma~\ref{lemma:advanced_joint_convexity} then shows
    \begin{equation*}
        H_\alpha(P || Q)
        = 
        \eta_3
        H_{\alpha'}\left(
            P_3 ||
            (1 - \beta)
                \cdot 
                \left(\frac{\eta_1}{\eta_1 + \eta_2} P_1 + \frac{\eta_2}{\eta_1 + \eta_2} P_2\right)
            +
            \beta Q_3.
        \right).
    \end{equation*}
    The result then follows from quasi-convexity of the hockey stick divergence in the space of measures.
\end{proof}

Combining~\ref{lemma:three_component_bound} and~\ref{lemma:advanced_joint_convexity_three_components}, we can complete our analysis for $\alpha \geq 1$:
\begin{lemma}\label{lemma:final_first_bound}
    Let \( \mathcal{M} \) be a mechanism that is dominated by $P,Q$ under the substitution relation \( \simeq_{\Delta} \). Let \( S^{\mathrm{wo}}_m \) denote without-replacement subsampling of minibatch size \( m \) and $S^{\mathrm{crop}}$ the random cropping mechanism we defined in Definition \ref{def:crop-mechanism}. Then,
under the patch-level substitution relation \( \simeq_{\Delta,p} \), 
the composed mechanism \( \mathcal{M} \circ S^{\mathrm{crop}} \circ S^{\mathrm{wo}}_m \) satisfies
    \begin{equation}
    \delta_{\mathcal{M} \circ S^{\mathrm{crop}} \circ S^{\mathrm{wo}}_m}(\alpha) \le
    H_\alpha\left((1 - \gamma_{\mathrm{eff}}) Q + \gamma_{\mathrm{eff}} P \,\middle\|\, Q\right)
    \end{equation}
    for any $\alpha \geq 1$.
\end{lemma}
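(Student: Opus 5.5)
Combining Lemma~\ref{lemma:three_component_bound} with Lemma~\ref{lemma:advanced_joint_convexity_three_components}, I would bound the divergence of the three-component mixture by a single term, and then fold the resulting expression back into the two-component form of Proposition~\ref{prop:subsampling-bound}.

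Fix $x \simeq_{\Delta,p} x'$ differing inside the worst-case patch $R$ of the $n$th image. By Lemma~\ref{lemma:three_component_bound}, it suffices to bound $H_\alpha(\tilde M(x)\|\tilde M(x'))$ for the three-component mixture of Eq.~\ref{eq:three_component_mixture}. Take $\eta_1 = 1-\gamma_\text{wo}$, $\eta_2 = \gamma_\text{wo}(1-\gamma'_\mathrm{crop}(R))$ and $\eta_3 = \gamma_\text{wo}\gamma'_\mathrm{crop}(R)$.

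The first two components coincide across $x$ and $x'$. For $M_{\neg n}$ the $n$th image is absent. For $M_{n,\neg R}$ the crop $\tilde x_n|_{C_{o_{n,\neg R}}}$ avoids $R$, so it equals $\tilde x'_n|_{C_{o_{n,\neg R}}}$. Hence Lemma~\ref{lemma:advanced_joint_convexity_three_components} gives
\[
H_\alpha(\tilde M(x)\|\tilde M(x')) \le \eta_3 \max\{H_{\alpha'}(P_3\|P_1), H_{\alpha'}(P_3\|P_2), H_{\alpha'}(P_3\|Q_3)\},
\qquad \alpha' = (\alpha-1)/\eta_3 + 1 .
\]
Each pair $(P_3,P_j)$ and $(P_3,Q_3)$ consists of two batches of equal size $m$ that differ in exactly one cropped element. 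They are therefore substitution neighbors under $\simeq_\Delta$ on the cropped batch space. For $(P_3,P_1)$, the crop of image $n$ is replaced by the crop of image $a$, and the remaining crops $o_{-n}$ are shared. So each term is at most $H_{\alpha'}(P\|Q)$ by the dominating-pair assumption.

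This yields $\delta(\alpha) \le \eta_3 H_{\alpha'}(P\|Q)$ with $\eta_3 \le \gamma_\text{wo}\gamma_\mathrm{crop} = \gamma_\mathrm{eff}$ by Definition~\ref{def:worst-case-gamma}. Two things remain.

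First, I must show that $\eta H_{(\alpha-1)/\eta+1}(P\|Q)$ is nondecreasing in $\eta$, so that $\eta_3$ can be replaced by $\gamma_\mathrm{eff}$. I would not prove this directly but instead avoid it with a different identity. Apply Lemma~\ref{lemma:advanced_joint_convexity} in reverse to the two-component pair $((1-\eta)Q+\eta P,\; Q)$, whose first components agree. This gives
\[
H_\alpha((1-\eta)Q+\eta P\,\|\,Q) = \eta H_{\alpha'}(P\|(1-\beta)Q+\beta Q) = \eta H_{\alpha'}(P\|Q).
\]
So the bound equals $H_\alpha((1-\eta_3)Q+\eta_3P\|Q)$.

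Second, I would show monotonicity in $\eta$ of $\eta \mapsto H_\alpha((1-\eta)Q+\eta P\|Q)$. This function is convex in $\eta$ by joint convexity of $H_\alpha$, and it vanishes at $\eta=0$ because $H_\alpha(Q\|Q)=0$ for $\alpha\ge 1$. It is also nonnegative. A convex function that is nonnegative and zero at the left endpoint is nondecreasing. Hence it is at most its value at $\gamma_\mathrm{eff}$, which is the claimed bound.

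The step I expect to be the main obstacle is justifying that the three pairwise comparisons are genuinely $\simeq_\Delta$-neighbors under the dominating pair $(P,Q)$. This matters most for $(P_3,P_1)$, where the substituted element comes from a different image index, and for matching the fixed sequence of crop origins. If this correspondence needs care, I would reorder the batch so that both outputs are viewed as multisets of $m$ cropped images differing in one entry. Domination is stated over batches in $2^{\mathcal{I}_C}$, so ordering is irrelevant.
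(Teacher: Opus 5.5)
Your proposal is correct and follows the paper's proof essentially step for step: reduce to the three-component mixture, apply the three-component advanced joint convexity bound with $\eta_3=\gamma_\text{wo}\gamma'_\mathrm{crop}(R)$, bound each single-substitution pair by $H_{\alpha'}(P\|Q)$, pass to the worst-case patch, and fold back using advanced joint convexity in reverse. The one difference is the monotonicity-in-$\eta$ step: the paper argues directly that $\eta\, H_{(\alpha-1)/\eta+1}(P\|Q)$ is a product of two nonnegative nondecreasing factors (because $\alpha'$ decreases in $\eta$ and $H_{\alpha'}$ decreases in $\alpha'$), whereas you first fold back to $H_\alpha((1-\eta)Q+\eta P\|Q)$ and use its convexity in $\eta$ together with its value $0$ at $\eta=0$. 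Both arguments are valid.
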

\begin{proof}
    Consider an arbitrary pair of datasets $x \simeq_{\Delta,p} x'$
    that differ in an arbitrary patch $R$.
    From~\ref{lemma:three_component_bound} and~\ref{lemma:advanced_joint_convexity_three_components},
    it immediately follows that
    \begin{align*}
        & H_\alpha( (\mathcal{M} \circ S^{\mathrm{crop}} \circ S^{\mathrm{wo}}_m)(x) 
        ||
        (\mathcal{M} \circ S^{\mathrm{crop}} \circ S^{\mathrm{wo}}_m)(x'))
        \\
        \leq
        &
        \gamma_\text{wo} \cdot \gamma'_\mathrm{crop}(R)
        \cdot
        \max
        \left\{
            H_{\alpha'}(M_{n,R}(x') || M_{\neg n}(x),
            H_{\alpha'}(M_{n,R}(x') || M_{n, \neg R}(x),
            H_{\alpha'}(M_{n,R}(x') || M_{n, R}(x)
        \right\}
        &
    \end{align*}
    with $\alpha' = (\alpha - 1) \mathbin{/} (\gamma_\text{wo} \cdot \gamma'_\mathrm{crop}(R)) + 1$

    It is easy to see that in all three cases, the batches operated on by both mechanisms only differ by a single substitution:
    \begin{enumerate}[noitemsep,topsep=0pt]
        \item $M_{\neg n}(x)$ is equivalent to $M_{n,R}(x')$ after removing the cropped $x'_n$ from the batch and replacing it with some $x_a$.
        \item $M_{n, \neg R}(x)$ is equivalent to $M_{n,R}(x')$ after replacing the cropped $x'_n$ with a differently cropped $x_n$.
        \item $M_{n, R}(x)$ is equivalent to $M_{n,R}(x')$ after replacing the cropped $x'_n$ with an identically cropped $x_n$.
    \end{enumerate}
    It thus follows that
    \begin{equation*}
        H_\alpha( (\mathcal{M} \circ S^{\mathrm{crop}} \circ S^{\mathrm{wo}}_m)(x) 
        ||
        (\mathcal{M} \circ S^{\mathrm{crop}} \circ S^{\mathrm{wo}}_m)(x')) 
        \leq 
        \gamma_\text{wo} \cdot \gamma'_\mathrm{crop}(R) \cdot 
        H_{\alpha'}(P || Q).
    \end{equation*}
    % Note that $Q || P$ is also a valid upper bound due to symmetry of the substitution relation.

    Next, we observe that this term is maximized by patch that maximizes the intersection probability, i.e., $\argmax_R  \gamma'_\mathrm{crop}(R)$.
    This is because $\alpha'$ is monotonically decreasing in and $H_{\alpha'}(P||Q)$ is in itself monotonically decreasing in $\alpha'$. 
    After substituting $\gamma_\text{sub} \cdot \max_R \gamma'_\mathrm{crop}(R)$
    with $\gamma_\text{eff}$,
    the final step is to apply advanced joint convexity in reverse order with $\beta = \alpha \mathbin{/} \alpha'$:
    \begin{equation*}
        \gamma_\mathrm{eff} 
        \cdot 
        H_{\alpha'}(P || Q)
        =
        \gamma_\mathrm{eff} 
        \cdot 
        H_{\alpha'}(P || (1 - \beta)Q + \beta Q)
        =
        H_{\alpha'}( 
        (1 - \gamma_\mathrm{eff}) Q
        +
        \gamma_\mathrm{eff} P
        || (1 - \gamma) Q + \gamma Q).
    \end{equation*}
    Substituting $(1 - \gamma) Q + \gamma Q = Q$ concludes our proof.
\end{proof}

\paragraph{Case 2: $\mathbf{\bm{\alpha} \bm{\geq} 1}$}
The proof for this case largely follows the final steps of the proof of Proposition 30 in~\citep{zhu-et-al}.
We will use the following Lemma, which corresponds to Lemma 31 from~\citep{zhu-et-al}:
\begin{lemma}\label{lemma:dominating_pair_symmetry}
    Let $M$ be a mechanism and $\simeq$ be a symmetric neighboring relation. Then
    \begin{enumerate}
        \item If $(P,Q)$ is a dominating pair of $M$, then $(Q,P)$ is also a dominating pair of $M$
        \item The following two statements are equivalent:
        \begin{enumerate}[label=(\alph*)]
            \item $\sup_{x \simeq x'} H_{\alpha}(M(x) || M(x')) \leq H_\alpha(P || Q)$ for all $\alpha \geq 1$.
            \item $\sup_{x \simeq x'} H_{\alpha}(M(x) || M(x')) \leq H_\alpha(Q || P)$ for all $0 \leq \alpha < 1$.
        \end{enumerate}
    \end{enumerate}
\end{lemma}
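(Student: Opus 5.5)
The plan is to reduce everything to one elementary identity relating $H_\alpha(P\|Q)$ to $H_{1/\alpha}(Q\|P)$, and then use the symmetry of $\simeq$ to swap the roles of $x$ and $x'$. Work with densities $p,q$ with respect to a common dominating measure (e.g.\ $P+Q$). For any $\alpha>0$,
\begin{equation*}
\int \max\{p-\alpha q,0\} - \int \max\{\alpha q - p,0\} = \int (p-\alpha q) = 1-\alpha .
\end{equation*}
Also $\int\max\{\alpha q-p,0\} = \alpha \int \max\{q - p/\alpha,0\} = \alpha H_{1/\alpha}(Q\|P)$. Together these give the key identity
\begin{equation*}
H_\alpha(P\|Q) = 1-\alpha + \alpha\, H_{1/\alpha}(Q\|P), \qquad \alpha>0 .
\end{equation*}
It holds for arbitrary pairs of distributions, in particular for $(M(x),M(x'))$. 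I also record two facts. First, $H_0(\cdot\|\cdot)=1$ for any pair, so $\alpha=0$ is trivial. Second, $\alpha\mapsto H_\alpha(P\|Q)$ is $1$-Lipschitz, hence continuous, because $|\max\{p-\alpha q,0\}-\max\{p-\beta q,0\}|\le |\alpha-\beta|q$.

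\textbf{Part 1.} Fix $x\simeq x'$ and $\alpha>0$. By symmetry of $\simeq$ we also have $x'\simeq x$, so domination by $(P,Q)$ gives $H_{1/\alpha}(M(x')\|M(x))\le H_{1/\alpha}(P\|Q)$. Since $\alpha>0$, applying the identity twice yields
\begin{equation*}
H_\alpha(M(x)\|M(x')) = 1-\alpha+\alpha H_{1/\alpha}(M(x')\|M(x)) \le 1-\alpha+\alpha H_{1/\alpha}(P\|Q) = H_\alpha(Q\|P).
\end{equation*}
Together with the trivial case $\alpha=0$, this shows $(Q,P)$ is a dominating pair.

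\textbf{Part 2.} For (a)$\Rightarrow$(b), take $\alpha\in(0,1)$, so that $1/\alpha>1$. Apply the same chain, now using (a) at level $1/\alpha$ on the pair $x'\simeq x$; taking the supremum over $x\simeq x'$ gives (b). The case $\alpha=0$ is again trivial. For (b)$\Rightarrow$(a), take $\alpha>1$, so that $1/\alpha\in(0,1)$. Use (b) at level $1/\alpha$ for the pair $x'\simeq x$, giving $H_{1/\alpha}(M(x')\|M(x))\le H_{1/\alpha}(Q\|P)$. The identity then yields $H_\alpha(M(x)\|M(x'))\le 1-\alpha+\alpha H_{1/\alpha}(Q\|P)=H_\alpha(P\|Q)$.

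The only delicate point, and the step I expect to need the most care, is the boundary value $\alpha=1$. It is not covered by (b), since (b) requires $\alpha<1$, and inverting $\alpha=1$ gives $1$ again. I will handle it by continuity. For each fixed $x\simeq x'$, the map $\beta\mapsto H_\beta(M(x)\|M(x'))$ is continuous, and so is $\beta\mapsto H_\beta(P\|Q)$. Letting $\alpha\downarrow 1$ in the inequality already proved for $\alpha>1$ then gives $H_1(M(x)\|M(x'))\le H_1(P\|Q)$. Taking the supremum over $x\simeq x'$ completes (a).
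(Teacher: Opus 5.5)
Your proof is correct. The paper does not prove this lemma itself; it imports it as Lemma~31 of Zhu et al. Your argument is therefore a self-contained substitute for that citation rather than a variant of an in-paper proof.

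Your route rests on the reciprocity identity $H_\alpha(P\|Q)=1-\alpha+\alpha H_{1/\alpha}(Q\|P)$ for $\alpha>0$. You apply it once to $(M(x),M(x'))$ and once to the candidate pair, and use symmetry of $\simeq$ to swap $x$ and $x'$. This handles both parts uniformly. You also correctly identify the two points that the inversion $\alpha\mapsto 1/\alpha$ does not cover:
\begin{itemize}
\item $\alpha=0$, which is trivial since $H_0\equiv 1$;
\item $\alpha=1$ in (b)$\Rightarrow$(a), which your $1$-Lipschitz continuity argument closes.
\end{itemize}

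What the citation buys the paper is brevity. What your version buys is that these edge cases are made explicit. You also work with densities relative to a common dominating measure, so you use the general definition of $H_\alpha$. The paper's formula $\int\max\{dP/dQ-\alpha,0\}\,dQ$ tacitly presumes $P\ll Q$, and the reciprocity identity would fail under that literal formula once absolute continuity breaks down, which can happen for an arbitrary mechanism.
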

We can use the first part of Lemma~\ref{lemma:dominating_pair_symmetry} to conduct exactly the same proof as for \ref{lemma:final_first_bound}, but interchange all occurrences of $P$ and $Q$. The following result then immediately follows from the second part of Lemma~\ref{lemma:dominating_pair_symmetry}: 
\begin{lemma}\label{lemma:final_second_bound}
    Let \( \mathcal{M} \) be a mechanism that is dominated by $P,Q$ under the substitution relation \( \simeq_{\Delta} \). Let \( S^{\mathrm{wo}}_m \) denote without-replacement subsampling of minibatch size \( m \) and $S^{\mathrm{crop}}$ the random cropping mechanism from Definition \ref{def:crop-mechanism}. Then,
under the patch-level substitution relation \( \simeq_{\Delta,p} \), 
the composed mechanism \( \mathcal{M} \circ S^{\mathrm{crop}} \circ S^{\mathrm{wo}}_m \) satisfies

    \begin{equation}
    \delta_{\mathcal{M} \circ S^{\mathrm{crop}} \circ S^{\mathrm{wo}}_m}(\alpha) \le
    H_\alpha\left(P \,\middle\|\, (1 - \gamma_{\mathrm{eff}}) P + \gamma_{\mathrm{eff}} Q\right)
    \end{equation}
    for any $0 \leq \alpha < 1$.
\end{lemma}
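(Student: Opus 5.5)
The plan is to reduce the regime $0 \le \alpha < 1$ to the already established regime $\alpha \ge 1$ (Lemma~\ref{lemma:final_first_bound}), using the two parts of Lemma~\ref{lemma:dominating_pair_symmetry}.

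\textbf{Step 1: swap the dominating pair of $\mathcal{M}$.} The substitution relation $\simeq_{\Delta}$ is symmetric. By part~1 of Lemma~\ref{lemma:dominating_pair_symmetry}, $(Q,P)$ is therefore also a dominating pair of $\mathcal{M}$ under $\simeq_{\Delta}$.

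\textbf{Step 2: rerun the $\alpha \ge 1$ argument with the swapped pair.} The proof of Lemma~\ref{lemma:final_first_bound} only uses that the base pair dominates $\mathcal{M}$ under $\simeq_{\Delta}$; it never uses which component sits in which slot. The chain is:
\begin{itemize}
\item reduce to the three-component mixture (Lemma~\ref{lemma:three_component_bound});
\item apply Lemma~\ref{lemma:advanced_joint_convexity_three_components};
\item observe that each of the three comparisons is a single substitution in the batch, hence bounded by $H_{\alpha'}$ of the base pair;
\item use monotonicity in $\gamma'_{\mathrm{crop}}(R)$ to pass to the worst-case patch;
\item apply advanced joint convexity (Lemma~\ref{lemma:advanced_joint_convexity}) in reverse.
\end{itemize}
Repeating this chain verbatim with $(Q,P)$ in place of $(P,Q)$ yields, for all $\alpha \ge 1$,
\begin{equation*}
\sup_{x \simeq_{\Delta,p} x'} H_\alpha\big((\mathcal{M} \circ S^{\mathrm{crop}} \circ S^{\mathrm{wo}}_m)(x) \,\|\, (\mathcal{M} \circ S^{\mathrm{crop}} \circ S^{\mathrm{wo}}_m)(x')\big) \le H_\alpha\big((1-\gamma_{\mathrm{eff}})P + \gamma_{\mathrm{eff}} Q \,\|\, P\big).
\end{equation*}

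\textbf{Step 3: transfer the bound to $0 \le \alpha < 1$.} Apply part~2 of Lemma~\ref{lemma:dominating_pair_symmetry}, now to the composed mechanism $\mathcal{M} \circ S^{\mathrm{crop}} \circ S^{\mathrm{wo}}_m$ and the relation $\simeq_{\Delta,p}$. Take the pair $P' := (1-\gamma_{\mathrm{eff}})P + \gamma_{\mathrm{eff}}Q$ and $Q' := P$. Statement (a) for $(P',Q')$ is exactly the display above. Its equivalent statement (b) gives, for $0 \le \alpha < 1$,
\begin{equation*}
\delta(\alpha) \le H_\alpha(Q' \,\|\, P') = H_\alpha\big(P \,\|\, (1-\gamma_{\mathrm{eff}})P + \gamma_{\mathrm{eff}}Q\big),
\end{equation*}
which is the claimed bound.

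\textbf{Main obstacle.} The delicate point is checking that part~2 of Lemma~\ref{lemma:dominating_pair_symmetry} really applies at the level of the composed mechanism. This requires two things.
\begin{itemize}
\item $\simeq_{\Delta,p}$ must be symmetric. I would verify this directly from Definition~\ref{def:patch-neighbor}: the conditions "agree outside $R$ and on all images $j\neq i$, differ somewhere in $R$" are symmetric in $x$ and $x'$.
\item The equivalence must be used as a statement about a supremum over all neighbouring pairs, not about individual pairs. I would first try to re-derive it from the identity $H_\alpha(A\|B) = 1-\alpha + \alpha H_{1/\alpha}(B\|A)$ for $0<\alpha<1$, combined with symmetry of the relation. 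This identity converts the $\alpha<1$ divergence into an $\alpha \ge 1$ divergence with the arguments swapped; symmetry then lets the supremum run over swapped pairs as well.
\end{itemize}
The endpoint $\alpha = 0$ is trivial, since $H_0 = 1$ on both sides.
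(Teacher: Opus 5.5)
Your proposal is correct and follows the paper's proof exactly. The paper likewise uses part~1 of Lemma~\ref{lemma:dominating_pair_symmetry} to rerun the $\alpha \ge 1$ argument with $P$ and $Q$ interchanged, obtaining $H_\alpha((1-\gamma_{\mathrm{eff}})P + \gamma_{\mathrm{eff}}Q \,\|\, P)$, and then applies part~2 at the level of the composed mechanism to transfer this bound to $0 \le \alpha < 1$. Your additional checks, namely the symmetry of $\simeq_{\Delta,p}$ and the identity $H_\alpha(A\|B) = 1-\alpha+\alpha H_{1/\alpha}(B\|A)$, just make explicit what the paper leaves implicit.
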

Combined with Lemma~\ref{lemma:final_first_bound}, this concludes our proof of Theorem 2.

\subsection{Tightness of Theorem 2 for DP-SGD}
\label{proof:tightness}
In the following, we prove that~Eq.\ref{eq:main_tight_bound} holds with equality for noisy, clipped gradient updates for some worst-case model.

Consider an arbitrary differentiable model $f : \mathcal{I}_C \rightarrow \mathcal{Y}$ that maps a single cropped image to a single prediction from some co-domain $\mathcal{Y}$.
Assume that the model has $D \in \mathbb{N}$ parameters.
Let $g : \mathcal{I}_C \rightarrow \R^D$ be the function that maps a single cropped image to the parameter gradients under some loss function. Like in~\citet{Abadi-2016-dp-sgd}, we can then define our differentially private gradient mechanism $M : 2^{\mathcal{I}_C} \rightarrow $ as 
\begin{equation*}
    M(Z) = \frac{1}{|Z|} \left(
        \sum_{z \in Z}
        \mathrm{Clip}(g(z), \kappa)
        +
        \mathcal{N}(\bm{0}, \sigma^2 \bm{1} \kappa^2)
    \right),
\end{equation*}
where $\mathrm{Clip}(g(z), \kappa)$ scales the gradient to a maximum norm of $\kappa \in \R_+$.

Mechanism $M$ is simply a Gaussian mechanism with $\ell_2$ sensitivity $2$ under substitution relation $\simeq_\Delta$ (note that the covariance is scaled by $\kappa^2$). 
Thus, analogous to Proposition~\ref{prop:gaussian_dominating_pair}, it is dominated by univariate Gaussians $\mathcal{N}(0, \sigma)$ and $\mathcal{N}(2, \sigma)$.
Due to translation invariance of hockey-stick divergences between Gaussians, it is also dominated by
$P = \mathcal{N}(-1, \sigma)$ and $Q = \mathcal{N}(1, \sigma)$.

To show that it is tightly dominated by $P,Q$ under the standard substitution relation, and since we do not have any additional information about gradients $g$ or model $f$, consider a worst-case
$g : \mathcal{I}_C \rightarrow \R^D$ with 
$g(z) = \begin{bmatrix}
    \kappa & 0 \cdots 0
\end{bmatrix}^T$
if the value $0$ appears in the input cropped image $z$
and 
$g(z) = \begin{bmatrix}
    -\kappa & 0 \cdots 0
\end{bmatrix}^T$
otherwise. 
For this specific $g$
define the cropped batches $Z = \{z_1, z_2, \dots, z_N\}$
and $Z' = \{z_1, z_2, \dots, z'_N\}$
where $z_i$ has value $i$ in every pixel
and $z'$ has value $0$ in every pixel.
Then, 
\begin{equation}\label{eq:lower_bound_base_mechanism}
\begin{split}
\sup_{Z \simeq_\Delta Z'}
H_\epsilon(M(Z) || M(Z')
\geq 
&
H_\epsilon(\mathcal{N}(- n \kappa \ve_1, \kappa^2 \sigma^2 \bm{1}) || 
\mathcal{N}(- (n-1) \kappa \ve_1 + \kappa \ve_1, \kappa^2 \sigma^2 \bm{1})
\\
=
&H_\epsilon(\mathcal{N}(-\kappa, \kappa \sigma) || \mathcal{N}(\kappa, \kappa \sigma))
\\
=
&
H_\epsilon(\mathcal{N}(-1, \sigma) ||  \mathcal{N}(1, \sigma)).
\end{split}
\end{equation}
This lower bound for the non-subsampled base mechanism coincides with the upper bound given by the  privacy profile of dominating pair $P, Q$.

As per Theorem~\ref{thm:effective-sampling}, applying minibatch subsampling and cropping to mechanism $M$ 
to construct $\mathcal{M} : 2^{\mathcal{I}} \rightarrow \sR^D $
yields a privacy-profile upper-bounded by
\begin{equation}\label{eq:main_tight_bound_copy}
\delta_{\mathcal{M} \circ S^{\mathrm{crop}} \circ S^{\mathrm{wo}}_m}(\alpha) \le
\begin{cases}
H_\alpha\left((1 - \gamma_{\mathrm{eff}}) Q + \gamma_{\mathrm{eff}} P \,\middle\|\, Q\right) & \text{if } \alpha \ge 1, \\
H_\alpha\left(P \,\middle\|\, (1 - \gamma_{\mathrm{eff}}) P + \gamma_{\mathrm{eff}} Q\right) & \text{if } 0 < \alpha < 1,
\end{cases}
\end{equation}

In the following, we construct a worst-case dataset for each $\alpha \geq 1$ such that the previously constructed gradient function $g$ has a privacy profile that exactly matches this upper bound.

\paragraph{Case 1: $\bm{0} \leq \mathbf{\bm{\alpha} \bm{<} 1}$}
Define the original dataset of images 
$X = \{x_1, x_2, \dots, x_n\}$ where the $i$th image has value $i$ everywhere. 
Define the patch-level substituted dataset of images $X' \simeq_{\Delta,p}$ that is identical to $X$ everywhere, except for all pixels of $x_n$ in worst-case patch from Theorem~\ref{thm:max-gamma} being replaced by $0$.
By construction, the gradient function $g$ is only ever applied to cropped images that do not contain $0$ and thus only ever returns $g(z) = \begin{bmatrix}
    -\kappa & 0 \cdots 0
\end{bmatrix}^T$.
When operating on $X'$, the gradient function is applied to a cropped image containing value $0$ with probability $\gamma \cdot \gamma_\mathrm{crop}$, in which case it returns 
$g(z) = \begin{bmatrix}
    \kappa & 0 \cdots 0
\end{bmatrix}^T$.

Thus, and by using the translation and scale invariance of hockey-stick divergences between Gaussians in the same manner as in Eq.~\ref{eq:lower_bound_base_mechanism}, 
\begin{equation*}
\delta_{\mathcal{M} \circ S^{\mathrm{crop}} \circ S^{\mathrm{wo}}_m}(\alpha) 
\geq 
H_\epsilon(\mathcal{N}(-1,  \sigma) ||
(1-\gamma \cdot \gamma_\mathrm{crop} \cdot \mathcal{N}(-1,  \sigma)
+
\gamma \cdot \gamma_\mathrm{crop} \cdot 
\mathcal{N}(1,  \sigma)),
\end{equation*}
which matches Eq.\ref{eq:main_tight_bound_copy}

\paragraph{Case 2: $\bm{1} \leq \bm{\alpha}$}
The proof is fully analogous,
one just needs to exchange the constructed $X$ and $X'$.